\documentclass{article}

\usepackage{etoc}
\usepackage{microtype}
\usepackage{graphicx}
\usepackage{subcaption}
\usepackage{booktabs} % for professional tables
\usepackage{multirow}
\usepackage{makecell}
\usepackage[export]{adjustbox}
\usepackage{tcolorbox}
\usepackage{xurl} % 比 url 更会断行
\usepackage[colorlinks,urlcolor=red,linkcolor=black,citecolor=black]{hyperref}
\usepackage{hyperref}
\usepackage[table]{xcolor}
\usepackage[normalem]{ulem}

\usepackage[accepted]{icml2026}

\usepackage{amsmath}
\usepackage{amssymb}
\usepackage{mathtools}
\usepackage{amsthm}
\usepackage{thmtools}
\usepackage{thm-restate}
\usepackage{tikz}

\usepackage[capitalize,noabbrev,nameinlink]{cleveref}
\usepackage{xcolor} 
\definecolor{icmlred}{RGB}{200, 40, 40}
\hypersetup{
  colorlinks = true, % 用彩色文字而不是带框链接
  linkcolor  = icmlred,  % section/figure/equation/theorem 等内部引用
}

\providecommand{\definecolor}[3]{}

\providecommand{\textcolor}[2]{#2}
\definecolor{stablepink}{RGB}{255,220,235}
\definecolor{stableblue}{RGB}{0,90,200}

\newcommand{\algc}[1]{\textcolor{stableblue}{\footnotesize // #1}}

\newcommand{\appref}[1]{%
  \hyperref[#1]{Appendix~\ref*{#1}}%
}

\newcommand{\apprefs}[2]{%
  \hyperref[#1]{Appendices~\ref*{#1}} and~\hyperref[#2]{\ref*{#2}}%
}

\newcommand{\mhead}{\makecell{Eff.$\uparrow$}}
\newcommand{\ghead}{\makecell{Gen.$\uparrow$}}
\newcommand{\shead}{\makecell{Spe.$\uparrow$}}
\newcommand{\personas}{\makecell{Personas $\uparrow$}}
\newcommand{\reasoning}{\makecell{Reasoning $\uparrow$}}
\theoremstyle{plain}
\newtheorem{theorem}{Theorem}[section]

\newtheorem{lemma}[theorem]{Lemma}
\newtheorem{corollary}[theorem]{Corollary}
\theoremstyle{definition}

\newtheorem{assumption}[theorem]{Assumption}
\theoremstyle{remark}
\newtheorem{remark}[theorem]{Remark}

\newtheoremstyle{restateplain}%
  {\topsep}    % 上方垂直空间
  {\topsep}    % 下方垂直空间
  {\itshape}   % 正文字体（斜体，与 assumption 一致）
  {0pt}        % 缩进量
  {\bfseries}  % 标题字体（粗体）
  {*.}          % 标题后的标点（句点，与原风格一致）
  { }          % 标题与正文之间的空格（一个空格）
  {\thmname{#1}\thmnote{ #3}}  % 关键：只输出 #3（可选标题），不加括号
\theoremstyle{restateplain}

\newtheorem*{theorem-restated}{Theorem}
\newtheorem*{assumption-restated}{Assumption}
\newtheorem*{corollary-restated}{Corollary}
\newtheorem*{lemma-restated}{Lemma}

\usepackage[textsize=tiny]{todonotes}

\usepackage{array} 
\usepackage{enumitem}

\addtocontents{toc}{\protect\setcounter{tocdepth}{-1}}

\icmltitlerunning{More Edits, More Stable: Understanding the Lifelong Normalization in Sequential Model Editing}

\begin{document}

\twocolumn[
  \icmltitle{More Edits, More Stable: Understanding the Lifelong Normalization in Sequential Model Editing}

  % It is OKAY to include author information, even for blind submissions: the
  % style file will automatically remove it for you unless you've provided
  % the [accepted] option to the icml2026 package.

  % List of affiliations: The first argument should be a (short) identifier you
  % will use later to specify author affiliations Academic affiliations
  % should list Department, University, City, Region, Country Industry
  % affiliations should list Company, City, Region, Country

  % You can specify symbols, otherwise they are numbered in order. Ideally, you
  % should not use this facility. Affiliations will be numbered in order of
  % appearance and this is the preferred way.
  \icmlsetsymbol{equal}{*}

  \begin{icmlauthorlist}
    \icmlauthor{Xin Ma}{CS}
    \icmlauthor{Wei Chen}{CS}
    \icmlauthor{Qi Liu}{CS}
    \icmlauthor{Derong Xu}{CS,AI}
    \icmlauthor{Zhi Zheng}{CS}
    \icmlauthor{Tong Xu}{CS}
    \icmlauthor{Enhong Chen}{CS}
  \end{icmlauthorlist}

  % \icmlaffiliation{CS}{State Key Laboratory of Cognitive Intelligence, University of Science and Technology of China, Anhui, China}
  \icmlaffiliation{CS}{University of Science and Technology of China}
  \icmlaffiliation{AI}{City University of Hong Kong}

  \icmlcorrespondingauthor{Zhi Zheng}{zhengzhi97@ustc.edu.cn}
  \icmlcorrespondingauthor{Tong Xu}{tongxu@ustc.edu.cn}
  \icmlcorrespondingauthor{Enhong Chen}{cheneh@ustc.edu.cn}

  % You may provide any keywords that you find helpful for describing your
  % paper; these are used to populate the "keywords" metadata in the PDF but
  % will not be shown in the document
  \icmlkeywords{Machine Learning, ICML}

  \vskip 0.3in
]

% this must go after the closing bracket ] following \twocolumn[ ...

% This command actually creates the footnote in the first column listing the
% affiliations and the copyright notice. The command takes one argument, which
% is text to display at the start of the footnote. The \icmlEqualContribution
% command is standard text for equal contribution. Remove it (just {}) if you
% do not need this facility.

% Use ONE of the following lines. DO NOT remove the command.
% If you have no special notice, KEEP empty braces:
\printAffiliationsAndNotice{}  % no special notice (required even if empty)
% Or, if applicable, use the standard equal contribution text:
% \printAffiliationsAndNotice{\icmlEqualContribution}

\begin{abstract}
Lifelong Model Editing aims to continuously update evolving facts in Large Language Models while preserving unrelated knowledge and general capabilities, yet it remains plagued by catastrophic forgetting and model collapse. Empirically, we find that recent editors resilient over long horizons share the same core strategy: \textbf{Lifelong Normalization (LN)}, which normalizes value gradients using running statistics. Removing LN causes immediate performance collapse, and we observe a counter-intuitive \textbf{positive cumulative effect} where early edits can promote the success of future edits.  
Yet the mechanism of LN remains a ``black box'', leaving its precise role in lifelong stability poorly understood. 
In this work, we provide the \textbf{first} theoretical account of LN in the lifelong regime. Our analysis reveals a self-reinforcing stability loop and proves that, when combined with ridge-regularized regression, LN yields parameter updates with \textbf{asymptotic orthogonality} and \textbf{bounded norms}, directly mitigating forgetting and systemic collapse. Based on these insights, we derive \textsc{StableEdit}, which strengthens this stability loop via an explicit warm-up stage and full whitening, improving long-horizon stability at minimal overhead. Extensive experiments validate our theory and demonstrate competitive performance. Our code is available at \href{https://github.com/MINE-USTC/StableEdit}{\textsc{StableEdit}}.
\end{abstract}

\section{Introduction}
\label{sec:intro}

Large Language Models (LLMs) store a vast amount of world knowledge in their parameters learned from static pre-training corpora \cite{petroni2019language,brown2020language,zhao2023survey}. As the world changes, however, parts of this knowledge become outdated \cite{lazaridou2021mind,de2021editing}.
Updating an LLM via full retraining or continual pre-training is typically prohibitively expensive \cite{hartvigsen2023aging}, which motivates Lifelong Model Editing (LME): a sequential process that continuously updates targeted facts efficiently while preserving unrelated knowledge and general abilities \cite{sinitsin2020editable,yao2023editing,zhang2024comprehensive}. LME offers a practical mechanism to correct factual errors \cite{derong2024fact,guotowards}, mitigate biases \cite{li2023survey,gallegos2024bias}, and improve safety \cite{el2022impossible,huang2024survey}, ultimately achieving sustainable lifelong learning \cite{tao2024survey,zheng2025towards}.

\begin{figure}[t]
    \centering
    \includegraphics[width=1\linewidth]{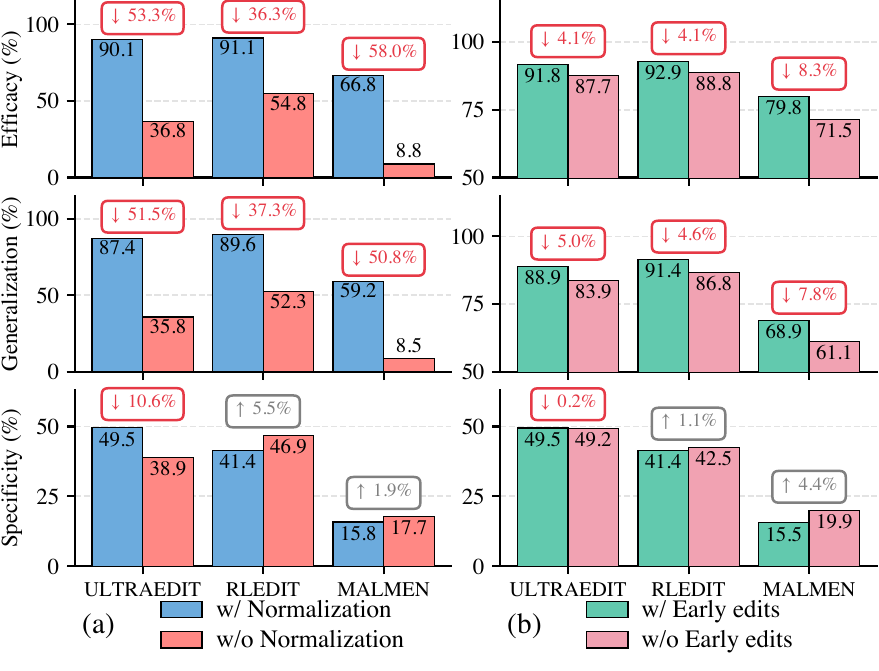}
    \vspace{-20pt}
    \caption{Editing results on Llama-3-8B-Instruct under \emph{with} vs.\ \emph{without} LN. (a) Performance of \textsc{ULTRAEDIT} and \textsc{RLEdit} (20K total edits over $T{=}200$ steps) alongside \textsc{MALMEN} (2K edits over $T{=}20$ steps). (b) Performance on the same later segment (steps $\frac{T}{2}\; \text{to}\;T$), comparing editing from the full sequence $1\; \text{to}\;T$ versus starting from $\frac{T}{2}\; \text{to}\;T$ without early edits. Downward arrows ($\downarrow$) indicate the relative drop (\%) when removing LN (w/ $\rightarrow$ w/o).}
    \vspace{-13pt}
    \label{fig:LN_and_warmup}
\end{figure}

This lifelong setting, however, introduces two severe challenges: (1) catastrophic forgetting, where new edits interfere with previously edited knowledge \cite{wang2024comprehensive,gupta2024model,luo2025empirical}; and (2) model collapse, where the accumulation of parameter updates causes the model to drift systematically away from its original state \cite{gupta2024unified,ma2024perturbation}.
Together, these challenges make long-horizon editing stability notoriously difficult. Remarkably, despite these obstacles, recent editors such as \textsc{ULTRAEDIT} \cite{gu2025ultraedit} and \textsc{RLEdit} \cite{li2025reinforced} exhibit strong resilience in LME scenarios. Examining these methods, we identify a striking commonality: they all incorporate the same normalization module, which we term \textbf{Lifelong Normalization (LN)}. LN normalizes the value gradient using running statistics (mean and variance) that are continuously updated throughout the edit sequence (see \cref{sec:preliminary,sec:tracking} for details). Empirically, ablating LN triggers immediate performance collapse (\cref{fig:LN_and_warmup}(a)), confirming its critical role in maintaining long-term stability.

To further investigate how LN contributes to LME, we conduct a second experiment (\cref{fig:LN_and_warmup}(b)). We evaluate the same set of \emph{later} edits under two settings: running the \emph{full} edit sequence from the beginning, or starting mid-way by executing only the later edits (skipping the initial \emph{early} edits).
A natural intuition is that early edits should become a growing burden: each update consumes plasticity or accumulates constraints, making subsequent edits progressively harder. Surprisingly, we observe the opposite: running the full sequence yields better performance on later edits than starting mid-way. We term this counter-intuitive phenomenon the \textbf{positive cumulative effect}, indicating that with LN, early edits can actually stabilize the editing process and facilitate future edits. However, how LN enables such long-term stability remains poorly understood. This motivates our core research question:

\definecolor{cvprpurple}{rgb}{0.52,0.34,0.74}
\begin{tcolorbox}[
  colframe=black!50,
  colback=cvprpurple!8,
  boxrule=1.2pt,
  arc=2mm,
  % ---- 关键：控制 box 与正文之间的距离 ----
  before skip=7pt,
  after skip=7pt,
  % ---- box 内部边距 ----
  top=5pt,
  bottom=5pt,
  left=5pt,
  right=5pt,
  boxsep=1pt
]
\begin{enumerate}[
  label=\textbf{Q\arabic*.},  % 显示 Q1. Q2.
  ref=Q\arabic*,              % \ref 输出 Q1/Q2
  leftmargin=2.4em,           % 关键：给 label 留空间，避免“顶出框外”
  labelsep=0.5em,
  itemsep=4pt,
  topsep=0pt,
  parsep=0pt,
  partopsep=0pt
]
  \item \label{rq:mechanism}
  \textit{In LME scenarios, what mechanism makes LN indispensable for long-term stability and produces the ``positive cumulative effect'' in which early edits facilitate future ones?}
\end{enumerate}
\end{tcolorbox}

In response, we provide the \textbf{first} theoretical framework to demystify LN in LME. Our analysis reveals that LN forms a \textbf{self-reinforcing stability loop} across edit steps.
At a high level, LN continuously aggregates historical information to build an accurate estimate of the task-specific value gradient distribution. Using this estimate, whitening-based normalization and ridge-regularized regression yield calibrated parameter updates. These updates exhibit asymptotic orthogonality, alleviating interference (combating catastrophic forgetting) and maintain bounded norms to mitigate systemic drift (combating model collapse). Ultimately, this synergy transforms LME into a virtuous cycle where each successful edit preserves a stable environment for the next.

Based on these insights, we derive \textsc{StableEdit}, which strengthens the stability loop via a warm-up stage and full whitening, leading to improved long-horizon stability at minimal overhead. Our contributions are threefold:

\begin{itemize}[leftmargin=*]
    \item \textbf{Empirically,} we identify LN as a cornerstone for stability in LME and uncover the counter-intuitive positive cumulative effect where early edits facilitate later ones.
    \item \textbf{Theoretically,} we provide the first theoretical framework to demystify LN, revealing a self-reinforcing loop that enforces asymptotic orthogonality of updates and norm stability.
    \item \textbf{Methodologically,} we derive \textsc{StableEdit}, which utilizes an explicit warm-up and full whitening to maximize the stability loop. Extensive experiments validate our theory and the proposed editor.
\end{itemize}

\section{Preliminaries}
\label{sec:preliminary}
\subsection{Lifelong Model Editing}

We formulate LME as a sequential modification of LLMs, starting from an initial model $f_{\boldsymbol{W}_0}$. At each step $t$, the model receives a batch of $n_t$ edits $\mathcal{E}_t = \{\boldsymbol{e}_{t}^i{=}(\boldsymbol{x}_t^i, \boldsymbol{y}_t^i)\}_{i=1}^{n_t}$. The goal is to obtain updated parameters $\boldsymbol{W}_t = \boldsymbol{W}_{t-1} + \boldsymbol{\Delta}_t$ that satisfy three key properties: (1) \emph{Efficacy}, which ensures the model accurately recalls the specific facts in the current batch, i.e., $f_{\boldsymbol{W}_t}(\boldsymbol{x}_t^i) = \boldsymbol{y}_t^i$; (2) \emph{Generalization}, requiring the update to extend to semantically equivalent inputs $\boldsymbol{x}'$ such that $f_{\boldsymbol{W}_t}(\boldsymbol{x}') = \boldsymbol{y}_t$; and (3) \emph{Specificity}, which preserves behavior on unrelated knowledge $\mathcal{D}_{\text{retain}}$, demanding minimal model drift, i.e., $f_{\boldsymbol{W}_t}(\boldsymbol{x}) \approx f_{\boldsymbol{W}_{t-1}}(\boldsymbol{x})$ for all $\boldsymbol{x} \in \mathcal{D}_{\text{retain}}$.

\subsection{Lifelong Normalization Strategy}
As highlighted in \cref{sec:intro}, LN serves as the cornerstone of stability for a broad class of model editors \cite{mitchell2021fast, tan2023massive, li2025reinforced, gu2025ultraedit}.
This strategy is commonly motivated as stabilizing the distribution of the internal representations and gradients used to form parameter updates.
 
For each edit instance $i$ at step $t$, two vectors are extracted from a target module (e.g., the MLP’s \texttt{down\_proj} layer): the input hidden state $\boldsymbol{h}_t^i$ and the value gradient $\boldsymbol{v}_{t}^i$, which is the gradient of the loss with respect to the editable module output $\boldsymbol{u}_t^i = \boldsymbol W_{t-1} \boldsymbol h_t^i$. 
These are concatenated into a joint vector $\boldsymbol{z}_t^i=[\boldsymbol{h}_t^i; \boldsymbol{v}_t^i]$ and standardized element-wise: $\tilde{\boldsymbol{z}}_t^i = \frac{{\boldsymbol{z}}_t^i - \boldsymbol{m}_t}{\boldsymbol{\sigma}_t}$,
where $\boldsymbol{m}_t$ and $\boldsymbol{\sigma}_t$ denote the running mean and standard deviation of all processed vectors. Crucially, these statistics are updated online to adapt to distribution shifts. Let $N_t = \sum_{j=1}^t n_j$ denote the cumulative sample count, $\boldsymbol S_t = (N_{t}-1)\boldsymbol{\sigma}_t^2$ the sum of squared deviations, and $\boldsymbol{\delta}_t = \bar{\boldsymbol{z}}_t - \boldsymbol{m}_{t-1}$ the mean shift, with $\bar{\boldsymbol{z}}_t$ the current batch mean.
The running statistics are updated element-wise as follows:
\begin{align}
\boldsymbol{m}_t &\leftarrow \boldsymbol{m}_{t-1} + \tfrac{n_t}{N_t}\boldsymbol{\delta}_t, \label{eq:running_mean}\\
\boldsymbol{S}_t &\leftarrow \boldsymbol{S}_{t-1} + n_t \,\text{Var}(\boldsymbol{z}_t) + \tfrac{N_{t-1}n_t}{N_t}\boldsymbol{\delta}_t^2,\label{eq:running_std}
\end{align}
where $\text{Var}(\boldsymbol{z}_t)$ denotes the element-wise variance of the vectors in the current batch at step $t$.
The utilization of the normalized vectors $\{\tilde{\boldsymbol{z}}_t^i\}$ diverges across architectures. \textsc{ULTRAEDIT} \cite{gu2025ultraedit} uses $\tilde{\boldsymbol{z}}_t^i$ to formulate a ridge regression objective and solves for the update $\boldsymbol{\Delta}_t$ in closed form.
Conversely, in hypernetwork-based methods (e.g., \textsc{RLEdit} \cite{li2025reinforced}), $\tilde{\boldsymbol{z}}_t^i$ is first fed to an auxiliary network and the update is then obtained by solving a ridge-regression problem.

\section{Analysis of the Lifelong Normalization Mechanism}
\label{sec:LN_mechanism}

This section demystifies LN by establishing a theoretical chain from statistical tracking to parameter updates. We first model LME as a Bayesian recursive tracking process to capture non-stationary gradient distributions (\cref{sec:tracking}). We then prove that accumulated edits progressively reduce distribution estimation errors (\cref{sec:calibration}). Furthermore, we demonstrate that precise estimates allow LN combined with ridge regression to enforce crucial properties on parameter updates, namely asymptotic orthogonality and bounded update norms, which directly combat catastrophic forgetting and model collapse (\cref{sec:update}).

\subsection{Dynamic Gradient Modeling and Bayesian Recursive Tracking}
\label{sec:tracking}
\textbf{Problem Setup}\footnote{We use bold symbols for vectors and matrices and non-bold symbols for scalars in this work.}  At each step $t \ge 1$, we draw a batch of $n_t$ editing requests $\{\boldsymbol e_t^i\}_{i=1}^{n_t}$ i.i.d.\ from the edit distribution $\mathcal D$ and update editable parameters across multiple layers.
Let $\boldsymbol W_{t-1,l}\in\mathbb R^{d\times d_h}$ denote the editable weight matrix at layer $l$ \emph{before} step $t$; applying the update $\boldsymbol{\Delta}_{t,l}$ yields $\boldsymbol W_{t,l}$.
For each edit $i$, the \emph{input hidden state} $\boldsymbol{h}_{t,l}^i \in \mathbb{R}^{d_h}$ and the corresponding \emph{value gradient} $\boldsymbol{v}_{t,l}^i \in \mathbb{R}^{d}$ at the target token position\footnote{Without loss of generality, we focus on single-token examples; the analysis extends to multi-token settings.} are extracted,
where $\boldsymbol{v}_{t,l}^i$ is defined as the gradient of the loss $\ell$ with respect to the module's output $\boldsymbol{u}_{t,l}^i = \boldsymbol{W}_{t-1,l}\boldsymbol{h}_{t,l}^i$:
\begin{equation}\label{eq:compute_v}
    \boldsymbol{v}_{t,l}^i := \nabla_{\boldsymbol{u}_{t,l}^i} \ell(\boldsymbol{e}_t^i; \boldsymbol{W}_{t-1,l}) \triangleq \boldsymbol{g}(\boldsymbol{e}_t^i; \boldsymbol{W}_{t-1,l}).
\end{equation}

As the parameters evolve over steps, the induced value gradient distribution changes accordingly. We model this distribution by its first two moments:
\begin{equation*}
    \begin{aligned}
    \boldsymbol{\mu}_{t, l} &:= \mathbb{E}_{\boldsymbol{e} \sim \mathcal{D}}\left[\boldsymbol{g}\left(\boldsymbol{e} ; \boldsymbol{W}_{t-1, l}\right)\right], \\
    \boldsymbol{\Sigma}_{t, l} &:= \operatorname{Cov}_{\boldsymbol{e} \sim \mathcal{D}}\left[\boldsymbol{g}\left(\boldsymbol{e} ; \boldsymbol{W}_{t-1, l}\right)\right].
    \end{aligned}
\end{equation*}
This yields a \emph{moving target}: $(\boldsymbol\mu_{t,l},\boldsymbol\Sigma_{t,l})$ drift with $t$.
This non-stationarity is detrimental to the LN strategy: since LN at step $t$ requires the \emph{current} statistics, stale estimates lead to biased centering or scaling and hence perturb the effective parameter updates. Therefore, we need a principled way to \emph{track} $(\boldsymbol\mu_{t,l},\boldsymbol\Sigma_{t,l})$ using only a limited number of samples.

To make this precise, we impose mild regularity conditions:\begin{assumption}[Regularity Conditions]
\label{asmp:main_regularity}
For all editable layers $l$ and steps $t$, there exist finite constants $L,M,\sigma_->0,\sigma_+>0$ such that
\begin{enumerate}[label=(\alph*), leftmargin=*, topsep=0pt, itemsep=0pt, parsep=2pt, partopsep=0pt]
    \item \textbf{Smoothness:} $\boldsymbol g(\boldsymbol e;\boldsymbol W)$ is $L$-Lipschitz continuous w.r.t. $\boldsymbol{W}$, and
    $\sup_{\boldsymbol W}\mathbb E[\|\boldsymbol g(\boldsymbol e;\boldsymbol W)\|_2^2]\le M$.
    \item \textbf{Spectrum Bounds:} The true covariance $\boldsymbol\Sigma_{t,l}$ is non-degenerate: $\sigma_{-} \boldsymbol I \preceq \boldsymbol \Sigma_{t,l} \preceq \sigma_{+} \boldsymbol I$.
\end{enumerate}
\end{assumption}
Under \cref{asmp:main_regularity}, the drift of $(\boldsymbol\mu_{t,l},\boldsymbol\Sigma_{t,l})$ (measured by $\|\boldsymbol\mu_{t,l}-\boldsymbol\mu_{t-1,l}\|_2$ and $\|\boldsymbol\Sigma_{t,l}-\boldsymbol\Sigma_{t-1,l}\|_2$) can be bounded by the magnitude of the update $\boldsymbol\Delta_{t,l}=\boldsymbol W_{t,l}-\boldsymbol W_{t-1,l}$ (formal statement in \cref{lem:expected_drift_l}).
Hence, if updates are controlled, then the gradient distribution \emph{evolves slowly}; later we show LN combined with ridge-regularized regression yields such control (\cref{sec:update}). 
Consequently, statistics from step $t\!-\!1$ remain a strong warm-start for step $t$.

We thus formulate distribution tracking as a \emph{recursive Bayesian inference} task. 
Concretely, we model $\mathcal D_{t,l}=\{\boldsymbol v_{t,l}^i\}_{i=1}^{n_t}$ as conditionally i.i.d.\ samples from a multivariate Gaussian working model
$\mathcal N_d(\boldsymbol\mu_{t,l},\boldsymbol\Sigma_{t,l})$ to capture the critical first- and second-order statistics,
and place a Normal--Inverse--Wishart (NIW) conjugate prior on $(\boldsymbol\mu_{t,l},\boldsymbol\Sigma_{t,l})$. Conjugacy ensures the posterior remains NIW, yielding an explicit step-to-step recursion for the NIW hyperparameters. Let the sample mean be
$\bar{\boldsymbol v}_{t,l}=\frac1{n_t}\sum_{i=1}^{n_t}\boldsymbol v_{t,l}^i$ and scatter matrix 
$\boldsymbol S_{t,l}=\sum_{i=1}^{n_t}(\boldsymbol v_{t,l}^i-\bar{\boldsymbol v}_{t,l})(\boldsymbol v_{t,l}^i-\bar{\boldsymbol v}_{t,l})^\top$. We derive the following recursive update rule.

\begin{restatable}{theorem}{mytheoremone}\label{theorem1}
 Suppose the prior over $(\boldsymbol{\mu}_{t,l},\boldsymbol{\Sigma}_{t,l})$ follows a NIW distribution derived from step $t-1$,
\[
p(\boldsymbol{\mu}_{t,l},\boldsymbol{\Sigma}_{t,l})
=\text{NIW}(\boldsymbol{m}_{t-1,l},\kappa_{t-1,l},\boldsymbol{\Psi}_{t-1,l},\nu_{t-1,l}),
\]
and the likelihood is conditionally i.i.d. Gaussian,
\[
p(\mathcal{D}_{t,l}\mid \boldsymbol{\mu}_{t,l},\boldsymbol{\Sigma}_{t,l})
=\prod_{i=1}^{n_t}\mathcal{N}_d(\boldsymbol{v}_{t,l}^i\mid \boldsymbol{\mu}_{t,l},\boldsymbol{\Sigma}_{t,l}).
\]
Then the posterior $p(\boldsymbol{\mu}_{t,l},\boldsymbol{\Sigma}_{t,l}\mid\mathcal{D}_{t,l})$ is also NIW, namely $\text{NIW}(\boldsymbol{m}_{t,l},\kappa_{t,l},\boldsymbol{\Psi}_{t,l},\nu_{t,l})$, with updated hyperparameters:
% \label{bayesian_tracking}
\begin{align*}
\kappa_{t,l}&=\kappa_{t-1,l}+n_t, \quad \nu_{t,l}=\nu_{t-1,l}+n_t,\\
\boldsymbol{m}_{t,l}&=\frac{\kappa_{t-1,l}\boldsymbol{m}_{t-1,l}+n_t\bar{\boldsymbol{v}}_{t,l}}{\kappa_{t,l}}, \\
\boldsymbol{\Psi}_{t,l}&=\boldsymbol{\Psi}_{t-1,l}+\boldsymbol{S}_{t,l}
+\frac{\kappa_{t-1,l}n_t}{\kappa_{t,l}}\boldsymbol{Q}_{t,l}, 
\end{align*}
where $\boldsymbol{Q}_{t,l}=(\bar{\boldsymbol{v}}_{t,l}-\boldsymbol{m}_{t-1,l})(\bar{\boldsymbol{v}}_{t,l}-\boldsymbol{m}_{t-1,l})^\top$.
\end{restatable}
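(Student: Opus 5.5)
The plan is to prove \cref{theorem1} by writing the posterior kernel explicitly through Bayes' rule, $p(\boldsymbol\mu,\boldsymbol\Sigma\mid\mathcal D_{t,l})\propto p(\mathcal D_{t,l}\mid\boldsymbol\mu,\boldsymbol\Sigma)\,p(\boldsymbol\mu,\boldsymbol\Sigma)$, and matching it to the NIW density. Layer and step subscripts are dropped throughout. We use the standard factorization $\text{NIW}(\boldsymbol m,\kappa,\boldsymbol\Psi,\nu)$: $\boldsymbol\mu\mid\boldsymbol\Sigma\sim\mathcal N(\boldsymbol m,\boldsymbol\Sigma/\kappa)$ and $\boldsymbol\Sigma\sim\mathcal W^{-1}(\boldsymbol\Psi,\nu)$. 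The prior kernel in $(\boldsymbol\mu,\boldsymbol\Sigma)$ is then
\[
|\boldsymbol\Sigma|^{-(\nu+d+2)/2}\exp\!\Big(-\tfrac12\operatorname{tr}(\boldsymbol\Psi\boldsymbol\Sigma^{-1})-\tfrac{\kappa}{2}(\boldsymbol\mu-\boldsymbol m)^\top\boldsymbol\Sigma^{-1}(\boldsymbol\mu-\boldsymbol m)\Big).
\]
The likelihood kernel is $|\boldsymbol\Sigma|^{-n/2}\exp(-\tfrac12\sum_i(\boldsymbol v^i-\boldsymbol\mu)^\top\boldsymbol\Sigma^{-1}(\boldsymbol v^i-\boldsymbol\mu))$. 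We discard all factors that do not depend on $(\boldsymbol\mu,\boldsymbol\Sigma)$.

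First, decompose the data term around the batch mean:
\[
\sum_i(\boldsymbol v^i-\boldsymbol\mu)^\top\boldsymbol\Sigma^{-1}(\boldsymbol v^i-\boldsymbol\mu)=\operatorname{tr}(\boldsymbol S\boldsymbol\Sigma^{-1})+n(\bar{\boldsymbol v}-\boldsymbol\mu)^\top\boldsymbol\Sigma^{-1}(\bar{\boldsymbol v}-\boldsymbol\mu).
\]
The cross term vanishes because $\sum_i(\boldsymbol v^i-\bar{\boldsymbol v})=0$. The exponent then contains two quadratics in $\boldsymbol\mu$, weighted by $\kappa_{t-1}$ (centred at $\boldsymbol m_{t-1}$) and by $n_t$ (centred at $\bar{\boldsymbol v}$).

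Second, complete the square in $\boldsymbol\mu$. We use the identity, valid for any symmetric positive definite $\boldsymbol A=\boldsymbol\Sigma^{-1}$:
\[
\kappa(\boldsymbol\mu-\boldsymbol m)^\top\boldsymbol A(\boldsymbol\mu-\boldsymbol m)+n(\boldsymbol\mu-\bar{\boldsymbol v})^\top\boldsymbol A(\boldsymbol\mu-\bar{\boldsymbol v})=(\kappa+n)(\boldsymbol\mu-\boldsymbol m')^\top\boldsymbol A(\boldsymbol\mu-\boldsymbol m')+\tfrac{\kappa n}{\kappa+n}(\bar{\boldsymbol v}-\boldsymbol m)^\top\boldsymbol A(\bar{\boldsymbol v}-\boldsymbol m),
\]
where $\boldsymbol m'=(\kappa\boldsymbol m+n\bar{\boldsymbol v})/(\kappa+n)$. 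It is verified by expanding both sides, or by noting that the residual is the classical "product of two Gaussians" constant. This step is the only genuinely computational one and the main place where care is needed: the remainder must come out exactly as $\tfrac{\kappa n}{\kappa+n}$ times a rank-one quadratic. Writing that remainder as $\tfrac{\kappa_{t-1}n_t}{\kappa_t}\operatorname{tr}(\boldsymbol Q_t\boldsymbol\Sigma^{-1})$ lets it merge with the other trace terms.

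Third, collect everything. The exponent becomes
\[
-\tfrac12\operatorname{tr}\!\big[(\boldsymbol\Psi_{t-1}+\boldsymbol S_t+\tfrac{\kappa_{t-1}n_t}{\kappa_t}\boldsymbol Q_t)\boldsymbol\Sigma^{-1}\big]-\tfrac{\kappa_t}{2}(\boldsymbol\mu-\boldsymbol m_t)^\top\boldsymbol\Sigma^{-1}(\boldsymbol\mu-\boldsymbol m_t).
\]
The determinant power is $|\boldsymbol\Sigma|^{-((\nu_{t-1}+n_t)+d+2)/2}$. This is exactly the NIW kernel with $\kappa_t=\kappa_{t-1}+n_t$, $\nu_t=\nu_{t-1}+n_t$, and $\boldsymbol m_t,\boldsymbol\Psi_t$ as stated. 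Since $\boldsymbol\Psi_t$ is the sum of a positive definite matrix and positive semidefinite terms, it is positive definite, and $\nu_t>d-1$ is preserved. The kernel is therefore integrable and normalizes to a proper NIW density. By uniqueness of the normalized density, the posterior is $\text{NIW}(\boldsymbol m_{t,l},\kappa_{t,l},\boldsymbol\Psi_{t,l},\nu_{t,l})$, as claimed in \cref{theorem1}.
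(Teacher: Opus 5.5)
Your proposal is correct and follows essentially the same route as the paper's proof: Bayes' rule, the scatter decomposition $\sum_i(\boldsymbol v^i-\boldsymbol\mu)(\boldsymbol v^i-\boldsymbol\mu)^\top=\boldsymbol S+n(\bar{\boldsymbol v}-\boldsymbol\mu)(\bar{\boldsymbol v}-\boldsymbol\mu)^\top$, completing the square in $\boldsymbol\mu$ to obtain the $\tfrac{\kappa_{t-1}n_t}{\kappa_t}\boldsymbol Q_t$ remainder, and then reading off the NIW kernel with determinant exponent $-(\nu_t+d+2)/2$. Your closing check that $\boldsymbol\Psi_t\succ 0$ and $\nu_t>d-1$, so that the kernel normalizes to a proper NIW, is a small rigor point the paper leaves implicit; it relies on the prior itself being proper ($\kappa_{t-1}>0$, $\boldsymbol\Psi_{t-1}\succ 0$, $\nu_{t-1}>d-1$), which the hypothesis of the theorem supplies.
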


\begin{corollary}[Posterior Expectations]
\label{corollary1}
The posterior expectations of the distributional parameters are
\begin{align}\label{eq:posterior}
    \hat{\boldsymbol{\mu}}_{t,l} &= \boldsymbol{m}_{t,l},\;\;
    \hat{\boldsymbol{\Sigma}}_{t,l} = \frac{\boldsymbol{\Psi}_{t,l}}{\nu_{t,l} - d - 1}, \;\; \text{if } \nu_{t,l} > d+1.
\end{align}
\end{corollary}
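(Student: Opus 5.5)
Since \cref{theorem1} already tells us that the posterior is $\text{NIW}(\boldsymbol m_{t,l},\kappa_{t,l},\boldsymbol\Psi_{t,l},\nu_{t,l})$, the corollary only requires the first moments of the two marginals of an NIW law, evaluated at these updated hyperparameters. I will use the standard hierarchical form of the NIW distribution:
\[
\boldsymbol\Sigma_{t,l}\sim\mathcal W^{-1}(\boldsymbol\Psi_{t,l},\nu_{t,l}),\qquad
\boldsymbol\mu_{t,l}\mid\boldsymbol\Sigma_{t,l}\sim\mathcal N_d\bigl(\boldsymbol m_{t,l},\boldsymbol\Sigma_{t,l}/\kappa_{t,l}\bigr).
\]
This is exactly the factorization used to build the conjugate prior, so it can be read off from the NIW density obtained in the proof of \cref{theorem1}.

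\textbf{Mean.} By the tower property,
\[
\mathbb E[\boldsymbol\mu_{t,l}\mid\mathcal D_{t,l}]
=\mathbb E\bigl[\mathbb E[\boldsymbol\mu_{t,l}\mid\boldsymbol\Sigma_{t,l},\mathcal D_{t,l}]\bigr]
=\mathbb E[\boldsymbol m_{t,l}]
=\boldsymbol m_{t,l},
\]
because the conditional mean $\boldsymbol m_{t,l}$ does not depend on $\boldsymbol\Sigma_{t,l}$. For this expectation to exist I need $\mathbb E\|\boldsymbol\mu_{t,l}\|<\infty$. The marginal of $\boldsymbol\mu_{t,l}$ is a multivariate Student-$t$ with $\nu_{t,l}-d+1$ degrees of freedom. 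Since $\nu_{t,l}>d+1>d$, this number is positive, so the mean exists.

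\textbf{Covariance.} This is the main step. I will use the standard inverse-Wishart mean: if $\boldsymbol\Sigma\sim\mathcal W^{-1}(\boldsymbol\Psi,\nu)$ with $\nu>d+1$, then $\mathbb E[\boldsymbol\Sigma]=\boldsymbol\Psi/(\nu-d-1)$. To justify it, first note that $\boldsymbol\Sigma^{-1}\sim\mathcal W(\boldsymbol\Psi^{-1},\nu)$. Write $\boldsymbol\Sigma^{-1}=\boldsymbol\Psi^{-1/2}\boldsymbol A\boldsymbol\Psi^{-1/2}$ with $\boldsymbol A\sim\mathcal W(\boldsymbol I,\nu)$. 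This reduces the claim to $\mathbb E[\boldsymbol A^{-1}]=\boldsymbol I/(\nu-d-1)$.

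The second form of that identity follows from rotation invariance, which gives $\mathbb E[\boldsymbol A^{-1}]=c\boldsymbol I$. To find $c$, I would compute $\mathbb E[(\boldsymbol A^{-1})_{11}]$. By the Bartlett decomposition (or the Schur complement), $1/(\boldsymbol A^{-1})_{11}\sim\chi^2_{\nu-d+1}$. The mean of an inverse $\chi^2_k$ variable is $1/(k-2)$, and with $k=\nu-d+1$ this gives $c=1/(\nu-d-1)$. It is finite exactly when $\nu>d+1$, which is the stated condition.

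Applying this with $(\boldsymbol\Psi_{t,l},\nu_{t,l})$ gives $\hat{\boldsymbol\Sigma}_{t,l}=\boldsymbol\Psi_{t,l}/(\nu_{t,l}-d-1)$. The only care needed is with the parametrization convention, namely that $\nu$ is the inverse-Wishart degrees of freedom and $\boldsymbol\Psi$ is the scale matrix, as in \cref{theorem1}.
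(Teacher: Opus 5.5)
Your proposal is correct and takes the same route as the paper, which gives no separate proof and simply reads off the standard NIW marginal means ($\mathbb E[\boldsymbol\mu_{t,l}]=\boldsymbol m_{t,l}$ and $\mathbb E[\boldsymbol\Sigma_{t,l}]=\boldsymbol\Psi_{t,l}/(\nu_{t,l}-d-1)$) from the posterior in \cref{theorem1}; you additionally derive the inverse-Wishart mean yourself via rotation invariance and the Schur-complement fact $1/(\boldsymbol A^{-1})_{11}\sim\chi^2_{\nu-d+1}$. One wording fix: a Student-$t$ mean exists when the degrees of freedom exceed $1$, not merely when they are positive, but that condition holds here because $\nu_{t,l}-d+1>2$.
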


Here, $(\boldsymbol m,\kappa,\boldsymbol\Psi,\nu)$ are NIW hyperparameters (mean, strength, scale, dof). 
\cref{theorem1} provides a recursive Bayesian view of LN: sequential editing continuously refines the gradient distribution, where the estimate from step $t\!-\!1$ is carried into step $t$ and corrected using the new batch (posterior$\to$prior$\to$posterior). Since conjugacy guarantees the updated posterior stays in the same family, tracking the drifting gradient distribution reduces to updating $(\boldsymbol m,\kappa,\boldsymbol\Psi,\nu)$ via simple algebraic recursions. As a result, we can stably and efficiently track the \emph{current} $(\boldsymbol\mu_{t,l},\boldsymbol\Sigma_{t,l})$ using limited samples. Combined with \cref{corollary1}, these recursions yield running estimates $(\hat{\boldsymbol\mu}_{t,l},\hat{\boldsymbol\Sigma}_{t,l})$ for the normalization step in LN-based editors. In practice, many editors use diagonal normalization for efficiency; taking only $\text{diag}(\hat{\boldsymbol\Sigma}_{t,l})$ recovers running mean and variance updates in \cref{eq:running_mean,eq:running_std} up to standard degrees-of-freedom corrections. The theorem is proved in \appref{proof:thm_tracking}.

\subsection{Statistical Foundation of Positive Cumulative Effect}
\label{sec:calibration}
Having formalized LN as a recursive tracking process in \cref{sec:tracking}, we now study the statistical behavior of the Bayesian estimator in a non-stationary environment. In particular, we analyze whether early edits can improve subsequent distribution estimation relative to a cold start.

To make this comparison explicit, we split the editing process into a \emph{previous phase} of $r\!\ge\!1$ steps and a \emph{current phase} of $t\!\ge\!1$ steps (global step $r\!+\!t$). 
Without loss of generality, we assume a constant batch size $n_t = n$ for $t \ge 1$ and define the expected estimation errors for the mean and covariance at step $t$ of the current phase as:
\begin{align*}
    \operatorname{MSE}^{(\boldsymbol\mu, \text{cur})}_{t,l} & := \mathbb{E}\|\boldsymbol{m}_{t,l}^{(\text{cur})}-\boldsymbol{\mu}_{t,l}^{(\text{cur})}\|_2^2 ,\\
    E_{t,l}^{(\boldsymbol \Sigma
       ,\text{cur})}&:= \mathbb{E} \|\widehat{\boldsymbol \Sigma}_{t,l}^{(\text{cur})}-\boldsymbol \Sigma_{t,l}^{(\text{cur})}\|_2,
\end{align*}
where $\boldsymbol{m}_{t,l}^{(\text{cur})}$ and $\widehat{\boldsymbol \Sigma}_{t,l}^{(\text{cur})}$ are the point estimators from \cref{corollary1}, with the initial condition inherited from the previous phase, i.e., $\boldsymbol{m}_{0,l}^{(\text{cur})} = \boldsymbol{m}_{r,l}^{(\text{pre})}$ and $\widehat{\boldsymbol \Sigma}_{0,l}^{(\text{cur})} =\widehat{\boldsymbol \Sigma}_{r,l}^{(\text{pre})} $. Here $\operatorname{MSE}_{t,l}^{(\boldsymbol\mu, \text{cur})}$ is the mean-squared error, while $E^{(\boldsymbol\Sigma,\text{cur})}_{t,l}$ uses the spectral norm to control the worst-case scaling relevant to normalization stability. The expectation $\mathbb{E}[\cdot]$ is taken over all data samples observed up to the current step.

\cref{sec:tracking} relates distribution drift to the parameter update norm, suggesting that $(\boldsymbol{\mu}_{t,l},\boldsymbol\Sigma_{t,l})$ evolve smoothly across steps under controlled parameter updates. To obtain long-horizon guarantees from finitely many samples per step, we analyze a sufficient \emph{trackable regime} in which these drift magnitudes decay.

\begin{assumption}[Trackable Drift Regime]
\label{asmp:drift_rate}
For constants $\epsilon, \epsilon' > 0$:
\begin{equation*}
\begin{aligned}
    D_j^{(\boldsymbol\mu, \text{cur})} &= O((r+j)^{-(1+\epsilon/2)}),\\
    \Delta_j^{(\boldsymbol\Sigma, \text{cur})}& = O((r+j)^{-(1+\epsilon')}).
\end{aligned}
\end{equation*}
where $D_j^{(\boldsymbol\mu,\text{cur})}$ and $\Delta_j^{(\boldsymbol\Sigma,\text{cur})}$ upper-bound
$\|\boldsymbol{\mu}_{j,l}^{(\text{cur})}-\boldsymbol{\mu}_{j-1,l}^{(\text{cur})}\|_2$ and
$\|\boldsymbol{\Sigma}_{j,l}^{(\text{cur})}-\boldsymbol{\Sigma}_{j-1,l}^{(\text{cur})}\|_2$, respectively.
\end{assumption}

Since $(\boldsymbol{\mu}_{t,l},\boldsymbol{\Sigma}_{t,l})$ in \cref{asmp:drift_rate} are latent and cannot be verified directly, we track the consecutive-step drifts of their running estimates, $\|\hat{\boldsymbol{\mu}}_{t,l}-\hat{\boldsymbol{\mu}}_{t-1,l}\|$ and $\|\hat{\boldsymbol{\Sigma}}_{t,l}-\hat{\boldsymbol{\Sigma}}_{t-1,l}\|$, as empirical surrogates across multiple architectures and editing scales. Both quantities decay rapidly and stabilize in a low regime thereafter, consistent with the trackable-drift assumption above; full curves are in \appref{app:drift_surrogates}.
In \cref{sec:synergy}, we further discuss how LN actively helps \emph{maintain} such a trackable regime. Under \cref{asmp:main_regularity} and \ref{asmp:drift_rate}, we establish the following error bounds:

\begin{theorem}[MSE Bound for Sequential Mean Estimation]
\label{thm:mean_mse_l}
The MSE satisfies the following recursive bound:
\begin{align*}
    \operatorname{MSE}^{(\boldsymbol \mu, \text{cur})}_{t,l} 
    &\le \underbrace{\frac{\kappa_{r,l}}{\kappa_{r+t,l}} \operatorname{MSE}^{(\boldsymbol\mu, \text{pre})}_{r,l}}_{\to 0 \text{ at rate } O(1/t)}
    + \underbrace{\frac{d\sigma_{+}}{\kappa_{r+t,l}} \ln\left(\frac{\kappa_{r+t,l}}{\kappa_{r,l}}\right)}_{\to 0 \text{ at rate } O(\ln t/t)}\\
    & + \underbrace{\frac{2}{n \kappa_{r+t,l}} \sum_{j=1}^t \left(\kappa_{r+j,l} D_{j}^{(\boldsymbol\mu, \text{cur})}\right)^2}_{ \to 0 \text{ at rate }\max\{O(\ln t/t), O(t^{-\epsilon})\}}.
\end{align*}
\end{theorem}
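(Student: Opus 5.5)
The plan is to derive a one-step recursion for the error $e_j := \boldsymbol m_{r+j,l}-\boldsymbol\mu_{j,l}^{(\text{cur})}$ from the mean update in \cref{theorem1}, and then unroll it. Write $\kappa_j := \kappa_{r+j,l}=\kappa_{r,l}+jn$, so that $\kappa_j = \kappa_{j-1}+n$. From $\boldsymbol m_{r+j,l}=(\kappa_{j-1}\boldsymbol m_{r+j-1,l}+n\bar{\boldsymbol v}_{r+j,l})/\kappa_j$ we get
\[
\kappa_j e_j = \kappa_{j-1} e_{j-1} + \kappa_{j-1}\bigl(\boldsymbol\mu_{j-1,l}-\boldsymbol\mu_{j,l}\bigr) + n\bigl(\bar{\boldsymbol v}_{r+j,l}-\boldsymbol\mu_{j,l}\bigr).
\]
This splits the error into three pieces: the inherited error, a deterministic drift term of size at most $\kappa_{j-1}D_j^{(\boldsymbol\mu,\text{cur})}\le \kappa_j D_j$, and a zero-mean sampling noise $\xi_j := n(\bar{\boldsymbol v}-\boldsymbol\mu_j)$. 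Conditionally on the past, the noise has second moment $n\,\mathrm{tr}(\boldsymbol\Sigma_{j,l})\le n d\sigma_+$ by \cref{asmp:main_regularity}(b).

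Next, unroll the recursion: $\kappa_t e_t = \kappa_0 e_0 + \sum_{j} b_j + \sum_j \xi_j$, where the $b_j$ are the drift terms. Then take expected squared norms. The noise terms form a martingale difference sequence with respect to the filtration of past batches, and they are conditionally mean-zero given everything up to step $j-1$, including $e_{j-1}$. Hence all cross terms between the noise and the (predictable) inherited and drift terms vanish, and the noise contributes $\sum_j n d\sigma_+$.

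The inherited and drift pieces must be combined without losing the sharp leading constant on $\kappa_0^2\operatorname{MSE}^{\text{pre}}$. I would not expand $\|a+b\|^2\le 2\|a\|^2+2\|b\|^2$ globally. Instead, I would work with the per-step recursion and use a weighted Young inequality,
\[
\|\kappa_{j-1}e_{j-1}+b_j\|^2 \le \Bigl(1+\tfrac{n}{\kappa_{j-1}}\Bigr)^{-1}\tfrac{\kappa_j}{\kappa_{j-1}}\cdots,
\]
choosing the weights so that the multiplier on the error term is exactly $\kappa_j/\kappa_{j-1}$. Equivalently, I would track the normalized quantity $\kappa_j\,\mathbb E\|e_j\|^2$. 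Then
\[
\kappa_j \mathbb E\|e_j\|^2 \le \kappa_{j-1}\mathbb E\|e_{j-1}\|^2 + \frac{n d\sigma_+}{\kappa_j} + \frac{2}{n}\bigl(\kappa_j D_j\bigr)^2.
\]
This uses $\|x+y\|^2\le (1+\eta)\|x\|^2+(1+1/\eta)\|y\|^2$ with $\eta=n/\kappa_{j-1}$, together with the identity $\kappa_{j-1}^2(1+\eta)/\kappa_j=\kappa_{j-1}$ and $(1+1/\eta)/\kappa_j = 1/n$. The factor $2$ appears after slightly loosening $\kappa_{j-1}\le\kappa_j$.

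Summing this recursion from $j=1$ to $t$ and dividing by $\kappa_t$ gives the claimed bound term by term. The first term is $\kappa_r\operatorname{MSE}^{\text{pre}}/\kappa_{r+t}$. The second comes from $\sum_j n/\kappa_j\le\int dx/x=\ln(\kappa_{r+t}/\kappa_r)$, which yields $d\sigma_+\ln(\kappa_{r+t}/\kappa_r)/\kappa_{r+t}$. The third is the stated drift sum. The rates annotated under the braces then follow from \cref{asmp:drift_rate}: since $\kappa_j D_j = O((r+j)^{-\epsilon/2})$, the squared terms are $O((r+j)^{-\epsilon})$, whose partial sum is $O(t^{1-\epsilon})$ (or $O(\ln t)$, or $O(1)$) depending on $\epsilon$; dividing by $\kappa_t$ gives the stated maximum.

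The main obstacle is the middle step: getting the Young weights exactly right so that the inherited error keeps coefficient $\kappa_r/\kappa_{r+t}$ with no blow-up constant, while the drift term picks up only $2/n$. I would also need to verify carefully that the martingale cross terms vanish, which requires that $\boldsymbol\mu_{j,l}$ depend only on the past. This holds because $\boldsymbol\mu_{j,l}$ is determined by $\boldsymbol W_{j-1}$.
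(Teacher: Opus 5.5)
Your proposal is correct and follows essentially the same route as the paper. Both arguments use the same one-step error recursion and the vanishing of the noise cross terms by conditional unbiasedness. Both then apply a weighted Young inequality to the inherited-error/drift interaction, tuned so the contraction factor is $\kappa_{r+j-1,l}/\kappa_{r+j,l}$, and finish with telescoping and the integral bound on $\sum_j n/\kappa_{r+j,l}$. Tracking $\kappa_{r+j,l}\,\mathbb{E}\|e_j\|^2$ instead of unrolling products of contraction factors is only a bookkeeping difference. It actually gives the drift term with coefficient $1/n$, not $2/n$: the factor $2$ does not arise from loosening $\kappa_{j-1}\le\kappa_j$ and is simply slack.
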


\begin{theorem}[Spectral Error Bound for Sequential Covariance Estimation]
\label{thm:cov_spectral_l}
Assuming $\nu_{r+t,l}>d+1$, the spectral error $E_{t,l}^{(\boldsymbol\Sigma,\text{cur})}$ satisfies the following recursive bound:
\begin{equation*}
    \begin{aligned}
       E_{t,l}^{(\boldsymbol \Sigma
       ,\text{cur})}  & \le 
       \underbrace{\frac{\nu_{r,l}' }{\nu_{r+t,l}'}E_{r,l}^{(\boldsymbol \Sigma
       ,\text{pre})}}_{\to 0 \text{ at rate } O(1/t)}+\underbrace{\frac{(C_2(\sqrt{dn}+d) + d) \sigma_{+}t}{\nu_{r+t,l}'}}_{\to \text{Const.} } \\
       &  + \underbrace{\begin{aligned}[t] 
        & \frac{1}{\nu_{r+t,l}'} \sum_{j=1}^t \Bigl({\nu_{r+j-1,l}'\Delta_j^{(\boldsymbol \Sigma,\text{cur})}} + {2n} (D_j^{(\boldsymbol \mu,\text{cur})})^2\\
        &+ {2n}\operatorname{MSE}_{j-1,l}^{(\boldsymbol\mu,\text{cur})}\Bigr),
       \end{aligned}
       }_{\to 0 \text{ at rate }  \max\{O((\ln t)^2/t),O(t^{-\epsilon'})\}}
    \end{aligned}
\end{equation*}
where $\nu_{t,l}' = \nu_{t,l}-d-1$. 
\end{theorem}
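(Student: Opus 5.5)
We work with the unnormalized scale matrix. Write $\boldsymbol\Psi_{r+t,l}=\nu'_{r+t,l}\widehat{\boldsymbol\Sigma}^{(\text{cur})}_{t,l}$ and unroll the recursion of \cref{theorem1} over the current phase:
\[
\boldsymbol\Psi_{r+t,l}=\boldsymbol\Psi_{r,l}+\sum_{j=1}^t\Bigl(\boldsymbol S_{r+j,l}+\tfrac{\kappa_{r+j-1,l}n}{\kappa_{r+j,l}}\boldsymbol Q_{r+j,l}\Bigr).
\]
Since $\nu'_{r+t,l}=\nu'_{r,l}+nt$, we split $\nu'_{r+t,l}\boldsymbol\Sigma^{(\text{cur})}_{t,l}$ as $\nu'_{r,l}\boldsymbol\Sigma^{(\text{cur})}_{t,l}+\sum_{j=1}^t n\boldsymbol\Sigma^{(\text{cur})}_{t,l}$. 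This gives
\begin{align*}
\nu'_{r+t,l}\bigl(\widehat{\boldsymbol\Sigma}_{t,l}-\boldsymbol\Sigma_{t,l}\bigr)
={}&\nu'_{r,l}\bigl(\widehat{\boldsymbol\Sigma}_{0,l}-\boldsymbol\Sigma_{t,l}\bigr)\\
&+\sum_{j=1}^t\Bigl(\boldsymbol S_{r+j,l}-n\boldsymbol\Sigma_{t,l}+\tfrac{\kappa_{r+j-1,l}n}{\kappa_{r+j,l}}\boldsymbol Q_{r+j,l}\Bigr).
\end{align*}
Note that $\widehat{\boldsymbol\Sigma}_{0,l}=\widehat{\boldsymbol\Sigma}^{(\text{pre})}_{r,l}$. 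We then take spectral norms, apply the triangle inequality and take expectations term by term.

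The key device is to replace $\boldsymbol\Sigma_{t,l}$ by the step-local truth $\boldsymbol\Sigma_{j,l}$ in each summand. We telescope $\boldsymbol\Sigma_{t,l}-\boldsymbol\Sigma_{j,l}=\sum_{k=j+1}^t(\boldsymbol\Sigma_{k,l}-\boldsymbol\Sigma_{k-1,l})$. Exchanging the order of summation, the drift $\Delta_k$ acquires the weight $\nu'_{r,l}+n(k-1)=\nu'_{r+k-1,l}$. This produces exactly the $\nu'_{r+j-1,l}\Delta_j^{(\boldsymbol\Sigma,\text{cur})}$ term. The same telescoping applied to the first term turns $\nu'_{r,l}\|\widehat{\boldsymbol\Sigma}_{0,l}-\boldsymbol\Sigma_{t,l}\|$ into $\nu'_{r,l}E^{(\boldsymbol\Sigma,\text{pre})}_{r,l}$ plus drifts, which are absorbed into the same sum. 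Dividing by $\nu'_{r+t,l}$ then yields the first and third groups of terms in the statement.

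The remaining summand has two pieces.
\begin{itemize}
\item \textbf{Scatter deviation} $\mathbb E\|\boldsymbol S_{r+j,l}-n\boldsymbol\Sigma_{j,l}\|_2$. We write $\boldsymbol S=\sum_i(\boldsymbol v^i-\boldsymbol\mu_j)(\boldsymbol v^i-\boldsymbol\mu_j)^\top-n(\bar{\boldsymbol v}-\boldsymbol\mu_j)(\bar{\boldsymbol v}-\boldsymbol\mu_j)^\top$. A standard Gaussian sample-covariance concentration bound (Vershynin-type) gives $\mathbb E\|\sum_i(\cdot)-n\boldsymbol\Sigma_j\|_2\le C_2(\sqrt{dn}+d)\sigma_+$ via $\|\boldsymbol\Sigma_j\|\le\sigma_+$ from \cref{asmp:main_regularity}. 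The rank-one correction has expectation $\mathrm{tr}(\boldsymbol\Sigma_j)\le d\sigma_+$. Summing over $j$ gives the $(C_2(\sqrt{dn}+d)+d)\sigma_+t$ term.
\item \textbf{Innovation term} $\boldsymbol Q_{r+j,l}$, bounded in norm by $\|\bar{\boldsymbol v}_j-\boldsymbol m_{j-1}\|^2$. We split $\bar{\boldsymbol v}_j-\boldsymbol m_{j-1}=(\bar{\boldsymbol v}_j-\boldsymbol\mu_j)+(\boldsymbol\mu_j-\boldsymbol\mu_{j-1})+(\boldsymbol\mu_{j-1}-\boldsymbol m_{j-1})$. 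The first piece is independent of the others given the past and has second moment at most $d\sigma_+/n$. Applying $(a+b)^2\le2a^2+2b^2$ to the other two pieces, and using $\kappa_{r+j-1}/\kappa_{r+j}\le1$, gives $d\sigma_+$ (absorbed into the constant term) plus $2n(D_j^{(\boldsymbol\mu,\text{cur})})^2+2n\operatorname{MSE}^{(\boldsymbol\mu,\text{cur})}_{j-1,l}$.
\end{itemize}

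The rates follow from $\nu'_{r+t,l}=\Theta(t)$ together with \cref{asmp:drift_rate} and \cref{thm:mean_mse_l}.
\begin{itemize}
\item Since $\nu'_{r+j-1,l}\Delta_j=O(j^{-\epsilon'})$, its partial sums are $O(t^{1-\epsilon'})$ or $O(\ln t)$.
\item Since $\operatorname{MSE}_{j-1}=O(\ln j/j)$, we get $\sum_j\operatorname{MSE}_{j-1}=O((\ln t)^2)$.
\item The $D_j^2$ sum is summable.
\item Dividing by $\nu'_{r+t,l}$ gives the stated $\max\{O((\ln t)^2/t),O(t^{-\epsilon'})\}$.
\end{itemize}

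\textbf{Expected obstacle.} The main difficulty is the scatter concentration step. The samples at step $j$ are generated under $\boldsymbol W_{j-1}$, which depends on past data. The concentration bound must therefore be applied conditionally on the past filtration, using the Gaussian working model and the uniform spectral bound $\sigma_+$, before towering expectations. A similar care is needed for the independence used in the $\boldsymbol Q$ decomposition. I would state both as conditional bounds first and then take total expectation.
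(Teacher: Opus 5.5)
Your derivation of the bound is correct and uses the same ingredients as the paper: the Vershynin-type bound for the centred scatter, the three-way split of $\bar{\boldsymbol v}_{r+j,l}-\boldsymbol m_{r+j-1,l}$ with the noise cross term removed by independence, and $(a+b)^2\le 2a^2+2b^2$, which is the paper's Young inequality with $\xi=1$. What differs is the bookkeeping.

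The paper proves a one-step contraction $E_{t,l}^{(\boldsymbol\Sigma,\text{cur})}\le \beta_{r+t}E_{t-1,l}^{(\boldsymbol\Sigma,\text{cur})}+B_{r+t}$ with $\beta_{r+t}=\nu'_{r+t-1,l}/\nu'_{r+t,l}$. It compares $\boldsymbol\Psi_{r+t-1,l}$ with $\nu'_{r+t-1,l}\boldsymbol\Sigma_{r+t,l}$, so exactly one drift $\Delta_t$ enters per step, and then unrolls with telescoping products. You instead unroll $\boldsymbol\Psi$ first. You recover the same weights $\nu'_{r+k-1,l}=\nu'_{r,l}+n(k-1)$ by exchanging the order of summation in $\sum_j n(\boldsymbol\Sigma_{j,l}-\boldsymbol\Sigma_{t,l})$, and the two routes yield the same inequality.

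Your version is a single identity followed by one triangle inequality, which makes the origin of each weight transparent. The paper's version exposes the contraction factor, which it reuses in the cold-start comparison of \cref{rmk:cov_sequential_benefit}. Your plan to apply the concentration and independence steps conditionally on the past filtration is more careful than the paper. The paper treats $\boldsymbol\mu_{j,l}$ and $\boldsymbol\Sigma_{j,l}$ as deterministic even though they depend on $\boldsymbol W_{j-1,l}$.

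Two details need repair.

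\textbf{The rate step.} You assert $\operatorname{MSE}^{(\boldsymbol\mu,\text{cur})}_{j-1,l}=O(\ln j/j)$, but \cref{thm:mean_mse_l} only gives $\max\{O(\ln j/j),O(j^{-\epsilon})\}$. For $\epsilon<1$ the sum $\sum_{j\le t}\operatorname{MSE}^{(\boldsymbol\mu,\text{cur})}_{j-1,l}$ is $O(t^{1-\epsilon})$. After dividing by $\nu'_{r+t,l}$ this piece therefore decays only as $O(t^{-\epsilon})$, and the paper's detailed version records exactly $\max\{O((\ln t)^2/t),O(t^{-\epsilon})\}$ for it. The $\epsilon'$-only rate attached to the last group in the statement needs the extra condition $\epsilon\ge\min\{1,\epsilon'\}$. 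Without it, you must add $O(t^{-\epsilon})$ to the stated rate.

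\textbf{The constant.} You count $d\sigma_+$ twice per step. It appears once from the rank-one correction $n(\bar{\boldsymbol v}-\boldsymbol\mu_j)(\bar{\boldsymbol v}-\boldsymbol\mu_j)^\top$ inside $\boldsymbol S$, and once more from $\boldsymbol Q$. This gives $(C_2(\sqrt{dn}+d)+2d)\sigma_+t$ rather than the stated term. To obtain the stated form, absorb the scatter's rank-one correction into $C_2$ as the paper does; this is legitimate since $d\le\sqrt{dn}+d$.
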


\cref{thm:mean_mse_l,thm:cov_spectral_l} characterize how recursive tracking behaves under the trackable drift regime and why early edits help, thereby providing an estimation-level answer to the positive cumulative effect.
Mathematically, the previous phase contributes $rn$ ``virtual samples'', which effectively \emph{shifts the estimation error curve of the current phase to the left} compared to a cold start (formalized in \cref{rmk:mean_sequential_benefit,rmk:cov_sequential_benefit}).
Consequently, the current phase starts with an error level that a cold-start counterpart would only reach after roughly $r$ additional steps.
This justifies an explicit \emph{warm-up} phase: performing a small number of in-domain edits before the target sequence can precondition the running statistics and meet a target estimation error tolerance in fewer steps. See \apprefs{proof:thm_calibration1}{proof:thm_calibration2} for proofs.

\subsection{From Accurate Tracking to Stable Updates}
\label{sec:update}
\cref{sec:calibration} shows that recursive tracking suppresses the estimation error. We now connect these estimates to the induced parameter updates and show how LN combined with ridge regression yields stable, weakly interfering updates.

Given the point estimates $(\hat{\boldsymbol{\mu}}_{t,l},\hat{\boldsymbol{\Sigma}}_{t,l})$ from \cref{corollary1}, LN centers and whitens the value gradient:
$\tilde{\boldsymbol{v}}_{t,l}^i = \hat{\boldsymbol{\Sigma}}_{t,l}^{-1/2}(\boldsymbol{v}_{t,l}^i - \hat{\boldsymbol{\mu}}_{t,l})$,
where $\hat{\boldsymbol{\Sigma}}_{t,l}$ is positive definite (hence invertible) by construction.\footnote{Initializing the NIW prior as $\boldsymbol{\Psi}_{0,l}=\epsilon_0\boldsymbol{I}$ with a small $\epsilon_0>0$ ensures $\hat{\boldsymbol{\Sigma}}_{t,l}\succ 0$ throughout tracking.}
As in several LN-based editors \cite{tan2023massive,li2025reinforced,gu2025ultraedit}, the final parameter update $\boldsymbol\Delta_{t,l}$ is obtained by solving the following ridge regression problem:
\begin{equation}
\label{eq:ridge}
\boldsymbol{\Delta}_{t,l}=\arg \min_{\boldsymbol \Delta} \|\boldsymbol \Delta \boldsymbol{H}_{t,l} - \boldsymbol{V}_{t,l}\|_F^2 + \lambda \|\boldsymbol \Delta\|_F^2,
\end{equation}
where the matrix $\boldsymbol{H}_{t,l} = [\boldsymbol{h}_{t,l}^1, \dots, \boldsymbol{h}_{t,l}^{n_t}]$ and the target matrix $\boldsymbol{V}_{t,l} = -\gamma[\|\tilde{\boldsymbol{h}}_{t,l}^1\|_2^2\tilde{\boldsymbol{v}}_{t,l}^1, \dots, \|\tilde{\boldsymbol{h}}_{t,l}^{n_t}\|_2^2\tilde{\boldsymbol{v}}_{t,l}^{n_t}]$.\footnote{$\tilde{\boldsymbol{h}}_{t,l}^i$ is a per-dimension standardized $\boldsymbol{h}_{t,l}^i$ used as an efficient reweighting when forming $\boldsymbol{V}_{t,l}$. See \appref{app:hnormsq_ablation} for more analyses.} 
The closed-form solution to \cref{eq:ridge} is given by
\begin{equation}\label{eq:update_form}
    \begin{aligned}
        \boldsymbol\Delta_{t,l}  =\boldsymbol{V}_{t,l}\boldsymbol{H}_{t,l}^\top(\boldsymbol{H}_{t,l} \boldsymbol{H}_{t,l}^\top + \lambda\boldsymbol I)^{-1}=-\gamma \sum_{i=1}^{n_t} \tilde{\boldsymbol{v}}_{t,l}^i \boldsymbol{\phi}_{t,l}^i,
    \end{aligned}
\end{equation}
where $\boldsymbol{\phi}_{t,l}^i := \|\tilde{\boldsymbol{h}}_{t,l}^i\|_2^2 (\boldsymbol{h}_{t,l}^i)^\top \left(\sum_{j} \boldsymbol{h}_{t,l}^j (\boldsymbol{h}_{t,l}^j)^\top + \lambda\boldsymbol{I}\right)^{-1}$.

We additionally assume bounded fourth moments for the projection factors $\boldsymbol{\phi}_{t,l}^i$ and the inverse-covariance estimate, as is standard in finite-sample analyses.
\begin{assumption} 
\label{asmp:moments}
The projection factor $\boldsymbol{\phi}_{t,l}^i$ and the inverse covariance estimator $\widehat{\boldsymbol\Sigma}_{t,l}^{-1}$ possess finite fourth moments:
\[
    \mathbb{E}[\|\boldsymbol\phi_{t,l}^i\|_2^4] \le C_{\boldsymbol\phi} < \infty, \quad
    \mathbb{E}[\|\widehat{\boldsymbol\Sigma}_{t,l}^{-1}\|_2^4] \le K_{\boldsymbol\Sigma} < \infty.
\]
\end{assumption}
To expose how tracking accuracy shapes the update, decompose
$\boldsymbol v_{t,l}^i = \boldsymbol\mu_{t,l} + \boldsymbol\zeta_{t,l}^i$,
where $\boldsymbol\mu_{t,l}$ is the population mean, capturing the \emph{global shift} tendency over $\mathcal{D}$, while $\boldsymbol\zeta_{t,l}^i$ is the \emph{instance-specific} direction required for the current edit with $\mathbb{E}[\boldsymbol\zeta_{t,l}^i]=\boldsymbol 0$.
Substituting into \cref{eq:update_form} yields
$\boldsymbol\Delta_{t,l} = \boldsymbol\Delta_{t,l}^{\text{spec}} + \boldsymbol\Delta_{t,l}^{\text{bias}}$, with
$\boldsymbol\Delta_{t,l}^{\text{spec}} := -\gamma \sum_{i=1}^{n_t}\widehat{\boldsymbol\Sigma}_{t,l}^{-1/2}\boldsymbol\zeta_{t,l}^i\,\boldsymbol\phi_{t,l}^i$
and
$\boldsymbol\Delta_{t,l}^{\text{bias}} := -\gamma\,\widehat{\boldsymbol\Sigma}_{t,l}^{-1/2}(\boldsymbol\mu_{t,l}-\boldsymbol m_{t,l})\sum_{i=1}^{n_t}\boldsymbol\phi_{t,l}^i$.
The first term is driven by  $\tilde{\boldsymbol\zeta}_{t,l}^i=\widehat{\boldsymbol\Sigma}_{t,l}^{-1/2}\boldsymbol\zeta_{t,l}^i$ and the second is induced by the mean-estimation error. Based on this, we obtain the following results:

\begin{theorem}[Asymptotic Properties of Parameter Updates]\label{theorem3}
    Under \cref{asmp:main_regularity}, \ref{asmp:drift_rate} and \ref{asmp:moments}, the parameter update $\boldsymbol\Delta_{t,l}$ satisfies the following properties:
    \begin{enumerate}[label=(\alph*), leftmargin=*, topsep=0pt, itemsep=1pt, parsep=2pt, partopsep=0pt]
    \item \textbf{Bias Decay:} There exists a constant $C_{\text{bias}}>0$, independent of $t$, such that $\mathbb{E}[\|\boldsymbol\Delta_{t,l}^{\text{bias}}\|^2_F] \le C_{\text{bias}} \cdot {\operatorname{MSE}_{t,l}^{(\boldsymbol\mu)}}$. 
   
    \item \textbf{Bounded Norm:} There exists a constant $ U_{\text{spec}} <\infty$ such that $ \mathbb{E}[\|\boldsymbol \Delta_{t,l}^{\text{spec}}\|_F^2] \le U_{\text{spec}}$.

    \item \textbf{Interference Mitigation:} For distinct steps $t\neq k$, let $\mathcal G_{t,k,l}:=\sum_{i=1}^{n_t}\sum_{j=1}^{n_k}\langle \boldsymbol{\phi}_{t,l}^i,\boldsymbol{\phi}_{k,l}^j\rangle_F$. Then the expected interference
    $\mathbb{E}[\langle \boldsymbol \Delta_{t,l},\boldsymbol\Delta_{k,l} \rangle_F]=\gamma^2\,\mathbb{E}\!\big[\tilde{\boldsymbol s}_{t,l}^\top \tilde{\boldsymbol s}_{k,l}\cdot \mathcal G_{t,k,l}\big] + o(1)$, where $\tilde{\boldsymbol s}_{t,l} := \mathbb{E}_{\mathcal{V}}[\tilde{\boldsymbol\zeta}_{t,l}^i|\mathcal{H}]$, $\mathcal H$ denotes the history of features $\{\tilde{\boldsymbol h}_{\tau,l}^i \}$ up to the current step $t$, $\mathcal V$ the corresponding gradients $\{ \tilde{\boldsymbol v}_{\tau,l}^i \}$, and $o(1)$ vanishes as $t,k\to\infty$.

\end{enumerate}

\end{theorem}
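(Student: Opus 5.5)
We prove the three parts separately, working with the decomposition $\boldsymbol\Delta_{t,l}=\boldsymbol\Delta_{t,l}^{\text{spec}}+\boldsymbol\Delta_{t,l}^{\text{bias}}$. The tools throughout are Cauchy--Schwarz, to separate the whitening factor from the projection factors $\boldsymbol\phi_{t,l}^i$, and the fourth-moment bounds of \cref{asmp:moments}. For a matrix $\boldsymbol A$ we use $\|\widehat{\boldsymbol\Sigma}_{t,l}^{-1/2}\boldsymbol A\|\le\|\widehat{\boldsymbol\Sigma}_{t,l}^{-1}\|_2^{1/2}\|\boldsymbol A\|$. We also use the identity $\|\boldsymbol a\boldsymbol b^\top\|_F=\|\boldsymbol a\|_2\|\boldsymbol b\|_2$ for rank-one terms.

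\textbf{(a) Bias decay.} Write $\boldsymbol\Delta_{t,l}^{\text{bias}}=-\gamma\,\boldsymbol a\boldsymbol b^\top$ with $\boldsymbol a=\widehat{\boldsymbol\Sigma}_{t,l}^{-1/2}(\boldsymbol\mu_{t,l}-\boldsymbol m_{t,l})$ and $\boldsymbol b^\top=\sum_i\boldsymbol\phi_{t,l}^i$. Then
\[
\|\boldsymbol\Delta_{t,l}^{\text{bias}}\|_F^2\le\gamma^2\,\|\widehat{\boldsymbol\Sigma}_{t,l}^{-1}\|_2\,\|\boldsymbol\mu_{t,l}-\boldsymbol m_{t,l}\|_2^2\,\Big\|\sum_i\boldsymbol\phi_{t,l}^i\Big\|_2^2 .
\]
Pulling $\boldsymbol\mu-\boldsymbol m$ out cleanly is the delicate point, since a naive Hölder split would produce $\mathbb E\|\boldsymbol\mu-\boldsymbol m\|^4$ rather than the MSE. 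We therefore condition on the data used to form $\boldsymbol m_{t,l}$ and $\widehat{\boldsymbol\Sigma}_{t,l}$, and argue that the remaining factor $\|\widehat{\boldsymbol\Sigma}^{-1}\|\,\|\sum_i\boldsymbol\phi^i\|^2$ has conditional expectation bounded by a constant. That constant comes from \cref{asmp:moments}, via Cauchy--Schwarz and $\|\sum_i\boldsymbol\phi^i\|^2\le n\sum_i\|\boldsymbol\phi^i\|^2$. If this conditioning is not legitimate, we fall back on a uniform almost-sure bound on $\|\widehat{\boldsymbol\Sigma}^{-1}\|$, which holds because $\boldsymbol\Psi\succeq\epsilon_0\boldsymbol I$ and $\nu_{t,l}'$ grows at most linearly with the sample count. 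The result is $\mathbb E\|\boldsymbol\Delta^{\text{bias}}\|_F^2\le C_{\text{bias}}\operatorname{MSE}^{(\boldsymbol\mu)}_{t,l}$. Combining this with \cref{thm:mean_mse_l} gives decay.

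\textbf{(b) Bounded norm.} Expand
\[
\|\boldsymbol\Delta^{\text{spec}}\|_F^2\le n\gamma^2\sum_i\|\widehat{\boldsymbol\Sigma}^{-1}\|_2\,\|\boldsymbol\zeta^i\|_2^2\,\|\boldsymbol\phi^i\|_2^2 .
\]
Apply Cauchy--Schwarz to get
\[
\mathbb E[\cdot]\le\big(\mathbb E\|\widehat{\boldsymbol\Sigma}^{-1}\|^4\big)^{1/4}\big(\mathbb E\|\boldsymbol\phi^i\|^4\big)^{1/4}\big(\mathbb E\|\boldsymbol\zeta^i\|^4\big)^{1/2}.
\]
For the $\boldsymbol\zeta$-factor, either invoke a fourth-moment version of \cref{asmp:main_regularity}(a), or avoid it by conditioning on $\mathcal H$ and the statistics. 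In the latter route $\mathbb E\|\boldsymbol\zeta^i\|^2\le M$, since $\boldsymbol\zeta^i=\boldsymbol v^i-\boldsymbol\mu$ and $\mathbb E\|\boldsymbol g\|^2\le M$. This yields $U_{\text{spec}}=n^2\gamma^2 K_{\boldsymbol\Sigma}^{1/4}C_{\boldsymbol\phi}^{1/2}M$ or a similar constant that does not depend on $t$.

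\textbf{(c) Interference.} Expand
\[
\langle\boldsymbol\Delta_t,\boldsymbol\Delta_k\rangle_F=\gamma^2\sum_{i,j}(\tilde{\boldsymbol v}_t^i)^\top\tilde{\boldsymbol v}_k^j\,\langle\boldsymbol\phi_t^i,\boldsymbol\phi_k^j\rangle .
\]
Substitute $\tilde{\boldsymbol v}=\tilde{\boldsymbol\zeta}+\widehat{\boldsymbol\Sigma}^{-1/2}(\boldsymbol\mu-\boldsymbol m)$. The three cross terms involving a bias factor are bounded by Cauchy--Schwarz, using (a) and (b), by $O(\sqrt{\operatorname{MSE}_t})+O(\sqrt{\operatorname{MSE}_k})+O(\sqrt{\operatorname{MSE}_t\operatorname{MSE}_k})$. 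By \cref{thm:mean_mse_l} this is $o(1)$ as $t,k\to\infty$.

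For the main term, condition on $\mathcal H$. The $\boldsymbol\phi$'s are $\mathcal H$-measurable, so they factor out. By the conditional i.i.d.\ structure within and across batches given $\mathcal H$, together with the tower property, $\mathbb E[(\tilde{\boldsymbol\zeta}_t^i)^\top\tilde{\boldsymbol\zeta}_k^j\mid\mathcal H]=\tilde{\boldsymbol s}_t^\top\tilde{\boldsymbol s}_k+\text{(conditional covariance)}$. Summing over $i,j$ then gives $\gamma^2\mathbb E[\tilde{\boldsymbol s}_t^\top\tilde{\boldsymbol s}_k\,\mathcal G_{t,k,l}]$.

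The main obstacle is justifying that the conditional cross-covariance between $\tilde{\boldsymbol\zeta}_t$ and $\tilde{\boldsymbol\zeta}_k$ vanishes asymptotically. The two are coupled only through the shared running statistics $\widehat{\boldsymbol\Sigma}$ and $\boldsymbol m$ and through the parameter path. The first thing to try is to replace $\widehat{\boldsymbol\Sigma}_{t}^{-1/2}$ by $\boldsymbol\Sigma_t^{-1/2}$, paying an error controlled by \cref{thm:cov_spectral_l} and the moment bounds, which removes the shared-statistic coupling. The remaining dependence through $\boldsymbol W$ is then handled with the $L$-Lipschitz drift bound, which shows the residual is $o(1)$.
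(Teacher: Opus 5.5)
Your decomposition, and your handling of the bias cross terms in (c) via (a), (b) and Cauchy--Schwarz, match the paper. There is, however, a genuine gap at exactly the point you flag in (a), and it recurs in (b).

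\textbf{The missing ingredient in (a) and (b).} The paper uses the Gaussian working model of \cref{sec:tracking}. It treats $\boldsymbol e_{t,l}=\boldsymbol m_{t,l}-\boldsymbol\mu_{t,l}$ and $\boldsymbol\zeta_{t,l}^i\sim\mathcal N(\boldsymbol 0,\boldsymbol\Sigma_{t,l})$ as Gaussian. The Gaussian fourth-moment identity then gives $\mathbb E\|\boldsymbol e_{t,l}\|_2^4\le 5(\operatorname{MSE}_{t,l}^{(\boldsymbol\mu)})^2$ and $\mathbb E\|\boldsymbol\zeta_{t,l}^i\|_2^4\le 3(d\sigma_+)^2$. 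Next, hypercontractivity (\cref{lem:hypercontractivity} with $q=4$, applied to the degree-two polynomials $\|\boldsymbol e_{t,l}\|_2^2$ and $\|\boldsymbol\zeta_{t,l}^i\|_2^2$) bounds each eighth moment by $81$ times the squared fourth moment. This makes the plain Cauchy--Schwarz split you wanted to avoid harmless: $\sqrt{\mathbb E\|\boldsymbol e_{t,l}\|_2^8}\le 45(\operatorname{MSE}_{t,l}^{(\boldsymbol\mu)})^2$. So the bias bound is linear in the MSE and the spec bound is uniform in $t$, using only the fourth moments in \cref{asmp:moments}.

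\textbf{Why your substitutes fail in (a).} Conditioning on the data forming $\boldsymbol m_{t,l}$ and $\widehat{\boldsymbol\Sigma}_{t,l}$ makes $\|\widehat{\boldsymbol\Sigma}_{t,l}^{-1}\|_2$ measurable, so nothing is averaged. It also does not decouple $\|\boldsymbol\mu_{t,l}-\boldsymbol m_{t,l}\|_2^2$ from $\|\sum_i\boldsymbol\phi_{t,l}^i\|_2^2$. Both depend on the current batch, since $\boldsymbol h_{t,l}^i$ and $\boldsymbol v_{t,l}^i$ come from the same edits, while \cref{asmp:moments} controls only unconditional moments. Your fallback is not uniform in $t$: $\boldsymbol\Psi_{t,l}\succeq\epsilon_0\boldsymbol I$ gives only $\|\widehat{\boldsymbol\Sigma}_{t,l}^{-1}\|_2\le\nu'_{t,l}/\epsilon_0$, which grows linearly in $t$. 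Then $C_{\text{bias}}$ depends on $t$, and its product with an MSE of order $\ln t/t$ does not even vanish. Even a uniform bound would still leave the correlated product above unresolved.

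\textbf{Why your substitutes fail in (b).} Your H\"older display is dimensionally inconsistent: the right side scales like $\|\boldsymbol\phi\|_2$, not $\|\boldsymbol\phi\|_2^2$. With correct exponents and only fourth moments of $\boldsymbol\phi$ and $\widehat{\boldsymbol\Sigma}^{-1}$, you need $\mathbb E\|\boldsymbol\zeta_{t,l}^i\|_2^8$. Neither $M$ nor a fourth-moment strengthening of \cref{asmp:main_regularity}(a) provides this. Conditioning on $\mathcal H$ and the current statistics changes the law of $\boldsymbol\zeta_{t,l}^i$, so $\sup_{\boldsymbol W}\mathbb E\|\boldsymbol g\|_2^2\le M$ does not bound its conditional second moment.

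\textbf{The main term in (c).} The paper does not bound a cross-covariance at all. It takes the whitened noises $\tilde{\boldsymbol\zeta}_{t,l}^i-\tilde{\boldsymbol s}_{t,l}$ at distinct steps to be conditionally independent given $\mathcal H$, with zero conditional mean. Every term containing them then vanishes exactly by the tower property, and only $\gamma^2\mathbb E[\tilde{\boldsymbol s}_{t,l}^\top\tilde{\boldsymbol s}_{k,l}\,\mathcal G_{t,k,l}]$ survives. The conditional i.i.d.\ structure across batches given $\mathcal H$ that you already invoke implies this. So the conditional covariance is zero and no further work is needed.

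The alternative you sketch would not work. Swapping $\widehat{\boldsymbol\Sigma}_{t,l}^{-1/2}$ for $\boldsymbol\Sigma_{t,l}^{-1/2}$, with error controlled by \cref{thm:cov_spectral_l}, cannot produce an $o(1)$ term. That theorem gives only a bounded spectral error with a non-vanishing floor: its second term tends to $(C_2(\sqrt{dn}+d)+d)\sigma_+/n$. The swap therefore costs $O(1)$, not $o(1)$.
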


\begin{corollary}[Bounded Update Magnitude]
\label{cor:bounded_drift}
The update $\boldsymbol{\Delta}_{t,l}$ satisfies $\mathbb{E}[\|\boldsymbol{\Delta}_{t,l}\|_F^2] \le 2 C_{\text{bias}} \cdot \operatorname{MSE}_{t,l}^{(\boldsymbol\mu)} + 2 U_{\text{spec}}$.
\end{corollary}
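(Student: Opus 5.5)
The corollary follows directly from \cref{theorem3}, using the additive split $\boldsymbol\Delta_{t,l}=\boldsymbol\Delta_{t,l}^{\text{spec}}+\boldsymbol\Delta_{t,l}^{\text{bias}}$ introduced just before that theorem. First, I check that this split is exact. Write $\tilde{\boldsymbol v}_{t,l}^i=\widehat{\boldsymbol\Sigma}_{t,l}^{-1/2}(\boldsymbol v_{t,l}^i-\boldsymbol m_{t,l})$ and substitute $\boldsymbol v_{t,l}^i=\boldsymbol\mu_{t,l}+\boldsymbol\zeta_{t,l}^i$. This gives
\[
\tilde{\boldsymbol v}_{t,l}^i=\widehat{\boldsymbol\Sigma}_{t,l}^{-1/2}\boldsymbol\zeta_{t,l}^i+\widehat{\boldsymbol\Sigma}_{t,l}^{-1/2}(\boldsymbol\mu_{t,l}-\boldsymbol m_{t,l}).
\]
Plugging this into the closed form \cref{eq:update_form} reproduces the two terms exactly, with no remainder.

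Next, I bound the squared norm of the sum pointwise, for every realization. By the triangle inequality and $(a+b)^2\le 2a^2+2b^2$ (equivalently, the parallelogram or Young inequality in the Frobenius inner product),
\[
\|\boldsymbol\Delta_{t,l}\|_F^2\le 2\|\boldsymbol\Delta_{t,l}^{\text{spec}}\|_F^2+2\|\boldsymbol\Delta_{t,l}^{\text{bias}}\|_F^2 .
\]
This avoids having to control the cross term $2\langle\boldsymbol\Delta^{\text{spec}},\boldsymbol\Delta^{\text{bias}}\rangle_F$. That matters because $\boldsymbol\zeta^i$ is centered, but it is not independent of $\boldsymbol m_{t,l}$, $\widehat{\boldsymbol\Sigma}_{t,l}$ or $\boldsymbol\phi^i$. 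An exact cancellation argument would therefore be delicate, and the factor $2$ in the statement indicates that the crude inequality is intended.

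Finally, I take expectations and use linearity. Part (a) of \cref{theorem3} bounds the bias term by $C_{\text{bias}}\operatorname{MSE}^{(\boldsymbol\mu)}_{t,l}$, and part (b) bounds the spec term by $U_{\text{spec}}$. Together these give the claimed $2C_{\text{bias}}\operatorname{MSE}_{t,l}^{(\boldsymbol\mu)}+2U_{\text{spec}}$. Both expectations are finite under \cref{asmp:moments}, so linearity applies.

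There is no real obstacle here, since all the analytic work lives in \cref{theorem3}. The one point to be careful about is that $\operatorname{MSE}_{t,l}^{(\boldsymbol\mu)}$ in part (a) is the same quantity as in this corollary. I will also add a short remark: combined with \cref{thm:mean_mse_l}, the bound becomes asymptotically $2U_{\text{spec}}+o(1)$, uniformly in $t$.
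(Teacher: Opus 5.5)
Your proposal is correct and matches the paper's argument. The paper states the corollary as a direct consequence of \cref{theorem3}(a) and (b), which implicitly means exactly your steps: the exact split $\boldsymbol\Delta_{t,l}=\boldsymbol\Delta_{t,l}^{\text{spec}}+\boldsymbol\Delta_{t,l}^{\text{bias}}$, the pointwise bound $\|\boldsymbol A+\boldsymbol B\|_F^2\le 2\|\boldsymbol A\|_F^2+2\|\boldsymbol B\|_F^2$ (which, as you note, avoids controlling the cross term and explains the factor $2$), and then taking expectations.
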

\cref{theorem3} explains how estimation accuracy translates into update stability. Property (a) ensures that $\boldsymbol{\Delta}_{t,l}^{\text{bias}}$ shrinks with $\operatorname{MSE}_{t,l}^{(\boldsymbol\mu)}$, progressively removing the shared global-shift tendency from the update, so the update is increasingly driven by instance-specific directions.
Together with property (b), \cref{cor:bounded_drift} yields robust and bounded updates, helping prevent explosive deviations in long-horizon editing and thereby alleviating model collapse.
Finally, property (c) shows that cross-step interference is governed by the correlation between characteristic directions $\tilde{\boldsymbol s}_{t,l}^\top\tilde{\boldsymbol s}_{k,l}$; for unrelated edits (weakly correlated directions), updates become asymptotically orthogonal in expectation, mitigating catastrophic forgetting. See \appref{proof:thm_update} for formal proofs and detailed analyses.

We further show that these conclusions in \cref{theorem3} continue to hold if normalized gradients $\tilde{\boldsymbol{v}}_{t,l}^i$ are first passed through a shallow hypernetwork prior to the update computation, as long as the transformation is Lipschitz continuous (e.g., shallow MLP modules with ReLU activations and residual blocks, as used in \textsc{RLEdit} \cite{li2025reinforced}). In essence, the update remains governed by LN and ridge-regularized regression system. See \cref{cor:lipschitz_invariance} for proofs.

\begin{table*}[t]
\caption{Comparison of \textsc{StableEdit} and baseline methods on standard-scale lifelong editing tasks (20K\&17K edits, $n_t{=}100$ samples per step). Eff., Gen., and Spe.\ denote Efficacy, Generalization, and Specificity, respectively. The symbol $\uparrow$ indicates that higher values are better. The best results are highlighted in bold, while the second-best results are underlined.}
\centering
\label{tab:combined_results}
\begin{adjustbox}{max width=\linewidth}
\footnotesize
\begin{tabular}{l|ccc|ccc|ccc|ccc}
\toprule
\multirow{2}{*}{\textbf{Method}}
& \multicolumn{3}{c|}{\textbf{ZsRE}}
& \multicolumn{3}{c|}{\textbf{FEVER}}
& \multicolumn{3}{c|}{\textbf{WikiBigEdit}}
& \multicolumn{3}{c}{\textbf{ULTRAEDITBENCH}}
\\
\cmidrule(lr){2-4} \cmidrule(lr){5-7} \cmidrule(lr){8-10} \cmidrule(lr){11-13}
& \mhead & \ghead & \shead
& \mhead & \ghead & \shead
& \mhead & \ghead & \shead
& \mhead & \ghead & \shead
\\
\midrule
\multicolumn{13}{c}{\textbf{Mistral-7B-v0.3}}\\
\midrule
Pre-edited & 44.46  & 43.55  & \underline{48.08} & 0.41  & 0.50  & 1.98  & 39.14  & 38.21  & 41.62  & 30.83  & 29.94  & 31.11  \\
\midrule
\textsc{FT} &13.69 &12.43 &19.87 & 23.80 &23.37 &16.30 &13.77 &14.86 &11.84 & 0.06 &0.36 &0.07 \\
\textsc{MEMIT} & 0.00 & 0.00 & 0.00 & 0.00 & 0.00 & 0.00 & 0.00 & 0.00 & 0.00 & 0.00 & 0.00 & 0.00 \\
\textsc{AlphaEdit} & 0.00 & 0.00 & 0.00 & 0.00 & 0.00 & 0.00 & 0.00 & 0.00 & 0.00 & 0.00 & 0.00 & 0.00 \\
\textsc{MEND} & 3.33&3.08&0.20 & 0.00 & 0.00 & 0.00 & 0.01 &0.00 &0.00 & 0.00 & 0.00 & 0.00\\
\textsc{MALMEN} &1.15&1.06&0.02 & 18.42 &17.43 &12.09 & 0.00 & 0.01 & 0.00 & 2.42 & 2.48 & 3.47 \\
\textsc{RLEdit} &71.62&67.60&30.70 & 92.35 &91.39 &71.85 & 57.55 &52.47 &28.78 & 69.85 &65.10 &57.08 \\
\textsc{ULTRAEDIT} & \underline{85.30}  & \underline{80.80}  & 47.38  & \underline{97.87}  & \underline{96.09}  & \textbf{84.29}  & \underline{76.00}  & \underline{70.15}  & \underline{46.09}  & \underline{83.71}  & \underline{77.30}  & \underline{67.26}  \\
\rowcolor{cyan!10}\textsc{\textbf{StableEdit}} & \textbf{87.39} & \textbf{82.93} & \textbf{50.26} & \textbf{98.38} & \textbf{96.55} & \underline{83.89} & \textbf{78.90} & \textbf{72.57} & \textbf{47.36} & \textbf{84.98} & \textbf{78.77} & \textbf{68.36} \\
\midrule
\multicolumn{13}{c}{\textbf{Llama-3-8B-Instruct}}\\
\midrule
Pre-edited & 36.76  & 35.83  & 38.93  & 0.02  & 0.02  & 0.26 & 24.92  & 35.46  & 38.87  & 27.29  & 26.40  & 27.24  \\
\midrule
\textsc{FT} & 18.51 & 18.11 & 5.01 & 16.21 &13.08 &5.01 & 13.00 &11.70 &6.75 & 13.50 &11.92 &10.18 \\
\textsc{MEMIT} & 0.14 & 0.14 & 1.40 & 0.02 & 0.02 & 0.02 & 0.02 & 0.02 & 0.09 & 0.74 & 0.45 & 0.21 \\
\textsc{AlphaEdit} & 86.10 & 81.15 & 26.26 & 6.39 & 6.14 & 2.72 & 63.24 & 54.68 & 20.17 & 4.51 & 3.16 & 2.78 \\
\textsc{MEND} & 0.00 & 0.00 & 0.00 & 36.19 & 36.19 & 24.31 & 0.26 & 0.27 & 0.18 & 0.00 & 0.00 & 0.00 \\
\textsc{MALMEN} &3.86 &3.07 &0.26 & 95.03 &\underline{92.44} &\underline{69.20} & 0.00 & 0.00 & 0.00 & 40.64 & 34.18 & 37.29 \\
\textsc{RLEdit} &\textbf{91.09}&\textbf{89.62}&41.43 & 91.38 &89.93 &41.98 & 75.35 & 70.00 & 37.21 
& 83.37 &78.65 & 63.31 \\
\textsc{ULTRAEDIT} & 90.07  & 87.36  & \textbf{49.51} & \underline{95.39}  & 91.93  & 67.14  & \underline{79.60}  & \underline{73.49}  & \textbf{48.51}  & \underline{85.70}  & \underline{81.28}  & \underline{68.73}  \\
\rowcolor{cyan!10}\textsc{\textbf{StableEdit}} & \underline{90.61} & \underline{88.38} & \underline{48.23} & \textbf{97.89} & \textbf{94.47} & \textbf{69.24} & \textbf{81.22} & \textbf{75.98} & \underline{45.22} & \textbf{88.88} & \textbf{85.46} & \textbf{69.06} \\
\midrule
\multicolumn{13}{c}{\textbf{GPT-J-6B}}\\
\midrule
Pre-edited & 27.22  & 26.42  & 27.33  
& 9.61  & 9.68  & 15.90 
& 29.97  & 29.08  & 32.58   
& 22.01  & 21.47  & 22.20  \\
\midrule
\textsc{FT} & 15.11 & 13.55 & 2.61 
& 14.02 &13.98 &9.26
& 21.90 & 17.56 & 8.47 
& 22.03 & 17.61 & 19.60 \\
\textsc{MEMIT} & 0.00 & 0.00 & 0.00
& 5.54 & 5.03 & 5.46
& 1.59 & 1.59 & 0.50 
& 0.25 & 0.18 & 0.20 \\
\textsc{AlphaEdit} & 50.23 & 43.31 & 12.54
& 1.89 & 1.87 & 1.85
& 69.66 & 55.03 & 21.20 
& 22.19 & 12.09 & 6.88\\
\textsc{MEND} & 2.52 &2.55 &0.19
& 52.80 & 51.44 & 45.42
&0.02 &0.01 &0.02 
& 1.71 &1.71 &1.83\\
\textsc{MALMEN} & 0.02 &0.01 &0.02
& 1.33 &1.25 &2.92
& 0.00 &0.01 &0.01 
&0.76 &0.48 &0.78\\
\textsc{RLEdit} & 72.65 & 68.45 & 21.79 
& 97.29 &96.01 &\textbf{79.86}
& 69.18 & 63.01 & 32.75 
& 81.14 & 74.88 & 61.87 \\
\textsc{ULTRAEDIT} & \underline{78.03}  & \underline{72.42}  & \underline{27.05}  
& \underline{97.45}  & \underline{96.37}  & 79.72
& \underline{73.84}  & \underline{66.57}  & \underline{37.17}  
& \textbf{84.03}  & \textbf{76.62}  & \underline{64.03}  \\
\rowcolor{cyan!10}\textsc{\textbf{StableEdit}}
& \textbf{82.12} & \textbf{78.39} & \textbf{31.52}
& \textbf{98.35} & \textbf{96.99} & \underline{79.85}
& \textbf{75.10} & \textbf{68.13} & \textbf{44.04} 
& \underline{83.41} & \underline{76.55} & \textbf{67.17} \\
\bottomrule
\end{tabular}
\end{adjustbox}
\end{table*}

\section{Closing the Loop: \textsc{StableEdit}}
\label{sec:synergy}

We now connect the pieces in \cref{sec:tracking,sec:calibration,sec:update} into a self-reinforcing stability loop that clarifies the fundamental mechanism of LN (\ref{rq:mechanism}). Specifically, recursive Bayesian tracking provides an \emph{efficient} way to estimate the drifting value-gradient distribution from limited samples (\cref{sec:tracking}). As edits accumulate, the estimation error decreases (\cref{sec:calibration}), making LN's centering and scaling increasingly accurate. With standardized gradients, LN together with ridge-regularized regression yields updates that are bounded and asymptotically orthogonal (\cref{sec:update}), thereby mitigating catastrophic forgetting and model collapse. By \cref{lem:expected_drift_l}, bounded updates further constrain the drift of the distribution moments in the next step, helping maintain the slowly evolving regime required for reliable tracking. This closes the loop and explains why early edits can facilitate later ones.

Guided by this mechanism, we propose \textsc{StableEdit} as a theory-driven refinement of \textsc{ULTRAEDIT} \cite{gu2025ultraedit}. It explicitly strengthens two edges of the loop. First, it adds a \emph{warm-up} stage using a few edits to initialize reliable running estimates before entering the target edit sequence, thus skipping the high-estimation-error phase. Second, it uses \emph{full whitening}, which acts as an adaptive preconditioner that suppresses noisy high-variance directions while amplifying weak directions, thereby improving numerical stability of updates over long sequences (see \cref{remark:necessity_LN}). This refinement incurs only modest overhead while yielding consistently improved long-horizon stability across million-scale edit streams. The pseudo-code is provided in \cref{alg:stableedit}.

\section{Experiments}
\label{sec:experiments}
In this section, we conduct extensive experiments to evaluate \textsc{StableEdit} and validate our theoretical findings.

\begin{table*}[!t]
\centering
\caption{Results on WikiBigEdit at 500K edits ($n_t{=}100$ samples per step) across three models.}
\label{tab:wikibigedit_500k}
\begin{adjustbox}{max width=\linewidth}
\footnotesize
\begin{tabular}{l|cccc|cccc|cccc}
\toprule
\multirow{2}{*}{\textbf{Method}}
& \multicolumn{4}{c|}{\textbf{Mistral-7B-v0.3}} 
& \multicolumn{4}{c|}{\textbf{Llama-3-8B-Instruct}}
& \multicolumn{4}{c}{\textbf{GPT-J-6B}} \\
\cmidrule(lr){2-5} \cmidrule(lr){6-9} \cmidrule(lr){10-13}
& \mhead & \ghead & \shead & \personas 
& \mhead & \ghead & \shead & \personas 
& \mhead & \ghead & \shead & \personas \\
\midrule

Pre-edited 
& 39.14 & 38.21 & 41.62 & 35.54 
& 24.92 & 35.46 & 38.87 & 32.70 
& 29.97 & 29.08 & 32.58 & 26.46 \\

\textsc{ULTRAEDIT}
& 71.78 & 65.63 & 55.40 & 56.11  
& 68.99 & 63.59 & 52.28 & 55.04 
& 66.46 & 60.54 & 47.90 & 51.73 \\

\rowcolor{cyan!10}\textsc{\textbf{StableEdit}}
& \textbf{74.54} & \textbf{67.80} & \textbf{56.49} & \textbf{58.88}
& \textbf{72.44} & \textbf{66.87} & \textbf{53.83} & \textbf{60.39} 
& \textbf{71.47} & \textbf{65.04} & \textbf{52.84} & \textbf{57.20} \\
\bottomrule
\end{tabular}
\end{adjustbox}
\end{table*}

\subsection{Experimental Setup}
% \label{sec:setup}
We briefly outline the evaluation metrics, datasets, and baseline methods; for a comprehensive description of the experimental settings, please refer to \appref{app:experimental_setup}.

\textbf{Base LLMs \& Baseline Methods.} Our experiments are conducted on four representative LLMs: GPT-J-6B, Mistral-7B-v0.3, Llama-3-8B-Instruct, and Qwen2.5-7B-Instruct. We compare \textsc{StableEdit} against several model editing baselines, including \textsc{FT} \cite{zhu2020modifying}, \textsc{MEMIT} \cite{meng2022mass}, \textsc{AlphaEdit} \cite{fang2024alphaedit}, \textsc{MEND} \cite{mitchell2021fast}, \textsc{MALMEN} \cite{tan2023massive}, \textsc{RLEdit} \cite{li2025reinforced}, and \textsc{ULTRAEDIT} \cite{gu2025ultraedit}. 

\textbf{Datasets \& Evaluation Metrics.} We evaluate \textsc{StableEdit} on four datasets: ZsRE \cite{levy2017zero}, FEVER \cite{thorne2018fever}, ULTRAEDITBENCH \cite{gu2025ultraedit} and WikiBigEdit \cite{thede2025understanding}. In line with prior works, for the first three datasets, we report \emph{Efficacy}, \emph{Generalization}, and \emph{Specificity}. For WikiBigEdit, we additionally assess \emph{Personas} and \emph{Multi-hop Reasoning capabilities}.

\subsection{Knowledge Update and Scalability}
\label{sec:rq1}
To evaluate \textsc{StableEdit}'s effectiveness and efficiency, we report results across three settings: editing accuracy at \emph{standard scale} (20K edits for ZsRE, FEVER, and ULTRAEDITBENCH; 17K for WikiBigEdit), editing accuracy at \emph{ultra-large scale} (500K for WikiBigEdit; 2M for ULTRAEDITBENCH), and per-edit runtime. All runs use 100 samples per step. Extended results, including Qwen2.5-7B-Instruct, long-horizon trajectories, and covariance diagnostics, are in \apprefs{app:extended_baseline_comparison}{app:scaling}.

\noindent
\textbf{Obs 1 (Standard-scale sequences):}
\textbf{\textsc{StableEdit} delivers consistently strong performance across models and datasets and demonstrates out-of-domain generalizability without dataset-specific tuning.}
From \cref{tab:combined_results}, \textsc{StableEdit} is generally top-tier on the core metrics reported for the four datasets.
For example, on ULTRAEDITBENCH with Llama-3-8B-Instruct, \textsc{StableEdit} achieves 85.46\% Generalization compared to 81.28\% for \textsc{ULTRAEDIT}.
Notably, several baselines can become ineffective under these sequential editing workloads (e.g., near-zero outcomes on Mistral-7B-v0.3 for multiple prior editors), underscoring the difficulty of long-horizon editing and highlighting the robustness gained by explicitly stabilizing the LN-driven update pipeline.
We additionally evaluate on the medical benchmark MedCF~\citep{derong2024fact} with no domain-specific LLM and no dataset-specific hyperparameter tuning; \textsc{StableEdit} achieves higher scores than \textsc{ULTRAEDIT} on Mistral-7B-v0.3 and Llama-3-8B-Instruct across all three metrics (full table in \appref{app:domain_generalization}), indicating that \textsc{StableEdit}'s performance gains extend to out-of-domain settings without any dataset-specific adaptation.

\noindent
\textbf{Obs 2 (Ultra-large sequences):}
\textbf{\textsc{StableEdit} remains robust at ultra-large scale and mitigates catastrophic forgetting.}
From \cref{tab:wikibigedit_500k}, on WikiBigEdit with 500K edits, \textsc{StableEdit} consistently outperforms \textsc{ULTRAEDIT} across three LLMs on all reported metrics, indicating better scalability as edits accumulate (e.g., on GPT-J-6B, \textsc{StableEdit} improves Generalization by 7.43\% relative to \textsc{ULTRAEDIT}).
This sustained advantage supports the claim that the update dynamics remain better conditioned and less mutually interfering at scale, which alleviates the catastrophic-forgetting trend when the number of edits becomes very large.
Trajectory-level evidence (20K--500K on WikiBigEdit, 50K--2M on ULTRAEDITBENCH) and covariance-conditioning diagnostics that further substantiate this claim are deferred to \appref{app:scaling}.

\begin{table}[htbp]
\caption{Running time (seconds per edit) comparison of different editing methods on Llama-3-8B-Instruct.}
\label{tab:running_time}
\centering
\scriptsize
\setlength{\tabcolsep}{2.5pt}
\begin{tabular}{l|cccccc}
\toprule
\textbf{Method} & \textsc{AlphaEdit} & \textsc{MEND} & \textsc{MALMEN} & \textsc{RLEdit} & \textsc{ULTRAEDIT} & \textbf{Ours} \\
\midrule
\textbf{Time} & 6.02 & 1.12 & 1.44 & 0.22 & 0.07 & 0.10 \\
\bottomrule
\end{tabular}
\end{table}

\noindent
\textbf{Obs 3 (Efficiency):}
\textbf{\textsc{StableEdit} improves long-horizon stability with the smallest overhead over \textsc{ULTRAEDIT}.} 
On Llama-3-8B-Instruct (\cref{tab:running_time}), \textsc{StableEdit} is only $1.43\times$ as slow as \textsc{ULTRAEDIT}, while other baselines incur substantially larger slowdowns (e.g., \textsc{RLEdit} $3.14\times$, \textsc{MEND} $16.0\times$, \textsc{MALMEN} $20.6\times$, \textsc{AlphaEdit} $86.0\times$).
For a detailed discussion of the sources of this overhead, see \appref{app:runtime_comparison}.

\begin{figure*}[!t]
    \centering
    \includegraphics[width=0.85\textwidth]{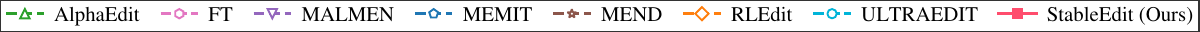}
    \begin{minipage}[t]{0.18\textwidth}
        \centering
        {\small\textbf{(a) SST}}\\[-0.1em]
        \includegraphics[width=\linewidth]{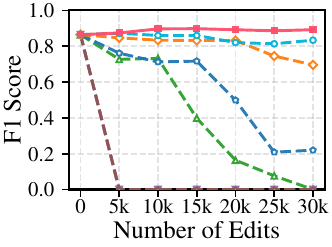}
    \end{minipage}\hfill
    \begin{minipage}[t]{0.18\textwidth}
        \centering
        {\small\textbf{(b) NLI}}\\[-0.1em]
        \includegraphics[width=\linewidth]{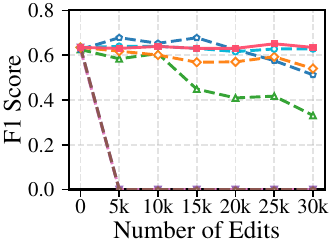}
    \end{minipage}\hfill
    \begin{minipage}[t]{0.18\textwidth}
        \centering
        {\small\textbf{(c) MRPC}}\\[-0.1em]
        \includegraphics[width=\linewidth]{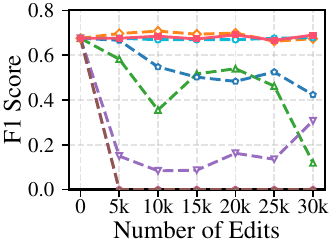}
    \end{minipage}\hfill
    \begin{minipage}[t]{0.18\textwidth}
        \centering
        {\small\textbf{(d) CoLA}}\\[-0.1em]
        \includegraphics[width=\linewidth]{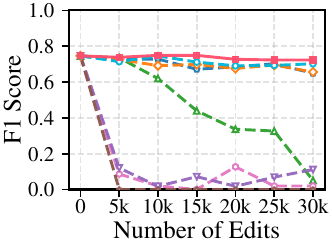}
    \end{minipage}\hfill
    \begin{minipage}[t]{0.18\textwidth}
        \centering
        {\small\textbf{(e) RTE}}\\[-0.1em]
        \includegraphics[width=\linewidth]{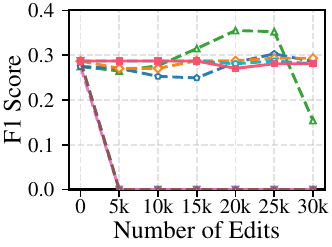}
    \end{minipage}
    \caption{F1 scores on five GLUE tasks for general capability evaluation as the number of sequential edits increases (up to 30K) on Llama-3-8B-Instruct. \textsc{StableEdit}, \textsc{ULTRAEDIT}, and \textsc{RLEdit} (all LN-driven) largely sustain stable performance on the edit stream, whereas other baselines degrade rapidly as edits accumulate.}
    \label{fig:zsre_3w-2plus3}
    \vspace{-6pt}
  \end{figure*}

\subsection{Evaluation of General Capability}
\label{sec:rq2}
To assess whether sequential editing preserves the base LLM's broader competence, we perform General Capability Tests using five tasks from the General Language Understanding Evaluation (GLUE) benchmark \cite{wang2018glue}: SST \cite{socher2013recursive}, MRPC \cite{dolan2005automatically}, RTE \cite{BentivogliMDDG09}, CoLA \cite{warstadt-etal-2019-neural}, and NLI \cite{williams2018broad}. We also visualize the \emph{final-layer} hidden-state distributions of the pre-edit and post-edit models using UMAP, computed on the same 1,000 randomly sampled factual prompts that are independent of the edit stream. Results are shown in \cref{fig:zsre_3w-2plus3,fig:UMAP_distributions_main}; detailed benchmark descriptions and extended per-backbone comparisons are in \apprefs{app:datasets}{app:hidden_representations_analysis}.

\noindent
\textbf{Obs 4:} \textbf{\textsc{StableEdit} sustains general capabilities over long edit sequences, while several baselines collapse.}
From \cref{fig:zsre_3w-2plus3}, \textsc{StableEdit} stays close to the pre-edit performance across all reported tasks throughout the trajectory, indicating that integrating new facts does not come at the cost of broad linguistic competence.
Notably, LN-based editors such as \textsc{RLEdit} and \textsc{ULTRAEDIT} also largely preserve these capabilities, whereas other baselines exhibit sharp degradation as edits accumulate, consistent with model-collapse behavior.
This pattern aligns with our analysis: bounded, weakly interfering updates help prevent systemic drift.

\noindent
\textbf{Obs 5:} \textbf{\textsc{StableEdit} preserves the representation manifold and avoids explosive distributional shifts.}
From \cref{fig:UMAP_distributions_main}, representations produced by \textsc{StableEdit} remain well-aligned with the pre-edit manifold, suggesting that edits are integrated via controlled, localized adjustments.
In contrast, baselines tend to induce visible drift, which can propagate through the LLMs and manifest as broad performance degradation on unrelated tasks.

\begin{figure}[t]
    \centering
    \begin{subfigure}[t]{0.45\linewidth}
        \centering
        \caption{\footnotesize\textsc{StableEdit} on Llama-3-8B-Instruct}
        \label{fig:EXAMPLE1}
        \includegraphics[width=\linewidth]{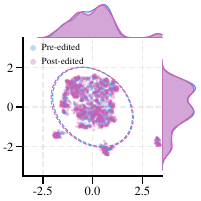}
    \end{subfigure}\hfill
    \begin{subfigure}[t]{0.45\linewidth}
        \centering
        \caption{\footnotesize\textsc{AlphaEdit} on Llama-3-8B-Instruct}
        \label{fig:EXAMPLE2}
        \includegraphics[width=\linewidth]{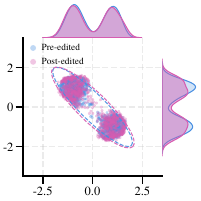}
    \end{subfigure}
    \caption{The distribution of hidden representations of pre-edited and post-edited LLMs after dimensionality reduction. The dashed lines represent the 0.95 confidence intervals. Best viewed in color.}
    \label{fig:UMAP_distributions_main}
    \vspace{-8pt}
\end{figure}

\subsection{Verification of the LN Mechanism}
\label{sec:rq3}
To empirically probe the LN mechanism of \cref{sec:LN_mechanism} and the design choices of \textsc{StableEdit}, we address three questions: \textbf{(i)} how does each component — LN, warm-up, and whitening — contribute to the observed stability? \textbf{(ii)} how does warm-up help stabilize the edit stream, and is this benefit sensitive to whether the warm-up data comes from the same distribution as the target? \textbf{(iii)} do the bounded-norm and near-orthogonality properties derived in \cref{theorem3} actually emerge along the edit stream, and are they absent in non-LN baselines? Additional diagnostics and extended tables are in \appref{app:verify_LN}.

\noindent
     \textbf{Obs 6 (Component ablation):} \textbf{LN is the core stability mechanism; warm-up and full whitening provide further consistent gains.}
From \cref{tab:component_ablation}, removing LN causes a dramatic collapse across all five metrics, whereas removing warm-up or full whitening yields smaller, consistent drops; in this setting, whitening contributes a larger incremental gain than warm-up.
This hierarchy aligns with our analysis: LN activates the self-reinforcing stability loop, warm-up improves early calibration of the running statistics, and full whitening produces more geometrically balanced parameter updates by normalizing gradient directions.

\noindent
     \textbf{Obs 7 (Warm-up dynamics):} \textbf{Warm-up stabilizes the edit stream by establishing reliable initial running statistics, and this benefit persists even under distribution mismatch.}
Full results are in \appref{app:warmup_dynamics}; \cref{tab:warmup_mismatch_compact} provides a compact Mistral-7B-v0.3 summary. Replacing the warm-up data with the out-of-domain medical set MedCF causes an average absolute change of only 0.43 points, and the mismatched variant surpasses \textsc{ULTRAEDIT} on 15 out of 18 entries. These results show that replacing the warm-up distribution does not cause severe degradation, suggesting that the benefit of warm-up is not tied to strict same-distribution matching: even under substantial mismatch, the statistics accumulated during warm-up provide a reliable starting point for the running estimates and are subsequently refined online as target-stream edits arrive. Further diagnostics on warm-up size and placement, along with a statistics-only ablation that adds no parameter updates during warm-up, are in \appref{app:warmup_dynamics}.

\noindent
    \textbf{Obs 8 (Update geometry):} \textbf{LN yields near-orthogonal and well-scaled updates, mitigating interference and model collapse.}                                                        
    \cref{fig:update_geometry_main} verifies the predictions of \cref{theorem3}. On GPT-J-6B, \textsc{StableEdit} and \textsc{ULTRAEDIT} keep adjacent-update cosine tightly around $0$, and   
    \textsc{StableEdit} arrives earliest. Removing LN from either method drives cosine far from $0$, as does the non-LN \textsc{FT}. On Llama-3-8B-Instruct, both LN methods keep update norms bounded,   
    and \textsc{StableEdit} further trends downward. Removing LN collapses the norms and triggers under-fitting, while the non-LN \textsc{AlphaEdit} grows steadily. These patterns confirm that LN with  
    ridge regression yields weakly interfering, bounded updates; see \appref{app:update_geometry} for more results.

\begin{table}[t]
\caption{Component ablation of \textsc{StableEdit} on Mistral-7B-v0.3 with WikiBigEdit (20K edits).}
\label{tab:component_ablation}
\centering
\resizebox{\columnwidth}{!}{%
\begin{tabular}{l|ccccc}
\toprule
\textbf{Variant} & \textbf{Eff.} & \textbf{Gen.} & \textbf{Spe.} & \textbf{Personas} & \textbf{Reasoning} \\
\midrule
\rowcolor{cyan!10}\textsc{StableEdit} (full) & 78.90 & 72.57 & 47.36 & 63.96 & 37.22 \\
w/o Warm-up & 76.96 & 71.13 & 46.84 & 62.80 & 36.08 \\
w/o Whitening & 75.12 & 69.63 & 45.50 & 62.70 & 35.30 \\
w/o LN & 39.13 & 38.18 & 41.61 & 35.55 & 29.94 \\
\bottomrule
\end{tabular}%
}
\vspace{-6pt}
\end{table}

\begin{figure}[t]
    \centering
    {\small\textbf{(a) Update norm on Llama-3-8B-Instruct}}\par
    \includegraphics[width=\linewidth]{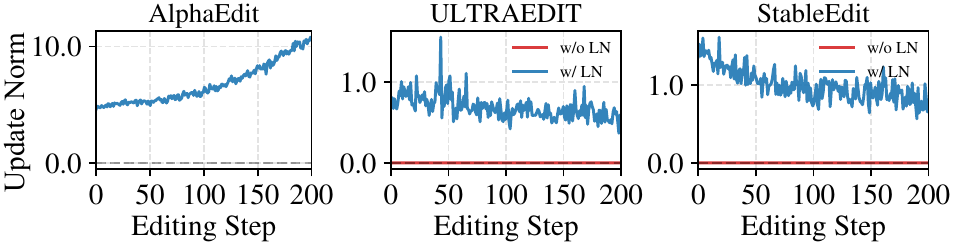}\par\par
  
    {\small\textbf{(b) Update orthogonality on GPT-J-6B}}\par
    \includegraphics[width=\linewidth]{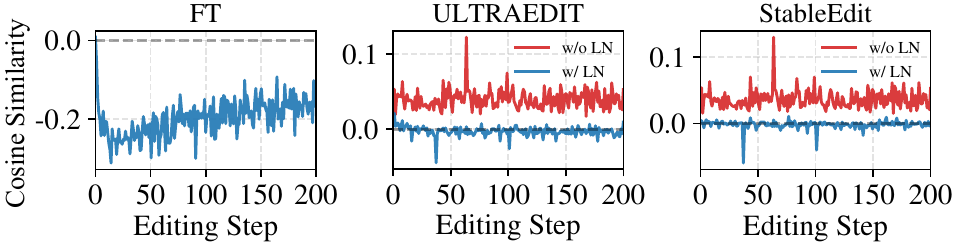}
  \caption{Cross-method update-geometry diagnostics (20K edits on ZsRE). \textbf{(a)} Per-step parameter update norm on Llama-3-8B-Instruct. \textbf{(b)} Cosine similarity between consecutive updates on GPT-J-6B.}
  \label{fig:update_geometry_main}
  \vspace{-8pt}
  \end{figure}

\begin{table}[t]
\caption{Warm-up distribution robustness (Mistral-7B-v0.3, 20K edits). \textsc{StableEdit} uses in-domain warm-up as the reference; Warmup-MedCF replaces warm-up data with the out-of-domain medical set MedCF while keeping the target benchmarks (ZsRE and FEVER) unchanged.}
\label{tab:warmup_mismatch_compact}
\centering
\renewcommand{\arraystretch}{1.05}
\resizebox{\columnwidth}{!}{%
\begin{tabular}{l|ccc|ccc}
\toprule
\multirow{2}{*}{\textbf{Method}} & \multicolumn{3}{c|}{\textbf{ZsRE}} & \multicolumn{3}{c}{\textbf{FEVER}} \\
\cmidrule(lr){2-4} \cmidrule(lr){5-7}
 & Eff. & Gen. & Spe. & Eff. & Gen. & Spe. \\
\midrule
\textsc{ULTRAEDIT}  & 85.30 & 80.80 & 47.38 & 97.87 & 96.09 & 84.29 \\
\textsc{StableEdit} & 87.39 & 82.93 & 50.26 & \textbf{98.38} & \textbf{96.55} & \textbf{83.89} \\
Warmup-MedCF        & \textbf{87.63} & \textbf{83.09} & \textbf{51.28} & 98.36 & 96.45 & 82.90 \\
\bottomrule
\end{tabular}%
}
\end{table}

\section{Related Work}
\textbf{Parameter-Modifying Model Editing.} 
This line of work directly writes new facts into model weights. Early fine-tuning style approaches \cite{zhu2020modifying, hu2022lora} often lead to model collapse \cite{00030D0TYZ0025} in sequential tasks. More recent locate-then-edit methods \cite{dai2022knowledge,meng2022locating,meng2022mass,fang2024alphaedit,pan2025precise} identify hidden states at subject token positions responsible for knowledge storage and apply closed-form updates. However, these remain unstable in lifelong settings due to localization difficulties \cite{hase2023does,GuoSSED25}. Hypernetwork-based methods \cite{de2021editing,mitchell2021fast,tan2023massive} predict updates with auxiliary networks (often using LN). While efficient, they typically struggle with model parameters that dynamically shift during the editing process, a challenge recently mitigated by \textsc{RLEdit} \cite{li2025reinforced}. Most relevant to our work, \textsc{ULTRAEDIT} \citep{gu2025ultraedit} attains strong long-horizon stability largely via LN-style normalization, yet why LN works so well remains poorly understood---the focus of our mechanism analysis.

\textbf{Parameter-Preserving Model Editing.} These methods augment the model with external resources without altering its core weights, thereby minimizing unintended side effects on the base model. Strategies include utilizing external memory banks \cite{mitchell2022memory,hartvigsen2023aging,zhong2023mquake,yu2024melo,RizwanRW025,wang-etal-2025-think}, injecting additional parameters through dedicated ``patcher'' modules \cite{dong2022calibrating,huang2023transformer}, or leveraging In-Context Learning \cite{zheng2023can, cohen2024evaluating} to internalize new facts. While these methods avoid direct parameter corruption, they scale poorly in lifelong scenarios: growing knowledge implies an ever-larger memory footprint and roughly linear growth in inference latency from retrieval or longer context. This renders them less sustainable for long-term knowledge maintenance compared to parameter-modifying methods, which keep the model size fixed and typically offer more stable inference costs.
\section{Conclusion}
In this paper, we theoretically demystified the LN strategy in LME. We proved that LN establishes a self-reinforcing stability loop by enforcing asymptotic orthogonality and bounded update magnitudes. These properties alleviate the dual challenges of catastrophic forgetting and model collapse. Leveraging these insights, our proposed editor, \textsc{StableEdit}, achieves competitive performance across extensive experiments on million-scale editing sequences, providing empirical support for the proposed mechanism.

\clearpage
\section*{Acknowledgements}
This work was supported in part by the National Science and Technology Major Project (No. 2023ZD0121104), in part by the Anhui Natural Science Foundation (No. 2508085ZD006), in part by the
  Postdoctoral Fellowship Program and the China Postdoctoral Science Foundation under Grant Nos. BX20250387 and 2025M781529, and in part by the Fundamental Research Funds for the Central Universities
  under Grant No. WK2150250042.

\section*{Impact Statement}
This paper advances lifelong model editing by providing a theoretical account of Lifelong Normalization and proposing \textsc{StableEdit} to improve stability over long edit sequences, which may help practitioners more reliably update deployed language models (e.g., correcting outdated information) without degrading general capabilities. As with other parameter-modifying editors, these techniques could also be misused to inject false or harmful content or to undermine safety constraints; we therefore encourage responsible use with appropriate safeguards such as access control, auditing, and careful validation of edits. Overall, we do not identify ethical concerns beyond those already associated with model editing.

\bibliography{example_paper}
\bibliographystyle{icml2026}

\newpage
\appendix
\onecolumn

\section{Experimental Setup}
\label{app:experimental_setup}
In this section, we provide a detailed description of the experimental configuration, including a comprehensive explanation of the evaluation metrics, datasets, and baselines.

\subsection{Evaluation Metrics}
\label{app:metrics}
\textbf{Efficacy (Eff.)} quantifies how successfully a language model integrates new or updated factual information. Specifically, it measures whether the model's top-1 prediction for a given edited input $\boldsymbol x_e$ precisely matches the desired target label $\boldsymbol y_e$. A high efficacy score indicates that the editing process has effectively updated the model's internal knowledge to produce the correct response for the specific fact that was edited. The formula for efficacy is expressed as
$$
\mathbb{E} \left[ \mathbb{I} \left( \boldsymbol y_e = \arg \max_{\boldsymbol y'}\mathbb{P}_{f_{\boldsymbol W}}(\boldsymbol y' | \boldsymbol x_e) \right) \right].
$$

\textbf{Generalization (Gen.)} assesses the robustness of the knowledge edit by evaluating whether the model can apply the newly acquired information to semantically equivalent but linguistically varied inputs. This metric checks if the model's top-1 prediction for a paraphrased version of the edited input, denoted as $\boldsymbol x_e'$, remains consistent with the target label $\boldsymbol y_e$. A strong generalization score indicates that the model has truly absorbed the new knowledge in a conceptual way, rather than merely memorizing the exact phrasing of the original editing instance. The formula for generalization is defined as
$$
\mathbb{E} \left[ \mathbb{I} \left( \boldsymbol y_e = \arg \max_{\boldsymbol y'} \mathbb{P}_{f_{\boldsymbol W}}(\boldsymbol y' | \boldsymbol x_e') \right) \right].
$$

\textbf{Specificity (Spe.)} ensures that model edits do not inadvertently corrupt or degrade the model's pre-existing knowledge on unrelated topics. It measures the model's ability to retain its original behavior and predictions for inputs that are not associated with the edited knowledge. For each unrelated input $\boldsymbol x_u$, specificity verifies that the model's top-1 prediction continues to match its original, unedited label $\boldsymbol y_u$. A high specificity score is essential for safe and stable lifelong learning, preventing catastrophic forgetting or unintended side effects. The formula for specificity is given by
$$
\mathbb{E} \left[ \mathbb{I} \left( \boldsymbol y_u = \arg \max_{\boldsymbol y'} \mathbb{P}_{f_{\boldsymbol W}}(\boldsymbol y' | \boldsymbol x_u) \right) \right].
$$

\textbf{Personas Accuracy (Personas)} is an additional evaluation metric introduced in the WikiBigEdit benchmark to measure a stronger form of generalization beyond standard rephrasing. Specifically, given a persona-conditioned query $\boldsymbol x_p$ and its corresponding target answer $\boldsymbol y_p$, this metric evaluates whether the edited model can still produce the correct answer when the original fact is reformulated in the style of a specific persona. In WikiBigEdit, such examples are constructed by rewriting the original edit question under different persona styles (e.g., detective, pirate, or philosopher), while preserving the same underlying factual content. A high Personas score therefore indicates that the edit transfers not only across surface-level paraphrases but also across stylistically transformed prompts. Formally, Personas Accuracy is defined as
\begin{equation*}
\text{Personas} = \mathbb{E} \left[ \mathbb{I} \left( \boldsymbol y_p = \underset{\boldsymbol y'}{\arg \max} \, \mathbb{P}_{f_{\boldsymbol W}}(\boldsymbol y' \mid \boldsymbol x_p) \right) \right].
\end{equation*}

\textbf{Multi-hop Reasoning Accuracy (Reasoning)} is another evaluation metric introduced in WikiBigEdit to assess whether an edited model can answer compositional queries that require combining multiple incorporated facts. Given a multi-hop query $\boldsymbol x_m$ and its target answer $\boldsymbol y_m$, this metric measures whether the model’s top-1 prediction matches the correct answer. In WikiBigEdit, these evaluation examples are constructed from two related factual tuples, where the object of one fact serves as the subject of another, and the intermediate entity is omitted from the final question. A high Reasoning score thus indicates that the model can connect edited facts and perform two-step reasoning, rather than merely recalling isolated associations. Formally, Multi-hop Reasoning Accuracy is defined as
\begin{equation*}
\text{Reasoning} = \mathbb{E} \left[ \mathbb{I} \left( \boldsymbol y_m = \underset{\boldsymbol y'}{\arg \max} \, \mathbb{P}_{f_{\boldsymbol W}}(\boldsymbol y' \mid \boldsymbol x_m) \right) \right].
\end{equation*}

\paragraph{Discussion on Evaluation Reliability.} We adopt these standard metrics for direct comparison with prior model editing work. Recent studies \cite{yang2025mirage,gu2026surface} suggest that token-matching metrics may not always fully capture genuine editing effects, which we view as a complementary direction for future work.

\subsection{Datasets}
\label{app:datasets}
\textbf{ZsRE} \cite{levy2017zero} is a widely used question-answering benchmark for evaluating factual knowledge editing in language models. Each instance centers on a question–answer pair representing a factual relation to be updated, where the answer serves as the editing target. To assess whether an edit generalizes beyond the original prompt, the dataset also provides paraphrased variants of the question generated through back-translation. In addition, it includes unrelated questions used to measure specificity, i.e., whether the edit preserves knowledge outside the target scope. This structure makes ZsRE suitable for evaluating editing quality from three complementary perspectives: edit success, generalization to equivalent queries, and specificity to unrelated facts.

\textbf{FEVER} \cite{thorne2018fever} is a fact verification dataset built from Wikipedia-derived claims annotated by their factual status. In its original form, each example is labeled as Supported, Refuted, or Not Enough Info, with evidence provided for verifiable claims. In model editing settings, it is typically converted into a binary classification problem, where each input claim is paired with a target label sampled from a Bernoulli distribution with $p = 0.5$, indicating whether it should be judged as true or false. Compared with question-answering benchmarks such as ZsRE, FEVER focuses on editing the model’s judgment over factual statements rather than updating a direct answer to a question. This makes it a complementary benchmark for studying whether editing methods can handle verification-style knowledge updates.

\textbf{WikiBigEdit}
\cite{thede2025understanding} is a large-scale benchmark for lifelong knowledge editing built from real-world factual changes in Wikidata. Its first release contains more than 500K question-answer pairs organized into eight sequential timesteps, allowing evaluation under a realistic stream of evolving knowledge. Unlike earlier editing benchmarks that are relatively small or synthetic, WikiBigEdit is designed to reflect practical update scenarios at scale through an automated pipeline that can be continuously extended with newly observed edits. Notably, 490,519 examples are used to evaluate update efficacy, rephrasing generalization, specificity, and persona-based accuracy, while a separate set of 17,541 examples is specifically annotated for multi-hop reasoning. This design enables a more targeted assessment of the model’s ability to handle compositional queries and long-range dependencies, making WikiBigEdit particularly suitable for studying whether editing methods remain effective and stable over long editing horizons.

\textbf{ULTRAEDITBENCH} \cite{gu2025ultraedit} is introduced as a novel and expansive factual question-answering benchmark, specifically constructed from Wikidata triples to support large-scale model editing. This benchmark is composed of over 2 million comprehensive editing pairs, making it the largest dataset in its domain to date. The design principles of ULTRAEDITBENCH align with established evaluation frameworks in model editing, such as those seen in ZsRE, ensuring its relevance and utility for assessing model updates. The diversity of ULTRAEDITBENCH is a key characteristic, encompassing a broad range of subject types including persons, organizations, and geographical entities, alongside a multitude of languages. This extensive coverage and continuous expandability, owing to its Wikidata foundation, position ULTRAEDITBENCH as a robust resource for pushing the boundaries of ultra-large-scale lifelong model editing.

\textbf{The GLUE Benchmark} \cite{wang2018glue} is a widely recognized collection of diverse natural language understanding (NLU) tasks, specifically curated to assess the performance of models across a broad spectrum of linguistic phenomena. It serves as a comprehensive evaluation suite, pushing models to demonstrate a generalized understanding of language rather than excelling at a single, narrow task. We selected five tasks from this benchmark to evaluate how well different methods maintain general language capabilities.
\begin{itemize}
    \item \textbf{SST} \cite{socher2013recursive} is a sentiment classification task built from movie review sentences annotated with polarity labels. Each example contains a single sentence, and the model is required to predict whether its sentiment is positive or negative.

    \item \textbf{NLI} \cite{williams2018broad} evaluates whether a model can infer the semantic relation between two sentences. Given a premise and a hypothesis, the task is to determine their logical relationship, such as entailment, contradiction, or neutrality.

    \item \textbf{MRPC} \cite{dolan2005automatically} is a paraphrase identification benchmark based on sentence pairs. The goal is to decide whether the two sentences express the same underlying meaning.

    \item \textbf{CoLA} \cite{warstadt-etal-2019-neural} focuses on linguistic acceptability judgment. It requires the model to classify whether a single sentence is grammatically acceptable according to standard English usage.

    \item \textbf{RTE} \cite{BentivogliMDDG09} is a textual entailment task that tests whether the meaning of one sentence can be inferred from another. 
\end{itemize}

\textbf{MedCF} \cite{derong2024fact} is a medical counter-fact dataset proposed for editing factual clinical and biomedical knowledge in LLMs. It is constructed from a medical knowledge graph called DRKG. Each edit instance starts from a real medical triplet that links a head entity (e.g., a drug) to a tail entity (e.g., a side effect) via  
  some medical relation; the tail entity is then replaced by a counterfactual one so that the original fact is turned into a false one. ChatGPT is subsequently prompted to phrase the corresponding    
  query as a natural-language medical question and to produce a rephrased variant of it, which together serve as the edit prompt and its paraphrase. The benchmark spans $17$ medical relation 
  types---including drug-drug interactions, compound-disease relations (treatment, inhibition, side effect, prevention), anatomy-disease localization, and disease-symptom associations---and       
  contains 2,407 training, 817 validation, and 801 test instances.

\subsection{Baselines}
\label{app:baselines}
\textbf{\textsc{FT}} \cite{zhu2020modifying} represents a straightforward and extensively applied technique for model modification. It entails fine-tuning specific layers of the LLM through autoregressive loss on a fresh set of editing examples to integrate new information. While effective for knowledge integration, this method is characterized by its inherent susceptibility to catastrophic forgetting and model collapse.

\textbf{\textsc{MEMIT}} \cite{meng2022mass} is a direct model editing method designed to insert a large number of factual updates into transformer-based language models. Extending the single-edit idea of \textsc{ROME} \cite{meng2022locating}, it performs explicit multi-layer parameter updates on a set of MLP layers identified as key mediators of factual recall. Its update rule is derived from a least-squares formulation, which enables many edits to be written into the model simultaneously rather than one at a time. As a result, \textsc{MEMIT} can scale to hundreds or even thousands of factual modifications while maintaining relatively strong generalization and specificity. In our experiments, we follow the original implementation and hyperparameter settings provided in the paper.

\textbf{\textsc{AlphaEdit}} \cite{fang2024alphaedit} introduces a sophisticated approach to knowledge editing by employing a null-space constrained framework. This method operates by projecting parameter perturbations directly onto the null space of preserved knowledge, a critical mechanism that guarantees that any new updates do not inadvertently interfere with the model's existing factual information. This projection strategy eliminates the necessity for additional constraints during the optimization process, allowing the editing procedure to focus exclusively on modifying targeted knowledge without incurring trade-offs. By meticulously isolating the impact of edits, \textsc{AlphaEdit} ensures that newly injected information is integrated precisely, while the integrity and accuracy of the model's broader knowledge base remain unaffected. We implemented this baseline method using the hyperparameter configuration from their original paper.

\textbf{\textsc{MEND}} \cite{mitchell2021fast} employs a meta-learning framework by training a compact hypernetwork to determine how to adjust the primary language model's parameters in response to new editing instructions. This hypernetwork is responsible for generating edit-specific gradient modifications, enabling efficient knowledge updates without necessitating a full fine-tuning process. The \textsc{MEND} methodology centers on learning the mechanism of editing itself, rather than directly altering the model's weights.

\textbf{\textsc{MALMEN}} \cite{tan2023massive} is a hypernetwork-based method for massive model editing. Its key idea is to aggregate multiple parameter shifts through a least-squares formulation, rather than naive summation, so as to better handle interference among concurrent edits. By further decoupling hypernetwork computation from base-model updates, \textsc{MALMEN} achieves improved memory efficiency and can scale to thousands of factual edits in a single batch.

\textbf{\textsc{RLEdit}} \cite{li2025reinforced} is a hypernetwork-based method developed specifically for lifelong model editing. Its main idea is to cast the editing process as a reinforcement learning problem, where the hypernetwork acts as an agent that generates parameter updates according to the current model state, and editing performance is used as the reward signal. To improve long-horizon stability, \textsc{RLEdit} optimizes the hypernetwork over full editing trajectories rather than isolated edits, allowing it to better adapt to continuously changing model parameters. The method further introduces memory backtracking and update regularization to reduce interference across successive edits and preserve previously edited knowledge. This design makes \textsc{RLEdit} an effective approach for long sequences of sequential knowledge updates.

\textbf{\textsc{ULTRAEDIT}} \cite{gu2025ultraedit} presents a highly efficient paradigm for lifelong model editing, fundamentally diverging from conventional methods by operating without the need for additional training, subject-specific assumptions, or external memory components. This approach significantly enhances scalability by calculating parameter shifts in a single step, utilizing only a hidden state and its corresponding gradient. Notably, \textsc{ULTRAEDIT} relies exclusively on the LN strategy and achieves dramatically faster editing speeds and requires substantially less VRAM compared to prior SOTA methods, making it uniquely capable of editing large language models on consumer-grade GPUs and supporting up to millions of edits while maintaining high accuracy and consistency.

\subsection{Implementation Details}
\label{app:implementation}
\subsubsection{Hyperparameter Configuration}
\label{imp:hyperparameter}
Now we describe the hyperparameter configurations in our experiments. Our experiments are conducted on a single NVIDIA A6000 GPU and \textsc{StableEdit} introduces only four hyperparameters: the learning rate $\gamma$, the ridge parameter $\lambda$, the warm-up size and the choice of editable module. We set learning rate $\gamma=1\text{e}{-6}$ for Mistral-7B-v0.3 and Llama-3-8B-Instruct across all four datasets while $\gamma=2\text{e}{-6}$ for GPT-J-6B and Qwen2.5-7B-Instruct. In all settings, $\lambda$ is set to 10. The warm-up size is set to 2K for standard-scale editing tasks and 20K for ultra-large-scale editing tasks. The details of the editable modules are shown in \cref{tab:editable_modules}, where the numbers indicate the corresponding layer indices.

\begin{table*}[t]
\caption{Editable Module settings of \textsc{StableEdit} across different models and datasets.}
\centering
\label{tab:editable_modules}
\begin{adjustbox}{max width=\linewidth}
\small
\begin{tabular}{c c c}
\toprule
\textbf{Dataset} & \textbf{Model} & \textbf{Editable Module} \\
\midrule

\multirow{4}{*}{\textsc{ZsRE}}
& Mistral-7B-v0.3 & \texttt{[28--31].mlp.down\_proj} \\
& Llama-3-8B-Instruct & \makecell[c]{\texttt{[11--12].mlp.down\_proj}, \texttt{[22--23].mlp.down\_proj}, \texttt{[28--30].mlp.down\_proj}} \\
& GPT-J-6B & {\texttt{[7--15].mlp.fc\_out},\texttt{[18--25].mlp.fc\_out}}\\
& Qwen2.5-7B-Instruct & \makecell[c] {\texttt{[8--14].mlp.down\_proj}, \texttt{[18--25].mlp.down\_proj}} \\
\midrule

\multirow{4}{*}{\textsc{FEVER}}
& Mistral-7B-v0.3 & \texttt{[28-- 30].mlp.down\_proj} \\
& Llama-3-8B-Instruct & \makecell[c]{\texttt{[22--26].mlp.down\_proj}} \\
& GPT-J-6B & {\texttt{[8--14].mlp.fc\_out}, \texttt{[18--26].mlp.fc\_out}} \\
& Qwen2.5-7B-Instruct & \makecell[c] {\texttt{[9--15].mlp.down\_proj}, \texttt{[22--26].mlp.down\_proj}}\\
\midrule

\multirow{6}{*}{\textsc{WikiBigEdit}}
& Mistral-7B-v0.3 & \texttt{[28-- 30].mlp.down\_proj} \\
& Llama-3-8B-Instruct & \makecell[c]{\texttt{[11--12].mlp.down\_proj}, \texttt{[22--23].mlp.down\_proj}, \texttt{[28--30].mlp.down\_proj}} \\
& GPT-J-6B & {\texttt{[8--15].mlp.fc\_out}, \texttt{[22--24].mlp.fc\_out}} \\
& Qwen2.5-7B-Instruct & \makecell[c] {\texttt{[8--14].mlp.down\_proj}, \texttt{[18--24].mlp.down\_proj}} \\
\midrule

\multirow{6}{*}{{ULTRAEDITBENCH}}
& Mistral-7B-v0.3 & \texttt{[28--30].mlp.down\_proj} \\
& Llama-3-8B-Instruct & \makecell[c]{\texttt{[11--12].mlp.down\_proj}, \texttt{[22--23].mlp.down\_proj}, \texttt{[28--30].mlp.down\_proj}} \\
& GPT-J-6B & {\texttt{[8--15].mlp.fc\_out}, \texttt{[22--24].mlp.fc\_out}} \\
& Qwen2.5-7B-Instruct & \makecell[c]{\texttt{[8--14].mlp.down\_proj}, \texttt{[18--24].mlp.down\_proj}}  \\

\bottomrule
\end{tabular}
\end{adjustbox}
\end{table*}

\subsubsection{Algorithms}
\label{imp:algorithms}
In \cref{sec:synergy}, we derived \textsc{StableEdit}, a more flexible and stable editor built upon the \textsc{ULTRAEDIT} framework. The corresponding pseudocode for \textsc{StableEdit} editing algorithms is presented in \cref{alg:stableedit}. This algorithm operates in two sequential phases: a \emph{warm-up phase} and a \emph{target editing phase}, both sharing the same underlying update mechanism but serving distinct purposes.

During initialization, the Normal-Inverse-Wishart (NIW) hyperparameters are set to non-informative priors for each editable layer. The warm-up phase processes a small set of in-domain edits $\{\mathcal{E}^{(\text{pre})}_t\}_{t=1}^{r}$ to bootstrap reliable running statistics, effectively providing ``virtual samples'' that shift the estimation error curve leftward (as analyzed in \cref{sec:calibration}). This allows the subsequent target phase to bypass the high-estimation-error regime that would otherwise occur with a cold start.

At each editing step, the algorithm: (1) extracts hidden states and value gradients from the current model; (2) updates the NIW hyperparameters via Bayesian recursion (\cref{theorem1}); (3) computes point estimates of the gradient mean and covariance; (4) applies \emph{full whitening} using the matrix square root $\widehat{\boldsymbol\Sigma}_{t,l}^{-1/2}$ to transform the gradients; and (5) solves the ridge regression problem to obtain the parameter update. The key distinction from \textsc{ULTRAEDIT} lies in the explicit warm-up stage and the use of full covariance whitening (rather than diagonal normalization), which together strengthen the self-reinforcing stability loop identified in our theoretical analysis.

\begin{algorithm}[htbp]
\caption{\textsc{StableEdit}: Warm-up \& Full Whitening for Lifelong Model Editing} 
\label{alg:stableedit}
\begin{algorithmic}
\STATE \textbf{Input:} Pretrained model $f_{\boldsymbol W_0}$; editable layers $\mathcal L$;
warm-up stream $\{\mathcal E^{(\text{pre})}_t\}_{t=1}^{r}$; target stream $\{\mathcal E^{(\text{cur})}_t\}_{t=1}^{T}$;
ridge $\lambda$; step size $\gamma$; 
whitening ridge $\epsilon_0$.
\STATE \textbf{Output:} Edited parameters $\boldsymbol W_{r+T}=\{\boldsymbol W_{r+T,l}\}_{l\in\mathcal L}$.

\FORALL{$l\in\mathcal L$}
\STATE $(\boldsymbol m_{0,l},\kappa_{0,l},\boldsymbol\Psi_{0,l},\nu_{0,l}) \gets (\mathbf 0,0,\epsilon_0\mathbf I,0)$.
\ENDFOR
\STATE $t \gets 0$ \algc{global edit step index (shared by warm-up and target phases)}
\FOR{$\text{phase}\in\{\text{pre},\text{cur}\}$}
\STATE $\mathcal S \gets \{\mathcal E^{(\text{pre})}_j\}_{j=1}^{r}$ if $\text{phase}=\text{pre}$ else $\{\mathcal E^{(\text{cur})}_j\}_{j=1}^{T}$ \STATE \algc{Warm-up stage: initialize reliable running estimates before the target stream}

\FORALL{each batch $\mathcal E=\{(\boldsymbol x_t^i,\boldsymbol y_t^i)\}_{i=1}^{n_t}$ in $\mathcal S$}
\STATE $t \gets t+1$, \quad $\mathcal E_t \gets \mathcal E$ 
\FORALL{$l\in\mathcal L$}

\STATE Compute $\{\boldsymbol h_{t,l}^i\}_{i=1}^{n_t}$ and value gradients
$\{\boldsymbol v_{t,l}^i\}_{i=1}^{n_t}$ under current model $f_{\boldsymbol W_{t-1}}$ using \cref{eq:compute_v}; set $\bar{\boldsymbol v}_{t,l}\gets \frac{1}{n_t}\sum_{i=1}^{n_t}\boldsymbol v_{t,l}^i$, \;\;
$\boldsymbol S_{t,l}\gets \sum_{i=1}^{n_t}(\boldsymbol v_{t,l}^i-\bar{\boldsymbol v}_{t,l})(\boldsymbol v_{t,l}^i-\bar{\boldsymbol v}_{t,l})^\top$.
\algc{Trace \& gradients}

\STATE Update $(\boldsymbol m_{t,l},\kappa_{t,l},\boldsymbol\Psi_{t,l},\nu_{t,l})$ by \cref{theorem1} using $\bar{\boldsymbol v}_{t,l}$ and $\boldsymbol S_{t,l}$. 
\STATE \algc{NIW recursion: posterior$\rightarrow$prior$\rightarrow$post.~; warm-up posterior is inherited by the target phase}

\STATE Update $\hat{\boldsymbol\mu}_{t,l}\gets  \boldsymbol m_{t,l}$, \;\;
$\hat{\boldsymbol\Sigma}_{t,l}\gets \boldsymbol\Psi_{t,l}/(\nu_{t,l}-d-1)$ by \cref{corollary1}.
\algc{Point estimates}

\STATE Compute $\tilde{\boldsymbol v}_{t,l}^i \gets \widehat{\boldsymbol\Sigma}_{t,l}^{-1/2}\big(\boldsymbol v_{t,l}^i-\boldsymbol m_{t,l}\big)$ for $i=1,\dots,n_t$. \STATE \algc{Full whitening: $\widehat{\boldsymbol\Sigma}_{t,l}^{-1/2}$ is computed via eigendecomposition with eigenvalue flooring (threshold) for stability}
\STATE Compute
$\boldsymbol\Delta_{t,l} \gets -\gamma \sum_{i=1}^{n_t}\tilde{\boldsymbol v}_{t,l}^i\,\boldsymbol\phi_{t,l}^i$ with ridge projection factors $\{\boldsymbol\phi_{t,l}^i\}_{i=1}^{n_t}$ from \cref{eq:update_form}. \algc{Ridge update}
\STATE  $\boldsymbol W_{t,l} \gets \boldsymbol W_{t-1,l} + \boldsymbol\Delta_{t,l}$

\ENDFOR
\ENDFOR
\ENDFOR
\STATE \textbf{return} $\boldsymbol W_{r+T}$
\end{algorithmic}
\end{algorithm}

\clearpage
\section{More Experimental Results}
\label{app:more_exp}

\begin{table}[!t]
\caption{Extended baseline comparison on standard-scale lifelong editing tasks (20K\&17K edits, $n_t{=}100$ samples per step). Eff., Gen., and Spe.\ denote Efficacy, Generalization, and Specificity, respectively. The symbol $\uparrow$ indicates higher values are preferable. \textbf{Bold} and \underline{underlined} entries mark the best and second-best results. }
\centering
\label{tab:appendix_main_3datasets}
\begin{adjustbox}{max width=\linewidth}
\footnotesize
\begin{tabular}{l|ccc|ccccc|ccc}
\toprule
\multirow{2}{*}{\textbf{Method}}
& \multicolumn{3}{c|}{\textbf{ZsRE}}
& \multicolumn{5}{c|}{\textbf{WikiBigEdit}}
& \multicolumn{3}{c}{\textbf{ULTRAEDITBENCH}}
\\
\cmidrule(lr){2-4} \cmidrule(lr){5-9} \cmidrule(lr){10-12}
& \mhead & \ghead & \shead
& \mhead & \ghead & \shead & \personas & \reasoning
& \mhead & \ghead & \shead
\\
\midrule
\multicolumn{12}{c}{\textbf{Mistral-7B-v0.3}}\\
\midrule
Pre-edited & 44.46  & 43.55  & \underline{48.08}
& 39.14  & 38.21  & 41.62  & 35.54  & 29.93  
& 30.83  & 29.94  & 31.11  \\
\midrule
\textsc{FT} &13.69 &12.43 &19.87 
&13.77 &14.86 &11.84 &11.55 &7.51 
& 0.06 &0.36 &0.07 \\
\textsc{MEMIT} & 0.00 & 0.00 & 0.00 
& 0.00 & 0.00 & 0.00 & 0.00 & 0.00 
& 0.00 & 0.00 & 0.00 \\
\textsc{AlphaEdit} & 0.00 & 0.00 & 0.00 
& 0.00 & 0.00 & 0.00 & 0.00 & 0.00 
& 0.00 & 0.00 & 0.00 \\
\textsc{MEND} & 3.33&3.08&0.20
& 0.01 &0.00 &0.00 &0.01 &0.00 
& 0.00 & 0.00 & 0.00\\
\textsc{MALMEN} &1.15&1.06&0.02 
& 0.00 & 0.01 & 0.00 & 0.00 & 0.00 
& 2.42 & 2.48 & 3.47 \\
\textsc{RLEdit} &71.62&67.60&30.70 
& 57.55 &52.47 &28.78 &49.21 &22.41
& 69.85 &65.10 &57.08 \\
\textsc{ULTRAEDIT} & \underline{85.30}  & \underline{80.80}  & 47.38  
& \underline{76.00}  & \underline{70.15}  & \underline{46.09}  & \underline{62.27}  & \underline{35.80}  
& \underline{83.71}  & \underline{77.30}  & \underline{67.26}  \\
\textsc{StableEdit}
& \textbf{87.39} & \textbf{82.93} & \textbf{50.26}
& \textbf{78.90} & \textbf{72.57} & \textbf{47.36} & \textbf{63.96} & \textbf{37.22}
& \textbf{84.98} & \textbf{78.77} & \textbf{68.36} \\

\midrule
\multicolumn{12}{c}{\textbf{Llama-3-8B-Instruct}}\\
\midrule
Pre-edited & 36.76  & 35.83  & 38.93  
& 24.92  & 35.46  & 38.87  & 32.70  & 26.42 
& 27.29  & 26.40  & 27.24  \\
\midrule
\textsc{FT} & 18.51 & 18.11 & 5.01 
& 13.00 &11.70 &6.75 &11.02 &4.38
& 13.50 &11.92 &10.18 \\
\textsc{MEMIT} & 0.14 & 0.14 & 1.40
& 0.02 & 0.02 & 0.09 & 0.02 & 0.00 
& 0.74 & 0.45 & 0.21 \\
\textsc{AlphaEdit} & 86.10 & 81.15 & 26.26 
& 63.24 & 54.68 & 20.17 & 42.58 & 0.01
& 4.51 & 3.16 & 2.78 \\
\textsc{MEND} & 0.00 & 0.00 & 0.00 
&0.26&0.27&0.18&0.26&0.34
& 0.00 & 0.00 & 0.00 \\
\textsc{MALMEN} &3.86 &3.07 &0.26 
& 0.00 & 0.00 & 0.00 & 0.00 & 0.00
& 40.64 & 34.18 & 37.29 \\
\textsc{RLEdit} &\textbf{91.09}&\textbf{89.62}&41.43
& 75.35 & 70.00 & 37.21 & 65.55 & 28.13
& 83.37 &78.65 & 63.31 \\
\textsc{ULTRAEDIT} & 90.07  & 87.36  & \textbf{49.51}  
& \underline{79.60}  & \underline{73.49}  & \textbf{48.51}  & \underline{66.55}  & \textbf{35.64} 
& \underline{85.70}  & \underline{81.28} & \underline{68.73}  \\
\textsc{StableEdit}
& \underline{90.61} & \underline{88.38} & \underline{48.23}
& \textbf{81.22} & \textbf{75.98} & \underline{45.22} & \textbf{70.70} & \underline{35.03}
& \textbf{88.88} & \textbf{85.46} & \textbf{69.06} \\

\midrule
\multicolumn{12}{c}{\textbf{GPT-J-6B}}\\
\midrule
Pre-edited & 27.22  & 26.42  & 27.33  
& 29.97  & 29.08  & 32.58  & 26.46  & 21.81 
& 22.01  & 21.47  & 22.20  \\
\midrule
\textsc{FT} & 15.11 & 13.55 & 2.61 
& 21.90 & 17.56 & 8.47 & 13.91 & 9.24
& 22.03 & 17.61 & 19.60 \\
\textsc{MEMIT} & 0.00 & 0.00 & 0.00
& 1.59 & 1.59 & 0.50 & 0.80 & 0.00 
& 0.25 & 0.18 & 0.20 \\
\textsc{AlphaEdit} & 50.23 & 43.31 & 12.54
& 69.66 & 55.03 & 21.20 & 42.60 & 0.07
& 22.19 & 12.09 & 6.88\\
\textsc{MEND} & 2.52 &2.55 &0.19
&0.02 &0.01 &0.02 &0.02 &0.02
& 1.71 &1.71 &1.83\\
\textsc{MALMEN} & 0.02 &0.01 &0.02
& 0.00 &0.01 &0.01 &0.00 &0.00
&0.76 &0.48 &0.78\\
\textsc{RLEdit} & 72.65 & 68.45 & 21.79 
& 69.18 & 63.01 & 32.75 & 56.88 & 26.14
& 81.14 & 74.88 & 61.87 \\
\textsc{ULTRAEDIT} & \underline{78.03}  & \underline{72.42}  & \underline{27.05}  
& \underline{73.84}  & \underline{66.57}  & \underline{37.17}  & \underline{56.90}  & \underline{29.27} 
& \textbf{84.03}  & \textbf{76.62}  & \underline{64.03}  \\
\textsc{StableEdit}
& \textbf{82.12} & \textbf{78.39} & \textbf{31.52}
& \textbf{75.10} & \textbf{68.13} & \textbf{44.04} & \textbf{60.53} & \textbf{37.79}
& \underline{83.41} & \underline{76.55} & \textbf{67.17} \\

\midrule
\multicolumn{12}{c}{\textbf{Qwen2.5-7B-Instruct}}\\
\midrule
Pre-edited & 34.32  & 33.39  & 38.06 
&30.97  & 30.40  & 34.50  & 28.43  & 22.09 
& 26.03  & 25.22  & 26.35  \\
\midrule
\textsc{FT} &14.02 & 10.91 & 3.39
& 10.35 & 7.59 & 3.68 & 5.55 & 5.84
& 13.20 & 10.53 & 10.27 \\
\textsc{MEMIT} & 0.02 & 0.02 & 0.17 
& 0.54 & 0.33 & 0.38 & 0.32 & 0.00 
& 0.82 & 0.32 & 0.69 \\
\textsc{AlphaEdit} & 16.32 & 13.96 & 1.66
& 20.31 & 15.49 & 2.17 & 9.01 & 0.23
& 17.44 & 8.01 & 5.42 \\
\textsc{MEND} & 15.00 &14.41 &0.47
& 0.00 & 0.00 & 0.00 & 0.00 &0.00
& 3.48 &3.45 &3.43 \\
\textsc{MALMEN} & 0.00 &0.00 &0.00
& 0.06 & 0.02 &0.02 &0.00 &0.00
& 4.36 & 3.48 & 4.38 \\
\textsc{RLEdit} & \textbf{84.70} & \textbf{82.79} & 38.26
& 55.88 & 47.66 & 29.48 & 41.14 & 22.21
& \underline{80.25} & \textbf{75.93} & \underline{64.91} \\
\textsc{ULTRAEDIT} & \underline{82.03}  & \underline{77.08}  & \underline{45.51} 
& \textbf{73.37}  & \textbf{65.86}  & \underline{45.12}  & \underline{54.65}  & \underline{32.74} 
& 79.01  & 71.45  & 64.10  \\
\textsc{StableEdit}
& 80.40 & 76.65 & \textbf{47.59}
& \underline{68.95} & \underline{63.52} & \textbf{46.94} &\textbf{58.66} & \textbf{36.48}
& \textbf{80.29} & \underline{74.00} & \textbf{66.53} \\
\bottomrule
\end{tabular}
\end{adjustbox}
\end{table}

\begin{table*}[!t]
\caption{Extended results on FEVER under the standard-scale lifelong editing setting (20K edits, $n_t{=}100$ samples per step). }
\centering
\label{tab:appendix_main_1datasets}
\begin{adjustbox}{max width=\linewidth}
\footnotesize
% 1 method col + 4 models * 3 metrics = 13 cols
\begin{tabular}{l|ccc|ccc|ccc|ccc}
\toprule
\multicolumn{13}{c}{\textbf{FEVER}}\\
\midrule
\multirow{2}{*}{\textbf{Method}}
& \multicolumn{3}{c|}{\textbf{Mistral-7B-v0.3}}
& \multicolumn{3}{c|}{\textbf{Llama-3-8B-Instruct}}
& \multicolumn{3}{c|}{\textbf{GPT-J-6B}}
& \multicolumn{3}{c}{\textbf{Qwen2.5-7B-Instruct}}
\\
\cmidrule(lr){2-4} \cmidrule(lr){5-7} \cmidrule(lr){8-10} \cmidrule(lr){11-13}
& \mhead & \ghead & \shead
& \mhead & \ghead & \shead
& \mhead & \ghead & \shead
& \mhead & \ghead & \shead
\\
\midrule

Pre-edited
& 0.41  & 0.50  & 1.98 
& 0.02  & 0.02  & 0.26
& 9.61  & 9.68  & 15.90 
& 0.57  & 0.60  & 2.17 
\\
\midrule
\textsc{FT}
& 23.80 &23.37 &16.30
& 16.21 &13.08 &5.01
& 14.02 &13.98 &9.26
& 26.09 &24.62 &21.36
\\
\textsc{MEMIT}
& 0.00 & 0.00 & 0.00
& 0.02 & 0.02 & 0.02
& 5.54 & 5.03 & 5.46
& 0.08 & 0.12 & 0.15
\\
\textsc{AlphaEdit}
& 0.00 & 0.00 & 0.00
& 6.39 & 6.14 & 2.72
& 1.89 & 1.87 & 1.85
& 32.78 & 31.19 & 22.12
\\
\textsc{MEND}
& 0.00 & 0.00 & 0.00
& 36.19 & 36.19 & 24.31
& 52.80 & 51.44 & 45.42
& 76.43 & 77.66 & 40.39
\\
\textsc{MALMEN}
& 18.42 &17.43 &12.09
& 95.03 &\underline{92.44} &\underline{69.20}
& 1.33 &1.25 &2.92
& 0.06 & 0.06 & 0.07
\\
\textsc{RLEdit}
& 92.35 &91.39 &71.85
& 91.38 &89.93 &41.98
& 97.29 &96.01 &\textbf{79.86}
& 96.93 & \underline{93.92} & 68.28
\\
\textsc{ULTRAEDIT}
& \underline{97.87}  & \underline{96.09}  & \textbf{84.29} 
& \underline{95.39}  & 91.93  & 67.14 
& \underline{97.45}  & \underline{96.37}  & 79.72 
& \underline{97.97}  & 93.91  & \underline{68.86} 
\\
\textsc{StableEdit}
& \textbf{98.38} & \textbf{96.55} & \underline{83.89}
& \textbf{97.89} & \textbf{94.47} & \textbf{69.24}
& \textbf{98.35} & \textbf{96.99} & \underline{79.85}
& \textbf{99.07} & \textbf{94.87} & \textbf{69.61}
\\

\bottomrule
\end{tabular}
\end{adjustbox}
\end{table*}

\begin{table*}[!t]
\centering
\caption{Extended results on WikiBigEdit at \textbf{500K edits} ($n_t{=}100$ samples per step) across four models.}
\label{app:WikiBigEdit_50k}
\begin{adjustbox}{max width= \linewidth}
% \small
\footnotesize
\begin{tabular}{l|cccc|cccc}
\toprule
\multicolumn{9}{c}{\textbf{WikiBigEdit}}\\
\midrule
\multirow{2}{*}{\textbf{Method}}
& \multicolumn{4}{c|}{\textbf{Mistral-7B-v0.3}}
& \multicolumn{4}{c}{\textbf{Llama-3-8B-Instruct}}
\\
\cmidrule(lr){2-5} \cmidrule(lr){6-9}
& \mhead & \ghead & \shead & \personas 
& \mhead & \ghead & \shead & \personas 
\\
\midrule

% Baseline (pre-edit) across models
Pre-edited
& 39.14  & 38.21  & 41.62  & 35.54 
& 24.92  & 35.46  & 38.87  & 32.70 
\\
\midrule

\textsc{ULTRAEDIT}
& 71.78 & 65.63 & 55.40  & 56.11  
& 68.99  & 63.59  & 52.28  & 55.04 
\\

\textsc{StableEdit}
& \textbf{74.54} & \textbf{67.80} & \textbf{56.49} & \textbf{58.88}
& \textbf{72.44} & \textbf{66.87} & \textbf{53.83} & \textbf{60.39} 
\\

\midrule
\multirow{2}{*}{\textbf{Method}}
& \multicolumn{4}{c|}{\textbf{GPT-J-6B}}
& \multicolumn{4}{c}{\textbf{Qwen2.5-7B-Instruct}}
\\
\cmidrule(lr){2-5} \cmidrule(lr){6-9}
& \mhead & \ghead & \shead & \personas 
& \mhead & \ghead & \shead & \personas 
\\
\midrule

% Baseline (pre-edit) across models
Pre-edited
& 29.97  & 29.08  & 32.58  & 26.46 
& 30.97  & 30.40  & 34.50  & 28.43 
\\
\midrule

\textsc{ULTRAEDIT}
& 66.46  & 60.54  & 47.90  & 51.73
& 62.73 & 57.19 & 48.52 & 49.04 
\\

\textsc{StableEdit}
& \textbf{71.47} & \textbf{65.04} & \textbf{52.84} & \textbf{57.20} 
& \textbf{69.20} & \textbf{62.95} & \textbf{54.09} & \textbf{54.05} 
\\

\bottomrule
\end{tabular}
\end{adjustbox}
\end{table*}

\subsection{Extended Baseline Comparison}
\label{app:extended_baseline_comparison}
We provide comprehensive experimental results on standard-scale lifelong editing sequences in \cref{tab:appendix_main_3datasets,tab:appendix_main_1datasets}, evaluating \textsc{StableEdit} against seven baseline methods across four representative LLMs (Mistral-7B-v0.3, Llama-3-8B-Instruct, GPT-J-6B, and Qwen2.5-7B-Instruct) and four benchmarks (ZsRE, WikiBigEdit, ULTRAEDITBENCH, and FEVER). All experiments are conducted with 20K\&17K total edits at a batch size of $n_t{=}100$ samples per step, measuring Efficacy, Generalization, and Specificity, with WikiBigEdit additionally assessing Personas and multi-hop Reasoning capabilities. The results highlight a sharp contrast between editors \emph{without} LN and those that \emph{incorporate} LN. Non-LN methods (\textsc{FT}, \textsc{MEMIT}, \textsc{AlphaEdit}) exhibit a consistent failure mode: performance degrades rapidly and often collapses to near-zero under long sequential updates, indicating poor robustness in lifelong settings.  In contrast, LN-based methods (\textsc{RLEdit}, \textsc{ULTRAEDIT}, \textsc{StableEdit}) consistently maintain
robust performance across all configurations. \textsc{StableEdit} achieves best or competitive results in the vast majority of cases, with particularly notable improvements on challenging benchmarks: on GPT-J-6B with WikiBigEdit, \textsc{StableEdit} improves Specificity from 37.17\% to 44.04\% and Reasoning from 29.27\% to 37.79\%; 
on Llama-3-8B-Instruct with FEVER, \textsc{StableEdit} improves Efficacy/Generalization/Specificity by 2.62\%/2.76\%/3.13\% relative to \textsc{ULTRAEDIT}.
Introducing LN is necessary, yet it is not sufficient on its own: while \textsc{RLEdit}, \textsc{ULTRAEDIT}, and our \textsc{StableEdit} remain stable across most configurations, other LN-based approaches (e.g., \textsc{MEND}, \textsc{MALMEN}) still struggle in lifelong regimes due to their hypernetwork training being mismatched with dynamically evolving LLM parameters during editing. It underscores that LN must be paired with an update mechanism that remains calibrated under sequential distribution shift. Overall, these gains are consistent with our theory: the warm-up stage provides effective ``virtual samples'' that bypass the high-estimation-error phase of early updates (\cref{sec:calibration}), and full whitening improves the update geometry by boosting weak eigen-directions while damping high-variance components. Together, they preserve the stability loop in \cref{sec:synergy}---accurate tracking yields calibrated updates, which constrain drift and enable continued stable editing over long sequences.

We further validate the scalability of our approach under ultra-large-scale editing scenarios with 500K sequential edits on WikiBigEdit, as reported in \cref{app:WikiBigEdit_50k}. At this extreme scale, most baselines become computationally infeasible or completely collapse, leaving \textsc{ULTRAEDIT} as a viable comparison. The results demonstrate that \textsc{StableEdit} consistently outperforms \textsc{ULTRAEDIT} across all four models on every metric, with the performance gap becoming more pronounced as edits accumulate. On GPT-J-6B, \textsc{StableEdit} achieves 71.47\%/65.04\%/52.84\%/57.20\% (Efficacy/Generalization/Specificity/Personas) compared to \textsc{ULTRAEDIT}'s 66.46\%/60.54\%/47.90\%/51.73\%, representing relative improvements of 7.5\%, 7.4\%, 10.3\%, and 10.6\%, respectively. Similar gains are observed on Qwen2.5-7B-Instruct, where \textsc{StableEdit} improves Efficacy from 62.73\% to 69.20\% and Personas from 49.04\% to 54.05\%. These sustained advantages at ultra-large scale further corroborate the effectiveness of our theoretical enhancements.

\subsection{Domain Generalization}
\label{app:domain_generalization}

To assess whether \textsc{StableEdit} generalizes beyond the current benchmark suite, we evaluate it on the medical knowledge editing benchmark MedCF~\citep{derong2024fact}. Crucially, we do \emph{not} use a domain-specific medical LLM, do \emph{not} perform MedCF-specific hyperparameter tuning, and do \emph{not} adjust the editable layers for this dataset. As shown in \cref{tab:medcf_results}, \textsc{StableEdit} outperforms \textsc{ULTRAEDIT} on both Mistral-7B-v0.3 and Llama-3-8B-Instruct across all three metrics. On GPT-J-6B, \textsc{StableEdit} yields higher Specificity but slightly lower Efficacy and Generalization. Overall, these results suggest \textsc{StableEdit} is not tied to the current benchmark suite and can transfer to a new domain even without domain-specific adaptation.

\begin{table}[t]
\centering
\small
\renewcommand{\arraystretch}{1.08}
\caption{Results on the medical benchmark MedCF.}
\label{tab:medcf_results}

\begin{tabular}{llccc}
\toprule
\multirow{2}{*}{Model} & \multirow{2}{*}{Method} & \multicolumn{3}{c}{\textsc{MedCF}} \\
\cmidrule(lr){3-5}
& & Eff. & Gen. & Spe. \\
\midrule
\multirow{2}{*}{Mistral-7B-v0.3}
& \textsc{ULTRAEDIT}  & 60.34 & 59.67 & 47.76 \\
& \textsc{StableEdit} & \textbf{68.83} & \textbf{63.48} & \textbf{51.61} \\
\midrule
\multirow{2}{*}{Llama-3-8B-Instruct}
& \textsc{ULTRAEDIT}  & 57.64 & 54.66 & 39.37 \\
& \textsc{StableEdit} & \textbf{63.38} & \textbf{58.62} & \textbf{42.90} \\
\midrule
\multirow{2}{*}{GPT-J-6B}
& \textsc{ULTRAEDIT}  & \textbf{57.08} & \textbf{55.32} & 37.75 \\
& \textsc{StableEdit} & 54.73 & 52.00 & \textbf{43.28} \\
\bottomrule
\end{tabular}

\end{table}

\begin{figure}[t]
    \centering
  
    \begin{subfigure}[t]{0.32\textwidth}
      \centering
      \caption{{Efficacy on Mistral-7B-v0.3}}\includegraphics[width=\linewidth]{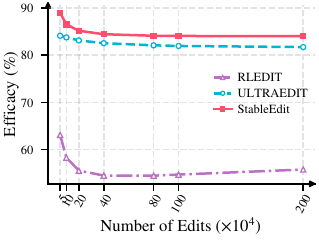}
      
      \label{fig:mistral_es}
    \end{subfigure}\hfill
    \begin{subfigure}[t]{0.32\textwidth}
      \centering
      \caption{{Generalization on Mistral-7B-v0.3}}\includegraphics[width=\linewidth]{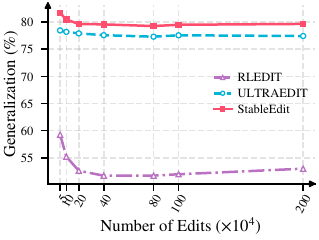}
      
      \label{fig:mistral_gs}
    \end{subfigure}\hfill
    \begin{subfigure}[t]{0.32\textwidth}
      \centering
       \caption{{Specificity on Mistral-7B-v0.3}}\includegraphics[width=\linewidth]{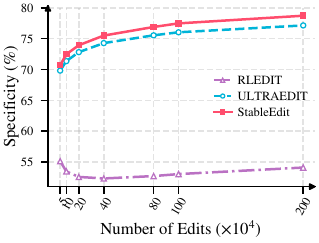}
      \label{fig:mistral_ls}
    \end{subfigure}
    
    \begin{subfigure}[t]{0.32\textwidth}
      \centering
      \caption{Efficacy on Llama-3-8B-Instruct}\includegraphics[width=\linewidth]{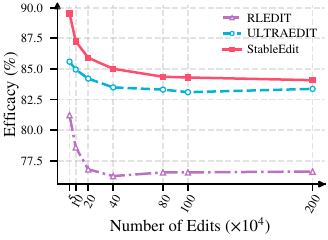}
      
      \label{fig:llama_es}
    \end{subfigure}\hfill
    \begin{subfigure}[t]{0.32\textwidth}
      \centering
      \caption{Generalization on Llama-3-8B-Instruct}\includegraphics[width=\linewidth]{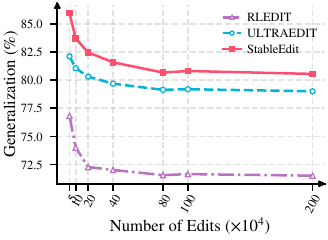}
      \label{fig:llama_gs}
    \end{subfigure}\hfill
    \begin{subfigure}[t]{0.32\textwidth}
      \centering
       \caption{Specificity on Llama-3-8B-Instruct}\includegraphics[width=\linewidth]{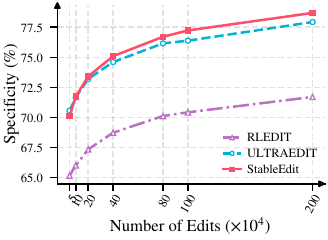}
      \label{fig:llama_ls}
    \end{subfigure}
  
    \caption{Editing dynamics on ULTRAEDITBENCH at scale. Performance trajectories of Efficacy, Generalization, and Specificity as the number of sequential edits increases (50K--2M). Each row compares \textsc{RLEdit}, \textsc{ULTRAEDIT}, and \textsc{StableEdit}: \textbf{top}---Mistral-7B-v0.3 on ULTRAEDITBENCH; \textbf{bottom}---Llama-3-8B-Instruct on ULTRAEDITBENCH.}
    \label{fig:scaling}
  \end{figure}

\subsection{Scaling Trajectories and Long-Horizon Diagnostics}
\label{app:scaling}
This section consolidates the evidence on how editing performance evolves as the edit stream grows, and on the covariance-side diagnostics that explain the observed trajectory shape. Part~\ref{app:scaling_ultraeditbench} tracks cross-method trajectories on ULTRAEDITBENCH from 50K to 2M edits; Part~\ref{app:scaling_wikibigedit} tracks \textsc{StableEdit} vs.\ \textsc{ULTRAEDIT} on WikiBigEdit from 20K to 500K edits; Part~\ref{app:scaling_covariance} tracks the condition number and the largest eigenvalue of the running covariance, discussing a possible contributing factor to the weaker early-stage performance observed in certain settings and examining the practical scale of the irreducible residual in \cref{thm:cov_spectral_l}.

\subsubsection{Cross-Method Trajectories on ULTRAEDITBENCH (50K--2M)}
\label{app:scaling_ultraeditbench}
\textbf{Robust Stability Across Long Edit Sequences.} \cref{fig:scaling} demonstrates the editing dynamics on ULTRAEDITBENCH as the number of sequential edits scales from 50K to 2M. A key observation is that all three methods---\textsc{RLEdit}, \textsc{ULTRAEDIT}, and \textsc{StableEdit}---maintain remarkably stable performance trajectories throughout the entire editing process. This robustness can be attributed to their shared core component: LN, which continuously tracks and calibrates the gradient distribution. The absence of catastrophic performance degradation, even at the 2M-edit scale, empirically validates our theoretical analysis that LN establishes a self-reinforcing stability loop, effectively mitigating both catastrophic forgetting and model collapse.

\textbf{Consistent Superiority of \textsc{StableEdit}.} Across both Mistral-7B-v0.3 and Llama-3-8B-Instruct, \textsc{StableEdit} consistently achieves the highest or competitive performance on all three metrics. These results confirm that explicitly strengthening the stability loop via warm-up and full whitening yields tangible benefits in long-horizon lifelong model editing scenarios.

\subsubsection{Long-Horizon Trajectories on WikiBigEdit (20K--500K)}
\label{app:scaling_wikibigedit}
To provide more trajectory-level evidence for the long-horizon robustness claim, we plot the full performance trajectories on WikiBigEdit from 20K to 500K edits for both Llama-3-8B-Instruct and Qwen2.5-7B-Instruct in \cref{fig:wiki_scaling}. Although \textsc{StableEdit} can be slightly behind \textsc{ULTRAEDIT} at earlier checkpoints on some metrics (e.g., Specificity on Llama-3-8B-Instruct), it catches up and eventually surpasses \textsc{ULTRAEDIT} as edits accumulate. On Qwen2.5-7B-Instruct, \textsc{ULTRAEDIT} drops sharply from 400K to 500K edits, while \textsc{StableEdit} remains much more stable. This trajectory-level evidence aligns more faithfully with our main claim of better long-horizon robustness than a single-checkpoint comparison.

\begin{figure}[t]
  \centering

  \begin{subfigure}[t]{0.32\textwidth}
    \centering
    \caption{{Efficacy on Llama-3-8B-Instruct}}\includegraphics[width=\linewidth]{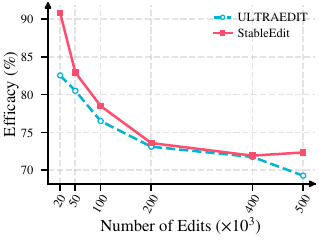}
  \end{subfigure}\hfill
  \begin{subfigure}[t]{0.32\textwidth}
    \centering
    \caption{{Generalization on Llama-3-8B-Instruct}}\includegraphics[width=\linewidth]{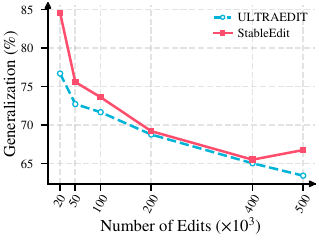}
  \end{subfigure}\hfill
  \begin{subfigure}[t]{0.32\textwidth}
    \centering
     \caption{{Specificity on Llama-3-8B-Instruct}}\includegraphics[width=\linewidth]{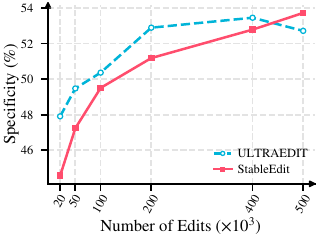}
  \end{subfigure}

  \begin{subfigure}[t]{0.32\textwidth}
    \centering
    \caption{Efficacy on Qwen2.5-7B-Instruct}\includegraphics[width=\linewidth]{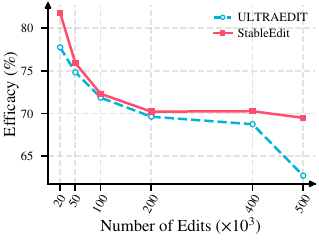}
  \end{subfigure}\hfill
  \begin{subfigure}[t]{0.32\textwidth}
    \centering
    \caption{Generalization on Qwen2.5-7B-Instruct}\includegraphics[width=\linewidth]{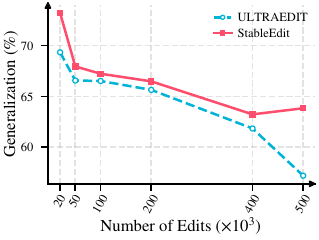}
  \end{subfigure}\hfill
  \begin{subfigure}[t]{0.32\textwidth}
    \centering
     \caption{Specificity on Qwen2.5-7B-Instruct}\includegraphics[width=\linewidth]{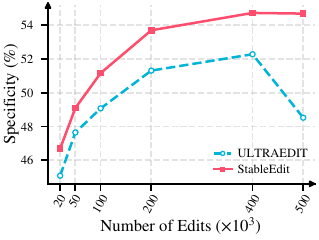}
  \end{subfigure}

  \caption{Editing dynamics on WikiBigEdit at scale. Performance trajectories of Efficacy, Generalization, and Specificity as the number of sequential edits increases (20K--500K). Each row compares \textsc{ULTRAEDIT} and \textsc{StableEdit}: \textbf{top}---Llama-3-8B-Instruct; \textbf{bottom}---Qwen2.5-7B-Instruct.}
  \label{fig:wiki_scaling}
\end{figure}

\subsubsection{Covariance Conditioning and Eigenvalue Dynamics}
\label{app:scaling_covariance}
We track the condition number of the running covariance across editing steps (\cref{fig:condition_number}) and the largest eigenvalue of the covariance (\cref{fig:max_eigenvalue}) as complementary diagnostics alongside the trajectory-level results in \cref{fig:wiki_scaling}. As shown in \cref{fig:condition_number}, the condition number decreases substantially during the early editing phase and then stabilizes into a controlled range. Early in editing, the covariance is more ill-conditioned, which may cause whitening to over-amplify small-eigenvalue directions; this is a possible contributing factor to the weaker early-stage performance observed in some settings. As editing proceeds, the better-conditioned covariance leads to a more balanced whitening transform. On Qwen2.5-7B-Instruct, the condition number rises temporarily around 400K edits, yet \textsc{StableEdit} remains stable while \textsc{ULTRAEDIT} degrades more noticeably at this point.

\cref{fig:max_eigenvalue} shows that the largest eigenvalues of the running covariance remain small in practice, typically on the order of $10^{-4}$ to $10^{-6}$, and tend to decrease over the course of editing. This is relevant to assessing the practical scale of the irreducible residual in \cref{thm:cov_spectral_l}. In our experiments, the editable-layer width is $d \approx 4{,}096$, the per-step batch size is $n = 100$, and the largest eigenvalue $\sigma_+$ of the running covariance is at most on the order of $10^{-4}$ in practice (often smaller, as shown in the figure). Although the residual term in \cref{thm:cov_spectral_l} depends explicitly on $d$ and $\sigma_+$, its effective magnitude is moderate: with $d \sim 10^3$, $n = 100$, and $\sigma_+ \sim 10^{-4}$ to $10^{-6}$, the term remains small relative to the signal. These results suggest that the covariance estimation is sufficiently accurate in our experimental setting, and the resulting whitening transform remains well-conditioned throughout the editing stream.

\begin{figure}[t]
    \centering
    \begin{minipage}{\linewidth}
        \centering
        \begin{subfigure}[t]{0.48\linewidth}
            \centering
            \caption{Condition number on Llama-3-8B-Instruct}
            \includegraphics[width=\linewidth]{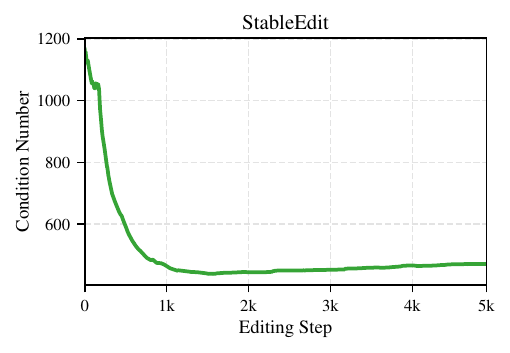}
        \end{subfigure}
        \hfill
        \begin{subfigure}[t]{0.48\linewidth}
            \centering
            \caption{Condition number on Qwen2.5-7B-Instruct}
            \includegraphics[width=\linewidth]{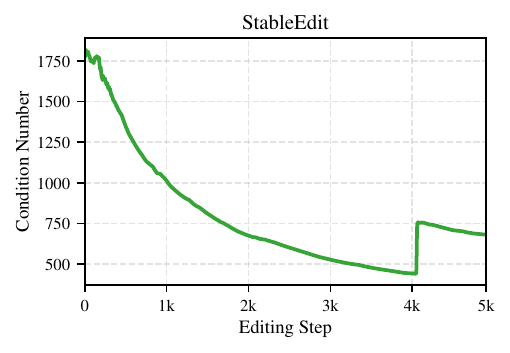}
        \end{subfigure}
    \end{minipage}

    \caption{Condition number of the covariance matrix across editing steps on WikiBigEdit under the 500K-edit setting for Llama-3-8B-Instruct and Qwen2.5-7B-Instruct. The condition number decreases substantially during early editing and then stabilizes, indicating progressively better-conditioned whitening.}
    \label{fig:condition_number}
\end{figure}

\begin{figure}[t]
    \centering
    \begin{minipage}{\linewidth}
        \centering
        \begin{subfigure}[t]{0.32\linewidth}
            \centering
            \caption{Mistral-7B-v0.3}
            \includegraphics[width=\linewidth]{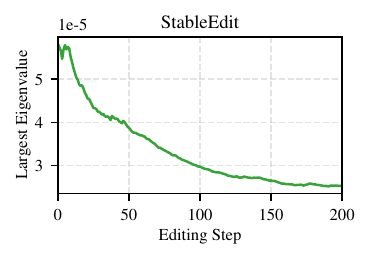}
        \end{subfigure}
        \hfill
        \begin{subfigure}[t]{0.33\linewidth}
            \centering
            \caption{Llama-3-8B-Instruct}
            \includegraphics[width=\linewidth]{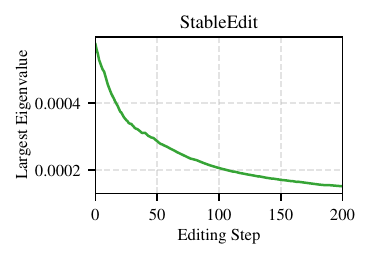}
        \end{subfigure}
        \hfill
        \begin{subfigure}[t]{0.32\linewidth}
            \centering
            \caption{GPT-J-6B}
            \includegraphics[width=\linewidth]{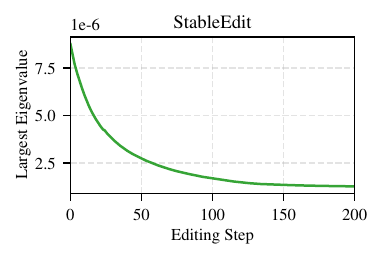}
        \end{subfigure}
    \end{minipage}

    \caption{Largest eigenvalue of the covariance matrix across editing steps on ULTRAEDITBENCH under the 20K-edit setting for Mistral-7B-v0.3, Llama-3-8B-Instruct, and GPT-J-6B. The eigenvalues remain small (typically $10^{-4}$ to $10^{-6}$) and tend to decrease over time.}
    \label{fig:max_eigenvalue}
\end{figure}

\subsection{Extended Runtime Comparison}
\label{app:runtime_comparison}

In \cref{tab:running_time}, we find that compared with \textsc{RLEdit}, \textsc{MEND}, \textsc{MALMEN}, and \textsc{AlphaEdit}, \textsc{StableEdit} maintains substantially lower per-step runtime, while introducing only a small overhead relative to \textsc{ULTRAEDIT}. The additional cost relative to \textsc{ULTRAEDIT} is concentrated in computing the full whitening transform: \textsc{StableEdit} performs a full eigendecomposition at each step, whereas \textsc{ULTRAEDIT} uses only diagonal normalization. For an editable layer of width $d$, the eigendecomposition costs $O(d^3)$ time with $O(d^2)$ memory, and the whitening update adds an $O(nd^2)$ term for edit batch size $n$. Crucially, this overhead scales with the \emph{editable-layer width} rather than the total parameter count of the LLM; even for very large models (e.g., 70B parameters), the editable-layer width typically remains in the low-thousands range (around 8192). In our experiments, we edit only a small number of MLP \texttt{down\_proj} layers, with $d \approx 4{,}096$ (and typically still in the low-thousands range for larger models). In this regime, the overhead is modest in practice. The reported runtime \emph{already includes the full warm-up cost}: we measure the total wall-clock time of the entire editing procedure (warm-up plus target edits) and then report the average time per target edit.

\subsection{Hidden Representations Analysis}
\label{app:hidden_representations_analysis}

To further investigate the internal impact of lifelong editing on model representations, we visualize the hidden states of pre-edited and post-edited LLMs using UMAP dimensionality reduction in \cref{fig:UMAP_distributions_grid3}. Specifically, we extract hidden representations from 1,000 factual prompts unrelated to the editing targets and project them onto a two-dimensional space, comparing the distributional alignment between the original model and the model after sequential edits. We evaluate three LN-based methods (\textsc{StableEdit}, \textsc{ULTRAEDIT}, \textsc{RLEdit}) on GPT-J-6B and Mistral-7B-v0.3, with dashed lines indicating the 0.95 confidence intervals of each distribution.

The visualizations reveal clear differences in representation stability across methods. \textsc{StableEdit} and \textsc{ULTRAEDIT} exhibit smaller distributional shift, with the post-edited representations remaining well-aligned within the confidence region of the pre-edited distribution on both model architectures. In contrast, \textsc{RLEdit} induces more pronounced distributional drift, particularly visible on Mistral-7B-v0.3 where the post-edited representations show noticeable displacement from the original manifold. Overall, these results suggest that LN-based editors (\textsc{StableEdit}, \textsc{ULTRAEDIT}, and \textsc{RLEdit}) tend to better preserve alignment with the pre-edited representation manifold—maintaining greater consistency with the original model state and thereby reducing the risk of overfitting to the edited requests. This observation provides direct empirical support for our theoretical analysis: the bounded update norm guarantee established in \cref{cor:bounded_drift} ensures that cumulative parameter modifications remain controlled, preventing the explosive representational drift characteristic of model collapse. The superior alignment observed with \textsc{StableEdit} demonstrates that our enhancements---warm-up initialization and full whitening---effectively strengthen these theoretical guarantees, maintaining representation fidelity throughout extended editing sequences.

\begin{figure}[t]
    \centering
    % --- 第一行 ---
    \begin{subfigure}[t]{0.32\linewidth}
        \centering
        \caption{\textsc{StableEdit} on GPT-J-6B} % 放在上面
        \label{fig:gpt_stableedit_wikibigedit_grid3}
        \includegraphics[max width=\linewidth]{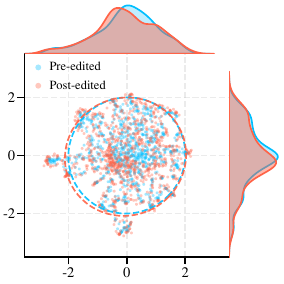}
    \end{subfigure}
    \hfill
    \begin{subfigure}[t]{0.32\linewidth}
        \centering
        \caption{\textsc{ULTRAEDIT} on GPT-J-6B}
        \label{fig:gpt_ultraedit_wikibigedit_grid3}
        \includegraphics[max width=\linewidth]{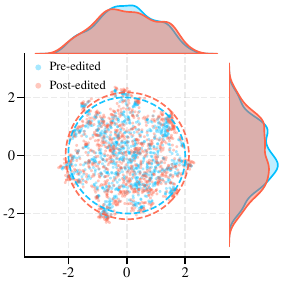}
    \end{subfigure}
    \hfill
    \begin{subfigure}[t]{0.32\linewidth}
        \centering
        \caption{\textsc{RLEdit} on GPT-J-6B}
        \label{fig:gpt_rledit_wikibigedit_grid3}
        \includegraphics[max width=\linewidth]{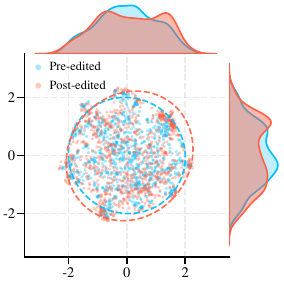}
    \end{subfigure}

    \vspace{1em} % 两行之间的垂直间距

    % --- 第二行 ---
    \begin{subfigure}[t]{0.32\linewidth}
        \centering
        \caption{\textsc{StableEdit} on Mistral-7B-v0.3}
        \label{fig:mistral_stableedit_ULTRAEDITBENCH_grid3}
        \includegraphics[max width=\linewidth]{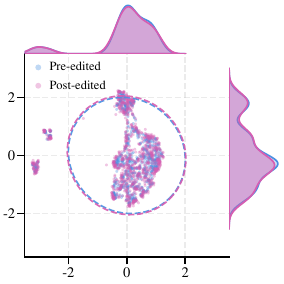}
    \end{subfigure}
    \hfill
    \begin{subfigure}[t]{0.32\linewidth}
        \centering
        \caption{\textsc{ULTRAEDIT} on Mistral-7B-v0.3}
        \label{fig:mistral_ultraedit_ULTRAEDITBENCH_grid3}
        \includegraphics[max width=\linewidth]{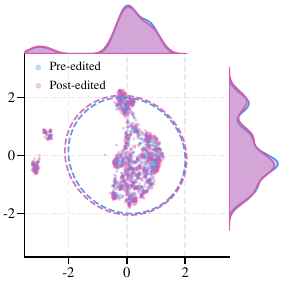}
    \end{subfigure}
    \hfill
    \begin{subfigure}[t]{0.32\linewidth}
        \centering
        \caption{\textsc{RLEdit} on Mistral-7B-v0.3}
        \label{fig:mistral_rledit_ULTRAEDITBENCH_grid3} 
        \includegraphics[max width=\linewidth]{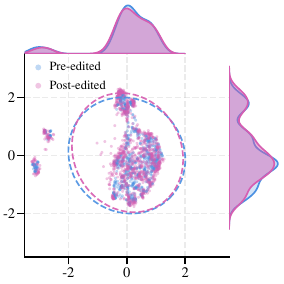}
    \end{subfigure}

    \caption{Representation shift under lifelong editing. UMAP projections of the \emph{final-layer} output hidden states computed on the same randomly sampled set of 1,000 factual prompts (independent of the edit stream), comparing the pre-edit model and the corresponding post-edit model. For each prompt, we feed identical inputs to the LLMs and project the resulting hidden states into 2D, visualizing the joint distribution (scatter) together with marginal densities along each reduced dimension (top and right). Dashed lines indicate the 0.95 confidence intervals of the marginals. Best viewed in color.}
    \label{fig:UMAP_distributions_grid3}
\end{figure}

\clearpage

\subsection{More Results on Verification of the LN Mechanism}
\label{app:verify_LN}

This section consolidates the mechanism-verification experiments that support \cref{sec:LN_mechanism} and \cref{sec:rq3}. It is organized into five parts: \ref{app:warmup_dynamics}~probes the four design choices of the warm-up stage; \ref{app:update_geometry}~verifies the bounded-norm and near-orthogonality properties derived in \cref{theorem3}; \ref{app:drift_surrogates}~reports observable drift-surrogate curves that provide empirical evidence for \cref{asmp:drift_rate}; \ref{app:non_iid_shift}~tests robustness under an abrupt non-i.i.d.\ target stream; and \ref{app:hnormsq_ablation}~isolates the role of the hidden-state scalar $\|\tilde{h}\|_2^2$ to localize the source of the stability gains.

\subsubsection{Warm-up Dynamics}
\label{app:warmup_dynamics}
The warm-up stage of \textsc{StableEdit} involves several design choices that affect the quality of the initial running statistics. We investigate four aspects: \textbf{(i)}~the number of warm-up edits used, \textbf{(ii)}~where in the edit stream the warm-up is inserted, \textbf{(iii)}~whether the warm-up data must come from the same distribution as the target benchmarks, and \textbf{(iv)}~whether warm-up needs to perform actual parameter updates (rather than only updating the running statistics). Aspects \textbf{(i)} and \textbf{(ii)} are studied under the same-distribution setting in \cref{fig:num_warmup}; aspect \textbf{(iii)} is the distribution-mismatch test, for which \cref{tab:warmup_mismatch_compact} in the main text reports a compact Mistral-7B-v0.3 view and the full three-backbone tables are \cref{tab:zsre_fever_warmup_medcf,tab:fever_ultraeditbench_warmup_zsre} below; aspect \textbf{(iv)} is the statistics-only variant Warmup-100 in \cref{tab:statsonly_warmup_100}. Together, these results support the main-text claim in Obs~7 that warm-up helps primarily by establishing reliable initial running statistics and does not require the warm-up distribution to match the target stream.

\paragraph{(i)-(ii) Warm-up size and placement under same-distribution warm-up.}
We study how warm-up affects final performance on a target edit sequence along two axes: \emph{warm-up size} (2K/5K/20K edits) and \emph{warm-up placement} (inserting the warm-up block at start, q1, middle, q3, or end of the target sequence), where q1 and q3 denote insertion at the 25\% and 75\% quantile positions of the target edit stream, respectively. Warm-up edits are drawn i.i.d.\ from the same distribution as the target edits. As shown in \cref{fig:num_warmup}, increasing the warm-up size consistently improves performance across all five metrics (Efficacy, Generalization, Specificity, Personas, and Reasoning). Furthermore, placing warm-up edits earlier in the target sequence yields larger gains than inserting them mid-stream. These observations empirically validate our theoretical analysis: \cref{thm:mean_mse_l,thm:cov_spectral_l} establish that accumulated edits provide ``virtual samples'' that shift the estimation error curve leftward, enabling the tracker to bypass the high-error regime characteristic of cold starts. The warm-up phase effectively preconditions the running statistics, allowing subsequent edits to benefit from more accurate gradient whitening from the outset.

\begin{figure}[!t]
    \centering
    \begin{minipage}{0.49\linewidth}
        \centering
        {\small {(a) Warm-up size}}\\
        \includegraphics[width=\linewidth]{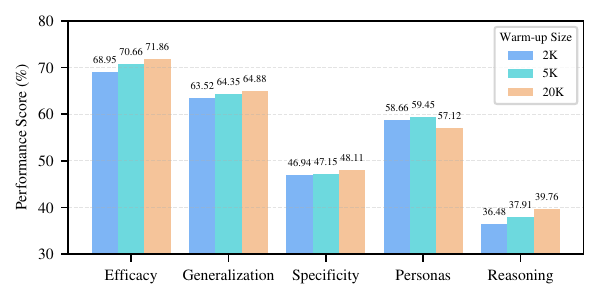}
    \end{minipage}\hfill
    \begin{minipage}{0.49\linewidth}
        \centering
        {\small {(b) Warm-up placement}}\\
        \includegraphics[width=\linewidth]{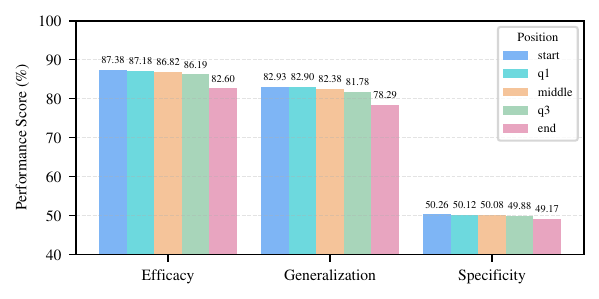}
    \end{minipage}
    \caption{Warm-up ablations for \textsc{StableEdit}. \textbf{Left:} Qwen2.5-7B-Instruct on WikiBigEdit. \textbf{Right:} Mistral-7B-v0.3 on ZsRE. We report final editing performance under varying warm-up sizes (2K, 5K, and 20K edits) and placements along the edit stream: \textit{start}, \textit{q1}, \textit{middle}, \textit{q3}, and \textit{end}, where \textit{q1}/\textit{q3} denote inserting warm-up at the 1st/3rd quartile (25\%/75\%) of the total editing steps. Larger warm-up sizes and earlier placement consistently improve all metrics, supporting the cumulative benefit of warm-up.}
    \label{fig:num_warmup}
\end{figure}

\paragraph{(iii) Warm-up distribution robustness.}
A natural concern is whether \textsc{StableEdit} relies on in-domain warm-up data. We conduct two stress tests. First, we replace the warm-up data with the out-of-domain medical dataset MedCF~\citep{derong2024fact}, keeping the downstream editing benchmarks (ZsRE and FEVER) unchanged (\cref{tab:zsre_fever_warmup_medcf}). Across 18 reported entries, the average absolute change relative to \textsc{StableEdit} is only 0.43 points, and the mismatched-warm-up variant still outperforms \textsc{ULTRAEDIT} on 15 out of 18 entries. Second, we use ZsRE as warm-up while evaluating on FEVER and ULTRAEDITBENCH (\cref{tab:fever_ultraeditbench_warmup_zsre}). Here the average absolute change is only 0.26 points, and the variant outperforms \textsc{ULTRAEDIT} on 15 out of 18 entries. Together, these results demonstrate that the benefit of warm-up is not tied to strict same-distribution matching: even under substantial domain mismatch, the initialized statistics remain useful and are further adapted online during the target stream, directly supporting the main-text claim in Obs~7.

\begin{table*}[t]
\centering
\small
\renewcommand{\arraystretch}{1.08}
\caption{Ablation on the warm-up data distribution. 
\textsc{ULTRAEDIT} is reported as a baseline reference. {Warmup-MedCF} replaces the warm-up data of \textsc{StableEdit} with the out-of-domain medical dataset MedCF, while leaving the downstream editing benchmarks unchanged. This comparison tests whether \textsc{StableEdit} remains effective when the warm-up stage is performed on a mismatched distribution. For {Warmup-MedCF}, the value in parentheses reports the absolute change relative to \textsc{StableEdit}. Boldfaced numbers mark the better result between \textsc{StableEdit} and {Warmup-MedCF} for each model-dataset-metric entry.}

\label{tab:zsre_fever_warmup_medcf}

\resizebox{\textwidth}{!}{%
\begin{tabular}{llcccccc}
\toprule
\multirow{2}{*}{Model} & \multirow{2}{*}{Method}
& \multicolumn{3}{c}{ZsRE}
& \multicolumn{3}{c}{FEVER} \\
\cmidrule(lr){3-5} \cmidrule(lr){6-8}
& & Eff. & Gen. & Spe. & Eff. & Gen. & Spe. \\
\midrule

\multirow{3}{*}{Mistral-7B-v0.3}
& \textsc{ULTRAEDIT}
& 85.30 & 80.80 & 47.38
& 97.87 & 96.09 & 84.29 \\

& \textsc{StableEdit}
& 87.39 & 82.93 & 50.26
& \textbf{98.38} & \textbf{96.55} & \textbf{83.89} \\

& {Warmup-MedCF}
& \textbf{87.63} {\scriptsize {\color{red}$(\uparrow 0.24)$}}
& \textbf{83.09} {\scriptsize {\color{red}$(\uparrow 0.16)$}}
& \textbf{51.28} {\scriptsize {\color{red}$(\uparrow 1.02)$}}
& 98.36 {\scriptsize {\color{blue}$(\downarrow 0.02)$}}
& 96.45 {\scriptsize {\color{blue}$(\downarrow 0.10)$}}
& 82.90 {\scriptsize {\color{blue}$(\downarrow 0.99)$}} \\
\midrule

\multirow{3}{*}{Llama-3-8B-Instruct}
& \textsc{ULTRAEDIT}
& 90.07 & 87.36 & 49.51
& 95.39 & 91.93 & 67.14 \\

& \textsc{StableEdit}
& \textbf{90.61} & \textbf{88.38} & \textbf{48.23}
& 97.89 & 94.47 & \textbf{69.24} \\

& {Warmup-MedCF}
& 90.25 {\scriptsize {\color{blue}$(\downarrow 0.36)$}}
& 87.97 {\scriptsize {\color{blue}$(\downarrow 0.41)$}}
& 47.31 {\scriptsize {\color{blue}$(\downarrow 0.92)$}}
& \textbf{98.10} {\scriptsize {\color{red}$(\uparrow 0.21)$}}
& \textbf{94.74} {\scriptsize {\color{red}$(\uparrow 0.27)$}}
& 69.00 {\scriptsize {\color{blue}$(\downarrow 0.24)$}} \\
\midrule

\multirow{3}{*}{GPT-J-6B}
& \textsc{ULTRAEDIT}
& 78.03 & 72.42 & 27.05
& 97.45 & 96.37 & 79.72 \\

& \textsc{StableEdit}
& \textbf{82.12} & \textbf{78.39} & 31.52
& 98.35 & 96.99 & \textbf{79.85} \\

& {Warmup-MedCF}
& 81.79 {\scriptsize {\color{blue}$(\downarrow 0.33)$}}
& 77.76 {\scriptsize {\color{blue}$(\downarrow 0.63)$}}
& \textbf{32.13} {\scriptsize {\color{red}$(\uparrow 0.61)$}}
& \textbf{99.01} {\scriptsize {\color{red}$(\uparrow 0.66)$}}
& \textbf{97.15} {\scriptsize {\color{red}$(\uparrow 0.16)$}}
& 79.38 {\scriptsize {\color{blue}$(\downarrow 0.47)$}} \\
\bottomrule
\end{tabular}%
}
\end{table*}

\begin{table*}[t]
\centering
\small
\renewcommand{\arraystretch}{1.08}
\caption{Robustness to warm-up distribution shift on FEVER and ULTRAEDITBENCH under the 20K-edit setting across three backbone LLMs. 
\textsc{ULTRAEDIT} is included as a baseline reference. {Warmup-ZsRE} denotes the variant of \textsc{StableEdit} that replaces the original warm-up data with ZsRE, while keeping the downstream editing benchmarks unchanged.
For {Warmup-ZsRE}, the value in parentheses reports the absolute change relative to \textsc{StableEdit}. Boldfaced numbers indicate the better result between the two \textsc{StableEdit} variants (\textsc{StableEdit} vs.\ {Warmup-ZsRE}) for each model-dataset-metric entry.}
\label{tab:fever_ultraeditbench_warmup_zsre}

\resizebox{\textwidth}{!}{%
\begin{tabular}{llcccccc}
\toprule
\multirow{2}{*}{Model} & \multirow{2}{*}{Method}
& \multicolumn{3}{c}{FEVER}
& \multicolumn{3}{c}{ULTRAEDITBENCH} \\
\cmidrule(lr){3-5} \cmidrule(lr){6-8}
& & Eff. & Gen. & Spe. & Eff. & Gen. & Spe. \\
\midrule

\multirow{3}{*}{Mistral-7B-v0.3}
& \textsc{ULTRAEDIT}
& 97.87 & 96.09 & 84.29
& 83.71 & 77.30 & 67.26 \\

& \textsc{StableEdit}
& \textbf{98.38} & \textbf{96.55} & \textbf{83.89}
& \textbf{84.98} & \textbf{78.77} & \textbf{68.36} \\

& {Warmup-ZsRE}
& 98.37 {\scriptsize {\color{blue}$(\downarrow 0.01)$}}
& 96.38 {\scriptsize {\color{blue}$(\downarrow 0.17)$}}
& 83.75 {\scriptsize {\color{blue}$(\downarrow 0.14)$}}
& 84.80 {\scriptsize {\color{blue}$(\downarrow 0.18)$}}
& 78.49 {\scriptsize {\color{blue}$(\downarrow 0.28)$}}
& 67.89 {\scriptsize {\color{blue}$(\downarrow 0.47)$}} \\
\midrule

\multirow{3}{*}{Llama-3-8B-Instruct}
& \textsc{ULTRAEDIT}
& 95.39 & 91.93 & 67.14
& 85.70 & 81.28 & 68.73 \\

& \textsc{StableEdit}
& 97.89 & 94.47 & \textbf{69.24}
& \textbf{88.88} & \textbf{85.46} & \textbf{69.06} \\

& {Warmup-ZsRE}
& \textbf{97.96} {\scriptsize {\color{red}$(\uparrow 0.07)$}}
& \textbf{94.73} {\scriptsize {\color{red}$(\uparrow 0.26)$}}
& 68.94 {\scriptsize {\color{blue}$(\downarrow 0.30)$}}
& 88.53 {\scriptsize {\color{blue}$(\downarrow 0.35)$}}
& 85.23 {\scriptsize {\color{blue}$(\downarrow 0.23)$}}
& 68.97 {\scriptsize {\color{blue}$(\downarrow 0.09)$}} \\
\midrule

\multirow{3}{*}{GPT-J-6B}
& \textsc{ULTRAEDIT}
& 97.45 & 96.37 & 79.72
& 84.03 & 76.62 & 64.03 \\

& \textsc{StableEdit}
& 98.35 & \textbf{96.99} & \textbf{79.85}
& 83.41 & 76.55 & \textbf{67.17} \\

& {Warmup-ZsRE}
& \textbf{98.76} {\scriptsize {\color{red}$(\uparrow 0.41)$}}
& 96.97 {\scriptsize {\color{blue}$(\downarrow 0.02)$}}
& 79.23 {\scriptsize {\color{blue}$(\downarrow 0.62)$}}
& \textbf{83.94} {\scriptsize {\color{red}$(\uparrow 0.53)$}}
& \textbf{76.75} {\scriptsize {\color{red}$(\uparrow 0.20)$}}
& 66.76 {\scriptsize {\color{blue}$(\downarrow 0.41)$}} \\
\bottomrule
\end{tabular}%
}
\end{table*}

\paragraph{(iv) Statistics-only warm-up.}
Because warm-up performs actual parameter updates, the performance gains it produces could stem from two sources: (a) better-initialized running statistics used for gradient normalization, or (b) the model weights being shifted to a region where subsequent editing is easier. To isolate the effect of statistic preconditioning from the effect of extra editing updates, we construct \textbf{Warmup-100}, which uses 100 warm-up examples to update the running statistics in a one-shot manner without applying any parameter edits. All other settings remain unchanged. As shown in \cref{tab:statsonly_warmup_100}, Warmup-100 remains very close to the full \textsc{StableEdit}: the average absolute change across all reported metrics is only 0.52 points (signed mean change of $-0.38$ points), and Warmup-100 still outperforms \textsc{ULTRAEDIT} on 22 out of 27 reported entries. This confirms that the primary role of warm-up is to precondition the running statistics, which is consistent with our theoretical analysis in \cref{thm:mean_mse_l,thm:cov_spectral_l}.

\begin{table*}[t]
\centering
\small
\renewcommand{\arraystretch}{1.08}
\caption{Ablation on statistics-only warm-up across ZsRE, FEVER, and ULTRAEDITBENCH. 
\textsc{ULTRAEDIT} is included as a baseline reference. {Warmup-100} denotes the variant of \textsc{StableEdit} that uses 100 warm-up examples to update the running statistics in a one-shot manner, without applying any parameter edits during the warm-up stage. 
For {Warmup-100}, the value in parentheses reports the absolute change relative to \textsc{StableEdit}. Boldfaced numbers indicate the better result between \textsc{StableEdit} and {Warmup-100} for each model-dataset-metric entry.}
\label{tab:statsonly_warmup_100}

\resizebox{\textwidth}{!}{%
\begin{tabular}{llccccccccc}
\toprule
\multirow{2}{*}{Model} & \multirow{2}{*}{Method}
& \multicolumn{3}{c}{ZsRE}
& \multicolumn{3}{c}{FEVER}
& \multicolumn{3}{c}{ULTRAEDITBENCH} \\
\cmidrule(lr){3-5} \cmidrule(lr){6-8} \cmidrule(lr){9-11}
& & Eff. & Gen. & Spe. & Eff. & Gen. & Spe. & Eff. & Gen. & Spe. \\
\midrule

\multirow{3}{*}{Mistral-7B-v0.3}
& \textsc{ULTRAEDIT}
& 85.30 & 80.80 & 47.38
& 97.87 & 96.09 & 84.29
& 83.71 & 77.30 & 67.26 \\

& \textsc{StableEdit}
& 87.39 & 82.93 & \textbf{50.26}
& \textbf{98.38} & \textbf{96.55} & \textbf{83.89}
& \textbf{84.98} & \textbf{78.77} & \textbf{68.36} \\

& {Warmup-100}
& \textbf{87.58} {\scriptsize {\color{red}$(\uparrow 0.19)$}}
& \textbf{83.25} {\scriptsize {\color{red}$(\uparrow 0.32)$}}
& 49.91 {\scriptsize {\color{blue}$(\downarrow 0.35)$}}
& \textbf{98.38} {\scriptsize $(\pm 0.00)$}
& 96.49 {\scriptsize {\color{blue}$(\downarrow 0.06)$}}
& 83.65 {\scriptsize {\color{blue}$(\downarrow 0.24)$}}
& 84.24 {\scriptsize {\color{blue}$(\downarrow 0.74)$}}
& 77.94 {\scriptsize {\color{blue}$(\downarrow 0.83)$}}
& 67.85 {\scriptsize {\color{blue}$(\downarrow 0.51)$}} \\
\midrule

\multirow{3}{*}{Llama-3-8B-Instruct}
& \textsc{ULTRAEDIT}
& 90.07 & 87.36 & 49.51
& 95.39 & 91.93 & 67.14
& 85.70 & 81.28 & 68.73 \\

& \textsc{StableEdit}
& \textbf{90.61} & \textbf{88.38} & \textbf{48.23}
& 97.89 & 94.47 & \textbf{69.24}
& \textbf{88.88} & \textbf{85.46} & \textbf{69.06} \\

& {Warmup-100}
& 90.02 {\scriptsize {\color{blue}$(\downarrow 0.59)$}}
& 87.62 {\scriptsize {\color{blue}$(\downarrow 0.76)$}}
& 46.06 {\scriptsize {\color{blue}$(\downarrow 2.17)$}}
& \textbf{97.96} {\scriptsize {\color{red}$(\uparrow 0.07)$}}
& \textbf{94.52} {\scriptsize {\color{red}$(\uparrow 0.05)$}}
& 68.90 {\scriptsize {\color{blue}$(\downarrow 0.34)$}}
& 87.71 {\scriptsize {\color{blue}$(\downarrow 1.17)$}}
& 84.00 {\scriptsize {\color{blue}$(\downarrow 1.46)$}}
& 68.77 {\scriptsize {\color{blue}$(\downarrow 0.29)$}} \\
\midrule

\multirow{3}{*}{GPT-J-6B}
& \textsc{ULTRAEDIT}
& 78.03 & 72.42 & 27.05
& 97.45 & 96.37 & 79.72
& 84.03 & 76.62 & 64.03 \\

& \textsc{StableEdit}
& 82.12 & \textbf{78.39} & \textbf{31.52}
& 98.35 & \textbf{96.99} & \textbf{79.85}
& 83.41 & 76.55 & \textbf{67.17} \\

& {Warmup-100}
& \textbf{82.15} {\scriptsize {\color{red}$(\uparrow 0.03)$}}
& 78.27 {\scriptsize {\color{blue}$(\downarrow 0.12)$}}
& 30.12 {\scriptsize {\color{blue}$(\downarrow 1.40)$}}
& \textbf{98.74} {\scriptsize {\color{red}$(\uparrow 0.39)$}}
& 96.82 {\scriptsize {\color{blue}$(\downarrow 0.17)$}}
& 79.26 {\scriptsize {\color{blue}$(\downarrow 0.59)$}}
& \textbf{84.18} {\scriptsize {\color{red}$(\uparrow 0.77)$}}
& \textbf{76.62} {\scriptsize {\color{red}$(\uparrow 0.07)$}}
& 66.78 {\scriptsize {\color{blue}$(\downarrow 0.39)$}} \\
\bottomrule
\end{tabular}%
}
\end{table*}

\subsubsection{Update Geometry}
\label{app:update_geometry}
This section provides more evidence for the bounded-norm and near-orthogonality properties derived in \cref{theorem3}. We examine the geometric properties of parameter updates by measuring the cosine similarity between adjacent update increments $\langle \Delta_{t,l}, \Delta_{t-1,l} \rangle / (\|\Delta_{t,l}\|_F \|\Delta_{t-1,l}\|_F)$ and their Frobenius norms throughout the editing stream. \cref{fig:cosine_similarity} verifies \cref{theorem3}(c) by tracking cosine similarity with and without LN on four benchmarks; \cref{fig:update_norm} verifies \cref{theorem3}(a)-(b) by tracking per-step update norms in the same four settings.

Specifically, \cref{fig:cosine_similarity,fig:update_norm} present results across four datasets (ZsRE, FEVER, WikiBigEdit, and ULTRAEDITBENCH) on GPT-J-6B, comparing three editors (\textsc{RLEdit}, \textsc{ULTRAEDIT}, and \textsc{StableEdit}) with and without the LN module. When LN is enabled, cosine similarities between adjacent parameter increments oscillate tightly around zero, indicating that successive updates are approximately orthogonal and thus weakly interfering, while update magnitudes remain within a reasonable range throughout the sequence. In stark contrast, removing LN causes cosine similarities to drift substantially away from zero, and update norms collapse to extremely small values, leading to severe under-fitting. These empirical patterns precisely match \cref{theorem3}: LN combined with ridge-regularized regression enforces \emph{asymptotic orthogonality} (Property~(c)) and \emph{bounded update norms} (Properties~(a) and~(b)). The orthogonality property directly mitigates catastrophic forgetting by reducing cross-step interference, while bounded norms prevent systemic model collapse. Notably, \textsc{StableEdit} reaches the near-orthogonal regime earlier than \textsc{ULTRAEDIT}, stabilizing close to zero from the initial editing steps. This accelerated stabilization stems from the explicit warm-up stage, which strengthens the self-reinforcing stability loop by providing well-calibrated distributional estimates from the beginning.

\begin{figure*}[t]
  \centering
  % ---------- Legend ----------
  \includegraphics[width=0.5\textwidth]{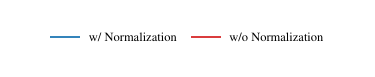}\par
  \vspace{-1em}
  {\small\textbf{(a) GPT-J-6B on ZsRE}}\par

  % ---------- Row 1 (3 panels) ----------
  \begin{minipage}[t]{0.32\textwidth}
    \centering
    \includegraphics[width=\linewidth]{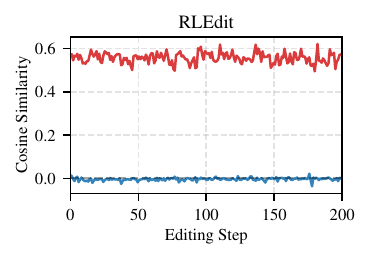}
  \end{minipage}\hfill
  \begin{minipage}[t]{0.32\textwidth}
    \centering
    \includegraphics[width=\linewidth]{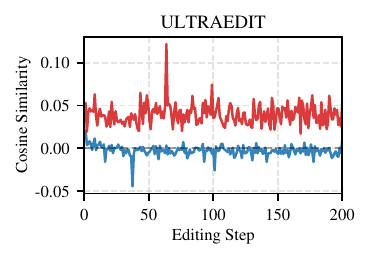}
  \end{minipage}\hfill
  \begin{minipage}[t]{0.32\textwidth}
    \centering
    \includegraphics[width=\linewidth]{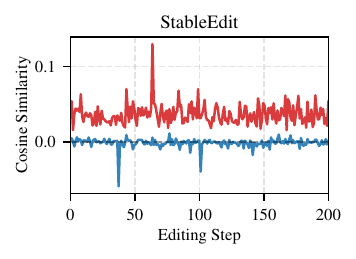}
  \end{minipage}\par

  % ---------- Row 2 label ----------
  {\small\textbf{(b) GPT-J-6B on FEVER}}\par

  % ---------- Row 2 (3 panels) ----------
  \begin{minipage}[t]{0.32\textwidth}
    \centering
    \includegraphics[width=\linewidth]{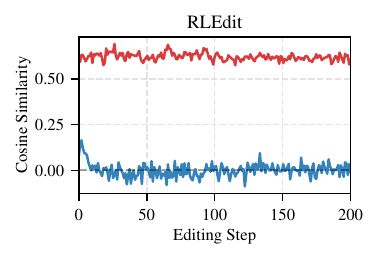}
  \end{minipage}\hfill
  \begin{minipage}[t]{0.32\textwidth}
    \centering
    \includegraphics[width=\linewidth]{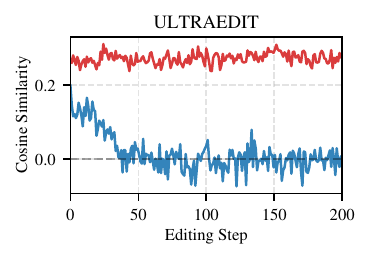}
  \end{minipage}\hfill
  \begin{minipage}[t]{0.32\textwidth}
    \centering
    \includegraphics[width=\linewidth]{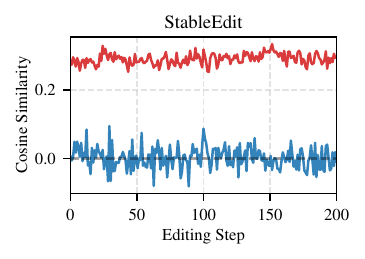}
  \end{minipage}

  % ---------- Row 3 label ----------
  {\small\textbf{(c) GPT-J-6B on WikiBigEdit}}\par

  % ---------- Row 3 (3 panels) ----------
  \begin{minipage}[t]{0.32\textwidth}
    \centering
    \includegraphics[width=\linewidth]{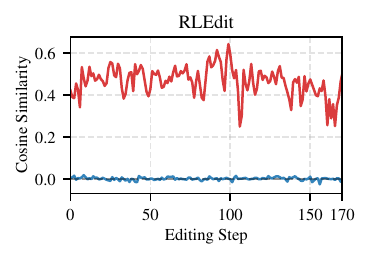}
  \end{minipage}\hfill
  \begin{minipage}[t]{0.32\textwidth}
    \centering
    \includegraphics[width=\linewidth]{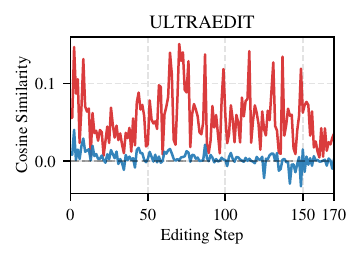}
  \end{minipage}\hfill
  \begin{minipage}[t]{0.32\textwidth}
    \centering
    \includegraphics[width=\linewidth]{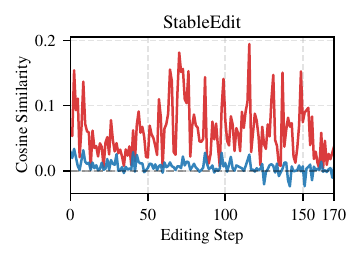}
  \end{minipage}

  % ---------- Row 4 label ----------
  {\small\textbf{(d) GPT-J-6B on ULTRAEDITBENCH}}\par

  % ---------- Row 4 (3 panels) ----------
  \begin{minipage}[t]{0.32\textwidth}
    \centering
    \includegraphics[width=\linewidth]{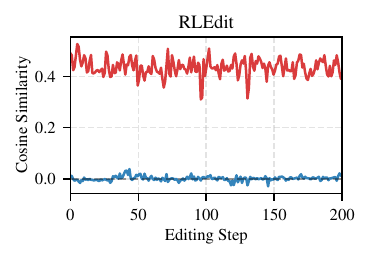}
  \end{minipage}\hfill
  \begin{minipage}[t]{0.32\textwidth}
    \centering
    \includegraphics[width=\linewidth]{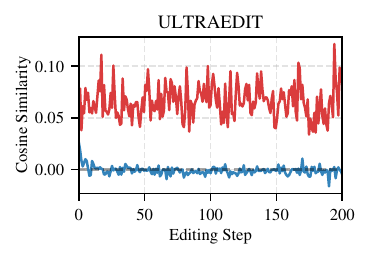}
  \end{minipage}\hfill
  \begin{minipage}[t]{0.32\textwidth}
    \centering
    \includegraphics[width=\linewidth]{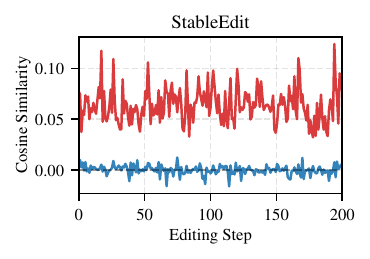}
  \end{minipage}

  \caption{Update orthogonality with and without LN. We plot the cosine similarity between adjacent parameter increments $\boldsymbol \Delta_{t,l}$ and $\boldsymbol \Delta_{t-1,l}$ across the edit stream for three editors on GPT-J-6B. With LN enabled (blue), cosine similarities remain close to zero throughout the editing process, indicating weakly correlated (approximately orthogonal) update directions. Disabling LN (red) causes similarities to deviate from zero. These observations empirically validate \cref{theorem3}(c), which proves that LN combined with ridge regression enforces asymptotic orthogonality.}
  \label{fig:cosine_similarity}
\end{figure*}

\begin{figure*}[t]
  \centering
  % ---------- Legend ----------
  \includegraphics[width=0.5\textwidth]{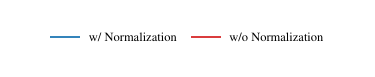}\par
  \vspace{-1em}
  {\small\textbf{(a) GPT-J-6B on  ZsRE}}\par

  % ---------- Row 1 (3 panels) ----------
  \begin{minipage}[t]{0.32\textwidth}
    \centering
    \includegraphics[width=\linewidth]{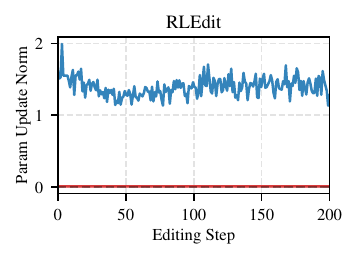}
  \end{minipage}\hfill
  \begin{minipage}[t]{0.32\textwidth}
    \centering
    \includegraphics[width=\linewidth]{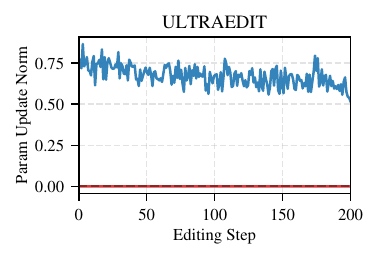}
  \end{minipage}\hfill
  \begin{minipage}[t]{0.32\textwidth}
    \centering
    \includegraphics[width=\linewidth]{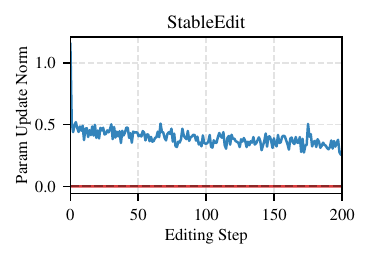}
  \end{minipage}\par

  % ---------- Row 2 label ----------
  {\small\textbf{(b) GPT-J-6B on  FEVER}}\par

  % ---------- Row 2 (3 panels) ----------
  \begin{minipage}[t]{0.32\textwidth}
    \centering
    \includegraphics[width=\linewidth]{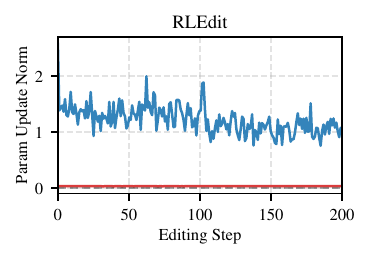}
  \end{minipage}\hfill
  \begin{minipage}[t]{0.32\textwidth}
    \centering
    \includegraphics[width=\linewidth]{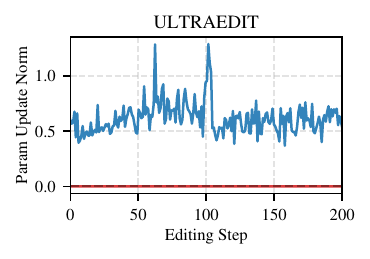}
  \end{minipage}\hfill
  \begin{minipage}[t]{0.32\textwidth}
    \centering
    \includegraphics[width=\linewidth]{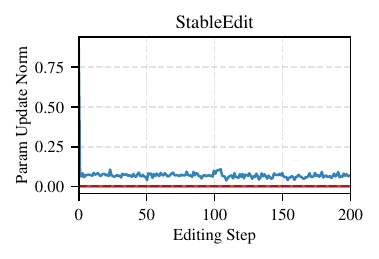}
  \end{minipage}

  % ---------- Row 3 label ----------
  {\small\textbf{(c) GPT-J-6B on WikiBigEdit}}\par

  % ---------- Row 3 (3 panels) ----------
  \begin{minipage}[t]{0.32\textwidth}
    \centering
    \includegraphics[width=\linewidth]{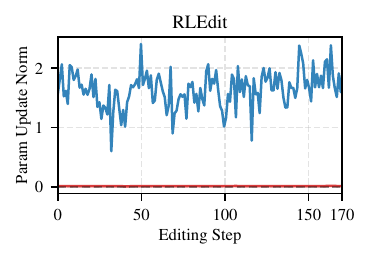}
  \end{minipage}\hfill
  \begin{minipage}[t]{0.32\textwidth}
    \centering
    \includegraphics[width=\linewidth]{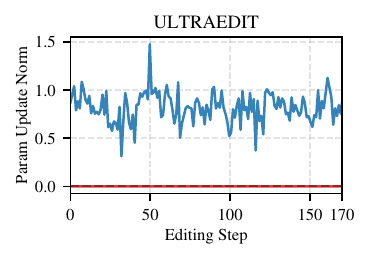}
  \end{minipage}\hfill
  \begin{minipage}[t]{0.32\textwidth}
    \centering
    \includegraphics[width=\linewidth]{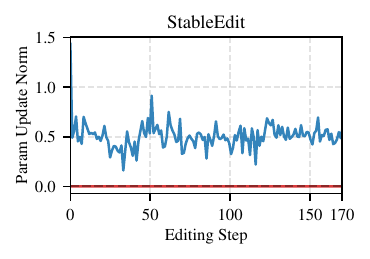}
  \end{minipage}

  % ---------- Row 4 label ----------
  {\small\textbf{(d) GPT-J-6B on ULTRAEDITBENCH}}\par

  % ---------- Row 4 (3 panels) ----------
  \begin{minipage}[t]{0.32\textwidth}
    \centering
    \includegraphics[width=\linewidth]{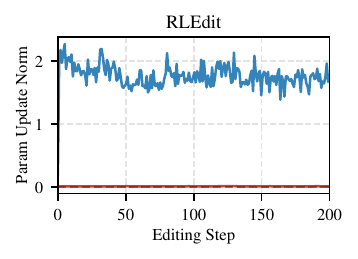}
  \end{minipage}\hfill
  \begin{minipage}[t]{0.32\textwidth}
    \centering
    \includegraphics[width=\linewidth]{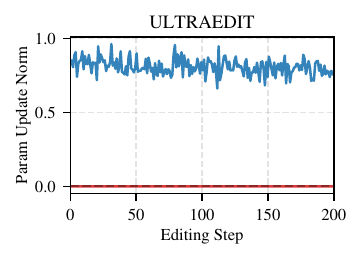}
  \end{minipage}\hfill
  \begin{minipage}[t]{0.32\textwidth}
    \centering
    \includegraphics[width=\linewidth]{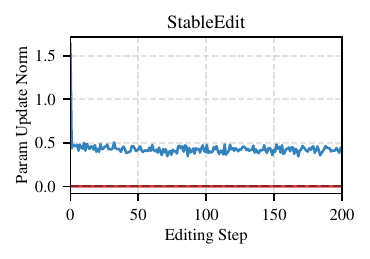}
  \end{minipage}
  \caption{Update magnitudes with and without LN. We plot the Frobenius norm of the per-step parameter increment $\| \boldsymbol{\Delta}_{t,l} \|_F$ across the edit stream for three editors on GPT-J-6B. With LN enabled (blue), update magnitudes remain bounded and non-trivial, supporting effective edits throughout the sequence. Disabling LN (red) causes update norms to collapse toward zero, indicating a \emph{vanishing-update} failure mode that leads to under-fitting. These trends are consistent with \cref{theorem3}(a) and (b), and align with our mechanism: whitening boosts updates in weak eigen-directions while damping high-variance components, whereas baselines often fail to make progress along minor directions within a single step.}
  \label{fig:update_norm}
\end{figure*}

\subsubsection{Empirical Support for \cref{asmp:drift_rate} (Drift Surrogates)}
\label{app:drift_surrogates}
Since $(\boldsymbol{\mu}_{t,l}, \boldsymbol{\Sigma}_{t,l})$ in \cref{asmp:drift_rate} are latent and cannot be verified directly, we track the consecutive-step drifts of the running estimates, $\|\hat{\boldsymbol{\mu}}_{t,l}-\hat{\boldsymbol{\mu}}_{t-1,l}\|$ and $\|\hat{\boldsymbol{\Sigma}}_{t,l}-\hat{\boldsymbol{\Sigma}}_{t-1,l}\|$, as empirical surrogates across architectures and editing scales (\cref{fig:mean_cov_fever_grid3,fig:mean_cov_two_wikibigedit}). On FEVER (20K edits), the mean-drift surrogate drops sharply after warm-up and typically falls below $10^{-2}$ within tens of steps. The covariance-drift surrogate is noisier but shows an overall downward trend and remains in a low, stable regime at later stages. On WikiBigEdit (500K edits), similar patterns hold across Llama-3-8B-Instruct and Qwen2.5-7B-Instruct. These curves are consistent with \cref{asmp:drift_rate} across architectures and editing scales.

\begin{figure}[t]
    \centering
    % --- 第一行 ---
    \begin{subfigure}[t]{0.32\linewidth}
        \centering
        \caption{Mean drift on Mistral-7B-v0.3}
        \includegraphics[max width=\linewidth]{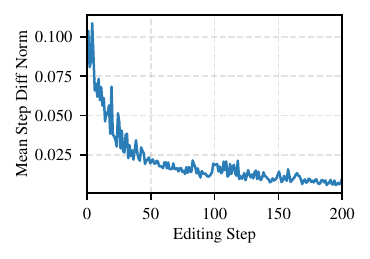}
    \end{subfigure}
    \hfill
    \begin{subfigure}[t]{0.32\linewidth}
        \centering
        \caption{Mean drift on Llama-3-8B-Instruct}
        % \label{fig:gpt_ultraedit_wikibigedit_grid3}
        \includegraphics[max width=\linewidth]{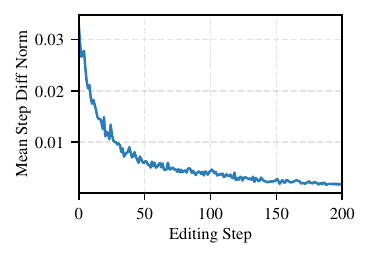}
    \end{subfigure}
    \hfill
    \begin{subfigure}[t]{0.32\linewidth}
        \centering
        \caption{Mean drift on GPT-J-6B}
        % \label{fig:gpt_rledit_wikibigedit_grid3}
        \includegraphics[max width=\linewidth]{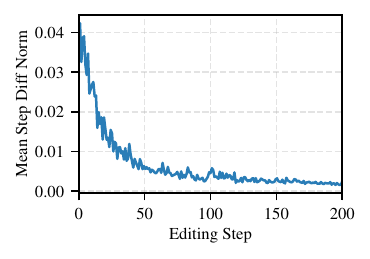}
    \end{subfigure}

    \vspace{1em} % 两行之间的垂直间距

    % --- 第二行 ---
    \begin{subfigure}[t]{0.32\linewidth}
        \centering
        \caption{Covariance drift on Mistral-7B-v0.3}
        % \label{fig:mistral_stableedit_ULTRAEDITBENCH_grid3}
        \includegraphics[max width=\linewidth]{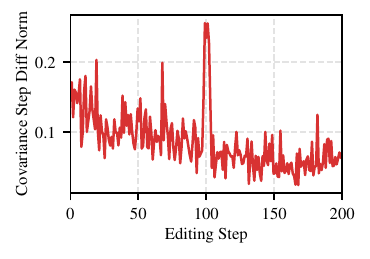}
    \end{subfigure}
    \hfill
    \begin{subfigure}[t]{0.32\linewidth}
        \centering
        \caption{Covariance drift on Llama-3-8B-Instruct}
        % \label{fig:mistral_ultraedit_ULTRAEDITBENCH_grid3}
        \includegraphics[max width=\linewidth]{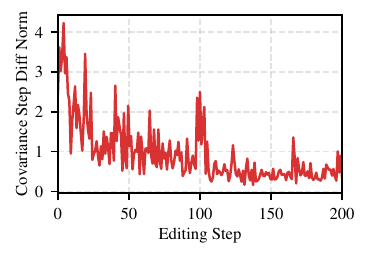}
    \end{subfigure}
    \hfill
    \begin{subfigure}[t]{0.32\linewidth}
        \centering
        \caption{Covariance drift on GPT-J-6B}
        \includegraphics[max width=\linewidth]{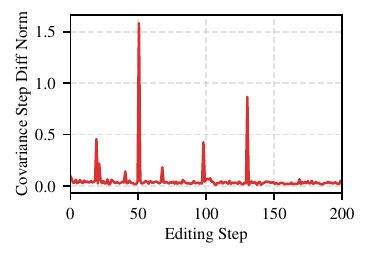}
    \end{subfigure}

    \caption{Mean drift and covariance drift across editing steps on FEVER under the 20K-edit setting for Mistral-7B-v0.3, GPT-J-6B, and Llama-3-8B-Instruct. At each editing step $t$, we plot the Frobenius norm of the change in the corresponding running statistic between two consecutive steps, i.e., $\|\hat{\boldsymbol{\mu}}_{t}-\hat{\boldsymbol{\mu}}_{t-1}\|_{F}$ for the mean drift and $\|\hat{\boldsymbol{\Sigma}}_{t}-\hat{\boldsymbol{\Sigma}}_{t-1}\|_{F}$ for the covariance drift.}
    \label{fig:mean_cov_fever_grid3}
\end{figure}

\begin{figure}[t]
    \centering
    \begin{minipage}{0.72\linewidth}
        \centering

        % --- 第一行 ---
        \begin{subfigure}[t]{0.48\linewidth}
            \centering
            \caption{Mean drift on Llama-3-8B-Instruct}
            \includegraphics[width=\linewidth]{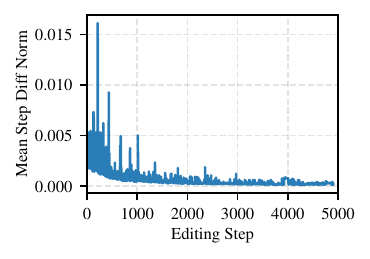}
        \end{subfigure}
        \hfill
        \begin{subfigure}[t]{0.48\linewidth}
            \centering
            \caption{Mean drift on Qwen2.5-7B-Instruct}
            \includegraphics[width=\linewidth]{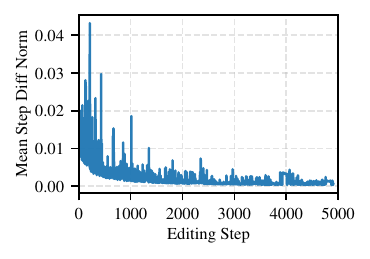}
        \end{subfigure}

        \vspace{1em}

        % --- 第二行 ---
        \begin{subfigure}[t]{0.48\linewidth}
            \centering
            \caption{Covariance drift on Llama-3-8B-Instruct}
            \includegraphics[width=\linewidth]{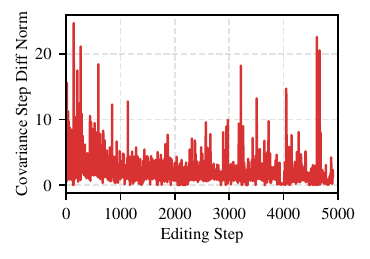}
        \end{subfigure}
        \hfill
        \begin{subfigure}[t]{0.48\linewidth}
            \centering
            \caption{Covariance drift on Qwen2.5-7B-Instruct}
            \includegraphics[width=\linewidth]{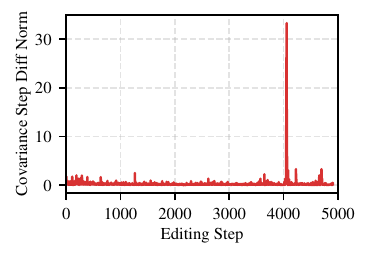}
        \end{subfigure}
    \end{minipage}

    \caption{Mean drift and covariance drift across editing steps on WikiBigEdit under the 500K-edit setting for Llama-3-8B-Instruct and Qwen2.5-7B-Instruct. At each editing step $t$, we plot the Frobenius norm of the change in the corresponding running statistic between two consecutive steps, i.e., $\|\hat{\boldsymbol{\mu}}_{t}-\hat{\boldsymbol{\mu}}_{t-1}\|_{F}$ for the mean drift and $\|\hat{\boldsymbol{\Sigma}}_{t}-\hat{\boldsymbol{\Sigma}}_{t-1}\|_{F}$ for the covariance drift.}
    \label{fig:mean_cov_two_wikibigedit}
\end{figure}

\subsubsection{Non-i.i.d.\ Distribution Shift}
\label{app:non_iid_shift}
This part examines how the self-reinforcing stability loop of \textsc{StableEdit} behaves when the editing distribution changes abruptly mid-stream, complementing the domain-mismatch warm-up tests in \cref{app:warmup_dynamics} (which keep the target distribution fixed). To explicitly test robustness under an abrupt domain change, we construct a non-i.i.d.\ editing stream on Llama-3-8B-Instruct: warm-up on ZsRE, followed by 20K edits on ULTRAEDITBENCH, and then an abrupt switch to 400 edits from ZsRE. As shown in \cref{tab:distribution_shift}, the shifted \textsc{StableEdit} variant is slightly below the no-shift version but still clearly outperforms \textsc{ULTRAEDIT} on 5 out of 6 metrics. In the ULTRAEDITBENCH segment, it improves Efficacy and Generalization by $+1.34$ and $+1.93$ points respectively over \textsc{ULTRAEDIT}, with only a $-1.20$ drop in Specificity. After the abrupt switch to ZsRE, it still exceeds \textsc{ULTRAEDIT} by substantial margins across all three metrics. These results indicate that \textsc{StableEdit} remains robust even under substantial non-i.i.d.\ distribution shift, reinforcing the interpretation that the stability benefits extend beyond the stationary regime analyzed in the theory.

\begin{table*}[t]
\centering
\small
\renewcommand{\arraystretch}{1.08}
\caption{Non-i.i.d.\ distribution shift experiment on Llama-3-8B-Instruct. The editing stream consists of warm-up on ZsRE, then 20K edits on ULTRAEDITBENCH, followed by an abrupt switch to 400 ZsRE edits.}
\label{tab:distribution_shift}
\begin{tabular}{l|ccc|ccc}
\toprule
\multirow{2}{*}{\textbf{Method}}
& \multicolumn{3}{c|}{\textbf{ULTRAEDITBENCH (20K)}}
& \multicolumn{3}{c}{\textbf{ZsRE (400)}} \\
\cmidrule(lr){2-4} \cmidrule(lr){5-7}
& Eff. & Gen. & Spe. & Eff. & Gen. & Spe. \\
\midrule
\textsc{ULTRAEDIT} & 85.70 & 81.28 & 68.73 & 79.12 & 69.04 & 41.66 \\
\textsc{StableEdit} & 88.88 & 85.46 & 69.06 & 93.32 & 86.66 & 47.90 \\
Warmup-ZsRE \& Dist.\ shift & 87.04 & 83.21 & 67.53 & 89.45 & 84.83 & 46.44 \\
\bottomrule
\end{tabular}
\end{table*}

\subsubsection{Ablation on the $\|\tilde{\boldsymbol{h}}\|_2^2$ Term}
\label{app:hnormsq_ablation}
This part isolates the contribution of the hidden-state scalar factor $\|\boldsymbol{\tilde{h}}_{t,l}^i\|_2^2$ in the target matrix $\boldsymbol{V}_{t,l} = -\gamma[\|\tilde{\boldsymbol{h}}_{t,l}^1\|_2^2\tilde{\boldsymbol{v}}_{t,l}^1, \dots, \|\tilde{\boldsymbol{h}}_{t,l}^{n_t}\|_2^2\tilde{\boldsymbol{v}}_{t,l}^{n_t}]$, to confirm that the stability benefits of \textsc{StableEdit} originate from gradient-side normalization (centering and whitening) rather than from this inherited hidden-side rescaling. The factor acts as a per-example step-size scaling and is inherited from prior LN-based editors; our theory in \cref{theorem3} does not rely on it. To verify this, we remove the $\|\tilde{h}\|_2^2$ term from \textsc{StableEdit} and rely solely on the learning rate $\gamma$ to control the update scale. As shown in \cref{tab:ablate_hnormsq}, the resulting variant stays very close to the original: across all 36 metric entries (3 models $\times$ 4 datasets $\times$ 3 metrics), the average absolute change is only 0.47 points, and the variant still outperforms \textsc{ULTRAEDIT} on 32 out of 36 entries. This confirms that the normalization of the hidden state is not essential to the gains of \textsc{StableEdit}, and that the primary stability benefits come from gradient-side normalization, consistent with the mechanism analyzed in \cref{theorem3}.

\begin{table*}[!t]
\centering
\caption{Ablation on removing the $\|\tilde{\boldsymbol h}_{t,l}^i\|_2^2$ term from \textsc{StableEdit}. The $\Delta$ row reports the signed change of \textit{w.o.} $\|\tilde{\boldsymbol h}_{t,l}^i\|_2^2$ relative to \textsc{StableEdit}; positive and negative changes are marked in red and blue, respectively.}
\label{tab:ablate_hnormsq}
\begin{adjustbox}{max width=\linewidth}
\footnotesize
\setlength{\tabcolsep}{4pt}
\begin{tabular}{l|ccc|ccc|ccc|ccc}
\toprule
\multirow{2}{*}{\textbf{Method}}
& \multicolumn{3}{c|}{\textbf{ZsRE}}
& \multicolumn{3}{c|}{\textbf{FEVER}}
& \multicolumn{3}{c|}{\textbf{WikiBigEdit}}
& \multicolumn{3}{c}{\textbf{ULTRAEDITBENCH}} \\
\cmidrule(lr){2-4} \cmidrule(lr){5-7} \cmidrule(lr){8-10} \cmidrule(lr){11-13}
& Eff. & Gen. & Spe.
& Eff. & Gen. & Spe.
& Eff. & Gen. & Spe.
& Eff. & Gen. & Spe. \\
\midrule

\multicolumn{13}{c}{\textbf{Mistral-7B-v0.3}} \\
\midrule
\textsc{ULTRAEDIT}
& 85.30 & 80.80 & 47.38
& 97.87 & 96.09 & 84.29
& 76.00 & 70.15 & 46.09
& 83.71 & 77.30 & 67.26 \\
\textsc{StableEdit}
& 87.39 & 82.93 & 50.26
& 98.38 & 96.55 & 83.89
& 78.90 & 72.57 & 47.36
& 84.98 & 78.77 & 68.36 \\
\textit{w.o.} $\|\tilde{h}\|_2^2$
& 87.29 & 82.63 & 51.05
& 98.27 & 96.54 & 83.83
& 78.82 & 72.55 & 48.65
& 84.67 & 78.09 & 68.69 \\
$\Delta$
& \textcolor{blue}{-0.10} & \textcolor{blue}{-0.30} & \textcolor{red}{+0.79}
& \textcolor{blue}{-0.11} & \textcolor{blue}{-0.01} & \textcolor{blue}{-0.06}
& \textcolor{blue}{-0.08} & \textcolor{blue}{-0.02} & \textcolor{red}{+1.29}
& \textcolor{blue}{-0.31} & \textcolor{blue}{-0.68} & \textcolor{red}{+0.33} \\
\midrule

\multicolumn{13}{c}{\textbf{Llama-3-8B-Instruct}} \\
\midrule
\textsc{ULTRAEDIT}
& 90.07 & 87.36 & 49.51
& 95.39 & 91.93 & 67.14
& 79.60 & 73.49 & 48.51
& 85.70 & 81.28 & 68.73 \\
\textsc{StableEdit}
& 90.61 & 88.38 & 48.23
& 97.89 & 94.47 & 69.24
& 81.22 & 75.98 & 45.22
& 88.88 & 85.46 & 69.06 \\
\textit{w.o.} $\|\tilde{h}\|_2^2$
& 91.56 & 89.82 & 48.80
& 98.17 & 94.50 & 68.67
& 80.69 & 75.39 & 46.11
& 89.00 & 85.41 & 69.51 \\
$\Delta$
& \textcolor{red}{+0.95} & \textcolor{red}{+1.44} & \textcolor{red}{+0.57}
& \textcolor{red}{+0.28} & \textcolor{red}{+0.03} & \textcolor{blue}{-0.57}
& \textcolor{blue}{-0.53} & \textcolor{blue}{-0.59} & \textcolor{red}{+0.89}
& \textcolor{red}{+0.12} & \textcolor{blue}{-0.05} & \textcolor{red}{+0.45} \\
\midrule

\multicolumn{13}{c}{\textbf{GPT-J-6B}} \\
\midrule
\textsc{ULTRAEDIT}
& 78.03 & 72.42 & 27.05
& 97.45 & 96.37 & 79.72
& 73.84 & 66.57 & 37.17
& 84.03 & 76.62 & 64.03 \\
\textsc{StableEdit}
& 82.12 & 78.39 & 31.52
& 98.35 & 96.99 & 79.85
& 75.10 & 68.13 & 44.04
& 83.41 & 76.55 & 67.17 \\
\textit{w.o.} $\|\tilde{h}\|_2^2$
& 82.00 & 78.29 & 29.08
& 98.28 & 97.03 & 80.00
& 76.04 & 68.72 & 43.21
& 83.91 & 76.87 & 66.73 \\
$\Delta$
& \textcolor{blue}{-0.12} & \textcolor{blue}{-0.10} & \textcolor{blue}{-2.44}
& \textcolor{blue}{-0.07} & \textcolor{red}{+0.04} & \textcolor{red}{+0.15}
& \textcolor{red}{+0.94} & \textcolor{red}{+0.59} & \textcolor{blue}{-0.83}
& \textcolor{red}{+0.50} & \textcolor{red}{+0.32} & \textcolor{blue}{-0.44} \\
\bottomrule
\end{tabular}
\end{adjustbox}
\end{table*}

\onecolumn

\section{Preliminaries and Assumptions}
\label{app:assumptions}
In this section, we restate the preliminaries and assumptions used in the main analysis with detailed discussions.

\textbf{Lifelong editing process.}
We consider a sequential editing stream indexed by steps $t\ge 1$ and editable layers $l$.
At step $t$, a batch of $n_t$ editing requests $\{\boldsymbol e_t^i\}_{i=1}^{n_t}$ is drawn i.i.d.\ from an edit distribution $\mathcal D$.
Let $\boldsymbol W_{t,l}\in\mathbb R^{d\times d_h}$ denote the editable parameters at layer $l$ after step $t$, with update
$\boldsymbol W_{t,l}=\boldsymbol W_{t-1,l}+\boldsymbol\Delta_{t,l}$.
Bold symbols denote vectors/matrices; $\|\cdot\|_2$ denotes the Euclidean norm (vectors) and spectral norm (matrices); $\|\cdot\|_F$ is the Frobenius norm.

\textbf{Value gradients and their non-stationary moments.}
For edit $i$ at layer $l$, let $\boldsymbol h_{t,l}^i\in\mathbb R^{d_h}$ denote the input hidden state at the target token position,
and $\boldsymbol u_{t,l}^i := \boldsymbol W_{t-1,l}\boldsymbol h_{t,l}^i\in\mathbb R^{d}$ the module output.
Given loss $\ell(\cdot)$, define the \emph{value gradient}
\begin{equation}
\label{eq:app_value_grad}
\boldsymbol v_{t,l}^i
:= \nabla_{\boldsymbol u_{t,l}^i}\,\ell(\boldsymbol e_t^i;\boldsymbol W_{t-1,l})
\triangleq \boldsymbol g(\boldsymbol e_t^i;\boldsymbol W_{t-1,l}) \in\mathbb R^{d}.
\end{equation}
As parameters evolve, the induced distribution of $\boldsymbol g(\boldsymbol e;\boldsymbol W_{t-1,l})$ changes with $t$.
We summarize it by the population mean and covariance:
\begin{equation}
\label{eq:app_moments}
\boldsymbol\mu_{t,l}
:=\mathbb E_{\boldsymbol e\sim\mathcal D}\!\left[\boldsymbol g(\boldsymbol e;\boldsymbol W_{t-1,l})\right],
\qquad
\boldsymbol\Sigma_{t,l}
:=\mathrm{Cov}_{\boldsymbol e\sim\mathcal D}\!\left[\boldsymbol g(\boldsymbol e;\boldsymbol W_{t-1,l})\right].
\end{equation}

\textbf{Estimation error metrics.}
For analyzing warm-start effects, we split the process into a previous phase of $r\ge 1$ steps and a current phase of $t\ge 1$ steps (global step $r+t$),
and (for simplicity) take constant batch size $n_t=n$ in the current phase.
Let $\boldsymbol m_{t,l}^{(\mathrm{cur})}$ and $\widehat{\boldsymbol\Sigma}_{t,l}^{(\mathrm{cur})}$ be the current-phase point estimators with initialization
$\boldsymbol m_{0,l}^{(\mathrm{cur})}=\boldsymbol m_{r,l}^{(\mathrm{pre})}$ and
$\widehat{\boldsymbol\Sigma}_{0,l}^{(\mathrm{cur})}=\widehat{\boldsymbol\Sigma}_{r,l}^{(\mathrm{pre})}$.
We measure mean estimation by MSE and covariance estimation by a spectral-norm error:
\[
\mathrm{MSE}^{(\boldsymbol\mu,\mathrm{cur})}_{t,l}
:=\mathbb E\bigl\|\boldsymbol m_{t,l}^{(\mathrm{cur})}-\boldsymbol\mu_{t,l}^{(\mathrm{cur})}\bigr\|_2^2,
\qquad
E^{(\boldsymbol\Sigma,\mathrm{cur})}_{t,l}
:=\mathbb E\bigl\|\widehat{\boldsymbol\Sigma}_{t,l}^{(\mathrm{cur})}-\boldsymbol\Sigma_{t,l}^{(\mathrm{cur})}\bigr\|_2.
\]

\textbf{LN whitening and ridge-regression update.}
Given $(\hat{\boldsymbol\mu}_{t,l},\hat{\boldsymbol\Sigma}_{t,l})$, LN centers and whitens gradients:
\begin{equation}
\label{eq:app_whiten}
\tilde{\boldsymbol v}_{t,l}^i
:=\hat{\boldsymbol\Sigma}_{t,l}^{-1/2}\bigl(\boldsymbol v_{t,l}^i-\hat{\boldsymbol\mu}_{t,l}\bigr).
\end{equation}
Let $\boldsymbol H_{t,l}=[\boldsymbol h_{t,l}^1,\dots,\boldsymbol h_{t,l}^{n_t}]\in\mathbb R^{d_h\times n_t}$.
Define the target matrix
\[
\boldsymbol V_{t,l}
:=-\gamma\bigl[\|\tilde{\boldsymbol h}_{t,l}^1\|_2^2\tilde{\boldsymbol v}_{t,l}^1,\dots,\|\tilde{\boldsymbol h}_{t,l}^{n_t}\|_2^2\tilde{\boldsymbol v}_{t,l}^{n_t}\bigr]\in\mathbb R^{d\times n_t},
\]
where $\tilde{\boldsymbol h}_{t,l}^i$ denotes a per-dimension standardized $\boldsymbol h_{t,l}^i$.
The update $\boldsymbol\Delta_{t,l}$ is the ridge-regression solution
\begin{equation}
\label{eq:app_ridge}
\boldsymbol\Delta_{t,l}
=\arg\min_{\boldsymbol\Delta}\ \|\boldsymbol\Delta\boldsymbol H_{t,l}-\boldsymbol V_{t,l}\|_F^2+\lambda\|\boldsymbol\Delta\|_F^2,
\end{equation}
with closed form
\begin{equation}
\label{eq:app_update_closed}
\boldsymbol\Delta_{t,l}
=\boldsymbol V_{t,l}\boldsymbol H_{t,l}^\top(\boldsymbol H_{t,l}\boldsymbol H_{t,l}^\top+\lambda\boldsymbol I)^{-1}
=-\gamma\sum_{i=1}^{n_t}\tilde{\boldsymbol v}_{t,l}^i\,\boldsymbol\phi_{t,l}^i,
\end{equation}
where the \emph{projection factor}
\[
\boldsymbol\phi_{t,l}^i
:=\|\tilde{\boldsymbol h}_{t,l}^i\|_2^2\,(\boldsymbol h_{t,l}^i)^\top
\Bigl(\sum_{j=1}^{n_t}\boldsymbol h_{t,l}^j(\boldsymbol h_{t,l}^j)^\top+\lambda\boldsymbol I\Bigr)^{-1}
\in\mathbb R^{1\times d_h}.
\]

\textbf{Decomposition of gradients and update components.}
We decompose each value gradient into a population mean plus a zero-mean residual:
\[
\boldsymbol v_{t,l}^i=\boldsymbol\mu_{t,l}+\boldsymbol\zeta_{t,l}^i,\qquad \mathbb E[\boldsymbol\zeta_{t,l}^i]=\boldsymbol 0.
\]
Substituting into \eqref{eq:app_update_closed} yields
$\boldsymbol\Delta_{t,l}=\boldsymbol\Delta_{t,l}^{\mathrm{spec}}+\boldsymbol\Delta_{t,l}^{\mathrm{bias}}$ with
\[
\boldsymbol\Delta_{t,l}^{\mathrm{spec}}
:=-\gamma\sum_{i=1}^{n_t}\widehat{\boldsymbol\Sigma}_{t,l}^{-1/2}\boldsymbol\zeta_{t,l}^i\,\boldsymbol\phi_{t,l}^i,
\qquad
\boldsymbol\Delta_{t,l}^{\mathrm{bias}}
:=-\gamma\,\widehat{\boldsymbol\Sigma}_{t,l}^{-1/2}(\boldsymbol\mu_{t,l}-\boldsymbol m_{t,l})\sum_{i=1}^{n_t}\boldsymbol\phi_{t,l}^i.
\]
We also use $\tilde{\boldsymbol\zeta}_{t,l}^i:=\widehat{\boldsymbol\Sigma}_{t,l}^{-1/2}\boldsymbol\zeta_{t,l}^i$ for whitened residual directions.

% \assumptionone*

And our assumptions are 
\begin{assumption-restated}[\ref{asmp:main_regularity}] \textnormal{(Regularity Conditions).}
For all editable layers $l$ and steps $t$, there exist finite constants $L,M,\sigma_->0,\sigma_+>0$ such that
    \begin{enumerate}[label=(\alph*), leftmargin=*, topsep=2pt, itemsep=2pt, parsep=2pt, partopsep=0pt]
    \item \textbf{Smoothness:} $\boldsymbol g(\boldsymbol e;\boldsymbol W)$ is $L$-Lipschitz continuous w.r.t. $\boldsymbol{W}$, and
    $\sup_{\boldsymbol W}\mathbb E[\|\boldsymbol g(\boldsymbol e;\boldsymbol W)\|_2^2]\le M$.
    
    \item \textbf{Spectrum Bounds:} The true covariance $\boldsymbol\Sigma_{t,l}$ is non-degenerate: $\sigma_{-} \boldsymbol I \preceq \boldsymbol \Sigma_{t,l} \preceq \sigma_{+} \boldsymbol I$.
\end{enumerate}
\end{assumption-restated}
\begin{remark}
    \cref{asmp:main_regularity}(a) is a standard smoothness condition in optimization analysis. It essentially states that small changes in the model parameters do not cause abrupt or unbounded jumps in the gradient values, and that the gradients themselves do not explode. \cref{asmp:main_regularity}(b) assume that the gradient covariance matrix $\boldsymbol{\Sigma}_{t,l}$ is symmetric positive definite with bounded spectrum. The upper bound $\sigma_+$ follows from the finite gradient moments (\cref{asmp:main_regularity}(a)). The lower bound $\sigma_-$ is a standard non-degeneracy condition, ensuring the Gaussian likelihood is well-defined and the whitening operation is numerically stable.
\end{remark}

\begin{assumption-restated}[\ref{asmp:drift_rate}] \textnormal{(Trackable Drift Regime).}
For constants $\epsilon, \epsilon' > 0$:
\begin{equation*}
\begin{aligned}
    D_j^{(\boldsymbol\mu, \text{cur})} &= O((r+j)^{-(1+\epsilon/2)}),\\
    \Delta_j^{(\boldsymbol\Sigma, \text{cur})}& = O((r+j)^{-(1+\epsilon')}).
\end{aligned}
\end{equation*}
where $D_j^{(\boldsymbol\mu,\text{cur})}$ and $\Delta_j^{(\boldsymbol\Sigma,\text{cur})}$ upper-bound
$\|\boldsymbol{\mu}_{j,l}^{(\text{cur})}-\boldsymbol{\mu}_{j-1,l}^{(\text{cur})}\|_2$ and
$\|\boldsymbol{\Sigma}_{j,l}^{(\text{cur})}-\boldsymbol{\Sigma}_{j-1,l}^{(\text{cur})}\|_2$, respectively.
\end{assumption-restated}

\begin{assumption-restated}[\ref{asmp:moments}]
For every step $t$ and layer $l$, the projection factor $\boldsymbol{\phi}_{t,l}^i$ and the inverse covariance estimator $\widehat{\boldsymbol\Sigma}_{t,l}^{-1}$ possess finite fourth moments:
\[
    \mathbb{E}[\|\boldsymbol\phi_{t,l}^i\|_2^4] \le C_{\boldsymbol\phi} < \infty, \quad
    \mathbb{E}[\|\widehat{\boldsymbol\Sigma}_{t,l}^{-1}\|_2^4] \le K_{\boldsymbol\Sigma} < \infty.
\]
\end{assumption-restated}

\begin{remark}
\cref{asmp:moments} is well-justified within the context of LLMs. First, input hidden states $\boldsymbol{h}_{t,l}^i$ are typically processed via LayerNorm and exhibit strong concentration properties in deep networks; thus, the higher-order moments of the resulting projection factor $\boldsymbol{\phi}_{t,l}^i$ are naturally bounded. Second, the stability of the inverse covariance estimator $\widehat{\boldsymbol{\Sigma}}_{t,l}^{-1}$ is physically ensured by the Bayesian prior, which prevents the explosion of the spectral radius. Consequently, the boundedness of the fourth moments is a standard regularity requirement to ensure the convergence of sequential estimators.
\end{remark}

\section{Theoretical Proofs and Derivations}
\label{app:proofs}
This section provides the detailed mathematical proofs for the lemmas and theorems presented in the main paper. 

\subsection{Proof of \cref{lem:expected_drift_l}}
\label{appendix:lemma1}

\begin{lemma}[Bounds on Distributional Drift]
\label{lem:expected_drift_l}
Under \cref{asmp:main_regularity}(a), the expected drift in the gradient mean (Euclidean norm $\|\cdot\|_2$) and covariance (spectral norm $\|\cdot\|_2$) at layer $l$ between consecutive steps $t-1$ and $t$ satisfies
\begin{align}
    \|\boldsymbol{\mu}_{t,l}-\boldsymbol{\mu}_{t-1,l}\|_2
    &
    \le L\sqrt{\mathbb{E}[\|\boldsymbol\Delta_{t-1,l}\|_F^2]}, \label{eq:mean_drift_bound}\\
    \|\boldsymbol{\Sigma}_{t,l}-\boldsymbol{\Sigma}_{t-1,l}\|_2
    &
    \le 4L\sqrt{M}\,\sqrt{\mathbb{E}[\|\boldsymbol\Delta_{t-1,l}\|_F^2]}. \label{eq:cov_drift_bound}
\end{align}
\end{lemma}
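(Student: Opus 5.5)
The plan is to write both moments as expectations over $\boldsymbol e\sim\mathcal D$ of the same function $\boldsymbol g$, evaluated at two consecutive parameter values. By definition, $\boldsymbol\mu_{t,l}=\mathbb E_{\boldsymbol e}[\boldsymbol g(\boldsymbol e;\boldsymbol W_{t-1,l})]$ and $\boldsymbol\mu_{t-1,l}=\mathbb E_{\boldsymbol e}[\boldsymbol g(\boldsymbol e;\boldsymbol W_{t-2,l})]$. The two parameter values differ by $\boldsymbol W_{t-1,l}-\boldsymbol W_{t-2,l}=\boldsymbol\Delta_{t-1,l}$. Since the parameters are themselves random (they depend on past batches), I will treat the moments conditionally on the history and then take an outer expectation over the history.

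\textbf{Mean drift.} Use Jensen's inequality, then the $L$-Lipschitz property of \cref{asmp:main_regularity}(a), then Cauchy--Schwarz:
\[
\|\boldsymbol\mu_{t,l}-\boldsymbol\mu_{t-1,l}\|_2\le \mathbb E\|\boldsymbol g(\boldsymbol e;\boldsymbol W_{t-1,l})-\boldsymbol g(\boldsymbol e;\boldsymbol W_{t-2,l})\|_2\le L\,\mathbb E\|\boldsymbol\Delta_{t-1,l}\|_F\le L\sqrt{\mathbb E\|\boldsymbol\Delta_{t-1,l}\|_F^2}.
\]
This gives \eqref{eq:mean_drift_bound}.

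\textbf{Covariance drift.} Write $\boldsymbol\Sigma=\mathbb E[\boldsymbol g\boldsymbol g^\top]-\boldsymbol\mu\boldsymbol\mu^\top$ and abbreviate $\boldsymbol g_1=\boldsymbol g(\boldsymbol e;\boldsymbol W_{t-1,l})$, $\boldsymbol g_0=\boldsymbol g(\boldsymbol e;\boldsymbol W_{t-2,l})$. Use the telescoping identities
\[
\boldsymbol g_1\boldsymbol g_1^\top-\boldsymbol g_0\boldsymbol g_0^\top=(\boldsymbol g_1-\boldsymbol g_0)\boldsymbol g_1^\top+\boldsymbol g_0(\boldsymbol g_1-\boldsymbol g_0)^\top,
\]
and the analogous one for $\boldsymbol\mu\boldsymbol\mu^\top$. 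Each rank-one term has spectral norm equal to the product of the vector norms.

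For the second-moment part, apply Cauchy--Schwarz in expectation: $\mathbb E[\|\boldsymbol g_1-\boldsymbol g_0\|\,\|\boldsymbol g_1\|]\le\sqrt{\mathbb E\|\boldsymbol g_1-\boldsymbol g_0\|^2}\sqrt{\mathbb E\|\boldsymbol g_1\|^2}\le L\sqrt{\mathbb E\|\boldsymbol\Delta_{t-1,l}\|_F^2}\,\sqrt M$. The same bound holds for the term with $\boldsymbol g_0$, contributing $2L\sqrt M\sqrt{\mathbb E\|\boldsymbol\Delta\|_F^2}$.

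For the mean part, $\|\boldsymbol\mu\|_2\le\sqrt{\mathbb E\|\boldsymbol g\|^2}\le\sqrt M$ by Jensen. Combined with the mean-drift bound, this contributes another $2L\sqrt M\sqrt{\mathbb E\|\boldsymbol\Delta\|_F^2}$. Summing the two parts gives the constant $4L\sqrt M$ in \eqref{eq:cov_drift_bound}.

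\textbf{Main obstacle.} The delicate step is the randomness of $\boldsymbol\Delta_{t-1,l}$. The Lipschitz bound $\|\boldsymbol g_1-\boldsymbol g_0\|^2\le L^2\|\boldsymbol\Delta\|_F^2$ holds pointwise, so squaring it before taking expectations works directly. The point to be careful about is that the moment bound $M$ must hold for $\boldsymbol e$ independent of the current weights, which is why the $\sup_{\boldsymbol W}$ in \cref{asmp:main_regularity}(a) is needed. Given that sup, the bound survives the outer expectation over the history, with the moments interpreted as joint expectations over the fresh edit and the history.
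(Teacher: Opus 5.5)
Your proposal is correct and follows essentially the same route as the paper: Jensen, the Lipschitz bound and Cauchy--Schwarz for the mean; for the covariance, the split $\mathbb{E}[\boldsymbol g\boldsymbol g^\top]-\boldsymbol\mu\boldsymbol\mu^\top$ with a rank-one difference bound, giving $2L\sqrt M+2L\sqrt M$. Your telescoping identity is exactly what proves the paper's $\|\boldsymbol a\boldsymbol a^\top-\boldsymbol b\boldsymbol b^\top\|_2\le(\|\boldsymbol a\|_2+\|\boldsymbol b\|_2)\|\boldsymbol a-\boldsymbol b\|_2$. You also skip the paper's coupling with two independent samples $\boldsymbol e_t,\boldsymbol e_{t-1}$, which is harmless since both moments are expectations under the same $\mathcal D$, and you spell out the randomness of $\boldsymbol\Delta_{t-1,l}$ and the role of $\sup_{\boldsymbol W}$, which the paper leaves implicit.
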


% \end{restatable}
\begin{proof}
For notational convenience, let $\boldsymbol{g}_{t,l}(\boldsymbol{e}) := \boldsymbol{g}(\boldsymbol{e}; \boldsymbol{W}_{t-1,l})$ denote the gradient at step $t$ given input $\boldsymbol{e}$.

\noindent\textbf{Bound for the Mean Drift.}
Let $\boldsymbol{e}_t$ and $\boldsymbol{e}_{t-1}$ denote independent samples drawn from $\mathcal{D}$ at steps $t$ and $t-1$, respectively. By definition, $\boldsymbol{\mu}_{t,l} = \mathbb{E}_{\boldsymbol{e}_t\sim \mathcal{D} }[\boldsymbol{g}_{t,l}(\boldsymbol{e}_t)]$ and $\boldsymbol{\mu}_{t-1,l} = \mathbb{E}_{\boldsymbol{e}_{t-1}\sim \mathcal{D}}[\boldsymbol{g}_{t-1,l}(\boldsymbol{e}_{t-1})]$. To isolate the parameter drift from the data sampling, we introduce a coupling term:
\begin{align*}
    \|\boldsymbol{\mu}_{t,l}-\boldsymbol{\mu}_{t-1,l}\|_2 
    &= \big\|\mathbb{E}_{\boldsymbol{e}_t, \boldsymbol{e}_{t-1}}[\boldsymbol{g}_{t,l}(\boldsymbol{e}_t) - \boldsymbol{g}_{t-1,l}(\boldsymbol{e}_{t-1})]\big\|_2 \\
    &= \big\|\mathbb{E}_{\boldsymbol{e}_t, \boldsymbol{e}_{t-1}}[ \underbrace{\boldsymbol{g}_{t,l}(\boldsymbol{e}_t) - \boldsymbol{g}_{t-1,l}(\boldsymbol{e}_t)}_{\text{Parameter Drift}} + \underbrace{\boldsymbol{g}_{t-1,l}(\boldsymbol{e}_t) - \boldsymbol{g}_{t-1,l}(\boldsymbol{e}_{t-1})}_{\text{Sampling Difference}} ]\big\|_2.
\end{align*}
Since $\boldsymbol{e}_t, \boldsymbol{e}_{t-1} \stackrel{\text{i.i.d}}{\sim} \mathcal{D}$, the expectation of the sampling difference vanishes:
\[
    \mathbb{E}_{\boldsymbol{e}_t, \boldsymbol{e}_{t-1}}[\boldsymbol{g}_{t-1,l}(\boldsymbol{e}_t)-\boldsymbol{g}_{t-1,l}(\boldsymbol{e}_{t-1})]=
    \mathbb{E}_{\boldsymbol{e}_t}[\boldsymbol{g}_{t-1,l}(\boldsymbol{e}_t)] - \mathbb{E}_{\boldsymbol{e}_{t-1}}[\boldsymbol{g}_{t-1,l}(\boldsymbol{e}_{t-1})] = \boldsymbol{\mu}_{t-1,l} - \boldsymbol{\mu}_{t-1,l} = \mathbf{0}.
\]
Thus, the drift depends solely on the parameter change. Applying Jensen's inequality:
\begin{align*}
    \|\boldsymbol{\mu}_{t,l}-\boldsymbol{\mu}_{t-1,l}\|_2 
    &= \big\|\mathbb{E}_{\boldsymbol{e} \sim \mathcal{D}}[\boldsymbol{g}_{t,l}(\boldsymbol{e}) - \boldsymbol{g}_{t-1,l}(\boldsymbol{e})]\big\|_2 \\
    &\le \mathbb{E}_{\boldsymbol{e} \sim \mathcal{D}}\big[\|\boldsymbol{g}_{t,l}(\boldsymbol{e}) - \boldsymbol{g}_{t-1,l}(\boldsymbol{e})\|_2\big].
\end{align*}
Invoking the Lipschitz continuity (\cref{asmp:main_regularity}(a)) and Cauchy-Schwarz:
\[
    \|\boldsymbol{\mu}_{t,l}-\boldsymbol{\mu}_{t-1,l}\|_2 
    \le L\, \mathbb{E}[\|\boldsymbol\Delta_{t-1,l}\|_F] 
    \le L \sqrt{\mathbb{E}[\|\boldsymbol\Delta_{t-1,l}\|_F^2]} 
    .
\]
This confirms \eqref{eq:mean_drift_bound}.

\noindent\textbf{Bound for the Covariance Drift.}
Recall the definition $\boldsymbol{\Sigma}_{t,l} = \mathbb{E}_{\boldsymbol{e}_t \sim \mathcal{D}}[\boldsymbol{g}_{t,l}(\boldsymbol{e}_t)\boldsymbol{g}_{t,l}(\boldsymbol{e}_t)^\top] - \boldsymbol{\mu}_{t,l}\boldsymbol{\mu}_{t,l}^\top$. Similar to the mean drift analysis, we explicitly account for the distinct samples $\boldsymbol{e}_t$ and $\boldsymbol{e}_{t-1}$ used at each step. By the triangle inequality, the spectral norm of the drift is bounded by:
\[
    \|\boldsymbol{\Sigma}_{t,l} - \boldsymbol{\Sigma}_{t-1,l}\|_2
    \le \underbrace{\big\|\mathbb{E}_{\boldsymbol{e}_t}[\boldsymbol{g}_{t,l}(\boldsymbol{e}_t)\boldsymbol{g}_{t,l}(\boldsymbol{e}_t)^\top] - \mathbb{E}_{\boldsymbol{e}_{t-1}}[\boldsymbol{g}_{t-1,l}(\boldsymbol{e}_{t-1})\boldsymbol{g}_{t-1,l}(\boldsymbol{e}_{t-1})^\top]\big\|_2}_{\text{(I)}}
    + \underbrace{\big\|\boldsymbol{\mu}_{t,l}\boldsymbol{\mu}_{t,l}^\top - \boldsymbol{\mu}_{t-1,l}\boldsymbol{\mu}_{t-1,l}^\top\big\|_2}_{\text{(II)}}.
\]
For Term (I), we apply the same coupling argument: introducing the cross-term $\boldsymbol{g}_{t-1,l}(\boldsymbol{e}_t)\boldsymbol{g}_{t-1,l}(\boldsymbol{e}_t)^\top$ allows us to cancel the sampling variance (since $\boldsymbol{e}_t, \boldsymbol{e}_{t-1} \stackrel{\text{i.i.d}}{\sim} \mathcal{D}$) and isolate the parameter drift. Thus, Term (I) reduces to bounding the drift over a generic sample $\boldsymbol{e} \sim \mathcal{D}$:
\[
    \text{(I)} = \big\|\mathbb{E}_{\boldsymbol{e} \sim \mathcal{D}}[\boldsymbol{g}_{t,l}(\boldsymbol{e})\boldsymbol{g}_{t,l}(\boldsymbol{e})^\top - \boldsymbol{g}_{t-1,l}(\boldsymbol{e})\boldsymbol{g}_{t-1,l}(\boldsymbol{e})^\top]\big\|_2.
\]
We utilize the matrix norm identity $\|\boldsymbol{a}\boldsymbol{a}^\top - \boldsymbol{b}\boldsymbol{b}^\top\|_2 \le (\|\boldsymbol{a}\|_2 + \|\boldsymbol{b}\|_2) \|\boldsymbol{a} - \boldsymbol{b}\|_2$.

\noindent\textit{Bounding Term (I):} 
Using Jensen's inequality to move the norm inside the expectation, followed by the Cauchy-Schwarz inequality:
\begin{align*}
    \text{(I)} 
    &\le \mathbb{E}\big[ (\|\boldsymbol{g}_{t,l}(\boldsymbol{e})\|_2 + \|\boldsymbol{g}_{t-1,l}(\boldsymbol{e})\|_2) \|\boldsymbol{g}_{t,l}(\boldsymbol{e}) - \boldsymbol{g}_{t-1,l}(\boldsymbol{e})\|_2 \big] \\
    &\le \sqrt{\mathbb{E}\big[(\|\boldsymbol{g}_{t,l}(\boldsymbol{e})\|_2 + \|\boldsymbol{g}_{t-1,l}(\boldsymbol{e})\|_2)^2\big]} \cdot \sqrt{\mathbb{E}\big[\|\boldsymbol{g}_{t,l}(\boldsymbol{e}) - \boldsymbol{g}_{t-1,l}(\boldsymbol{e})\|_2^2\big]}.
\end{align*}
Using the inequality $(a+b)^2 \le 2a^2+2b^2$ and \cref{asmp:main_regularity}(a):
\begin{align*}
    \text{(I)} 
    &\le \sqrt{2\mathbb{E}[\|\boldsymbol{g}_{t,l}(\boldsymbol{e})\|_2^2] + 2\mathbb{E}[\|\boldsymbol{g}_{t-1,l}(\boldsymbol{e})\|_2^2]} \cdot \sqrt{\mathbb{E}[L^2\|\boldsymbol\Delta_{t-1,l}\|_F^2]} \\
    &\le \sqrt{4M} \cdot L\sqrt{\mathbb{E}[\|\boldsymbol\Delta_{t-1,l}\|_F^2]} = 2\sqrt{M} L \sqrt{\mathbb{E}[\|\boldsymbol\Delta_{t-1,l}\|_F^2]}.
\end{align*}

\noindent\textit{Bounding Term (II):}
Similarly, applying the norm identity to the mean vectors:
\[
    \text{(II)} \le (\|\boldsymbol{\mu}_{t,l}\|_2 + \|\boldsymbol{\mu}_{t-1,l}\|_2) \|\boldsymbol{\mu}_{t,l} - \boldsymbol{\mu}_{t-1,l}\|_2.
\]
By Jensen's inequality, $\|\boldsymbol{\mu}_{t,l}\|_2 = \|\mathbb{E}[\boldsymbol{g}_{t,l}(\boldsymbol{e})]\|_2 \le \sqrt{\mathbb{E}[\|\boldsymbol{g}_{t,l}(\boldsymbol{e})\|_2^2]} \le \sqrt{M}$. Combining this with the mean drift bound derived in Eq. \eqref{eq:mean_drift_bound}:
\[
    \text{(II)} \le 2\sqrt{M} \cdot (L\sqrt{\mathbb{E}[\|\boldsymbol\Delta_{t-1,l}\|_F^2]}).
\]
Summing terms (I) and (II) yields the final bound:
\[
    \|\boldsymbol{\Sigma}_{t,l} - \boldsymbol{\Sigma}_{t-1,l}\|_2 \le 4\sqrt{M} L \sqrt{\mathbb{E}[\|\boldsymbol\Delta_{t-1,l}\|_F^2]}.
\]
This completes the proof of \eqref{eq:cov_drift_bound}.

\end{proof}

\begin{remark}
% \label{rem:lemma2_summary}
\cref{lem:expected_drift_l} establishes quantitative bounds on the across-step drift of the gradient statistics. This conclusion is pivotal for our \textbf{Sequential Bayesian Estimation} framework. By guaranteeing that $\boldsymbol{\mu}$ and $\boldsymbol{\Sigma}$ evolve smoothly within a \textbf{bounded-drift regime}, it validates the strategy of using the posterior from step $t-1$ as an informative prior for step $t$. This recursive ``prior-posterior'' mechanism enables efficient and robust online estimation, which is essential for accurate gradient whitening in streaming settings.
\end{remark}

\subsection{Proof of \cref{theorem1}}
\label{proof:thm_tracking}
\begin{theorem-restated}[\ref{theorem1}]
 Suppose the prior over $(\boldsymbol{\mu}_{t,l},\boldsymbol{\Sigma}_{t,l})$ follows a NIW distribution derived from step $t-1$,
\[
p(\boldsymbol{\mu}_{t,l},\boldsymbol{\Sigma}_{t,l})
=\text{NIW}(\boldsymbol{m}_{t-1,l},\kappa_{t-1,l},\boldsymbol{\Psi}_{t-1,l},\nu_{t-1,l}),
\]
and the likelihood is conditionally i.i.d. Gaussian,
\[
p(\mathcal{D}_{t,l}\mid \boldsymbol{\mu}_{t,l},\boldsymbol{\Sigma}_{t,l})
=\prod_{i=1}^{n_t}\mathcal{N}_d(\boldsymbol{v}_{t,l}^i\mid \boldsymbol{\mu}_{t,l},\boldsymbol{\Sigma}_{t,l}).
\]
Then the posterior $p(\boldsymbol{\mu}_{t,l},\boldsymbol{\Sigma}_{t,l}\mid\mathcal{D}_{t,l})$ is also NIW, namely $\text{NIW}(\boldsymbol{m}_{t,l},\kappa_{t,l},\boldsymbol{\Psi}_{t,l},\nu_{t,l})$, with updated hyperparameters:
% \label{bayesian_tracking}
\begin{align}
\kappa_{t,l}&=\kappa_{t-1,l}+n_t,\label{eq:kappa} \\
\nu_{t,l}&=\nu_{t-1,l}+n_t, \label{eq:nu}\\
\boldsymbol{m}_{t,l}&=\frac{\kappa_{t-1,l}\boldsymbol{m}_{t-1,l}+n_t\bar{\boldsymbol{v}}_{t,l}}{\kappa_{t,l}}, \label{eq:mean}\\
\boldsymbol{\Psi}_{t,l}&=\boldsymbol{\Psi}_{t-1,l}+\boldsymbol{S}_{t,l}
+\frac{\kappa_{t-1,l}n_t}{\kappa_{t,l}}\boldsymbol{Q}_{t,l}, \label{eq:Psi}
\end{align}
where $\boldsymbol{Q}_{t,l}=(\bar{\boldsymbol{v}}_{t,l}-\boldsymbol{m}_{t-1,l})(\bar{\boldsymbol{v}}_{t,l}-\boldsymbol{m}_{t-1,l})^\top$.
\end{theorem-restated}

\begin{proof}
The proof proceeds by applying Bayes' theorem, where the posterior is proportional to the product of the likelihood and the prior, $p(\boldsymbol{\mu}_{t,l}, \boldsymbol{\Sigma}_{t,l}\mid \{\boldsymbol{v}_{t,l}^i\}) \propto p(\{\boldsymbol{v}_{t,l}^i\}\mid \boldsymbol{\mu}_{t,l},\boldsymbol{\Sigma}_{t,l}) p(\boldsymbol{\mu}_{t,l}, \boldsymbol{\Sigma}_{t,l})$.

\paragraph{Likelihood.} The likelihood for $n_t$ i.i.d. $d$-dimensional Gaussian observations $\{\boldsymbol{v}_{t,l}^i\}_{i=1}^{n_t}$ is
\begin{equation}
    \begin{aligned}
        p(\{ \boldsymbol{v}_{t,l}^i\}\mid\boldsymbol{\mu}_{t,l},\boldsymbol{\Sigma}_{t,l})
        &= \prod_{i=1}^{n_t} (2\pi)^{-d/2}|\boldsymbol{\Sigma}_{t,l}|^{-1/2}
        \exp\left(-\frac{1}{2} (\boldsymbol{v}_{t,l}^i-\boldsymbol{\mu}_{t,l})^\top\boldsymbol{\Sigma}_{t,l}^{-1}(\boldsymbol{v}_{t,l}^i-\boldsymbol{\mu}_{t,l})\right) \\
        &= (2\pi)^{-n_td/2} |\boldsymbol{\Sigma}_{t,l}|^{-n_t/2}
        \exp\left(-\frac{1}{2}\sum_{i=1}^{n_t} (\boldsymbol{v}_{t,l}^i-\boldsymbol{\mu}_{t,l})^\top\boldsymbol{\Sigma}_{t,l}^{-1}(\boldsymbol{v}_{t,l}^i-\boldsymbol{\mu}_{t,l})\right) \\
        &= (2\pi)^{-n_td/2} |\boldsymbol{\Sigma}_{t,l}|^{-n_t/2}
        \exp\left(-\frac{1}{2}\,\operatorname{tr}\left(\boldsymbol{\Sigma}_{t,l}^{-1}\sum_{i=1}^{n_t} (\boldsymbol{v}_{t,l}^i-\boldsymbol{\mu}_{t,l})(\boldsymbol{v}_{t,l}^i-\boldsymbol{\mu}_{t,l})^\top\right)\right).
    \end{aligned}
    \label{eq:likelihood}
\end{equation}

\paragraph{Prior.} The Normal-Inverse-Wishart prior, $\text{NIW}(\boldsymbol{m}_{t-1,l}, \kappa_{t-1,l}, \boldsymbol{\Psi}_{t-1,l}, \nu_{t-1,l})$, is a conjugate prior for the Gaussian parameters. Its density function is the product of an Inverse-Wishart distribution for $\boldsymbol{\Sigma}_{t,l}$ and a conditional Normal distribution for $\boldsymbol{\mu}_{t,l}$:
\begin{equation}
    \begin{aligned}
        p(\boldsymbol{\mu}_{t,l}, \boldsymbol{\Sigma}_{t,l}) &= p(\boldsymbol{\Sigma}_{t,l}) p(\boldsymbol{\mu}_{t,l} \mid \boldsymbol{\Sigma}_{t,l}) \\
        &\propto |\boldsymbol{\Sigma}_{t,l}|^{-(\nu_{t-1,l}+d+1)/2} \exp\left(-\frac{1}{2}\operatorname{tr}(\boldsymbol{\Psi}_{t-1,l}\boldsymbol{\Sigma}_{t,l}^{-1})\right) \\
        &\quad\times |\boldsymbol{\Sigma}_{t,l}|^{-1/2} \exp\left(-\frac{\kappa_{t-1,l}}{2}(\boldsymbol{\mu}_{t,l}-\boldsymbol{m}_{t-1,l})^\top\boldsymbol{\Sigma}_{t,l}^{-1}(\boldsymbol{\mu}_{t,l}-\boldsymbol{m}_{t-1,l})\right).
    \end{aligned}
    \label{eq:prior}
\end{equation}

\paragraph{Posterior.} Combining the likelihood \cref{eq:likelihood} and prior \cref{eq:prior}, the unnormalized posterior is proportional to
\begin{align*}
p(\boldsymbol{\mu}_{t,l}, \boldsymbol{\Sigma}_{t,l}\mid \{\boldsymbol{v}_{t,l}^i\})
&\propto p(\{\boldsymbol{v}_{t,l}^i\}\mid \boldsymbol{\mu}_{t,l},\boldsymbol{\Sigma}_{t,l}) p(\boldsymbol{\mu}_{t,l}, \boldsymbol{\Sigma}_{t,l}) \\
&\propto |\boldsymbol{\Sigma}_{t,l}|^{-n_t/2} |\boldsymbol{\Sigma}_{t,l}|^{-(\nu_{t-1,l}+d+2)/2} \exp\Bigg(-\frac{1}{2}\operatorname{tr}\Bigg[\boldsymbol{\Sigma}_{t,l}^{-1} \bigg( \boldsymbol{\Psi}_{t-1,l} \\
& \quad + \sum_{i=1}^{n_t} (\boldsymbol{v}_{t,l}^i-\boldsymbol{\mu}_{t,l})(\boldsymbol{v}_{t,l}^i-\boldsymbol{\mu}_{t,l})^\top + \kappa_{t-1,l} (\boldsymbol{\mu}_{t,l}-\boldsymbol{m}_{t-1,l})(\boldsymbol{\mu}_{t,l}-\boldsymbol{m}_{t-1,l})^\top \bigg) \Bigg]\Bigg).
\end{align*}
We now simplify the terms inside the exponent's trace. First, we expand the data sum of squares using the sample mean $\bar{\boldsymbol{v}}_{t,l}$:
\[
\sum_{i=1}^{n_t}(\boldsymbol{v}_{t,l}^i-\boldsymbol{\mu}_{t,l})(\boldsymbol{v}_{t,l}^i-\boldsymbol{\mu}_{t,l})^\top = \boldsymbol{S}_{t,l} + n_t(\bar{\boldsymbol{v}}_{t,l}-\boldsymbol{\mu}_{t,l})(\bar{\boldsymbol{v}}_{t,l}-\boldsymbol{\mu}_{t,l})^\top,
\]
where $\boldsymbol{S}_{t,l}$ is the sample scatter matrix. Substituting this back, the terms involving $\boldsymbol{\mu}_{t,l}$ are
\begin{equation}
    n_t(\bar{\boldsymbol{v}}_{t,l}-\boldsymbol{\mu}_{t,l})(\bar{\boldsymbol{v}}_{t,l}-\boldsymbol{\mu}_{t,l})^\top + \kappa_{t-1,l}(\boldsymbol{\mu}_{t,l}-\boldsymbol{m}_{t-1,l})(\boldsymbol{\mu}_{t,l}-\boldsymbol{m}_{t-1,l})^\top.
    \label{mu}
\end{equation}
By completing the square with respect to $\boldsymbol{\mu}_{t,l}$ and defining the posterior hyperparameters as follows:

\begin{align}
    \kappa_{t,l} &= \kappa_{t-1,l} + n_t, \label{eq:kappa1} \\
    \boldsymbol{m}_{t,l} &= \frac{\kappa_{t-1,l}\boldsymbol{m}_{t-1,l} + n_t\bar{\boldsymbol{v}}_{t,l}}{\kappa_{t,l}}\label{eq:mean1},
\end{align}

the expression in \cref{mu} can be rewritten as
\[
\kappa_{t,l}(\boldsymbol{\mu}_{t,l}-\boldsymbol{m}_{t,l})(\boldsymbol{\mu}_{t,l}-\boldsymbol{m}_{t,l})^\top + \frac{\kappa_{t-1,l}n_t}{\kappa_{t,l}}(\bar{\boldsymbol{v}}_{t,l}-\boldsymbol{m}_{t-1,l})(\bar{\boldsymbol{v}}_{t,l}-\boldsymbol{m}_{t-1,l})^\top.
\]
We group all terms not involving $\boldsymbol{\mu}_{t,l}$ to define the remaining posterior hyperparameters:

\begin{align}
    \nu_{t,l} &= \nu_{t-1,l} + n_t,\label{eq:nu1}\\
    \boldsymbol{\Psi}_{t,l} &= \boldsymbol{\Psi}_{t-1,l} + \boldsymbol{S}_{t,l} + \frac{\kappa_{t-1,l}n_t}{\kappa_{t,l}}(\bar{\boldsymbol{v}}_{t,l}-\boldsymbol{m}_{t-1,l})(\bar{\boldsymbol{v}}_{t,l}-\boldsymbol{m}_{t-1,l})^\top.\label{eq:Psi1}
\end{align}

With these definitions, the posterior simplifies to
\begin{align*}
p(\boldsymbol{\mu}_{t,l}, \boldsymbol{\Sigma}_{t,l} \mid \{\boldsymbol{v}_{t,l}^i\})
&\propto |\boldsymbol{\Sigma}_{t,l}|^{-(\nu_{t,l}+d+2)/2} \exp\left(-\frac{1}{2}\operatorname{tr}\left(\boldsymbol{\Psi}_{t,l}\boldsymbol{\Sigma}_{t,l}^{-1}\right)\right)\\
&\times \exp\left(-\frac{\kappa_{t,l}}{2}(\boldsymbol{\mu}_{t,l}-\boldsymbol{m}_{t,l})^\top \boldsymbol{\Sigma}_{t,l}^{-1} (\boldsymbol{\mu}_{t,l}-\boldsymbol{m}_{t,l})\right).
\end{align*}
This is the kernel of a Normal–Inverse–Wishart distribution 
$\text{NIW}(\boldsymbol{m}_{t,l}, \kappa_{t,l}, \boldsymbol{\Psi}_{t,l}, \nu_{t,l})$, 
with the posterior hyperparameters updated as in 
\cref{eq:kappa1,eq:mean1,eq:nu1,eq:Psi1}, 
which completes the proof.
\end{proof}

\subsection{Proof of \cref{thm:mean_mse_l}}
\label{proof:thm_calibration1}

\begin{theorem-restated}[\ref{thm:mean_mse_l}]\textnormal{(MSE Bound for Sequential Gradient Estimation, Detailed Version).}
Consider a sequential editing process where a \textbf{previous phase} of $r \ge 1$ steps is followed by a \textbf{current phase} of $t \ge 1$ steps. Let $\operatorname{MSE}^{(\boldsymbol\mu, \text{cur})}_{t,l}$ 
denote the expected squared error at step $t$ of the current phase. We explicitly link this to the global indexing (where step $t$ corresponds to global step $r+t$) as follows:
\[
\operatorname{MSE}^{(\boldsymbol\mu, \text{cur})}_{t,l} := \mathbb{E}\!\left[\|\boldsymbol{m}_{t,l}^{(\text{cur})}-\boldsymbol{\mu}_{t,l}^{(\text{cur})}\|_2^2\right] \equiv\mathbb{E}\!\left[\|\boldsymbol{m}_{r+t,l}-\boldsymbol{\mu}_{r+t,l}\|_2^2\right] ,
\]
where the expectation $\mathbb{E}[\cdot]$ is taken over all data samples observed up to the current step. Similarly, let $D_j^{(\boldsymbol\mu, \text{cur})}$ 
denote the \textbf{upper bounds} on the mean drift at step $j$ of the current phase.
It satisfies
\[
\|{\boldsymbol \mu_{j-1,l}^{(\text{cur})} - \boldsymbol \mu_{j,l}^{(\text{cur})}}\|_2\equiv\|{\boldsymbol \mu_{r+j-1,l} - \boldsymbol \mu_{r+j,l}\|_2\leq D_j^{(\boldsymbol\mu, \text{cur})} },
\]
with the boundary condition $\boldsymbol\mu_{0,l}^{(\text{cur})} = \boldsymbol \mu_{r,l}=\boldsymbol\mu_{r,l}^{(\text{pre})} $. Assume $n_{t} = n$ for $t \ge 1 $ without loss of generality.
Under \cref{asmp:main_regularity}(b), the MSE satisfies the following recursive bound:
\begin{align} \label{eq:general_bound_}
    \operatorname{MSE}^{(\boldsymbol \mu, \text{cur})}_{t,l} 
    \le \underbrace{\frac{\kappa_{r,l}}{\kappa_{r+t,l}} \operatorname{MSE}^{(\boldsymbol\mu, \text{pre})}_{r,l}}_{\text{\rm (I) Inherited History}\to 0 \text{ at rate } O(1/t)}
    + \underbrace{\frac{d\sigma_{+}}{\kappa_{r+t,l}} \ln\left(\frac{\kappa_{r+t,l}}{\kappa_{r,l}}\right)}_{\text{\rm (II) Current Variance}\to 0 \text{ at rate } O(\ln t/t)}
    + \underbrace{\frac{2}{n \kappa_{r+t,l}} \sum_{j=1}^t \left(\kappa_{r+j,l} D_{j}^{(\boldsymbol\mu, \text{cur})}\right)^2}_{\text{\rm (III) Current Drift}}.
\end{align}
Furthermore, under the additional \cref{asmp:drift_rate} that the mean drift asymptotically satisfies $D_j^{(\boldsymbol\mu, \text{cur})} = O((r+j)^{-(1+\epsilon/2)})$ for some $\epsilon > 0$, then the third term converges at rate $\max\{O(\ln (r+t)/(r+t), O((r+t)^{-\epsilon})\}$.

\end{theorem-restated}

\begin{proof}
For clarity and brevity, all variables without superscripts (e.g., $\boldsymbol{m}_{k,l}, \boldsymbol{\mu}_{k,l}, \kappa_{k,l}$) refer to quantities at the \textbf{global step} $k$. Variables with the superscript $\text{cur}$ ($\text{pre}$) refer to the current (previous) editing phase at local step $t$, which corresponds to the global step $k = r+t$ ($k=t$).

Let $\boldsymbol{e}_{r+t,l}:=\boldsymbol{m}_{r+t,l}-\boldsymbol{\mu}_{r+t,l}$ be the estimation error and $\boldsymbol{\delta}_{r+t,l}:=\boldsymbol{\mu}_{r+t-1,l}-\boldsymbol{\mu}_{r+t,l}$ be the deterministic mean drift at step $t$.
Using the recursive update rule \cref{eq:mean}:
\[
\boldsymbol{m}_{r+t,l}=\frac{\kappa_{r+t-1,l}}{\kappa_{r+t,l}}\boldsymbol{m}_{r+t-1,l}
+\frac{n}{\kappa_{r+t,l}}\overline{\boldsymbol{v}}_{r+t,l},
\]
we express the error recursion by adding and subtracting the previous mean $\boldsymbol{\mu}_{r+t-1,l}$:
\begin{align*}
\boldsymbol{e}_{r+t,l}
&=\frac{\kappa_{r+t-1,l}}{\kappa_{r+t,l}}\boldsymbol{m}_{r+t-1,l}
+\frac{n}{\kappa_{r+t,l}}\overline{\boldsymbol{v}}_{r+t,l} - \boldsymbol{\mu}_{r+t,l} \\
&=\frac{\kappa_{r+t-1,l}}{\kappa_{r+t,l}}(\boldsymbol{m}_{r+t-1,l}-\boldsymbol{\mu}_{r+t-1,l})
+\frac{\kappa_{r+t-1,l}}{\kappa_{r+t,l}}(\boldsymbol{\mu}_{r+t-1,l}-\boldsymbol{\mu}_{r+t,l})
+\frac{n}{\kappa_{r+t,l}}(\overline{\boldsymbol{v}}_{r+t,l}-\boldsymbol{\mu}_{r+t,l})\\
&=\frac{\kappa_{r+t-1,l}}{\kappa_{r+t,l}}\boldsymbol{e}_{r+t-1,l}
+\frac{\kappa_{r+t-1,l}}{\kappa_{r+t,l}}\boldsymbol{\delta}_{r+t,l}
+\frac{n}{\kappa_{r+t,l}}(\overline{\boldsymbol{v}}_{r+t,l}-\boldsymbol{\mu}_{r+t,l}),
\end{align*}
where we utilized the identity $\frac{\kappa_{r+t-1,l}}{\kappa_{r+t,l}}\boldsymbol{\mu}_{r+t,l} + \frac{n}{\kappa_{r+t,l}}\boldsymbol{\mu}_{r+t,l} =\boldsymbol{\mu}_{r+t,l}$.

To simplify the analysis of the squared norm, we define two auxiliary terms:
\[
\boldsymbol X_{r+t,l}:=\frac{\kappa_{r+t-1,l}}{\kappa_{r+t,l}}\boldsymbol{e}_{r+t-1,l}, \quad \text{and} \quad \boldsymbol Y_{r+t,l}:=\frac{\kappa_{r+t-1,l}}{\kappa_{r+t,l}}\boldsymbol{\delta}_{r+t,l}+\frac{n}{\kappa_{r+t,l}}(\overline{\boldsymbol{v}}_{r+t,l}-\boldsymbol{\mu}_{r+t,l}).
\]
Thus, $\boldsymbol{e}_{r+t,l}=\boldsymbol X_{r+t,l}+\boldsymbol Y_{r+t,l}$. 
Taking the expectation of the squared norm over sample randomness yields
\begin{equation}\label{eq:mse_expansion}
\operatorname{MSE}^{(\boldsymbol \mu,\text{cur})}_{t,l}
= \operatorname{MSE}^{(\boldsymbol \mu)}_{r+t,l}= \mathbb{E}[\|\boldsymbol X_{r+t,l}\|_2^2] + \mathbb{E}[\|\boldsymbol Y_{r+t,l}\|_2^2] + 2\mathbb{E}[\langle \boldsymbol X_{r+t,l}, \boldsymbol Y_{r+t,l}\rangle].    
\end{equation}
We now bound each term on the right-hand side of \cref{eq:mse_expansion}.

The first term captures the contribution from the previous error:
\begin{equation}\label{eq:x_term}
    \mathbb{E}[\|\boldsymbol X_{r+t,l}\|_2^2]
= \Big(\frac{\kappa_{r+t-1,l}}{\kappa_{r+t,l}}\Big)^2
\mathbb{E}[\|\boldsymbol{e}_{r+t-1,l}\|_2^2]
= \Big(\frac{\kappa_{r+t-1,l}}{\kappa_{r+t,l}}\Big)^2
\operatorname{MSE}^{(\boldsymbol \mu,\text{cur})}_{t-1,l}.
\end{equation}

The term $\boldsymbol Y_{r+t,l}$ contains deterministic drift and stochastic noise. Since $\bar {\boldsymbol v}_{r+t,l}$ is an unbiased estimator of $\boldsymbol \mu_{r+t,l}$, the cross-term within $\|\boldsymbol Y_{r+t,l}\|_2^2$ vanishes. Using $\mathbb{E}[\|\overline{\boldsymbol{v}}_{r+t,l}-\boldsymbol{\mu}_{r+t,l}\|_2^2]
=\frac{1}{n}\operatorname{tr}(\boldsymbol{\Sigma}_{r+t,l})$, we obtain
\begin{equation}\label{eq:y_term}
\begin{aligned}
    \mathbb{E}[\|\boldsymbol Y_{r+t,l}\|_2^2]
    &=\Big(\frac{\kappa_{r+t-1,l}}{\kappa_{r+t,l}}\Big)^2\|\boldsymbol{\delta}_{r+t,l}\|_2^2
    +\Big(\frac{n}{\kappa_{r+t,l}}\Big)^2\mathbb{E}[\|\overline{\boldsymbol{v}}_{r+t,l}-\boldsymbol{\mu}_{r+t,l}\|_2^2]\\
    &=\Big(\frac{\kappa_{r+t-1,l}}{\kappa_{r+t,l}}\Big)^2\|\boldsymbol{\delta}_{r+t,l}\|_2^2
    +\frac{n}{\kappa_{r+t,l}^2}\operatorname{tr}(\boldsymbol{\Sigma}_{r+t,l}).
\end{aligned}
\end{equation}

And for the cross term $2\mathbb{E}[\langle \boldsymbol X_{r+t,l}, \boldsymbol Y_{r+t,l}\rangle]$, independence between the current batch noise and the past error $\boldsymbol e_{r+t-1,l}$ implies that only the drift component of $\boldsymbol Y_{r+t,l}$ contributes to the inner product. Applying the weighted Young's inequality ($2\langle a,b\rangle\le\frac{1}{\eta}\|a\|_2^2+\eta\|b\|_2^2$) with parameter $\eta_{r+t} > 0$:
\begin{equation}\label{eq:cross_term}
    2\mathbb{E}[\langle \boldsymbol X_{r+t,l}, \boldsymbol Y_{r+t,l}\rangle]
=2\Big(\frac{\kappa_{r+t-1,l}}{\kappa_{r+t,l}}\Big)^2
\mathbb{E}[\langle \boldsymbol{e}_{r+t-1,l}, \boldsymbol{\delta}_{r+t,l}\rangle] \le \Big(\frac{\kappa_{r+t-1,l}}{\kappa_{r+t,l}}\Big)^2\Big(\frac{1}{\eta_{r+t}}\operatorname{MSE}^{(\boldsymbol \mu,\text{cur})}_{t-1,l} + \eta_{r+t}\|\boldsymbol{\delta}_{r+t,l}\|_2^2\Big).
\end{equation}

Combining these bounds (\cref{eq:x_term,eq:y_term,eq:cross_term}) into \cref{eq:mse_expansion}, and substituting the bounds $\|\boldsymbol{\delta}_{r+t,l}\|_2^2\le \Big(D_t^{(\boldsymbol\mu,\text{cur})}\Big)^2$ and $\operatorname{tr}(\boldsymbol{\Sigma}_{t,l})\le d\sigma_+$ (\cref{asmp:main_regularity}(b)), we arrive at the recursive inequality:
\begin{equation}\label{eq:mean_mse_recursion_l}
\begin{aligned}
\operatorname{MSE}^{(\boldsymbol\mu,\text{cur})}_{t,l}\le
\underbrace{\Big(1-\frac{n}{\kappa_{r+t,l}}\Big)^2\Big(1+\frac{1}{\eta_{r+t}}\Big)}_{\alpha_{r+t}}\operatorname{MSE}^{(\boldsymbol\mu,\text{cur})}_{t-1,l}
+\underbrace{\underbrace{\Big(1-\frac{n}{\kappa_{r+t,l}}\Big)^2(1+\eta_{r+t})(D_t^{(\text{cur})})^2}_{B_{r+t}^{(\text{drift})}}
+\underbrace{\frac{n}{\kappa_{r+t,l}^2}d\sigma_{+}}_{B_{r+t}^{(\text{var})}}}_{B_{r+t}}.
\end{aligned}
\end{equation}

Next we begin by unrolling the recursive bound from \cref{eq:mean_mse_recursion_l}. 
The recursion $\operatorname{MSE}_{t,l}^{(\boldsymbol \mu ,\text{cur})} \le \alpha_{r+t} \operatorname{MSE}_{t-1,l}^{(\boldsymbol \mu ,\text{cur})} + B_{r+t}$ expands to
\begin{equation}\label{eq:unroll_bound}
    \operatorname{MSE}_{t,l}^{(\boldsymbol \mu ,\text{cur})}  \le \left( \prod_{j=1}^t \alpha_{r+j} \right) \operatorname{MSE}_{0,l}^{(\boldsymbol \mu ,\text{cur})}  + \sum_{j=1}^t \left( \prod_{k=j+1}^t \alpha_{r+k} \right) B_{r+j}, 
\end{equation}
where $\prod_{k=t+1}^t \alpha_{r+k} = 1$ and $\operatorname{MSE}_{0,l}^{(\boldsymbol \mu ,\text{cur})} = \operatorname{MSE}_{r,l}^{(\boldsymbol \mu )}=\operatorname{MSE}_{r,l}^{(\boldsymbol \mu ,\text{pre})}$. Our analysis proceeds by bounding the two terms on the right-hand side separately.
To obtain a tight bound, we choose free parameter $\eta_{r+j} = \kappa_{r+j,l}/n$. 

First, we analyze the contraction factor $\alpha_{r+j}$:
\begin{align*}
    \alpha_{r+j} & = \Big(1-\frac{n}{\kappa_{r+j,l}}\Big)^2\Big(1+\frac{n}{\kappa_{r+j,l}}\Big)= 1 - \frac{n}{\kappa_{r+j,l}} - \frac{n^2}{\kappa_{r+j,l}^2} \left(1 - \frac{n}{\kappa_{r+j,l}}\right).
\end{align*}
Since $\kappa_{r+j,l}\ge n>0$, the last term is non-negative, yielding the strict upper bound $\alpha_{r+j} \le 1 - \frac{n}{\kappa_{r+j,l}} = \frac{\kappa_{r+j-1,l}}{\kappa_{r+j,l}}$.
Consequently, the cumulative product of $\alpha_{r+j}$ telescopes:
\begin{equation*} \label{eq:telescoping_product}
    \prod_{j=1}^t \alpha_{r+j} \le \prod_{j=1}^t \frac{\kappa_{r+j-1,l}}{\kappa_{r+j,l}} = \left( \frac{\kappa_{r,l}}{\kappa_{r+1,l}} \right) \left( \frac{\kappa_{r+1,l}}{\kappa_{r+2,l}} \right) \cdots \left( \frac{\kappa_{r+t-1,l}}{\kappa_{r+t,l}} \right) = \frac{\kappa_{r,l}}{\kappa_{r+t,l}} .
\end{equation*}
This directly bound the historical error term in \cref{eq:unroll_bound}:
\begin{equation} \label{eq:term1_bound}
    \left( \prod_{j=1}^t \alpha_j \right) \operatorname{MSE}_{0,l}^{(\boldsymbol \mu, \text{cur})} \le \frac{\kappa_{r,l}}{\kappa_{r+t,l}} \operatorname{MSE}^{(\boldsymbol\mu,\text{pre})}_{r,l}.
\end{equation}
We now bound the accumulated error term $\sum_{j=1}^t (\prod_{k=j+1}^t \alpha_{r+k}) B_{r+j}$ in \cref{eq:unroll_bound} by analyzing the variance and drift components separately. Note that any partial product satisfies $\prod_{k=j+1}^t \alpha_{r+k} \le \kappa_{r+j,l} / \kappa_{r+t,l}$.

For the term related to $B_{r+j}^{(\text{var})}$, using the product bound, the accumulated variance is
\begin{align*}
    \sum_{j=1}^t \left( \prod_{k=j+1}^t \alpha_{r+k} \right) B_{r+j}^{(\text{var})} &\le \sum_{j=1}^t \left( \frac{\kappa_{r+j,l}}{\kappa_{r+t,l}} \right) \left( \frac{n d\sigma_{+}}{\kappa_{r+j,l}^2} \right)
    = \frac{n d\sigma_{+}}{\kappa_{r+t,l}} \sum_{j=1}^t \frac{1}{\kappa_{r+j,l}} 
    = \frac{n d\sigma_{+}}{\kappa_{r+t,l}} \sum_{j=1}^t \frac{1}{\kappa_{r,l} + jn}.
\end{align*}
We bound the sum using a standard integral bound for a decreasing function:
\[
\sum_{j=1}^t \frac{1}{\kappa_{r,l} + jn} \le \int_0^{t} \frac{dx}{\kappa_{r,l} + xn} = \left[ \frac{1}{n} \ln(\kappa_{r,l} + xn) \right]_0^{t} = \frac{1}{n} \ln\left(\frac{\kappa_{r+t,l}}{\kappa_{r,l}}\right).
\]
Substituting this back, the accumulated sampling variance is bounded by
\begin{equation} \label{eq:term2_var_bound}
    \sum_{j=1}^t \left( \prod_{k=j+1}^t \alpha_{r+k} \right) B_{r+j}^{(\text{var})} \le \frac{d\sigma_{+}}{\kappa_{r+t,l}} \ln\left(\frac{\kappa_{r+t,l}}{\kappa_{r,l}}\right).
\end{equation}

For the term related to $B_{r+j}^{(\text{drift})}$, 
substituting $\eta_{r+j} = \kappa_{r+j,l}/n$ and $(1-n/\kappa_{r+j,l})^2 \le 1$, the coefficient in $B_{r+j}^{(\text{drift})}$ satisfies $\kappa_{r+j,l}(1+\eta_{r+j}) \le \kappa_{r+j,l} + \kappa_{r+j,l}^2/n \le 2\kappa_{r+j,l}^2/n$. Thus,
\begin{equation}\label{eq:term2_drift_bound}
    \begin{aligned}
    \sum_{j=1}^t \left( \prod_{k=j+1}^t \alpha_{r+k} \right) B_{r+j}^{(\text{drift})}
    &\le \sum_{j=1}^t \left( \frac{\kappa_{r+j,l}}{\kappa_{r+t,l}} \right) \left[ \Big(1-\frac{n}{\kappa_{r+j,l}}\Big)^2\Big(1+\frac{\kappa_{r+j,l}}{n}\Big) (D_{j}^{(\boldsymbol\mu, \text{cur})})^2 \right] \\
    & \le \frac{2}{n \kappa_{r+t,l}} \sum_{j=1}^t \kappa_{r+j,l}^2 (D_j^{(\boldsymbol\mu, \text{cur})})^2.
\end{aligned}
\end{equation}

Combining the bounds from \cref{eq:term1_bound,eq:term2_var_bound,eq:term2_drift_bound}, we arrive at the final decomposed upper bound for the MSE:
\begin{align*}
    \operatorname{MSE}^{(\boldsymbol \mu,\text{cur})}_{t,l} \le \frac{\kappa_{r,l}}{\kappa_{r+t,l}} \operatorname{MSE}^{(\boldsymbol\mu,\text{pre})}_{r,l}
    + \frac{d\sigma_{+}}{\kappa_{r+t,l}} \ln\left(\frac{\kappa_{r+t,l}}{\kappa_{r,l}}\right)
    + \frac{2}{n \kappa_{r+t,l}} \sum_{j=1}^t \kappa_{r+j,l}^2 (D_j^{(\boldsymbol\mu, \text{cur})})^2.
\end{align*}
The first two terms converge to 0 as $t \to \infty$, at rates of $O(1/t)$ and $O(\ln t / t)$ respectively (since $\kappa_{r+t,l} = \kappa_{r,l} + tn \sim O(t)$).
The convergence of the $\operatorname{MSE}$ is thus dominated by the third term, the accumulated mean drift.

We now analyze this third term under the additional \cref{asmp:drift_rate} that the mean drift satisfies $D_j^{(\boldsymbol\mu, \text{cur})} = O((r+j)^{-(1+\epsilon/2)})$ as $j \to \infty$. This asymptotic condition implies the existence of a step index $J_0$ and a constant $C_{tail}$ such that $D_j^{(\boldsymbol\mu, \text{cur})} \le C_{tail}(r+j)^{-(1+\epsilon/2)}$ for all $j > J_0$. For the initial phase $1 \le j \le J_0$, the drift terms are finite (bounded by model constraints). 
We can therefore define a global covering constant $C_D$ as:
\[
C_D = \max \left\{ \max_{1 \le j \le J_0} \left( D_j^{(\boldsymbol\mu, \text{cur})} (r+j)^{1+\epsilon/2} \right), \ C_{tail} \right\}.
\]
With this constant, the bound $D_j^{(\boldsymbol\mu, \text{cur})} \le C_D (r+j)^{-(1+\epsilon/2)}$ holds for all $j \ge 1$.
Using the identity $\kappa_{r+j,l}=(r+j)n$, we have 
\begin{align}\label{eq:s_drift}
    \frac{2}{n\kappa_{r+t,l}}\sum_{j=1}^t \kappa_{r+j,l}^2 (D_j^{(\boldsymbol\mu, \text{cur})})^2 &\le \frac{2}{n\kappa_{r+t,l}}\sum_{j=1}^t n^2 (r+j)^2 \frac{C_D^2}{(r+j)^{2+\epsilon}} 
    = \frac{2C_D^2}{r+t} \sum_{j=1}^t \frac{(r+j)^2}{(r+j)^{2+\epsilon}} = \frac{2C_D^2}{r+t} \sum_{j=1}^t \frac{1}{(r+j)^{\epsilon}}.
\end{align}

The convergence of this drift term now depends on the $p$-series $\sum 1/(r+j)^\epsilon$, where $p=\epsilon$. We analyze all possibilities for $\epsilon > 0$:
\begin{itemize}
    \item \textbf{Case 1 ($\epsilon > 1$):} The $p$-series $\sum_{j=1}^\infty 1/j^\epsilon$ converges to a finite constant $S < \infty$. The accumulated drift term is therefore bounded by
    \[
    \frac{2}{n \kappa_{r+t,l}} \sum_{j=1}^t \kappa_{r+j,l}^2 (D_j^{(\boldsymbol\mu, \text{cur})})^2 \le \frac{2 C_D^2 S}{r+t} \sim O(1/(r+t)).
    \]
    \item \textbf{Case 2 ($\epsilon = 1$):} The $p$-series $\sum_{j=1}^t 1/(r+j)$ is the harmonic series, which grows as $O(\ln (r+t))$:
    \[
    \sum_{j=1}^t \frac{1}{r+j} \le \int_0^t \frac{1}{r+x} dx = \ln(r+t) - \ln(r) = \ln\left(1+\frac{t}{r}\right).
    \]
    The accumulated drift term decays as
    \[
    \frac{2}{n \kappa_{r+t,l}} \sum_{j=1}^t \kappa_{r+j,l}^2 (D_j^{(\boldsymbol\mu, \text{cur})})^2 \le \frac{O(\ln (r+t))}{O(r+t)} \sim O(\ln (r+t) / (r+t)).
    \]
    \item \textbf{Case 3 ($0 < \epsilon < 1$):} 
    The sum grows polynomially. Using integral approximation:
    \[
    \sum_{j=1}^t (r+j)^{-\epsilon} \le \int_0^t (r+x)^{-\epsilon} dx = \frac{(r+t)^{1-\epsilon} - r^{1-\epsilon}}{1-\epsilon} \le \frac{(r+t)^{1-\epsilon}}{1-\epsilon}.
    \]
    Substituting this back into \cref{eq:s_drift}:
    \[
    \frac{2}{n \kappa_{r+t,l}} \sum_{j=1}^t \kappa_{r+j,l}^2 (D_j^{(\boldsymbol\mu, \text{cur})})^2 \le \frac{2 C_D^2}{r+t} \cdot \frac{(r+t)^{1-\epsilon}}{1-\epsilon} = \frac{2 C_D^2}{1-\epsilon} (r+t)^{-\epsilon}.
    \]
    Thus, the term decays as $O((r+t)^{-\epsilon})$.
\end{itemize}
In all cases, for any $\epsilon > 0$, the accumulated mean drift term converges to 0. The overall $\operatorname{MSE}^{(\boldsymbol\mu,\text{cur})}_{t,l}$ thus converges to 0. The final convergence rate is determined by the slowest-converging term, $\max(O(\ln (r+t) / (r+t)), O(1/(r+t)^\epsilon))$.
This completes the proof.

\end{proof}

\begin{remark}[The Benefit of Sequential History in Mean Estimation] 
\label{rmk:mean_sequential_benefit}
To rigorously quantify the advantage of the sequential strategy, we first establish the cold start performance of the base case estimator, and then use this logic to analyze the sequential estimator.

In the base case with $\kappa_{0,l}^{(\text{base})}=0$, we must re-derive the bound starting from $t=1$, using the original recursion \cref{eq:mean_mse_recursion_l}. At step $t=1$, $\kappa_{1,l}^{(\text{base})} = n$. The recursive bound collapses to the variance term:
    \[
    \operatorname{MSE}^{(\boldsymbol\mu,\text{base})}_{1,l}{} \le 0 + 0 + \frac{n}{(\kappa_{1,l}^{(\text{base})})^2}d\sigma_{+} = \frac{n}{n^2}d\sigma_{+} = \frac{d\sigma_{+}}{n}.
    \]
When $t>1$, we unroll the recursion from $t$ down to $j=2$, and this yields the full bound for base case, where $\kappa_{t,l}^{(\text{base})} = tn$:
\begin{equation}
    \begin{aligned}
    \operatorname{MSE}^{(\boldsymbol\mu,\text{base})}_{t,l} &\le {\frac{\kappa_{1,l}^{(\text{base})}}{\kappa_{t,l}^{(\text{base})}} \operatorname{MSE}^{(\boldsymbol\mu,\text{base})}_{1,l}}
    + {\frac{d\sigma_{+}}{\kappa_{t,l}^{(\text{base})}} \ln\left(\frac{\kappa_{t,l}^{(\text{base})}}{\kappa_{1,l}^{(\text{base})}}\right)}
    + {\frac{2}{n \kappa_{t,l}^{(\text{base})}} \sum_{j=2}^t {(\kappa_{j,l}^{(\text{base})}} D_j^{(\boldsymbol\mu)})^2}\\
    & = \frac{d\sigma_{+}}{tn} + \frac{d\sigma_{+}}{tn} \ln(t) + O\left(\frac{\sum_{j=2}^{t} (j D_j^{(\boldsymbol\mu)})^2}{t}\right).
    \end{aligned}
    \label{eq:base_case}
\end{equation}

Now we return to the sequential estimator. The final bound derived in \cref{eq:general_bound_} provides insight into the role of the previous phase (which yields $\kappa_{r,l} \gg n$). The bound is
\begin{equation*}
    \begin{aligned}
    \operatorname{MSE}^{(\boldsymbol\mu,\text{cur})}_{t,l} &\le {\frac{\kappa_{r,l}}{\kappa_{r+t,l}} \operatorname{MSE}^{(\boldsymbol\mu,\text{pre})}_{r,l}}
    + \frac{d\sigma_{+}}{\kappa_{r+t,l}} \ln\left(\frac{\kappa_{r+t,l}}{\kappa_{r,l}}\right)
    + {\frac{2}{n \kappa_{r+t,l}} \sum_{j=1}^t (\kappa_{r+j,l}D_{j}^{(\boldsymbol\mu, \text{cur})})^2}\\
    & = {\frac{\kappa_{r,l}}{\kappa_{r,l} + tn}}\operatorname{MSE}_{r,l}^{(\boldsymbol\mu,\text{pre})}+ {\frac{d\sigma_{+}}{\kappa_{r,l} + tn} \ln\left(\frac{\kappa_{r,l} + tn}{\kappa_{r,l}}\right)} + {\frac{2}{n (\kappa_{r,l} + tn)} \sum_{j=1}^t (\kappa_{r+j,l}D_{j}^{(\boldsymbol\mu, \text{cur})})^2}.
    \label{eq:general_bound}
    \end{aligned}
\end{equation*}

The previous phase (steps $1...r$) is effectively a base case process of length $r$. Using the logic from \cref{eq:base_case}, we expand the inherited term $\operatorname{MSE}^{(\boldsymbol \mu, \text{pre})}_{r,l}$:
\begin{equation}\label{eq:mse_pre_explicit}
    \begin{aligned}
    \operatorname{MSE}_{r,l}^{(\boldsymbol\mu,\text{pre})}\leq \frac{\kappa_{1,l}}{\kappa_{r,l}}\operatorname{MSE}_{1,l}^{(\boldsymbol\mu,\text{pre})}+\frac{d\sigma_{+}}{\kappa_{r,l}}\ln\left( \frac{\kappa_{r,l}}{\kappa_{1,l}}\right)+ \frac{2}{n\kappa_{r,l}}\sum_{j=2}^{r}(\kappa_{j,l}D_j^{(\boldsymbol\mu,\text{pre})})^2.
    \end{aligned}
\end{equation}

Substituting \cref{eq:mse_pre_explicit} into \cref{eq:general_bound_}, the terms merge naturally into a global view. Specifically, the variance terms combine as $\frac{d\sigma_+}{\kappa_{r+t,l}}[\ln(\kappa_{r,l}/\kappa_{1,l}) + \ln(\kappa_{r+t,l}/\kappa_{r,l})] = \frac{d\sigma_+}{\kappa_{r+t,l}}\ln(r+t)$, and the drift sums concatenate:

\begin{equation}
    \begin{aligned}
    \operatorname{MSE}_{t,l}^{(\boldsymbol\mu,\text{cur})} &\leq {\frac{\kappa_{r,l}}{\kappa_{r,l} + tn}}\left(\frac{\kappa_{1,l}}{\kappa_{r,l}}\operatorname{MSE}^{(\boldsymbol\mu,\text{pre})}_{1,l}+\frac{d\sigma_{+}}{\kappa_{r,l}}\ln\left( \frac{\kappa_{r,l}}{\kappa_{1,l}}\right)+ \frac{2}{n\kappa_{r,l}}\sum_{j=2}^{r}(\kappa_{j,l}D_j^{(\boldsymbol\mu, \text{pre})})^2 \right)\\
    &+{\frac{d\sigma_{+}}{\kappa_{r,l} + tn} \ln\left(\frac{\kappa_{r,l} + tn}{\kappa_{r,l}}\right)} + {\frac{2}{n (\kappa_{r,l} + tn)} \sum_{j=1}^t (\kappa_{r+j,l} D_{j}^{(\boldsymbol\mu, \text{cur})})^2}\\
    &\leq \frac{d\sigma_{+}}{\kappa_{r,l}+tn}+ \frac{d\sigma_{+}}{\kappa_{r,l}+tn}\ln(r+t)+\frac{2}{n(\kappa_{r,l}+tn)}\left[\sum_{j=2}^{r}(\kappa_{j,l}D_{j}^{(\boldsymbol\mu,\text{pre})})^2+\sum_{j=1}^{t}(\kappa_{j,l}D_{j}^{(\boldsymbol\mu, \text{cur})})^2\right]\\
    &\leq  \frac{d\sigma_{+}}{\kappa_{r,l}+tn}+ \frac{d\sigma_{+}}{\kappa_{r,l}+tn}\ln(r+t)+O\left(\frac{\sum_{k=2}^{r+t}{(kD_k^{(\boldsymbol\mu)})^2}}{r+t}\right).
    \end{aligned}
    \label{eq:seq_global_final}
\end{equation}
Note: In the final step, we unified the indices into a global step $k$, where $\kappa_{r+t,l} = n(r+t)$.

Comparing the base case (\cref{eq:base_case}) and the sequential case (\cref{eq:seq_global_final}), the only structural difference is the denominator: $tn$ versus $\kappa_{r+t,l} = (r+t)n$.
The previous phase is thus equivalent to providing $\kappa_{r,l} = rn$ ``virtual samples''. This \textbf{shifts the entire convergence curve to the left}, allowing the estimator at step $t=1$ to immediately inherit the stability that a cold start estimator would require $t = \kappa_{r,l}/n = r$ steps to reach.
This mathematical property manifests physically as \textbf{``Inertia''}. The large accumulated weight $\kappa_{r,l}$ acts like mass, resisting the high variance typically seen in the early steps of a cold start. While this inertia theoretically introduces a \textbf{tracking lag} (increasing the penalty from the drift term in \cref{eq:general_bound_}), as long as the drift is controlled, the benefit of suppressing the early-stage variance significantly outweighs the cost of the lag, leading to a more stable estimation process.

\end{remark}

\subsection{Proof of \cref{thm:cov_spectral_l}}
\label{proof:thm_calibration2}

\begin{lemma}[Theorem 4.6.1 in \cite{vershynin2018high}]
\label{vershynin_lemma}
Let $\boldsymbol{A}$ be an $m \times n$ matrix with independent, mean-zero, isotropic, and sub-gaussian rows $\boldsymbol{A}_i$. Let $K = \max_i \|\boldsymbol{A}_i\|_{\psi_2}$, where $\|\cdot\|_{\psi_2}$ denotes the sub-Gaussian norm. Then there exists a constant $C > 0$ such that
\[ \mathbb{E}\left\|\frac{1}{m} \boldsymbol A^{\top} \boldsymbol A-\boldsymbol I_n\right\|_2 \leq CK^2 \left(\sqrt{\frac{n}{m}}+\frac{n}{m}\right). \]
\end{lemma}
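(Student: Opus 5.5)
We prove the bound with the standard covering-net argument: concentration for each fixed direction, then a union bound over a net, then integration of the resulting tail. Set $\boldsymbol B:=\frac1m\boldsymbol A^\top\boldsymbol A-\boldsymbol I_n$. It is symmetric, so $\|\boldsymbol B\|_2=\sup_{\|\boldsymbol x\|_2=1}|\boldsymbol x^\top\boldsymbol B\boldsymbol x|$.

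\textbf{Step 1 (discretization).} Fix a $\tfrac14$-net $\mathcal N$ of the unit sphere $S^{n-1}$ with $|\mathcal N|\le 9^n$, which exists by a volumetric argument. For a symmetric matrix the standard approximation argument gives $\|\boldsymbol B\|_2\le 2\max_{\boldsymbol x\in\mathcal N}|\boldsymbol x^\top\boldsymbol B\boldsymbol x|$. It therefore suffices to control finitely many quadratic forms.

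\textbf{Step 2 (fixed direction).} For a fixed unit vector $\boldsymbol x$,
\[
\boldsymbol x^\top\boldsymbol B\boldsymbol x=\frac1m\sum_{i=1}^m\bigl(\langle\boldsymbol A_i,\boldsymbol x\rangle^2-1\bigr).
\]
\begin{itemize}
\item By isotropy, $\mathbb E\langle\boldsymbol A_i,\boldsymbol x\rangle^2=1$, so each summand has mean zero.
\item Since $\|\langle\boldsymbol A_i,\boldsymbol x\rangle\|_{\psi_2}\le K$, the square $\langle\boldsymbol A_i,\boldsymbol x\rangle^2$ is sub-exponential with $\|\cdot\|_{\psi_1}\le K^2$.
\item Centering costs only an absolute constant, so $\|\langle\boldsymbol A_i,\boldsymbol x\rangle^2-1\|_{\psi_1}\le CK^2$.
\item The summands are independent across $i$.
\end{itemize}
Bernstein's inequality for sub-exponential sums then gives
\[
\mathbb P\bigl(|\boldsymbol x^\top\boldsymbol B\boldsymbol x|\ge \varepsilon/2\bigr)\le 2\exp\!\Bigl(-c\,m\min\bigl(\tfrac{\varepsilon^2}{K^4},\tfrac{\varepsilon}{K^2}\bigr)\Bigr).
\]

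\textbf{Step 3 (union bound and calibration).} For $u\ge0$ choose
\[
\delta=C\Bigl(\sqrt{\tfrac nm}+\tfrac{u}{\sqrt m}\Bigr),\qquad \varepsilon=K^2\max(\delta,\delta^2).
\]
With this choice $\min(\varepsilon^2/K^4,\varepsilon/K^2)=\delta^2$. Hence the exponent is at least $cC^2(n+u^2)$. Taking $C$ large enough that $cC^2\ge \ln 9+1$, the union bound over $\mathcal N$ yields
\[
\mathbb P\bigl(\|\boldsymbol B\|_2\ge\varepsilon\bigr)\le 9^n\cdot 2e^{-cC^2(n+u^2)}\le 2e^{-u^2}.
\]
This calibration is the main obstacle. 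The choice of $\delta$ must simultaneously absorb the $9^n$ net cardinality and produce the mixed $\max(\delta,\delta^2)$ scaling, since the $\delta^2$ branch is exactly what yields the $n/m$ term.

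\textbf{Step 4 (from tail to expectation).} From Step 3,
\[
\|\boldsymbol B\|_2\le CK^2\Bigl(\sqrt{\tfrac nm}+\tfrac nm+\tfrac{u}{\sqrt m}+\tfrac{u^2}{m}\Bigr)
\]
with probability at least $1-2e^{-u^2}$. We write $\mathbb E\|\boldsymbol B\|_2=\int_0^\infty\mathbb P(\|\boldsymbol B\|_2>s)\,ds$ and integrate this tail. The $u$- and $u^2$-terms contribute $O(K^2/\sqrt m)$ and $O(K^2/m)$ respectively. These are dominated by $K^2\sqrt{n/m}$ and $K^2n/m$ because $n\ge1$. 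We also note that isotropy forces $K\ge c_0>0$, so no additive constants escape the $K^2$ factor. Enlarging $C$ then gives $\mathbb E\|\boldsymbol B\|_2\le CK^2(\sqrt{n/m}+n/m)$, as claimed.
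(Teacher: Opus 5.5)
Your proof is correct and takes the same route as the source the paper relies on. The paper gives no argument of its own and simply cites Vershynin's Theorem 4.6.1, whose proof is exactly your $\tfrac14$-net, fixed-direction Bernstein bound and union bound with the calibration $\varepsilon=K^2\max(\delta,\delta^2)$, $\delta=C(\sqrt{n/m}+u/\sqrt m)$; the expectation form stated here then follows by integrating the tail as in your Step 4 (your remark that isotropy forces $K\ge c_0$ is true but not needed, since the centering bound already gives $\|\langle\boldsymbol A_i,\boldsymbol x\rangle^2-1\|_{\psi_1}\le CK^2$ and every term of your tail bound carries the $K^2$ factor).
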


\begin{theorem-restated}[\ref{thm:cov_spectral_l}] \textnormal{(Spectral Error Bound for Sequential Covariance Estimation, Detailed Version).}
Let $\widehat{\boldsymbol{\Sigma}}_{t,l}^{(\text{cur})}
=\boldsymbol{\Psi}_{r+t,l}/(\nu_{r+t,l}-d-1)$ be the NIW posterior mean estimator (as derived in \cref{corollary1}) at step $t$ of the current phase. Define the expected spectral norm error $E_{t,l}^{(\boldsymbol \Sigma,\text{cur})}$ as
       \[E_{t,l}^{(\boldsymbol \Sigma
       ,\text{cur})}= \mathbb{E} \left\|\widehat{\boldsymbol \Sigma}_{t,l}^{(\text{cur})}-\boldsymbol \Sigma_{t,l}^{(\text{cur})}\right\|_2\equiv \mathbb{E} \left\|\widehat{\boldsymbol \Sigma}_{r+t,l}-\boldsymbol \Sigma_{r+t,l}\right\|_2,\]
where the expectation $\mathbb{E}[\cdot]$ is taken over all samples up to the current step. Let
$\Delta_{j}^{(\boldsymbol \Sigma, \text{cur})}$ denote the \textbf{upper bounds} on the covariance drift at step $j$ of the current phase.
It satisfies
\begin{align*}
    \|\boldsymbol\Sigma_{j-1,l}^{(\text{cur})}-\boldsymbol\Sigma_{j,l}^{(\text{cur})}\|_2\equiv\|\boldsymbol\Sigma_{r+j-1,l}-\boldsymbol\Sigma_{r+j,l}\|_2 \le \Delta_{j}^{(\boldsymbol\Sigma,\text{cur})},
\end{align*} 
with the boundary condition $\boldsymbol\Sigma
_{0,l}^{(\text{cur})}=\boldsymbol\Sigma
_{r,l}=\boldsymbol\Sigma
_{r,l}^{(\text{pre})}$. Assume $n_{t} = n$ for $t \ge 1 $ without loss of generality.
% \cref{lem:expected_drift_l} and 
Under \cref{asmp:main_regularity}(b) and assuming $\nu_{r+t,l}>d+1$, the error $E_{t,l}^{(\boldsymbol\Sigma,\text{cur})}$ satisfies the following recursive bound:
\begin{equation}\label{eq:cov_main_bound_l_0}
    \begin{aligned}
       E_{t,l}^{(\boldsymbol \Sigma
       ,\text{cur})} & \le 
       \underbrace{\frac{\nu_{r,l}' }{\nu_{r+t,l}'}E_{r,l}^{(\boldsymbol \Sigma
       ,\text{pre})}}_{\to 0 \text{ at rate } O(1/t)}+\underbrace{\frac{(C_2(\sqrt{dn}+d) + d) \sigma_{+}t}{\nu_{r+t,l}'}}_{\to Const. } \\
       &\quad + \frac{1}{\nu_{r+t,l}'} \sum_{j=1}^t{\nu_{r+j-1,l}'\Delta_j^{(\boldsymbol \Sigma,\text{cur})}} + \frac{2n}{\nu_{r+t,l}'} \sum_{j=1}^t(D_j^{(\boldsymbol \mu,\text{cur})})^2+  \frac{2n}{\nu_{r+t,l}'} \sum_{j=1}^t\operatorname{MSE}_{j-1,l}^{(\boldsymbol\mu,\text{cur})},
    \end{aligned}
\end{equation}
where $\nu_{t,l}' = \nu_{t,l}-d-1$. 
Furthermore, under the additional \cref{asmp:drift_rate} that the drifts asymptotically satisfy $(D_j^{(\boldsymbol \mu,\text{cur})})^2\leq C_Dj^{-(2+\epsilon)}$ and $\Delta_j^{(\boldsymbol \Sigma,\text{cur})}\leq C_{\Delta
}j^{-(1+\epsilon')}$ for constants $\epsilon,\epsilon'>0$, then the last three terms converge at rates $\max\{O(\ln{(r+t)}/(r+t)), O((r+t)^{-\epsilon'})\}$, $ O(1/(r+t))$, and $\max\{O((\ln (r+t))^2/(r+t)),O((r+t)^{-\epsilon})\}$ respectively.

\end{theorem-restated}

% \end{theorem}

\begin{proof}
For brevity, we omit superscripts for quantities defined at the global step $k$ (e.g., $\boldsymbol\Psi_{k,l}, \boldsymbol\Sigma_{k,l}$), while employing the superscripts $\text{cur}$ ($\text{pre}$) to denote the current (previous) editing phase at local step $t$, corresponding to global step $k=r+t$ ($k=t$).

Recall the recursive update rule $\boldsymbol{\Psi}_{r+t,l}
=\boldsymbol{\Psi}_{r+t-1,l}
+\boldsymbol{S}_{r+t,l}
+\frac{\kappa_{r+t-1,l}n}{\kappa_{r+t,l}}\boldsymbol{Q}_{r+t,l}$
with $\boldsymbol{Q}_{r+t,l}
=(\overline{\boldsymbol{v}}_{r+t,l}-\boldsymbol{m}_{r+t-1,l})
(\overline{\boldsymbol{v}}_{r+t,l}-\boldsymbol{m}_{r+t-1,l})^\top$
and $\nu_{r+t,l}=\nu_{r+t-1,l}+n$ (refer to \cref{eq:Psi} and \cref{eq:nu}). 
By substituting this into the definition of the estimator $\widehat{\boldsymbol{\Sigma}}_{r+t,l}$ and applying the triangle inequality along with the linearity of expectation, we decompose the total expected error into three distinct components:

\begin{align*}
    E_{t,l}^{(\boldsymbol \Sigma
       ,\text{cur})}&=\mathbb{E} \left\|\widehat{\boldsymbol \Sigma}_{r+t,l}-\boldsymbol \Sigma_{r+t,l}\right\|_2 \\
       &\leq \underbrace{\frac{\mathbb{E}\left\|\boldsymbol S_{r+t,l}-n \boldsymbol \Sigma_{r+t,l}\right\|_2}{\nu_{r+t,l}-d-1}}_{T_{\text {var }}}+\underbrace{\frac{\mathbb{E}\left\| \boldsymbol \Psi_{r+t-1,l}-(\nu_{r+t-1,l}-d-1) \boldsymbol \Sigma_{r+t,l}\right\|_2}{\nu_{r+t,l}-d-1}}_{T_{\text {shr }}}\\
    &\quad+\underbrace{\frac{\kappa_{r+t-1,l} n}{\kappa_{r+t,l}\left(\nu_{r+t,l}-d-1\right)} \mathbb{E}\left\|\left(\bar{\boldsymbol v}_{r+t,l}-\boldsymbol m_{r+t-1,l}\right)\left( \bar{\boldsymbol v}_{r+t,l}-\boldsymbol m_{r+t-1,l}\right)^{\top}\right\|_2}_{T_{\text {mean }}} .
\end{align*}

We now proceed to bound each of these three terms. First, we address the variance term $T_{\text{var}}$ by invoking covariance concentration results from \cref{vershynin_lemma}. Let $\boldsymbol{z}_{r+t,l}^i:= \boldsymbol{\Sigma}_{r+t,l}^{-1/2}(\boldsymbol{v}_{r+t,l}^i - \boldsymbol{\mu}_{r+t,l})\sim \mathcal{N}_d(\boldsymbol 0, \boldsymbol{I}_d)$ denote the whitened sample vectors. And the sample mean then satisfies $\bar{\boldsymbol{z}}_{r+t,l} = \boldsymbol{\Sigma}_{r+t,l}^{-1/2}(\bar{\boldsymbol{v}}_{r+t,l}-\boldsymbol{\mu}_{r+t,l}) 
\sim \mathcal{N}_d\!\left(0, \tfrac{1}{N}\boldsymbol{I}_d\right)$.
The scatter matrix ${\boldsymbol{S}}_{r+t,l} =  \sum_{i=1}^{n} (\boldsymbol{v}_{r+t,l}^i-\bar{\boldsymbol{v}}_{r+t,l})(\boldsymbol{v}_{r+t,l}^i-\bar{\boldsymbol{v}}_{r+t,l})^\top $ can be expanded and rewritten in terms of $\boldsymbol z_{r+t,l}^i$ as

\begin{align*}
\frac{1}{n}{\boldsymbol{S }}_{r+t,l}&=  \frac{1}{n}\sum_{i=1}^{n} (\boldsymbol v_{r+t,l}^i-\overline {\boldsymbol v}_{r+t,l})(\boldsymbol v_{r+t,l}^i-\overline {\boldsymbol v}_{r+t,l})^\top \\
&= \frac{1}{n}\sum_{i=1}^{n}\left[(\boldsymbol v_{r+t,l}^i-\boldsymbol\mu_{r+t,l}) + (\boldsymbol \mu_{r+t,l}-\overline{\boldsymbol v}_{r+t,l})\right]\left[(\boldsymbol v_{r+t,l}^i-\boldsymbol\mu_{r+t,l}) ^\top+ (\boldsymbol\mu_{r+t,l} -\overline{\boldsymbol v}_{r+t,l})^\top\right] \\
&=   \frac{1}{n}\sum_{i=1}^{n} \left[(\boldsymbol v_{r+t,l}^i-\boldsymbol\mu_{r+t,l})(\boldsymbol v_{r+t,l}^i-\boldsymbol\mu_{r+t,l})^\top\right] +  (\boldsymbol\mu_{r+t,l}-\bar{\boldsymbol v}_{r+t,l})(\bar{\boldsymbol v}_{r+t,l}-\boldsymbol\mu_{r+t,l})^\top    \\
&= \boldsymbol\Sigma_{r+t,l}^{1/2}\left(\frac{1}{n}\sum_{i=1}^{n} \boldsymbol z_{r+t,l}^i(\boldsymbol z_{r+t,l}^i)^\top -\bar{\boldsymbol z}_{r+t,l}\bar{\boldsymbol z}_{r+t,l}^\top \right)\boldsymbol \Sigma_{r+t,l}^{1/2} .\\
\end{align*}
This allows us to express the error matrix as
\[
\frac{1}{n}{\boldsymbol{S}}_{r+t,l} - \boldsymbol{\Sigma}_{r+t,l} = \boldsymbol\Sigma_{r+t,l}^{1/2}\left(\frac{1}{n}\sum_{i=1}^{n} \boldsymbol z_{r+t,l}^i(\boldsymbol z_{r+t,l}^i)^\top -\boldsymbol I_d - \bar{\boldsymbol z}_{r+t,l}\bar{\boldsymbol z}_{r+t,l}^\top \right)\boldsymbol\Sigma_{r+t,l}^{1/2}.
\]
Taking the spectral norm and expectation and noting that  $\|\boldsymbol{\Sigma}_{r+t,l}\|_2 \le \sigma_+$ by \cref{asmp:main_regularity}(b), we obtain
\begin{align*}
\mathbb{E}\left\|\frac{1}{n}\boldsymbol{S}_{r+t,l} - \boldsymbol{\Sigma}_{r+t,l}\right\|_2 &\leq \|\boldsymbol{\Sigma}_{r+t,l}\|_2 \mathbb{E}\left\| \frac{1}{n}\sum_{i=1}^{n} \boldsymbol{z}_{r+t,l}^i (\boldsymbol{z}_{r+t,l}^i)^\top - \boldsymbol{I}_d - \bar{\boldsymbol{z}}_{r+t,l}\bar{\boldsymbol{z}}_{r+t,l}^\top \right\|_2 \\
&\leq \sigma_+ \left( \mathbb{E}\left\| \frac{1}{n}\sum_{i=1}^{n} \boldsymbol{z}_{r+t,l}^i (\boldsymbol{z}_{r+t,l}^i)^\top - \boldsymbol{I}_d \right\|_2 + \mathbb{E}\|\bar{\boldsymbol{z}}_{r+t,l}\bar{\boldsymbol{z}}_{r+t,l}^\top\|_2 \right).
\end{align*}
By \cref{vershynin_lemma}, the first term is bounded by $ C_1\left(\sqrt{{d}/{n}}+{d}/{n}\right)$ for some constant $C_1$. For the second term, observe that the spectral norm of the rank-one matrix $\bar{\boldsymbol{z}}_{r+t,l}\bar{\boldsymbol{z}}_{r+t,l}^\top$ reduces to $ \|\bar{\boldsymbol{z}}_{r+t,l}\|^2$. Since $\mathbb{E}[\|\bar{\boldsymbol{z}}_{r+t,l}\|_2^2] = \operatorname{tr}(\operatorname{Cov}(\bar{\boldsymbol{z}}_{r+t,l})) = \operatorname{tr}(\frac{1}{n}\boldsymbol{I}_d) = d/n$, for some constant $C_2$, we have
\[
\mathbb{E}\left\|\frac{1}{n}\boldsymbol{S}_{r+t,l} - \boldsymbol{\Sigma}_{r+t,l}\right\|_2 \leq \sigma_+ \left( C_1\left(\sqrt{\frac{d}{n}}+\frac{d}{n}\right) + \frac{d}{n} \right) \leq C_2 \sigma_+ \left(\sqrt{\frac{d}{n}} + \frac{d}{n}\right).
\]
Consequently, 
\begin{align} \label{eq:term_var}
    T_{\text{var}} \leq \frac{n}{\nu_{r+t,l}-d-1} C_2 \sigma_{+} \left(\sqrt{\frac{d}{n}} + \frac{d}{n}\right) = \frac{C_2\sigma_{+}}{\nu_{r+t,l}-d-1}\left(\sqrt{dn} + d \right).
\end{align}

Next, consider the shrinkage term $T_{\text{shr}}$. Using the triangle inequality and the definition of the previous estimator $\widehat{\boldsymbol{\Sigma}}_{r+t-1}$, we have
\begin{align*}
\mathbb{E}\left\| \boldsymbol{\Psi}_{r+t-1}-(\nu_{r+t-1}-d-1) \boldsymbol{\Sigma}_{r+t,l}\right\|_2 &\leq \mathbb{E}\left\| \boldsymbol{\Psi}_{r+t-1}-(\nu_{r+t-1}-d-1) \boldsymbol{\Sigma}_{r+t-1}\right\|_2 \\
& \quad + (\nu_{r+t-1}-d-1)\mathbb{E}\left\| \boldsymbol{\Sigma}_{r+t-1}-\boldsymbol{\Sigma}_{r+t,l}\right\|_2\\
& = (\nu_{r+t-1}-d-1)\mathbb{E}\left\|\widehat{\boldsymbol{\Sigma}}_{r+t-1}-\boldsymbol{\Sigma}_{r+t-1}\right\|_2\\
& \quad + (\nu_{r+t-1}-d-1)\left\| \boldsymbol{\Sigma}_{r+t-1}-\boldsymbol{\Sigma}_{r+t,l}\right\|_2.
\end{align*}
Since $\left\| \boldsymbol{\Sigma}_{r+t-1}-\boldsymbol{\Sigma}_{r+t,l}\right\|_2 \leq \Delta
_{t}^{(\boldsymbol\Sigma
, \text{cur})} $, it follows immediately that 
\begin{align}\label{eq:term_shrift}
    T_{\text{shr}} \leq \frac{\nu_{r+t-1}-d-1}{\nu_{r+t,l}-d-1}\left(E_{t-1,l}^{(\boldsymbol\Sigma
    ,\text{cur})} + \Delta
_{t}^{(\boldsymbol\Sigma
, \text{cur})}\right).
\end{align}

Finally, we analyze the mean correction term  $T_{\text{mean}}$. Since the spectral norm of the rank-one matrix $\boldsymbol{Q}_{r+t,l}$ is simply $\|\overline{\boldsymbol{v}}_{r+t,l}-\boldsymbol{m}_{r+t-1}\|_2^2$, we decompose the error vector as
\[
 \overline{\boldsymbol{v}}_{r+t,l}-\boldsymbol{m}_{r+t-1}=(\overline{\boldsymbol{v}}_{r+t,l}-\boldsymbol{\mu}_{r+t,l}) + (\boldsymbol{\mu}_{r+t,l}-\boldsymbol{\mu}_{r+t-1}) + (\boldsymbol{\mu}_{r+t-1}-\boldsymbol{m}_{r+t-1}).
\]
Applying the triangle inequality for norms, we have
\begin{align*}
\mathbb{E}[\|\overline{\boldsymbol{v}}_{r+t,l}-\boldsymbol{m}_{r+t-1}\|_2^2] 
&= \mathbb{E}[\|\overline{\boldsymbol{v}}_{r+t,l}-\boldsymbol{\mu}_{r+t,l}\|_2^2] + \|\boldsymbol{\mu}_{r+t,l}-\boldsymbol{\mu}_{r+t-1}\|_2^2 + \operatorname{MSE}_{t-1,l}^{(\boldsymbol\mu,\text{cur})} \\
& \quad + 2\mathbb{E}[\langle \overline{\boldsymbol{v}}_{r+t,l}-\boldsymbol{\mu}_{r+t,l}, \boldsymbol{\mu}_{r+t,l}-\boldsymbol{\mu}_{r+t-1}\rangle] \\
& \quad + 2\mathbb{E}[\langle \overline{\boldsymbol{v}}_{r+t,l}-\boldsymbol{\mu}_{r+t,l}, \boldsymbol{\mu}_{r+t-1}-\boldsymbol{m}_{r+t-1}\rangle] \\
& \quad + 2\mathbb{E}[\langle \boldsymbol{\mu}_{r+t,l}-\boldsymbol{\mu}_{r+t-1}, \boldsymbol{\mu}_{r+t-1}-\boldsymbol{m}_{r+t-1}\rangle].
\end{align*}
The cross-terms involving $(\overline{\boldsymbol{v}}_{r+t,l}-\boldsymbol{\mu}_{r+t,l})$ vanish in expectation. Specifically, for $2\mathbb{E}[\langle \overline{\boldsymbol{v}}_{r+t,l}-\boldsymbol{\mu}_{r+t,l}, \boldsymbol{\mu}_{r+t,l}-\boldsymbol{\mu}_{r+t-1}\rangle]$, the term $(\boldsymbol{\mu}_{r+t,l}-\boldsymbol{\mu}_{r+t-1})$ is a fixed (deterministic) quantity, so the expectation operates only on $(\overline{\boldsymbol{v}}_{r+t,l}-\boldsymbol{\mu}_{r+t,l})$, which has zero expectation. For $2\mathbb{E}[\langle \overline{\boldsymbol{v}}_{r+t,l}-\boldsymbol{\mu}_{r+t,l}, \boldsymbol{\mu}_{r+t-1}-\boldsymbol{m}_{r+t-1}\rangle]$, the current batch noise $(\overline{\boldsymbol{v}}_{r+t,l}-\boldsymbol{\mu}_{r+t,l})$ is independent of the previous estimation error $(\boldsymbol{\mu}_{r+t-1}-\boldsymbol{m}_{r+t-1})$, allowing the expectation to factorize into a product of expectations, where $\mathbb{E}[\overline{\boldsymbol{v}}_{r+t,l}-\boldsymbol{\mu}_{r+t,l}]=\boldsymbol{0}$.
For the last cross-term $2\mathbb{E}[\langle \boldsymbol{\mu}_{r+t,l}-\boldsymbol{\mu}_{r+t-1}, \boldsymbol{\mu}_{r+t-1}-\boldsymbol{m}_{r+t-1}\rangle]$, we apply Young's inequality with parameter $\xi_{r+t}>0$: $2\langle a,b\rangle\le \frac{1}{\xi_{r+t}}\|a\|_2^2+\xi_{r+t}\|b\|_2^2$. This yields
\[
2\mathbb{E}[\langle \boldsymbol{\mu}_{r+t,l}-\boldsymbol{\mu}_{r+t-1}, \boldsymbol{\mu}_{r+t-1}-\boldsymbol{m}_{r+t-1}\rangle] \le \frac{1}{\xi_{r+t}}\|\boldsymbol{\mu}_{r+t,l}-\boldsymbol{\mu}_{r+t-1}\|_2^2 + \xi_{r+t}\mathbb{E}[\|\boldsymbol{\mu}_{r+t-1}-\boldsymbol{m}_{r+t-1}\|_2^2].
\]
Incorporating this result, and applying the drift bound $ \|\boldsymbol{\mu}_{r+t,l}-\boldsymbol{\mu}_{r+t-1}\|_2^2 \leq (D_{t}^{(\boldsymbol\mu,\text{cur})})^2$ alongside the trace bound $\operatorname{tr}(\boldsymbol{\Sigma}_{r+t,l}) \le d\sigma_+$, we arrive at
\begin{align*}
\mathbb{E}[\|\overline{\boldsymbol{v}}_{r+t,l}-\boldsymbol{m}_{r+t-1}\|_2^2] &\le \frac{\operatorname{tr}(\boldsymbol{\Sigma}_{r+t,l})}{n} + \|\boldsymbol{\mu}_{r+t,l}-\boldsymbol{\mu}_{r+t-1}\|_2^2 + \operatorname{MSE}_{t-1,l}^{(\boldsymbol\mu,\text{cur})} \\
& \quad + \frac{1}{\xi_{r+t}}\|\boldsymbol{\mu}_{r+t,l}-\boldsymbol{\mu}_{r+t-1}\|_2^2 + \xi_{r+t}\operatorname{MSE}_{t-1,l}^{(\boldsymbol\mu,\text{cur})} \\
&= \frac{\operatorname{tr}(\boldsymbol{\Sigma}_{r+t,l})}{n} + (1+\frac{1}{\xi_{r+t}})\|\boldsymbol{\mu}_{r+t,l}-\boldsymbol{\mu}_{r+t-1}\|_2^2 + (1+\xi_{r+t})\operatorname{MSE}_{t-1,l}^{(\boldsymbol\mu,\text{cur})}\\
&\le \frac{d\sigma_{+}}{n} +(1+\frac{1}{\xi_{r+t}}) (D_{t}^{(\boldsymbol\mu,\text{cur})})^2 + (1+\xi_{r+t})\operatorname{MSE}_{t-1,l}^{(\boldsymbol\mu,\text{cur})}.
\end{align*}

Thus, $T_{\text{mean}}$ is bounded by
\begin{align}\label{eq:term_mean}
    T_{\text{mean}} \leq \frac{\kappa_{r+t-1} n}{\kappa_{r+t,l}\left(\nu_{r+t,l}-d-1\right)}\left(\frac{d\sigma_{+}}{n} +(1+\frac{1}{\xi_{r+t}}) (D_{t}^{(\boldsymbol\mu,\text{cur})})^2 + (1+\xi_{r+t})\operatorname{MSE}_{t-1,l}^{(\boldsymbol\mu,\text{cur})} \right).
\end{align}

Combining the bounds for $T_{\text{var}}$ (\cref{eq:term_var}), $T_{\text{shr}}$ (\cref{eq:term_shrift}) and $T_{\text{mean}}$ (\cref{eq:term_mean}) yields the recursive inequality:
\begin{equation}
    \begin{aligned}
        E_{t,l}^{(\boldsymbol \Sigma
       ,\text{cur})} &\leq \underbrace{\Big(1-\frac{n}{\nu_{r+t,l}-d-1}\Big)}_{\beta_{r+t}} E_{t-1,l}^{(\boldsymbol \Sigma
       ,\text{cur})}\\
        & \quad + \frac{C_2\sigma_{+}}{\nu_{r+t,l}-d-1}\left(\sqrt{dn} + d \right)+ \Big(1-\frac{n}{\nu_{r+t,l}-d-1}\Big)\Delta_{t}^{(\boldsymbol\Sigma
        ,\text{cur})}\\
        & \quad + \Big(1-\frac{n}{\kappa_{r+t,l}}\Big)\frac{ n}{ \left(\nu_{r+t,l}-d-1\right)}\left(\frac{d\sigma_{+}}{n} +(1+\frac{1}{\xi_{r+t}}) (D_{t}^{(\boldsymbol\mu,\text{cur})})^2 + (1+\xi_{r+t})\operatorname{MSE}_{t-1,l}^{(\boldsymbol\mu,\text{cur})} \right).
    \label{eq:cov_main_bound_l}
    \end{aligned}
\end{equation}

We now analyze the convergence of the spectral norm error by unrolling the recursion $E_{t,l}^{(\boldsymbol \Sigma,\text{cur})}  \le \beta_{r+t} E_{t-1,l}^{(\boldsymbol \Sigma,\text{cur})} + B_{r+t}$ from \cref{eq:cov_main_bound_l}, where we define the contraction factor $\beta_{r+t} := 1 - n/\nu_{r+t,l}'$ with effective degrees of freedom $\nu_{k,l}' := \nu_{k,l}-d-1$. The update rule $\nu_{r+t,l} = \nu_{r+t-1} + n$ implies $\nu_{r+t,l}' = \nu_{r,l}' + tn$. The non-recursive term $B_{r+t}$ decomposes into variance, drift, and coupled components:
\begin{align*}
        B_{r+t} &= \underbrace{\frac{C_2\sigma_{+}(\sqrt{dn} + d)}{\nu_{r+t}'}}_{B_{r+t}^{(\text{var})}} 
            + \underbrace{\beta_{r+t} \Delta_t^{(\boldsymbol\Sigma,\text{cur})}}_{B_{r+t}^{(\text{drift})}}\\
            & \quad + \underbrace{\Big(1-\frac{n}{\kappa_{r+t,l}}\Big)\frac{n}{\nu_{r+t,l}'}\left(\frac{d\sigma_{+}}{n} +(1+\frac{1}{\xi_{r+t}}) (D_t^{(\boldsymbol\mu, \text{cur})})^2 + (1+\xi_{r+t}) \operatorname{MSE}_{t-1,l}^{(\boldsymbol\mu,\text{cur})}\right)}_{B_{r+t}^{(\text{coupled})}}.
    \end{align*}

Unrolling the recursion $E_{t,l}^{(\boldsymbol\Sigma,\text{cur})} \le \beta_{r+t} E_{t-1}^{(\boldsymbol\Sigma,\text{cur})}  + B_{r+t}$ from $t$ down to $0$ yields 
\[
E_{t,l}^{(\boldsymbol\Sigma,\text{cur})}  \le \left( \prod_{j=1}^t \beta_{r+j} \right) E_{r,l}^{(\boldsymbol\Sigma,\text{pre})}  + \sum_{j=1}^t \left( \prod_{k=j+1}^t \beta_{r+k} \right) B_{r+j},
\]
where $\prod_{k=t+1}^t \beta_{r+k} = 1$. 
Observing that $\beta_{r+j} = \nu_{r+j-1,l}' / \nu_{r+j,l}'$, the product term telescopes as $\prod_{j=1}^t \beta_{r+j} = \nu_{r,l}' / \nu_{r+t,l}'$. Similarly, the partial product $\prod_{k=j+1}^t \beta_{r+k}$ simplifies to $\nu_{r+j,l}' / \nu_{r+t,l}'$. Substituting these into the unrolled sum, we obtain the explicit bound:
\begin{align} \label{eq:cov_unrolled_bound}
    E_{t,l}^{(\boldsymbol\Sigma,\text{cur})}  \le \underbrace{\frac{\nu_{r,l}'}{\nu_{r+t,l}'} E_{r,l}^{(\boldsymbol\Sigma,\text{pre})} }_{\mathcal{T}_{\text{init}}} + \underbrace{\sum_{j=1}^t \frac{\nu_{r+j,l}'}{\nu_{r+t,l}'} B_{r+j} }_{\mathcal{T}_{\text{accum}}}.
\end{align}
We now analyze the asymptotic behavior of these terms. The initial error term $\mathcal{T}_{\text{init}}$ decays as $O(1/t)$ since $\nu_{r+t,l}' \sim O(t)$, and thus vanishes as $t \to \infty$. The accumulated error $\mathcal{T}_{\text{accum}}$ is analyzed by decomposing $B_{r+j}$ into its three constituents.

First, the accumulated intrinsic variance (arising from $B_{r+j}^{(\text{var})}$) is given by
\[
\sum_{j=1}^t \frac{\nu_{r+j,l}'}{\nu_{r+t,l}'} B_{r+j}^{(\text{var})} = \sum_{j=1}^t \frac{\nu_{r+j,l}'}{\nu_{r+t,l}'} \left( \frac{C_2\sigma_{+}(\sqrt{dn} + d)}{\nu_{r+j,l}'} \right) = \sum_{j=1}^t \frac{C_2\sigma_{+}(\sqrt{dn} + d)}{\nu_{r+t,l}'} = \frac{(C_2\sigma_{+}(\sqrt{dn} + d)) t}{\nu_{r,l}'+tn}.
\]
As $t \to \infty$, this term converges to the constant $C_2\sigma_{+}(\sqrt{dn} + d)/n$. This represents the irreducible sampling noise inherent to the estimator.

Second, regarding the covariance drift term, we have
\[
 \sum_{j=1}^t \frac{\nu_{r+j,l}'}{\nu_{r+t,l}'} B_{r+j}^{(\text{drift})} = \sum_{j=1}^t \frac{\nu_{r+j,l}'}{\nu_{r+t,l}'} \left(\frac{\nu_{r+j-1,l}'}{\nu_{r+j,l}'}\right) \Delta_j^{(\boldsymbol\Sigma,\text{cur})} = \frac{1}{\nu_{r+t,l}'} \sum_{j=1}^t \nu_{r+j-1,l}' \Delta_j^{(\boldsymbol\Sigma,\text{cur})}.
\]

\cref{asmp:drift_rate} posits that the drifts decay asymptotically. Specifically, there exist constants $C_{D, \text{tail}}, C_{\Delta, \text{tail}}$ and a step index $J_0$ such that for all $j > J_0$, the bounds $D_j \le C_{D, \text{tail}}(r+j)^{-(1+\epsilon/2)}$ and $\Delta_j \le C_{\Delta, \text{tail}}(r+j)^{-(1+\epsilon')}$ hold.
Since the drifts are finite for the initial finite steps $1 \le j \le J_0$, we can construct global covering constants $C_D$ and $C_\Delta$ (as done in \cref{thm:mean_mse_l}) such that for all $j \ge 1$:
\[
D_j^{(\boldsymbol\mu, \text{cur})} \le C_D (r+j)^{-(1+\epsilon/2)} \quad \text{and} \quad \Delta_j^{(\boldsymbol\Sigma, \text{cur})} \le C_\Delta (r+j)^{-(1+\epsilon')}.
\]
Recall that $\nu_{k,l}' = \nu_{0,l} + kn - d - 1$. For the index corresponding to step $r+j-1$, we have
\[
\nu_{r+j-1,l}' = \nu_{0,l} + (r+j-1)n - d - 1 \le \nu_{0,l} + n(r+j).
\]
Since $r+j \ge 1$, we trivially have $\nu_{0,l} \le \nu_{0,l}(r+j)$. Consequently,
\[
\nu_{r+j-1,l}' \le (\nu_{0,l} + n)(r+j).
\]
Defining the constant $C_\nu := \nu_{0,l} + n$, we have the rigorous bound $\nu_{r+j-1,l}' \le C_\nu (r+j)$. Substituting this and the uniform bound for $\Delta_j$, we have
\[
\nu_{r+j-1,l}' \Delta_j^{(\boldsymbol\Sigma,\text{cur})} \le C_\nu (r+j) \cdot C_\Delta (r+j)^{-(1+\epsilon')} = C' (r+j)^{-\epsilon'},
\]
where $C' = C_\nu C_\Delta$. Thus, the term is bounded by
\[
\frac{1}{\nu_{r+t,l}'} \sum_{j=1}^t \nu_{r+j-1,l}' \Delta_j^{(\boldsymbol\Sigma,\text{cur})} \le \frac{C'}{\nu_{r+t,l}'} \sum_{j=1}^t (r+j)^{-\epsilon'}.
\]
Since $\nu_{r+t,l}' \sim O(r+t)$, we analyze the ratio $\frac{\sum_{j=1}^t (r+j)^{-\epsilon'}}{r+t}$ for $\epsilon' > 0$:
\begin{itemize}
    \item \textbf{Case 1 ($\epsilon' > 1$):} The sum $\sum_{j=1}^\infty 1/(r+j)^{\epsilon'}$ converges to a constant $S_\Sigma$. The term is $O(1 / \nu_{r+t,l}') \sim O(1/(r+t)) \to 0$.
    \item \textbf{Case 2 ($\epsilon' = 1$):} The sum $\sum_{j=1}^t 1/(r+j) \sim O(\ln (r+t))$. The term is $O(\ln (r+t) / \nu_{r+t,l}') \sim O(\ln (r+t) / (r+t)) \to 0$.
    \item \textbf{Case 3 ($0 < \epsilon' < 1$):} The sum $\sum_{j=1}^t 1/(r+j)^{\epsilon'} \sim O((r+t)^{1-\epsilon'})$. The term is $O((r+t)^{1-\epsilon'} / \nu_{r+t,l}') \sim O((r+t)^{-\epsilon'}) \to 0$.
\end{itemize}
In all plausible cases ($\epsilon' > 0$), the accumulated covariance drift converges to 0 at the rate $\max\{O((r+t)^{-\epsilon'}), O(\ln (r+t)/(r+t))\}$.

Finally, we consider the coupled error from mean estimation. Setting $\xi_{r+j}=1$ and noting that $(1-n/\kappa_{r+j,l}) \le 1$, the term simplifies via cancellation of $\nu_{r+j,l}'$:

\begin{align*}
    \sum_{j=1}^t \frac{\nu_{r+j,l}'}{\nu_{r+t,l}'} B_{r+j}^{(\text{coupled})} &\le \frac{n}{\nu_{r+t,l}'} \sum_{j=1}^t \left(\frac{d\sigma_{+}}{n} + 2 (D_j^{(\boldsymbol\mu,\text{cur})})^2 + 2\operatorname{MSE}_{j-1,l}^{(\boldsymbol{\mu}, \text{cur})} \right)\\
& = \frac{d\sigma_{+}}{\nu_{r+t,l}'}t + \frac{2n}{\nu_{r+t,l}'} \sum_{j=1}^t (D_j^{(\boldsymbol\mu,\text{cur})})^2 + \frac{2n}{\nu_{r+t,l}'} \sum_{j=1}^t \operatorname{MSE}_{j-1,l}^{(\boldsymbol{\mu}, \text{cur})} .
\end{align*}

This sum is dominated by the constant term $d\sigma_+/n$, leading to a limit of $d\sigma_+/n$. The remaining components involve the mean drift squared and the mean MSE. Since $(D_j^{(\boldsymbol\mu,\text{cur})})^2 \le C_D ^2(r+j)^{-(2+\epsilon)}$ and utilizing \cref{thm:mean_mse_l} (where $\operatorname{MSE}_{t,l}^{(\boldsymbol{\mu}, \text{cur})} \sim \max\{O(\ln (r+j)/(r+j)),O(1/(r+j)^\epsilon) \}$), the sums $\sum_j(D_j^{(\boldsymbol\mu,\text{cur})})^2$ and $\sum_j \operatorname{MSE}$ grow sub-linearly (converging or logarithmic), causing their contribution to the average to vanish as $O(1/(r+t))$ or $ \max\{O((\ln (r+t))^2/(r+t)),O(1/(r+t)^\epsilon) \}$ respectively.

Combining these results into \cref{eq:cov_unrolled_bound}, we arrive at the final decomposed upper bound for $E_{t,l}^{(\boldsymbol{\Sigma
},\text{cur})}$ in \cref{eq:cov_main_bound_l_0}. Therefore, the spectral norm error $E_{t,l}^{(\boldsymbol{\Sigma
},\text{cur})}$ is bounded by
\begin{equation} \label{eq:cov_main_bound_refined}
    \begin{aligned}
        E_{t,l}^{(\boldsymbol{\Sigma
},\text{cur})} &\le \underbrace{\frac{\nu_{r,l}'}{\nu_{r,l}' + tn} E_{r,l}^{(\boldsymbol\Sigma,\text{pre})}}_{O(1/t)}+\underbrace{\frac{ C_2\sigma_{+}(\sqrt{dn} + d) t}{\nu_{r+t,l}'}}_{O(1)}+\underbrace{\frac{1}{\nu_{r+t,l}'} \sum_{j=1}^t \nu_{r+j-1,l}' \Delta_j^{(\boldsymbol\Sigma,\text{cur})}}_{\max\{O(1/t^{\epsilon'}), O(\ln t/t)\}}\\
&\quad +\underbrace{\sum_{j=1}^t \frac{\nu_{r+j,l}'}{\nu_{r+t,l}'} \left[ \Big(1-\frac{n}{\kappa_{r+j,l}}\Big)\frac{n}{\nu_{r+j,l}'}\left(\frac{d\sigma_{+}}{n} +(1+\frac{1}{\xi_{r+j}}) (D_j^{(\boldsymbol\mu,\text{cur})})^2 + (1+\xi_{r+j}) \operatorname{MSE}_{j-1,l}^{(\boldsymbol{\mu}, \text{cur})}\right) \right]}_{O(1)}.
    \end{aligned}
\end{equation}

The spectral norm error $E_{t,l}^{(\boldsymbol\Sigma,\text{cur})}$ does not vanish but remains bounded within a constant radius determined by the sampling noise and dimension:
\[
\limsup_{t \to \infty} E_{t,l}^{(\boldsymbol{\Sigma
},\text{cur})} \le \frac{C_2\sigma_{+}(\sqrt{dn} + d)}{n} + \frac{d\sigma_{+}}{n} = \frac{C_2\sigma_{+}(\sqrt{dn} + d) + d\sigma_+}{n} \sim O(1).
\]
This confirms that the estimator is stable and bounded.

\end{proof}

\begin{remark}[The Role of Sequential History in Covariance Estimation]
\label{rmk:cov_sequential_benefit}
Similar to the mean estimation analysis, we provide a rigorous comparison between the base case and the sequential case for covariance estimation.

First, we derive the bound for a cold-start estimator starting at $t=1$ (where $\nu_{1,l}'^{(\text{base})} = n-d-1$). The spectral error at the first step is bounded by sampling noise:

\[
E_{1,l}^{(\boldsymbol\Sigma
,\text{base})} = \mathbb{E}\Big\|\frac{1}{n}\boldsymbol S_{1,l}^{(\text{cur})}-\boldsymbol\Sigma_{1,l}^{(\text{cur})}\Big\|_2\leq \frac{C_2\sigma_{+}(\sqrt{dn}+d)}{\nu_{1,l}'^{(\text{base})}}.
\]
For $t>1$, unrolling the recursion from $t$ down to $j=2$ yields the full bound for base case: 
\begin{equation}\label{eq:cov_base_bound}
    \begin{aligned}
    E_{t,l}^{(\boldsymbol\Sigma
    ,\text{base})} &\leq (\Pi_{j=2}^{t}\beta_{j}^{(\text{base})})E_{1,l}^{(\boldsymbol\Sigma
    ,\text{base})} + \sum_{j=2}^t (\Pi_{k=j+1}^{t}\beta_k^{(\text{base})})B_j^{(\text{base})}\\
    & \leq \frac{\nu_{1,l}'^{(\text{base})}}{\nu_{t,l}'^{(\text{base})}}E_{1,l}^{(\boldsymbol\Sigma
    ,\text{base})}+ \frac{C_2\sigma_{+}(\sqrt{dn}+d)(t-1)}{\nu_{t,l}'^{(\text{base})}}+\frac{1}{\nu_{t,l}'^{(\text{base})}}\sum_{j=2}^{t}\nu_{j-1,l}'^{(\text{base})}\Delta_j^{(\boldsymbol\Sigma)}\\
    &\quad +\frac{d\sigma_{+}(t-1)}{\nu_{t,l}'^{(\text{base})}}+\frac{2n}{\nu_{t,l}'^{(\text{base})}}\sum_{j=2}^{t}(D_{j}^{(\boldsymbol\mu)})^2+\frac{2n}{\nu_{t,l}'^{(\text{base})}}\sum_{j=2}^{t}(\operatorname{MSE}_{j-1,l}^{(\boldsymbol\mu,\text{base})})^2\\
    &\leq \frac{\nu_{1,l}'^{(\text{base})}}{\nu_{t,l}'^{(\text{base})}}\cdot   \frac{C_2\sigma_{+}(\sqrt{dn}+d)}{\nu_{1,l}'^{(\text{base})}} +  \frac{C_2\sigma_{+}(\sqrt{dn}+d)(t-1)}{\nu_{t,l}'^{(\text{base})}}  +\frac{1}{\nu_{t,l}'^{(\text{base})}}\sum_{j=2}^{t}\nu_{j-1,l}'^{(\text{base})}\Delta_j^{(\boldsymbol\Sigma)}\\
    &\quad +\frac{d\sigma_{+}(t-1)}{\nu_{t,l}'^{(\text{base})}}+\frac{2n}{\nu_{t,l}'^{(\text{base})}}\sum_{j=2}^{t}(D_{j}^{(\boldsymbol\mu)})^2+\frac{2n}{\nu_{t,l}'^{(\text{base})}}\sum_{j=2}^{t}(\operatorname{MSE}_{j-1,l}^{(\boldsymbol\mu,\text{base})})^2\\
    & \leq \frac{C_2\sigma_{+}(\sqrt{dn}+d)t}{tn-d-1}+\frac{d\sigma_{+}(t-1)}{tn-d-1}\\
    & \quad +O\left(\frac{\sum_{j=2}^{t}(j-1)\Delta_j^{(\boldsymbol\Sigma)}}{t}\right)+O\left(\frac{\sum_{j=2}^{t}(D_{j}^{(\boldsymbol\mu)})^2}{t}\right)
    +O\left(\frac{\sum_{j=2}^{t}(\operatorname{MSE}_{j-1,l}^{(\boldsymbol\mu,\text{base})})^2}{t}\right).
    \end{aligned}
\end{equation}

For the sequential estimator, the bound derived in \cref{eq:cov_main_bound_l_0} is
\begin{align*}
    E_{t,l}^{(\boldsymbol\Sigma
    ,\text{cur})} &\leq (\Pi_{j=1}^{t}\beta_{r+j})E_{r,l}^{(\boldsymbol\Sigma
    ,\text{pre})} + \sum_{j=1}^t (\Pi_{k=j+1}^{t}\beta_{r+k})B_j\\
    & \leq  \frac{\nu_{r,l}'}{\nu_{r+t,l}'}E_{r,l}^{(\boldsymbol\Sigma
    ,\text{pre})}+ \frac{C_2\sigma_{+}(\sqrt{dn}+d)t}{\nu_{r+t,l}'}+\frac{1}{\nu_{r+t,l}'}\sum_{j=1}^{t}\nu_{r+j-1,l}'\Delta_j^{(\boldsymbol\Sigma,\text{cur})}\\
    &\quad +\frac{d\sigma_{+}t}{\nu_{r+t,l}'}+\frac{2n}{\nu_{r+t,l}'}\sum_{j=1}^{t}(D_{j}^{(\boldsymbol\mu,\text{cur})})^2+\frac{2n}{\nu_{r+t,l}'}\sum_{j=1}^{t}(\operatorname{MSE}_{j-1,l}^{(\boldsymbol\mu,\text{cur})})^2.
\end{align*}
And the previous phase (length $r$) is treated as a base case process. We expand the inherited term $E_{r,l}^{(\boldsymbol\Sigma, \text{pre})}$ using the logic derived above:
\begin{equation}\label{eq:cov_pre_bound}
    \begin{aligned}
    E_{r,l}^{(\boldsymbol\Sigma,\text{pre})} &\leq \frac{C_2\sigma_{+}(\sqrt{dn}+d)r}{\nu_{r,l}'}+\frac{d\sigma_{+}(r-1)}{\nu_{r,l}'}\\
    & \quad + \frac{1}{\nu_{r,l}'}\sum_{j=2}^{r}\nu_{j-1,l}'\Delta_j^{(\boldsymbol\Sigma,\text{pre})}+\frac{2n}{\nu_{r,l}'}\sum_{j=2}^{r}(D_{j}^{(\boldsymbol\mu,\text{pre})})^2+\frac{2n}{\nu_{r,l}'}\sum_{j=2}^{r}(\operatorname{MSE}_{j-1,l}^{(\boldsymbol\mu,\text{pre})})^2.
    \end{aligned}
\end{equation}
Substituting \cref{eq:cov_pre_bound} into the recursive bound \cref{eq:cov_main_bound_l_0}, the terms merge naturally. Specifically, the constant terms from the previous and current phases combine as $\frac{\nu_{r,l}'}{\nu_{r+t,l}'} \cdot \frac{C_3 \cdot r}{\nu_{r,l}'} + \frac{C_3 \cdot t}{\nu_{r+t,l}'} = \frac{C_3(r+t)}{\nu_{r+t,l}'}$, where $C_3=C_2\sigma_{+}(\sqrt{dn}+d)r$. The drift terms similarly concatenate into global sums:
\begin{equation}\label{eq:cov_final_bound}
    \begin{aligned}
        E_{t,l}^{(\boldsymbol\Sigma
    ,\text{cur})} 
    &\leq \frac{\nu_{r,l}'}{\nu_{r+t,l}'}\cdot\Big(\frac{C_2\sigma_{+}(\sqrt{dn}+d)r}{\nu_{r,l}'}+\frac{d\sigma_{+}(r-1)}{\nu_{r,l}'}\Big)+\frac{C_2\sigma_{+}(\sqrt{dn}+d)t}{\nu_{r+t,l}'}+\frac{d\sigma_{+}t}{\nu_{r+t,l}'}\\
    &\quad + \frac{\nu_{r,l}'}{\nu_{r+t,l}'}\cdot\Big( \frac{1}{\nu_{r,l}'}\sum_{j=2}^{r}\nu_{j-1,l}'\Delta_j^{(\boldsymbol\Sigma,\text{pre})}+\frac{2n}{\nu_{r,l}'}\sum_{j=2}^{r}(D_{j}^{(\boldsymbol\mu,\text{pre})})^2+\frac{2n}{\nu_{r,l}'}\sum_{j=2}^{r}(\operatorname{MSE}_{j-1,l}^{(\boldsymbol\mu,\text{pre})})^2\Big)\\
    &\quad + \frac{1}{\nu_{r+t,l}'}\sum_{j=1}^{t}\nu_{r+j-1,l}'\Delta_j^{(\boldsymbol\Sigma,\text{cur})}+\frac{2n}{\nu_{r+t,l}'}\sum_{j=1}^{t}(D_{j}^{(\boldsymbol\mu,\text{cur})})^2+\frac{2n}{\nu_{r+t,l}'}\sum_{j=1}^{t}(\operatorname{MSE}_{j-1,l}^{(\boldsymbol\mu,\text{cur})})^2\\
    & = \frac{C_2\sigma_{+}(\sqrt{dn}+d)(r+t)}{\nu_{r+t,l}'}+\frac{d\sigma_{+}(r+t-1)}{\nu_{r+t,l}'}\\
    &\quad + \frac{1}{\nu_{r+t,l}'}{\sum_{k=2}^{r+t}\nu_{k-1}'\Delta_k^{(\boldsymbol\Sigma)}}+\frac{2n}{\nu_{r+t,l}'}\sum_{k=2}^{r+t}(D_k^{(\boldsymbol \mu)})^2+\frac{2n}{\nu_{r+t,l}'}\sum_{k=2}^{r+t}(\operatorname{MSE}_{k-1,l}^{(\boldsymbol \mu)})\\
    & \leq \frac{C_2\sigma_{+}(\sqrt{dn}+d)(r+t)}{(r+t)n-d-1}+\frac{d\sigma_{+}(r+t-1)}{(r+t)n-d-1}\\
    &\quad +O\left(\frac{\sum_{k=2}^{r+t}(k-1)\Delta_k^{(\boldsymbol\Sigma)}}{r+t}\right)+O\left(\frac{\sum_{k=2}^{r+t}(D_{k}^{(\boldsymbol\mu)})^2}{r+t}\right)
    +O\left(\frac{\sum_{k=2}^{r+t}(\operatorname{MSE}_{k-1,l}^{(\boldsymbol\mu)})^2}{r+t}\right).
    \end{aligned}
\end{equation}
Note: In the final step, we unified the indices into a global step $k$, where $\nu_{r+t,l}' = n(r+t)-d-1$.

\paragraph{Conclusion: Curve Shift and Trade-off.}
Comparing the base case (\cref{eq:cov_base_bound}) and sequential case (\cref{eq:cov_final_bound}) bounds, the dominant error terms share the functional form $f(N) \approx \frac{C \cdot N}{N n - K}$, where $N$ is the accumulated sample count ($t$ for base case, $r+t$ for sequential).
Consider the function $f(x) = \frac{x}{ax-b}$ for $a,b > 0$. Since $f'(x) = \frac{-b}{(ax-b)^2} < 0$, this function is strictly decreasing for $ax > b$.
The sequential estimator effectively replaces the argument $t$ with $r+t$. Since $r+t > t$, we have $f(r+t) < f(t)$. 
Thus, the history $\nu_{r,l}$ not only \textbf{shifts the convergence curve to the left} but also guarantees a strictly lower error bound due to the monotonic decay of the variance term with respect to degrees of freedom. This ``Inertia'' allows the estimator to bypass the high-error region of the curve (where degrees of freedom are low), ensuring stability from the very first step of the new phase.

However, inheriting prior hyperparameters (e.g., $\nu_{r,l},\kappa_{r,l},\boldsymbol m_{r,l}, \boldsymbol\Sigma_{r,l}$) from previous phase also introduces a trade-off: the sequential estimator inherits the accumulated \textbf{mean and covariance drifts} from the previous phase. These drift terms, representing the tracking lag, could theoretically increase the error if the distribution shifts rapidly. Nevertheless, provided the global drifts decay as $(D_j^{(\boldsymbol{\mu}, \text{cur})})^2 \le C_D j^{-(2+\epsilon)}$ and $\Delta_j^{(\boldsymbol{\Sigma}, \text{cur})} \le C_{\Delta} j^{-(1+\epsilon')}$ for constants $\epsilon, \epsilon' > 0$, these drift penalties remain bounded. Consequently, similar to the mean estimation case, the benefit of suppressing the early-stage variance significantly outweighs the cost of the tracking lag, leading to a more stable and robust estimation process.

Furthermore, the covariance bound in \cref{eq:cov_main_bound_refined} includes an $O(1)$ spectral-norm finite-sample floor, but it remains bounded, and warm-start still reduces the remaining terms via a larger effective sample size. Unlike static estimation, the term $\mathbb{E}\|\cdot\|_2$ bounds the \textbf{worst-case fluctuation} of the estimator due to the stochasticity of the incoming mini-batches. It essentially characterizes a ``noise ball'' within which the estimator oscillates around the true covariance matrix.

\end{remark}

\subsection{Proof of \cref{theorem3}}
\label{proof:thm_update}
\begin{lemma}[Hypercontractivity; Theorem 5.10 in \cite{janson1997gaussian}]
\label{lem:hypercontractivity}
Let $\boldsymbol z \sim \mathcal N(\boldsymbol 0, \boldsymbol I_d)$ be a standard Gaussian random vector in $\mathbb{R}^d$. Let $P_{\le n}: \mathbb{R}^d \to \mathbb{R}$ be a polynomial of degree at most $n$. For any $2\leq q<\infty $, the following moment inequality holds:
\begin{equation}
    \|P_{\le n}(\boldsymbol{z})\|_{L^q} \leq (q-1)^{n/2} \|P_{\le n}(\boldsymbol{z})\|_{L^2},
\end{equation}
where $\|P_{\le n}(\boldsymbol{z})\|_{L^q}\coloneqq (\mathbb{E}[|P_{\le n}(\boldsymbol{z})|^q])^{1/q}$ denotes the $L^q$-norm of the random variable.  
\end{lemma}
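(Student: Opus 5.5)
The plan is to derive the inequality from Nelson's hypercontractivity of the Ornstein--Uhlenbeck semigroup, combined with the Wiener chaos (Hermite) decomposition of polynomials. Let $P_s$, $s\ge 0$, denote the Ornstein--Uhlenbeck semigroup on $L^2(\mathcal N(\boldsymbol 0,\boldsymbol I_d))$, given by Mehler's formula
\[
P_s f(\boldsymbol z)=\mathbb E_{\boldsymbol z'}\bigl[f\bigl(e^{-s}\boldsymbol z+\sqrt{1-e^{-2s}}\,\boldsymbol z'\bigr)\bigr],
\]
where $\boldsymbol z'$ is an independent standard Gaussian. The multivariate Hermite polynomials of total degree $k$ span the $k$-th Wiener chaos $\mathcal H_k$. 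These spaces are mutually orthogonal in $L^2$, and $P_s$ acts on $\mathcal H_k$ as multiplication by $e^{-ks}$.

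\emph{Step 1 (chaos decomposition).} Since $P_{\le n}$ has degree at most $n$, it expands in Hermite polynomials of degree at most $n$. Hence
\[
P_{\le n}=\sum_{k=0}^n f_k,\qquad f_k\in\mathcal H_k,\qquad \|P_{\le n}\|_{L^2}^2=\sum_{k=0}^n\|f_k\|_{L^2}^2 .
\]

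\emph{Step 2 (Nelson's inequality).} I will use that for $q\ge 2$ and $e^{2s}\ge q-1$,
\[
\|P_s g\|_{L^q}\le \|g\|_{L^2}\qquad\text{for all } g .
\]
I expect this to be the main obstacle, since it is the only non-algebraic ingredient. My first attempt would follow Gross. Set $q(s)=1+e^{2s}$ and differentiate $s\mapsto\log\|P_s g\|_{L^{q(s)}}$. The derivative is nonpositive exactly when the Gaussian log-Sobolev inequality
\[
\mathrm{Ent}(f^2)\le 2\,\mathbb E\|\nabla f\|_2^2
\]
holds, applied to $f=|P_s g|^{q(s)/2}$. The log-Sobolev inequality itself can be obtained from the Bakry--\'Emery argument, using $\nabla P_s f=e^{-s}P_s\nabla f$. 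If that route gets technical, a fallback is the two-point Bonami--Beckner inequality for Rademacher variables: tensorize it over many coordinates, then pass to the Gaussian limit by the central limit theorem. Both routes use only that the Gaussian measure is a product measure, so the dimension $d$ plays no role.

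\emph{Step 3 (assembly).} Choose $s$ with $e^{2s}=q-1$ and set $g=\sum_{k\le n}e^{ks}f_k$, so that $P_s g=P_{\le n}$. Then Step 2 and orthogonality of the chaoses give
\[
\|P_{\le n}\|_{L^q}\le\|g\|_{L^2}=\Bigl(\sum_{k\le n}e^{2ks}\|f_k\|_{L^2}^2\Bigr)^{1/2}\le e^{ns}\|P_{\le n}\|_{L^2}=(q-1)^{n/2}\|P_{\le n}\|_{L^2},
\]
which is the claimed bound. Everything in Steps 1 and 3 is routine; all of the difficulty lies in Step 2.
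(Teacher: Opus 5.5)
The paper gives no proof of this lemma; it imports it from Janson (Theorem~5.10). Your argument is correct and is the standard derivation of that result from Nelson's hypercontractivity of the Ornstein--Uhlenbeck semigroup: you write $P_{\le n}=P_s g$ with $g=\sum_{k\le n}e^{ks}f_k$ and $e^{2s}=q-1$, then use $e^{2ks}\le e^{2ns}$ on the chaos components. The only care needed is the routine technicality in Step~2: run Gross's differentiation argument first for smooth nonnegative $g$, reducing to this case via $|P_s g|\le P_s|g|$, and then extend by density.
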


\begin{theorem-restated}[\ref{theorem3}] \textnormal{(Asymptotic Properties of Parameter Updates, Detailed Version).}
Under \cref{asmp:main_regularity}, \ref{asmp:drift_rate} and \ref{asmp:moments}, the parameter update $\boldsymbol\Delta_{t,l}$ satisfies the following properties:
    \begin{enumerate}[label=(\alph*), leftmargin=*, topsep=0pt, itemsep=0pt, parsep=2pt, partopsep=0pt]
    \item \textbf{Bias Convergence:} There exists a constant $C_{\text{bias}}>0$, independent of $t$, such that 
    \[\mathbb{E}[\|\boldsymbol\Delta_{t,l}^{\text{bias}}\|_F^2] \le C_{\text{bias}} \cdot {\operatorname{MSE}_{t,l}^{(\boldsymbol\mu)}}.\] Consequently, the systematic bias component vanishes asymptotically as $\operatorname{MSE}_{t,l}^{(\boldsymbol\mu)}\!\to 0$.
    
    \item \textbf{Bounded Instance-Specific Component Norm:} There exists a constant $ U_{\text{spec}} < \infty$ such that 
    \[\mathbb{E}[\|\boldsymbol \Delta_{t,l}^{\text{spec}}\|_F^2] \le U_{\text{spec}}.\] This condition ensures that the magnitude of the instance-specific update component is well-conditioned: it is strictly bounded away from infinity (ensuring stability), effectively regularizing the effective step size.
    
    \item \textbf{Interference Mitigation:} For distinct steps $t\neq k$, let $\mathcal G_{t,k,l}:=\sum_{i=1}^{n_t}\sum_{j=1}^{n_k}\langle \boldsymbol{\phi}_{t,l}^i,\boldsymbol{\phi}_{k,l}^j\rangle_F$. Then the expected interference
        $\mathbb{E}[\langle \boldsymbol \Delta_{t,l},\boldsymbol\Delta_{k,l} \rangle_F]=\gamma^2\,\mathbb{E}\!\big[\tilde{\boldsymbol s}_{t,l}^\top \tilde{\boldsymbol s}_{k,l}\cdot \mathcal G_{t,k,l}\big] + o(1)$, where $\tilde{\boldsymbol s}_{t,l} := \mathbb{E}_{\mathcal{V}}[\tilde{\boldsymbol\zeta}_{t,l}^i|\mathcal{H}]$, $\mathcal H$ denotes the history of features $\{\tilde{\boldsymbol h}_{\tau,l}^i \}$ up to the current step $t$, $\mathcal V$ the corresponding gradients $\{ \tilde{\boldsymbol v}_{\tau,l}^i \}$, and $o(1)$ vanishes as $t,k\to\infty$. Specifically, the cross-term contributions arising from global mean shifts and stochastic noise vanish asymptotically.

    \end{enumerate}

\end{theorem-restated}

% \end{proof} 

\begin{proof}
The update rule at step $t$ is  explicitly given by
\begin{equation*}
    \boldsymbol\Delta_{t,l} = -\gamma \sum_{i=1}^{n_t} \tilde{\boldsymbol{v}}_{t,l}^i \boldsymbol{\phi}_{t,l}^i, \quad \text{where} \quad \boldsymbol{\phi}_{t,l}^i := \|\tilde{\boldsymbol{h}}_{t,l}^i\|_2^2 (\boldsymbol{h}_{t,l}^i)^\top \left(\sum_{j=1}^{n_t} \boldsymbol{h}_{t,l}^j (\boldsymbol{h}_{t,l}^j)^\top + \lambda\boldsymbol{I}\right)^{-1}.
\end{equation*}
Decomposing the gradient vector as $\boldsymbol v_{t,l}^i = \boldsymbol\mu_{t,l} + \boldsymbol\zeta_{t,l}^i$, we partition the update into the \emph{instance-specific component} and the 
\emph{systematic bias component}:
\begin{equation*}
    \begin{aligned}\label{eq:delta_decomp}
    \boldsymbol\Delta_{t,l} &= -\gamma \sum_{i=1}^{n_t} \widehat{\boldsymbol\Sigma}_{t,l}^{-1/2} (\boldsymbol\zeta_{t,l}^i+\boldsymbol\mu_{t,l} - \boldsymbol m_{t,l}) \boldsymbol \phi_{t,l}^i\\
    &= \underbrace{\left(-\gamma \sum_{i=1}^{n_t} \widehat{\boldsymbol\Sigma}_{t,l}^{-1/2} \boldsymbol\zeta_{t,l}^i \boldsymbol \phi_{t,l}^i\right)}_{\boldsymbol\Delta_{t,l}^{\text{spec}}} + \underbrace{\left(-\gamma  \widehat{\boldsymbol\Sigma}_{t,l}^{-1/2} (\boldsymbol\mu_{t,l} - \boldsymbol m_{t,l}) \sum_{i=1}^{n_t}\boldsymbol \phi_{t,l}^i\right)}_{\boldsymbol\Delta_{t,l}^{\text{bias}}}.
\end{aligned}
\end{equation*}

\textbf{Proof of Part (a): Bias Convergence.}

Taking the Frobenius norm of the systematic bias term $\boldsymbol\Delta_{t,l}^{\text{bias}}$ and applying the rank-1 matrix norm identity $\|\boldsymbol u\boldsymbol v^\top\|_F^2=\|\boldsymbol u\|_2^2\|\boldsymbol v\|_2^2$, we obtain
\[
\left\|\boldsymbol\Delta_{t,l}^{\text{bias}}\right\|_F^2=\gamma^2\left\|\widehat{\boldsymbol\Sigma}_{t,l}^{-1/2}
(\boldsymbol\mu_{t,l}-\boldsymbol m_{t,l})\right\|_2^2\left\|\sum_{i=1}^{n_t}\boldsymbol\phi_{t,l}^i\right\|_2^2 .
\]

Taking expectations and applying the Cauchy-Schwarz inequality yields
\[
\mathbb{E}\big[\|\boldsymbol\Delta_{t,l}^{\text{bias}}\|_F^2\big]\le\gamma^2\sqrt{ \mathbb{E} \left[ \left\|\sum_{i=1}^{n_t}\boldsymbol\phi_{t,l}^i\right\|_2^4 \right] }\,
\sqrt{ \mathbb{E} \left[ \left\| \widehat{\boldsymbol\Sigma}_{t,l}^{-1/2} (\boldsymbol\mu_{t,l}-\boldsymbol m_{t,l}) \right\|_2^4 \right] }.
\]

We analyze the two factors on the right-hand side separately. For the first term involving the projection vectors, we apply the inequality $\|\sum_{i=1}^n \boldsymbol{x}_i\|_2^4 \le n^3 \sum_{i=1}^n \|\boldsymbol{x}_i\|_2^4$ (derived from Jensen's inequality on the convex function $x^4$):
\[
\mathbb{E}\left[ \left\| \sum_{i=1}^{n_t}\boldsymbol\phi_{t,l}^i \right\|_2^4\right]
\le n_t^3 \sum_{i=1}^{n_t} \mathbb{E}\left[\left\|\boldsymbol\phi_{t,l}^i\right\|_2^4\right]
\le n_t^4 C_{\boldsymbol\phi},
\]
where the last inequality follows from \cref{asmp:moments}. Thus, the first square root term is bounded by $n_t^2 \sqrt{C_{\boldsymbol\phi}}$.

For the second term, 
we expand it using the sub-multiplicativity of the spectral norm and Cauchy-Schwarz again:
\[
\mathbb{E} \left[ \left\| \widehat{\boldsymbol\Sigma}_{t,l}^{-1/2} (\boldsymbol\mu_{t,l}-\boldsymbol m_{t,l}) \right\|_2^4\right]
\le \mathbb{E}\left[ 
\left\|\widehat{\boldsymbol\Sigma}_{t,l}^{-1}\right\|_2^2 \left\|\boldsymbol\mu_{t,l}-\boldsymbol m_{t,l}\right\|_2^4 \right]
\le \sqrt{\mathbb{E}
\left[
\left\|\widehat{\boldsymbol\Sigma}_{t,l}^{-1}\right\|_2^4\right]}\, \sqrt{\mathbb{E}\left[\left\|\boldsymbol\mu_{t,l}-\boldsymbol m_{t,l}\right\|_2^8\right]}.
\]
Under \cref{asmp:moments}, the first factor is bounded by $\sqrt{K_{\boldsymbol\Sigma}}$. 
To bound the second factor, define the estimation error vector at step $t$ as $\boldsymbol{e}_{t,l}:=\boldsymbol{m}_{t,l}-\boldsymbol{\mu}_{t,l}$, where we treat the (unknown) target moment $\boldsymbol{\mu}_{t,l}$ as fixed. Since $\boldsymbol{m}_{t,l}$ is a linear combination of Gaussian gradients, $\boldsymbol{e}_{t,l}$ follows a Gaussian distribution $\mathcal{N}(\boldsymbol{\eta}_{t,l}, \boldsymbol{V}_{t,l})$. 
We first control the fourth moment of the error norm $\mathbb{E}[\|\boldsymbol e_{t,l}\|_2^4]$. Leveraging the standard fourth moment identity for a Gaussian vector, we have $\mathbb{E}[\|\boldsymbol{e}_{t,l}\|_2^4] = (\operatorname{tr}\boldsymbol{V}_{t,l} + \|\boldsymbol{\eta}_{t,l}\|_2^2)^2 + 2\operatorname{tr}(\boldsymbol{V}_{t,l}^2) + 4\boldsymbol{\eta}_{t,l}^\top \boldsymbol{V}_{t,l} \boldsymbol{\eta}_{t,l}$.
Recognizing that $\operatorname{MSE}_{t,l}^{(\boldsymbol\mu)} = \operatorname{tr}\boldsymbol{V}_{t,l} + \|\boldsymbol{\eta}_{t,l}\|_2^2$, and applying the inequalities $\operatorname{tr}(\boldsymbol{V}_{t,l}^2) \le (\operatorname{tr}\boldsymbol{V}_{t,l})^2$ and $\boldsymbol{\eta}_{t,l}^\top \boldsymbol{V}_{t,l} \boldsymbol{\eta}_{t,l} \le \|\boldsymbol{\eta}_{t,l}\|_2^2 \operatorname{tr}\boldsymbol{V}_{t,l} \le \frac{1}{2}(\|\boldsymbol{\eta}_{t,l}\|_2^4 + (\operatorname{tr}\boldsymbol{V}_{t,l})^2)$, we obtain
\[
\mathbb{E}[\|\boldsymbol{e}_{t,l}\|_2^4] \le (\operatorname{MSE}_{t,l}^{(\boldsymbol\mu)})^2 + 2(\operatorname{MSE}_{t,l}^{(\boldsymbol\mu)})^2 + 2(\operatorname{MSE}_{t,l}^{(\boldsymbol\mu)})^2 = 5 (\operatorname{MSE}_{t,l}^{(\boldsymbol\mu)})^2.
\]

Next, we bound the eighth moment $\mathbb{E}[\|\boldsymbol{e}_{t,l}\|_2^8]$. Since $\boldsymbol{e}_{t,l}$ is a Gaussian vector, it can be represented as an affine transformation of a standard Gaussian vector. Let $\boldsymbol{e}_{t,l} =\boldsymbol{V}_{t,l}^{1/2}\boldsymbol{z}+\boldsymbol \eta_{t,l}$, where $\boldsymbol{z} \sim \mathcal{N}(\boldsymbol{0},\boldsymbol{I}_d)$. Consider a polynomial of degree 2 in the standard Gaussian vector $P_{\le2}(\boldsymbol z)=\|\boldsymbol{e}_{t,l}\|_2^2 = \|\boldsymbol{V}_{t,l}^{1/2}\boldsymbol{z}+\boldsymbol \eta_{t,l}\|_2^2$, we apply the Hypercontractivity inequality (\cref{lem:hypercontractivity}) with $q=4$:
\[
 \|P_{\le 2}\|_{L^4} \le (4-1)^{2/2} \|P_{\le 2}\|_{L^2} = 3 \|P_{\le 2}\|_{L^2}.
\]
Converting back to expectations:
\[
(\mathbb{E}[\|\boldsymbol{e}_{t,l}\|_2^8])^{1/4} \le 3 (\mathbb{E}[\|\boldsymbol{e}_{t,l}\|_2^4])^{1/2} \implies \mathbb{E}[\|\boldsymbol{e}_{t,l}\|_2^8] \le 81 (\mathbb{E}[\|\boldsymbol{e}_{t,l}\|_2^4])^2.
\]
Substituting the bound for the fourth moment:
\[
    \sqrt{\mathbb{E}[\|\boldsymbol{e}_{t,l}\|_2^8]} \le 9 \cdot \mathbb{E}[\|\boldsymbol{e}_{t,l}\|_2^4] \le 9 \cdot 5 (\operatorname{MSE}_{t,l}^{(\boldsymbol\mu)})^2 = 45 (\operatorname{MSE}_{t,l}^{(\boldsymbol\mu)})^2.
\]
Combining all terms, we obtain the final bound for the bias term:
\[
\mathbb{E}\big[\|\boldsymbol\Delta_{t,l}^{\text{bias}}\|_F^2\big] \le \gamma^2 (n_t^2 \sqrt{C_{\boldsymbol \phi}}) (K_\Sigma)^{1/4} (45 (\operatorname{MSE}_{t,l}^{(\boldsymbol\mu)})^2)^{1/2}.
\]
Defining $C_{\text{bias}} := 3\gamma^2 n_t^2\sqrt{5C_{\boldsymbol\phi}} (K_{\boldsymbol\Sigma})^{1/4} $ concludes the proof for part (a).

\textbf{Proof of Part (b): Bounded Instance-Specific Component Norm.}

We analyze the expected squared Frobenius norm of the instance-specific component $\boldsymbol\Delta_{t,l}^{\text{spec}} = -\gamma \sum_{i=1}^{n_t} \tilde{\boldsymbol\zeta}_{t,l}^i \boldsymbol \phi_{t,l}^i$, where $\tilde{\boldsymbol{\zeta}}_{t,l}^i := \widehat{\boldsymbol\Sigma}_{t,l}^{-1/2} \boldsymbol\zeta_{t,l}^i$.
To establish the stability bound $U_{\text{spec}}$, we bound the expected magnitude $\mathbb{E}[\|\boldsymbol\Delta_{t,l}^{\text{spec}}\|_F^2]$. Applying the Cauchy-Schwarz inequality to the expectation separates the whitened gradient from the projection factor:
\[
\mathbb{E}\left[\left\|\boldsymbol{\Delta}_{t,l}^{\text{spec}}\right\|_{F}^2\right]
    \le
    \gamma^2 n_t \sum_{i=1}^{n_{t}} \mathbb{E}\left[ \left\| \tilde{\boldsymbol{\zeta}}_{t,l}^{i} \right\|_2^2\left\|\boldsymbol{\phi}_{t,l}^{i} \right\|_{2}^2 \right]
    \le
    \gamma^2 n_t \sum_{i=1}^{n_{t}}{\sqrt{ \mathbb{E}\left[ \left\| \tilde{\boldsymbol{\zeta}}_{t,l}^{i} \right\|_{2}^{4} \right] 
     \mathbb{E}\left[ \left\| \boldsymbol{\phi}_{t,l}^{i} \right\|_{2}^{4} \right] }}.
\]
The projection factor $\mathbb{E}[ \| \boldsymbol{\phi}_{t,l}^{i} \|_{2}^{4} ]\le{C_{\boldsymbol\phi}}$ is bounded by \cref{asmp:moments}. For the whitened vector $\tilde{\boldsymbol{\zeta}}_{t,l}^i = \widehat{\boldsymbol{\Sigma
}}_{t,l}^{-1/2}\boldsymbol{\zeta}_{t,l}^i$, we bound its fourth moment. Using $\|\boldsymbol{A}\boldsymbol{x}\|_2 \le \|\boldsymbol{A}\|_2 \|\boldsymbol{x}\|_2$ and Cauchy-Schwarz:
\[
    \mathbb{E}[\|\tilde{\boldsymbol{\zeta}}_{t,l}^{i}\|_2^4] \le \mathbb{E}[\|\widehat{\boldsymbol{\Sigma}}_{t,l}^{-1}\|_2^2 \|\boldsymbol{\zeta}_{t,l}^{i}\|_2^4] \le \sqrt{\mathbb{E}[\|\widehat{\boldsymbol{\Sigma}}_{t,l}^{-1}\|_2^4]}\sqrt{ \mathbb{E}[\|\boldsymbol{\zeta}_{t,l}^{i}\|_2^8]}.
\]
The first term is bounded by $\sqrt{K_{\boldsymbol\Sigma}}$ (\cref{asmp:moments}). For the second term, we recall that $\boldsymbol{\zeta}_{t,l}^i = \boldsymbol{v}_{t,l}^i - \boldsymbol{\mu}_{t,l}$ follows a zero-mean Gaussian distribution $\mathcal{N}(\boldsymbol{0}, \boldsymbol{\Sigma}_{t,l})$. The fourth moment of its Euclidean norm satisfies the standard identity:
\[
    \mathbb{E}[\|\boldsymbol{\zeta}_{t,l}^i\|_2^4] = (\operatorname{tr}(\boldsymbol{\Sigma}_{t,l}))^2 + 2\operatorname{tr}(\boldsymbol{\Sigma}_{t,l}^2).
\]
Under \cref{asmp:main_regularity}(b), the eigenvalues of $\boldsymbol{\Sigma}_{t,l}$ are bounded by $\sigma_+$. Consequently, $\operatorname{tr}(\boldsymbol{\Sigma}_{t,l}) \le d\sigma_+$ and $\operatorname{tr}(\boldsymbol{\Sigma}_{t,l}^2) \le (\operatorname{tr}(\boldsymbol{\Sigma}_{t,l}))^2 \le (d\sigma_+)^2$. Substituting these bounds yields $\mathbb{E}[\|\boldsymbol{\zeta}_{t,l}^i\|_2^4] \le 3(d\sigma_+)^2$. 

Next, to bound the eighth moment, we apply the Hypercontractivity inequality (\cref{lem:hypercontractivity}) with $q=4$ to the polynomial of degree 2 in the standard Gaussian vector $Q_{\le 2}(\boldsymbol{z}) = \|\boldsymbol{\zeta}_{t,l}\|_2^2= \|\boldsymbol{\Sigma
}_{t,l}^{1/2}\boldsymbol z\|_2^2$:
\[
    \|Q_{\le 2}\|_{L^4} \le 3 \|Q_{\le 2}\|_{L^2} \implies \mathbb{E}[\|\boldsymbol{\zeta}_{t,l}^i\|_2^8] \le 81 (\mathbb{E}[\|\boldsymbol{\zeta}_{t,l}^i\|_2^4])^2 \le 81 (3(d\sigma_+)^2)^2.
\]
Since all terms are finite, the total signal energy is bounded by a constant $U_{\text{spec}}$.
 Combining these results, the expected update magnitude is bounded by
\[
\mathbb{E}\left[\left\|\boldsymbol\Delta_{t,l}^{\text{spec}}\right\|_F^2\right] \le 3\gamma^2 n_t^2 \sqrt{C_{\boldsymbol\phi}} (K_{\boldsymbol\Sigma})^{1/4} (9^{1/4} {d\sigma_+}).
\]
Defining $U_{\text{spec}}= 3\gamma^2 n_t^2 \sqrt{C_{\boldsymbol\phi}} (9K_{\boldsymbol\Sigma})^{1/4}  {d\sigma_+}< \infty$ completes the proof.

\textbf{Proof of Part (c): Interference Mitigation.}
Let $\mathcal{I}_{tk} := \mathbb{E} [\langle \boldsymbol\Delta_{t,l}, \boldsymbol\Delta_{k,l} \rangle_F]=\mathbb{E}[\operatorname{tr}(\boldsymbol\Delta_{t,l}^\top \boldsymbol\Delta_{k,l})]$ denote the expected inner product of updates from two distinct edit steps $t \neq k$. Decomposing the updates into instance-specific and bias components via $\boldsymbol \Delta = \boldsymbol \Delta^{\text{spec}} + \boldsymbol\Delta^{\text{bias}}$, we expand the inner product by bilinearity:

\begin{equation*}
\begin{aligned}
    \mathcal{I}_{tk} 
    &= \mathbb{E} \left[ \langle \boldsymbol\Delta_{t,l}^{\text{spec}}, \boldsymbol\Delta_{k,l}^{\text{spec}} \rangle_F \right] + \underbrace{\mathbb{E} \left[ \langle \boldsymbol\Delta_{t,l}^{\text{spec}}, \boldsymbol\Delta_{k,l}^{\text{bias}} \rangle_F \right] + {\mathbb{E} \left[ \langle \boldsymbol\Delta_{t,l}^{\text{bias}}, \boldsymbol\Delta_{k,l}^{\text{spec}} \rangle_F \right]} + {\mathbb{E} \left[ \langle \boldsymbol\Delta_{t,l}^{\text{bias}}, \boldsymbol\Delta_{k,l}^{\text{bias}} \rangle_F \right]}}_{\mathcal{R}_{tk} \to 0}.
\end{aligned}
\end{equation*}

The residual term $\mathcal{R}_{tk}$ accounts for bias interactions. We apply the Cauchy-Schwarz inequality ($\mathbb{E}[|\langle \boldsymbol{A}, \boldsymbol{B} \rangle|_F] \le \sqrt{\mathbb{E}\|\boldsymbol{A}\|_F^2}\sqrt{\mathbb{E}\|\boldsymbol{B}\|_F^2}$) to bound the magnitude of these cross-terms. Since $\mathbb{E}[\|\boldsymbol\Delta_{j,l}^{\text{bias}}\|_F^2] \to 0$ asymptotically (as shown in Part (a)) and $\mathbb{E}[\|\boldsymbol\Delta_{j,l}^{\text{spec}}\|_F^2]$ is uniformly bounded by $U_{\text{spec}}$ (from Part (b)) for any $j \in \{t,k\}$, we conclude that $\mathcal{R}_{tk} \to 0$ as $t, k \to \infty$.

We now analyze the dominant characteristic interaction term. Decomposing the whitened gradient as $\tilde{\boldsymbol{\zeta}}_{t,l}^i = \tilde{\boldsymbol{s}}_{t,l} + \tilde{\boldsymbol{\epsilon}}_{t,l}^i$, where $\mathbb{E}_{\mathcal{V}}[\tilde{\boldsymbol{\epsilon}}_{t,l}^i|\mathcal{H}] = \boldsymbol 0$, we have
\begin{equation*}
\begin{aligned}
    \mathbb{E} \left[ \langle \boldsymbol \Delta_{t,l}^{\text{spec}}, \boldsymbol\Delta_{k,l}^{\text{spec}} \rangle_F \right]
    &= \gamma^2 \mathbb{E} \left[ \left\langle \sum_i \tilde{\boldsymbol \zeta}_{t,l}^i \boldsymbol \phi_{t,l}^i, \sum_j \tilde{\boldsymbol \zeta}_{k,l}^j \boldsymbol \phi_{k,l}^j \right\rangle_F \right] \\
    &= \gamma^2 \mathbb{E} \left[ \left\langle \sum_i (\tilde{\boldsymbol{s}}_{t,l} + \tilde{\boldsymbol \epsilon}_{t,l}^i) \boldsymbol \phi_{t,l}^i, \sum_j (\tilde{\boldsymbol{s}}_{k,l} + \tilde{\boldsymbol\epsilon}_{k,l}^j) \boldsymbol \phi_{k,l}^j \right\rangle_F \right] \\
    &= \gamma^2 \mathbb{E} \left[ \left\langle \sum_i \tilde{\boldsymbol{s}}_{t,l} \boldsymbol \phi_{t,l}^i, \sum_j \tilde{\boldsymbol{s}}_{k,l} \boldsymbol \phi_{k,l}^j \right\rangle_F \right] + \underbrace{\mathcal{E}_{\text{cross}}}_{= 0}.
\end{aligned}
\end{equation*}
The cross-term $\mathcal{E}_{\text{cross}}$ involves interactions with the noise components $\tilde{\boldsymbol{\epsilon}}$. For distinct steps $t$ and $k$, the stochastic noise realizations are conditionally independent. Applying the law of iterated expectations conditioning on $\mathcal{H}$:
\[
\mathbb{E}_{\mathcal{H}, \mathcal{V}}[\tilde{\boldsymbol{\epsilon}}_{t,l}^i\boldsymbol{\phi}_{t,l}^i] = \mathbb{E}_{\mathcal{H}}[\mathbb{E}_{\mathcal{V}}[\tilde{\boldsymbol{\epsilon}}_{t,l}^i|\mathcal{H}]\boldsymbol{\phi}_{t,l}^i] = \boldsymbol{0}.
\]
Consequently, all linear cross-terms in $\mathcal{E}_{\text{cross}}$ vanish. The inner product reduces to the correlation of the characteristic update directions. Using the trace property $\langle \boldsymbol A, \boldsymbol B \rangle_F = \operatorname{tr}(\boldsymbol A^\top \boldsymbol B)$:
\begin{equation*}\label{eq:true_signal_correlation}
    \lim_{t,k \to \infty} \mathcal{I}_{tk} = \gamma^2 \mathbb{E} \left[ (\tilde{\boldsymbol{s}}_{t,l})^\top \tilde{\boldsymbol{s}}_{k,l}\sum_{i=1}^{n_t} \sum_{j=1}^{n_k}\boldsymbol \phi_{k,l}^j(\boldsymbol \phi_{t,l}^i)^\top  \right].
\end{equation*}
This confirms that spurious interference from systematic bias and stochastic noise are eliminated. The asymptotic interference is determined solely by the geometric alignment of the characteristic directions and input features.

\end{proof}

\begin{corollary-restated}[\ref{cor:bounded_drift}] \textnormal{(Bounded Update Magnitude, Detailed Version).}
As a direct consequence of \cref{theorem3}(a) and (b), the total parameter update $\boldsymbol{\Delta}_{t,l}$ satisfies the following second-moment bound:
\[
    \mathbb{E}[\|\boldsymbol{\Delta}_{t,l}\|_F^2] \le 2 C_{\text{bias}} \cdot \operatorname{MSE}_{t,l}^{(\boldsymbol\mu)} + 2 U_{\text{spec}}.
\]
Since $\operatorname{MSE}_{t,l}^{(\boldsymbol\mu)}$ is non-increasing and vanishes asymptotically, the update magnitude is uniformly bounded by the initial state:
\[
    \sup_{t \ge 1} \mathbb{E}[\|\boldsymbol{\Delta}_{t,l}\|_F^2] \le C_{\boldsymbol W},
\]
where $C_{\boldsymbol W}:= 2 C_{\text{bias}} \cdot \operatorname{MSE}_{1,l}^{(\boldsymbol\mu)} + 2 U_{\text{spec}}$.
This explicit, time-independent upper bound confirms that the parameter drift does not explode but is naturally constrained by the initial uncertainty, which means the normalization mechanism internally enforces stability.
\end{corollary-restated}

\begin{corollary}[Invariance under Lipschitz Transformations]
\label{cor:lipschitz_invariance}
Suppose that a global non-linear transformation operator $\mathcal{F}: \mathbb{R}^{d \times n} \to \mathbb{R}^{d \times n}$ is applied to the entire batch of whitened gradients $\tilde{\boldsymbol{V}}_{t,l}=[\tilde{\boldsymbol{v}}_{t,l}^1,...,\tilde{\boldsymbol{v}}_{t,l}^{n_t}]$ before the parameter update computation. The resulting update is $\boldsymbol{\Delta}'_{t,l} = -\gamma \mathcal{F}(\tilde{\boldsymbol{V}}_{t,l}) \boldsymbol{\Phi}_{t,l}^\top$, where $\boldsymbol{\Phi}_{t,l} = [\boldsymbol{\phi}_{t,l}^1, \dots, \boldsymbol{\phi}_{t,l}^{n_t}]$. If $\mathcal{F}$ is $L_{\mathcal{F}}$-Lipschitz continuous with respect to the Frobenius norm, then the asymptotic properties of Bias Decay, Bounded Update Magnitude, and Interference Mitigation established in \cref{theorem3} are preserved.
\end{corollary}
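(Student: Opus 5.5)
The plan is to reduce every property to the corresponding estimate in the proof of \cref{theorem3}. The only new ingredient is a linear-growth bound on $\mathcal F$ together with a perturbation bound separating the mean-error contribution from the instance-specific one. First I decompose the transformed update exactly as before. Write $\tilde{\boldsymbol V}_{t,l}=\tilde{\boldsymbol Z}_{t,l}+\boldsymbol B_{t,l}$, where $\tilde{\boldsymbol Z}_{t,l}=[\tilde{\boldsymbol\zeta}_{t,l}^1,\dots,\tilde{\boldsymbol\zeta}_{t,l}^{n_t}]$ and $\boldsymbol B_{t,l}=\widehat{\boldsymbol\Sigma}_{t,l}^{-1/2}(\boldsymbol\mu_{t,l}-\boldsymbol m_{t,l})\mathbf 1^\top$. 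Then define
\[
\boldsymbol\Delta'^{\text{spec}}_{t,l}:=-\gamma\mathcal F(\tilde{\boldsymbol Z}_{t,l})\boldsymbol\Phi_{t,l}^\top,\qquad \boldsymbol\Delta'^{\text{bias}}_{t,l}:=-\gamma\bigl(\mathcal F(\tilde{\boldsymbol V}_{t,l})-\mathcal F(\tilde{\boldsymbol Z}_{t,l})\bigr)\boldsymbol\Phi_{t,l}^\top .
\]
Lipschitz continuity gives $\|\mathcal F(\tilde{\boldsymbol V})-\mathcal F(\tilde{\boldsymbol Z})\|_F\le L_{\mathcal F}\|\boldsymbol B\|_F=L_{\mathcal F}\sqrt{n_t}\,\|\widehat{\boldsymbol\Sigma}^{-1/2}(\boldsymbol\mu-\boldsymbol m)\|_2$. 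It also gives $\|\mathcal F(\tilde{\boldsymbol Z})\|_F\le \|\mathcal F(\mathbf 0)\|_F+L_{\mathcal F}\|\tilde{\boldsymbol Z}\|_F$.

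For bias decay, I use $\|\boldsymbol A\boldsymbol\Phi^\top\|_F\le\|\boldsymbol A\|_F\|\boldsymbol\Phi\|_F$ and then Cauchy--Schwarz in expectation. This gives $\mathbb E\|\boldsymbol\Delta'^{\text{bias}}\|_F^2\le\gamma^2L_{\mathcal F}^2n_t\sqrt{\mathbb E\|\boldsymbol\Phi\|_F^4}\sqrt{\mathbb E\|\widehat{\boldsymbol\Sigma}^{-1/2}(\boldsymbol\mu-\boldsymbol m)\|_2^4}$. The first factor is controlled by \cref{asmp:moments} via $\|\boldsymbol\Phi\|_F^4\le n_t\sum_i\|\boldsymbol\phi^i\|_2^4$. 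The second factor is exactly the quantity already bounded in part (a) of \cref{theorem3} by $(K_{\boldsymbol\Sigma})^{1/4}\sqrt{45}\,\operatorname{MSE}^{(\boldsymbol\mu)}_{t,l}$, through Gaussianity of $\boldsymbol e_{t,l}$ and hypercontractivity (\cref{lem:hypercontractivity}). Together these give $C'_{\text{bias}}=L_{\mathcal F}^2\,n_t^{\,c}\,C_{\text{bias}}$-type constants.

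For the bounded norm, I use $(a+b)^2\le 2a^2+2b^2$. The $\|\mathcal F(\mathbf 0)\|_F$ part contributes a constant times $\sqrt{\mathbb E\|\boldsymbol\Phi\|_F^4}$. The $L_{\mathcal F}\|\tilde{\boldsymbol Z}\|_F$ part needs a fourth moment of $\|\tilde{\boldsymbol Z}\|_F$, which reduces to $\mathbb E\|\tilde{\boldsymbol\zeta}^i\|_2^4$. That moment was bounded in part (b) by $\sqrt{K_{\boldsymbol\Sigma}}\cdot 9\cdot 3(d\sigma_+)^2$. Hence $U'_{\text{spec}}<\infty$, and Corollary~\ref{cor:bounded_drift} follows verbatim.

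For interference, I expand $\langle\boldsymbol\Delta'_t,\boldsymbol\Delta'_k\rangle_F$ bilinearly. The three terms involving a bias piece are bounded by Cauchy--Schwarz using the two previous parts, so they vanish as $t,k\to\infty$ because $\operatorname{MSE}\to0$ by \cref{thm:mean_mse_l}. For the main term, I define $\tilde{\boldsymbol s}'_{t,l}$ as the conditional mean $\mathbb E_{\mathcal V}[\mathcal F(\tilde{\boldsymbol Z}_{t,l})\mid\mathcal H]$, columnwise, and split $\mathcal F(\tilde{\boldsymbol Z})$ into this mean plus a conditionally mean-zero fluctuation. The cross terms then vanish by the same iterated-expectation and conditional-independence argument across distinct steps. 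The step I expect to be the main obstacle is this one. A nonlinear $\mathcal F$ mixes columns, so the characteristic direction is no longer a single vector shared by all $i$, and the leading term becomes $\gamma^2\mathbb E[\sum_{i,j}(\tilde{\boldsymbol s}'^{\,i}_{t})^\top\tilde{\boldsymbol s}'^{\,j}_{k}\langle\boldsymbol\phi_t^i,\boldsymbol\phi_k^j\rangle]$ rather than a factorized $\tilde{\boldsymbol s}^\top\tilde{\boldsymbol s}\cdot\mathcal G$. I would state preservation in this generalized form, noting that the conditional independence needed depends only on the per-step sampling, which $\mathcal F$ does not affect. It reduces to the original form when $\mathcal F$ acts identically on columns.
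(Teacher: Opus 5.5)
Your proposal is correct and matches the paper's proof step for step. The paper adds and subtracts $\mathcal F(\tilde{\boldsymbol Z}_{t,l})$ to get the same split, and uses the Lipschitz bound with Cauchy--Schwarz and the moment estimates of \cref{theorem3}(a) for bias decay; your constant is exactly the paper's $L_{\mathcal F}^2 C_{\text{bias}}$, so your $n_t^{c}$ factor is just $n_t^{0}$. It also uses the linear-growth bound $\|\mathcal F(\tilde{\boldsymbol Z}_{t,l})\|_F\le L_{\mathcal F}\|\tilde{\boldsymbol Z}_{t,l}\|_F+\|\mathcal F(\mathbf 0)\|_F$ with \cref{theorem3}(b) for the norm, and Cauchy--Schwarz to remove the bias cross terms in the interference.

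The only difference is that you write the leading interference term explicitly in the per-column form $\gamma^2\mathbb E\bigl[\sum_{i,j}(\tilde{\boldsymbol s}'^{\,i}_{t})^\top\tilde{\boldsymbol s}'^{\,j}_{k}\langle\boldsymbol\phi_{t,l}^i,\boldsymbol\phi_{k,l}^j\rangle\bigr]$. The paper only asserts that the remaining interference is governed by the transformed instance-specific component. Your form is the more precise reading of ``preserved'' for a column-mixing $\mathcal F$, and it rests on the same cross-step conditional-independence assumption the paper already uses in \cref{theorem3}(c).
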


\begin{proof}
We analyze the transformed update $\boldsymbol{\Delta}'_{t,l}$ at the matrix level. Let $\tilde{\boldsymbol{V}}_{t,l} = \tilde{\boldsymbol{D}}_{t,l} + \tilde{\boldsymbol{Z}}_{t,l}$, where $\boldsymbol{D}_{t,l} = [\tilde{\boldsymbol{e}}_{t,l}, \dots, \tilde{\boldsymbol{e}}_{t,l}]$ is the rank-1 matrix of the estimation error ($\tilde{\boldsymbol{e}}_{t,l} = \widehat{\boldsymbol{\Sigma}}_{t,l}^{-1/2}(\boldsymbol{\mu}_{t,l} - \boldsymbol{m}_{t,l})$) and $\tilde{\boldsymbol{Z}}_{t,l} = [\tilde{\boldsymbol{\zeta}}_{t,l}^1, \dots, \tilde{\boldsymbol{\zeta}}_{t,l}^{n_t}]$ is the matrix of whitened vectors.
We decompose the update by adding and subtracting the term $\mathcal{F}(\tilde{\boldsymbol{Z}}_{t,l})$:
\[
    \boldsymbol{\Delta}'_{t,l} = \underbrace{\left(-\gamma \mathcal{F}(\tilde{\boldsymbol{Z}}_{t,l}) \boldsymbol{\Phi}_{t,l}^\top\right)}_{\boldsymbol{\Delta}'^{\text{signal}}_{t,l}} + \underbrace{\left(-\gamma (\mathcal{F}(\tilde{\boldsymbol{V}}_{t,l}) - \mathcal{F}(\tilde{\boldsymbol{Z}}_{t,l})) \boldsymbol{\Phi}_{t,l}^\top\right)}_{\boldsymbol{\Delta}'^{\text{bias}}_{t,l}}.
\]

\textbf{(a) Bias Decay Preservation:} By the Lipschitz property of $\mathcal{F}$ on the Frobenius norm, we have
\[
    \|\mathcal{F}(\tilde{\boldsymbol{V}}_{t,l}) - \mathcal{F}(\tilde{\boldsymbol{Z}}_{t,l})\|_F \le L_{\mathcal{F}} \|\tilde{\boldsymbol{V}}_{t,l} - \tilde{\boldsymbol{Z}}_{t,l}\|_F = L_{\mathcal{F}} \|\tilde{\boldsymbol{D}}_{t,l}\|_F.
\]
The squared Frobenius norm of the transformed bias is bounded by $\|\boldsymbol{\Delta}'^{\text{bias}}_{t,l}\|_F^2 \le \gamma^2 L_{\mathcal{F}}^2 \|\tilde{\boldsymbol{D}}_{t,l}\|_F^2 \|\boldsymbol{\Phi}_{t,l}\|_F^2$.
Taking expectations and using Cauchy-Schwarz:
\[
    \mathbb{E}[\|\boldsymbol{\Delta}'^{\text{bias}}_{t,l}\|_F^2] \le \gamma^2 L_{\mathcal{F}}^2 \sqrt{\mathbb{E}[\|\boldsymbol{\Phi}_{t,l}\|_F^4]}\sqrt{\mathbb{E}[\|\tilde{\boldsymbol{D}}_{t,l}\|_F^4]} .
\]
The projection term $\mathbb{E}[\|\boldsymbol{\Phi}_{t,l}\|_F^4] = \mathbb{E}[(\sum_i \|\boldsymbol{\phi}_{t,l}^i\|_2^2)^2] \le n_t \sum_i \mathbb{E}[\|\boldsymbol{\phi}_{t,l}^i\|_2^4] \le n_t^2 C_{\boldsymbol \phi}$ is bounded under \cref{asmp:moments}.
For the second term, $\|\tilde{\boldsymbol{D}}_{t,l}\|_F^4 = n_t^2 \|\tilde{\boldsymbol{e}}_{t,l}\|_2^4$. Following the same logic in Theorem \ref{theorem3}(a), we derived that $\mathbb{E}[\|\tilde{\boldsymbol{e}}_{t,l}\|_2^4]$ is controlled by the eighth moment of the estimation error and fourth moment of the inverse covariance. Consequently, we obtain the bound for the bias term:
\[
\mathbb{E}[\|\boldsymbol{\Delta}'^{\text{bias}}_{t,l}\|_F^2] \leq 3\gamma^2L_{\mathcal{F}}^2 n_t^2 (K_{\boldsymbol \Sigma
})^{1/4} \sqrt{5C_{\boldsymbol \phi}}\operatorname{MSE}_{t,l}^{(\boldsymbol{\mu})}.\]
Since $\operatorname{MSE}_{t,l}^{(\boldsymbol\mu)} \to 0$, the systematic bias component vanishes asymptotically.

\textbf{(b) Bounded Instance-Specific Norm Preservation:} We bound the expected norm of the transformed whitened gradient $\mathbb{E}[\|\boldsymbol{\Delta}'^{\text{spec}}_{t,l}\|_F^2]$ by Cauchy-Schwarz:
\[
    \mathbb{E}[\|\boldsymbol{\Delta}'^{\text{spec}}_{t,l}\|_F^2] \le \gamma^2 \sqrt{\mathbb{E}[\|\mathcal{F}(\tilde{\boldsymbol{Z}}_{t,l})\|_F^4] \mathbb{E}[\|\boldsymbol{\Phi}_{t,l}\|_F^4]}.
\]
The projection term $\mathbb{E}[\|\boldsymbol{\Phi}_{t,l}\|_F^4]$ is bounded by $n_t^2 C_{\boldsymbol{\phi}}$ under \cref{asmp:moments}.
To verify $\mathbb{E}[\|\mathcal{F}(\tilde{\boldsymbol{Z}}_{t,l})\|_F^4]$ is finite, we use the linear growth condition that the instance-specific component satisfies $\|\mathcal{F}(\tilde{\boldsymbol{Z}}_{t,l})\|_F \le L_{\mathcal{F}} \|\tilde{\boldsymbol{Z}}_{t,l}\|_F + \|\mathcal{F}(\mathbf{0})\|_F$. Thus, we have
\[
    \|\mathcal{F}(\tilde{\boldsymbol{Z}}_{t,l})\|_F^4 \le 8 L_{\mathcal{F}}^4 \|\tilde{\boldsymbol{Z}}_{t,l}\|_F^4 + 8 \|\mathcal{F}(\mathbf{0})\|_F^4.
\]
Expanding the whitened gradient norm: $\|\tilde{\boldsymbol{Z}}_{t,l}\|_F^4 = (\sum_{i=1}^{n_t} \|\tilde{\boldsymbol{\zeta}}_{t,l}^i\|_2^2)^2 \le n_t \sum_{i=1}^{n_t} \|\tilde{\boldsymbol{\zeta}}_{t,l}^i\|_2^4$.
In \cref{theorem3}(b), we proved that $\mathbb{E}[\|\tilde{\boldsymbol{\zeta}}_{t,l}^i\|_2^4]$ is uniformly bounded by constants depending on $d, \sigma_+, K_{\boldsymbol\Sigma}$.
Thus, $\mathbb{E}[\|\boldsymbol{\Delta}'^{\text{signal}}_{t,l}\|_F^2]$ is explicitly bounded by a finite constant $U'_{\text{spec}}$:
\[
    \mathbb{E}[\|\boldsymbol{\Delta}'^{\text{signal}}_{t,l}\|_F^2]
    \le \gamma^2 n_t \sqrt{C_\phi} \sqrt{8 L_{\mathcal{F}}^4 n_t^2 \mathbb{E}[\|\tilde{\boldsymbol{\zeta}}_{t,l}^i\|_2^4] + 8 \|\mathcal{F}(\mathbf{0})\|_F^4} < \infty.
\]

\textbf{(c) Interference Mitigation:}
Using the same decomposition as in \cref{theorem3}(c), the bias interaction terms $\mathcal{R}_{tk}$ are bounded by products of the form $\sqrt{\mathbb{E}[\|\boldsymbol{\Delta}'^{\text{bias}}_{t,l}\|_F^2] \cdot \mathbb{E}[\|\boldsymbol{\Delta}'^{\text{spec}}_{k,l}\|_F^2]}$ or $\sqrt{\mathbb{E}[\|\boldsymbol{\Delta}'^{\text{bias}}_{t,l}\|_F^2] \cdot \mathbb{E}[\|\boldsymbol{\Delta}'^{\text{bias}}_{k,l}\|_F^2]}$.
Since the transformed bias norm decays to zero (from point a) and the magnitude of the transformed instance-specific term is bounded (from point b), these cross-terms $\mathcal{R}_{tk}$ vanish asymptotically. The remaining interference is determined solely by the correlation of the transformed instance-specific component, effectively eliminating systematic drift and stochastic noise.
\end{proof}

\begin{remark}[Robustness under Non-linear Architectures]
    \cref{cor:lipschitz_invariance} broadens the scope of our theoretical guarantees to scenarios where gradients undergo further non-linear transformations. 
    This result applies not only to component-wise activations (e.g., ReLU or GeLU), but also to operations that mix features or samples, such as linear projections and residual connections.
    As long as the subsequent transformation is Lipschitz continuous—a condition satisfied by standard network layers with bounded operator norms—the normalization
    mechanism preserves bias decay and asymptotic orthogonality of the parameter updates,
    independent of the subsequent network architecture.
\end{remark}

\begin{remark}[Comparative Analysis: The Necessity of Normalization]
\label{remark:necessity_LN}
    To rigorously justify the design of normalization mechanism, we formulate the optimization as a Ridge Regression problem: $\min_{\boldsymbol \Delta_{t,l}} \|\boldsymbol\Delta_{t,l}\boldsymbol{H}_{t,l} - \boldsymbol{V}_{t,l}\|_F^2 + \lambda\|\boldsymbol\Delta_{t,l}\|_F^2$. We compare the update dynamics of the \textbf{Baseline (Raw Gradient)} versus \textbf{Ours (Normalized)}, distinguished by their target construction $\boldsymbol{V}_{t,l}$. With the decomposition $\boldsymbol \zeta_{t,l}^i =\boldsymbol v_{t,l}^i- \boldsymbol\mu_{t,l}$, the updates are
   
    \begin{align*}
        \text{\textbf{Baseline:}} \quad \boldsymbol \Delta_{t,l}^{\text{raw}} &= -\gamma \sum_{i}  (\underbrace{\boldsymbol\mu_{t,l}}_{\text{Global Mean}} + \underbrace{\boldsymbol\zeta_{t,l}^i}_{\text{Centered Gradient}})\boldsymbol\phi_{t,l}^i, \\
        \text{\textbf{Ours:}} \quad \boldsymbol\Delta_{t,l}^{\text{norm}} &= -\gamma \sum_{i} \widehat{\boldsymbol\Sigma}_{t,l}^{-1/2} (\underbrace{\boldsymbol\mu_{t,l} - \boldsymbol m_{t,l}}_{\text{Bias Residual}}\qquad + \underbrace{\boldsymbol\zeta_{t,l}^i}_{\text{Centered Gradient}}) \boldsymbol\phi_{t,l}^i.
    \end{align*}

    \textbf{Dimension 1: Edit Efficacy vs. Global Compromise.}
    \begin{itemize}
        \item \textbf{Baseline:} The term $\boldsymbol\mu_{t,l}$ represents the ``average gradient direction'' satisfying the entire edit dataset $\mathcal{D}$ given the current model state. It acts as a global consensus direction. For any specific instance update, this global component is often a ``compromise'' that is suboptimal or even contradictory to the instance-specific requirement. Consequently, the baseline update is persistently dragged towards this population-level mean direction,
        creating a conflict: to learn the new knowledge (contained in $\boldsymbol \zeta_{t,l}^i$), the model must redundantly adapt to the global average. This dilutes the parameter update norm allocated to the instance-specific requirement.
        
        \item \textbf{Ours:} By explicitly subtracting the global consensus via $\boldsymbol v_{t,l}^i - \boldsymbol m_{t,l}$, the update becomes primarily driven by the centered gradient $\boldsymbol \zeta_{t,l}^i$. As shown in \cref{theorem3}(a), the systematic bias vanishes asymptotically. This decoupling ensures that the update capacity is allocated almost exclusively to resolving instance-specific discrepancies, maximizing the effectiveness of the current edit.

    \end{itemize}

    \textbf{Dimension 2: Optimization Efficiency and Adaptive Norm Rescaling.}
    \begin{itemize}
        \item \textbf{Baseline:} The magnitude $\|\boldsymbol\Delta_{t,l}^{\text{raw}}\|_F$ is heavily influenced by the local curvature of the loss landscape, reflected in the spectrum of the covariance $\boldsymbol\Sigma_{t,l}$. In ill-conditioned regimes, the update norm concentrates on a few dominant directions (large eigenvalues of $\boldsymbol\Sigma_{t,l}$), while directions with small eigenvalues contribute little to it. This leads to a ``vanishing update'' problem in minor directions, potentially hindering the learning of fine-grained features.
        As a result, the baseline update often fails to effectively learn the new knowledge in a single step, leading to under-fitting of current batch.
        
        \item \textbf{Ours:} The whitening matrix $\widehat{\boldsymbol\Sigma}_{t,l}^{-1/2}$ acts as a preconditioner, effectively transforming the regression target $\boldsymbol V_{t,l}$ to have approximately homoscedastic errors. This transformation satisfies the conditions of the Gauss-Markov theorem, rendering the Ridge Regression an approximation of the Best Linear Unbiased Estimator (BLUE). Crucially, this mechanism induces ``Adaptive Norm Rescaling'': it boosts updates in weak eigen-directions (amplifying weak signal) while suppressesing them in directions with large eigenvalues (dampening noise). As proven in \cref{theorem3}(b), the expected instance-specific component norm admits uniform upper bounds, ensuring numerical robustness---neither vanishing nor exploding---regardless of the local landscape curvature.

    \end{itemize}

    \textbf{Dimension 3: Interference and Orthogonality.}
    \begin{itemize}
        \item \textbf{Baseline:} The interference
        $\mathcal{I}_{t,k}^{\text{raw}}= \mathbb{E}[\langle \boldsymbol\Delta_{t,l}^{\text{raw}}, \boldsymbol\Delta_{k,l}^{\text{raw}} \rangle_F]$ expands to
        \begin{equation*}
            \mathcal{I}_{t,k}^{\text{raw}}
            = \gamma^2  \mathbb{E}\left[\sum_{i,j}\left(\underbrace{\boldsymbol\mu_{t,l}^\top \boldsymbol\mu_{k,l}}_{\text{Mean-Mean}}+\underbrace{(\boldsymbol\mu_{t,l}^\top\boldsymbol\zeta_{k,l}^j+(\boldsymbol\zeta_{t,l}^i)^\top\boldsymbol \mu_{k,l} )}_{\text{Cross-Terms}}\right)\boldsymbol\phi_{k,l}^j(\boldsymbol\phi_{t,l}^i)^\top+ \sum_{i,j} \underbrace{(\boldsymbol\zeta_{t,l}^i)^\top\boldsymbol\zeta_{k,l}^j }_{\text{Gradient-Gradient}}\boldsymbol\phi_{k,l}^j(\boldsymbol\phi_{t,l}^i)^\top    \right].
        \end{equation*}
        Since $\boldsymbol\mu_{t,l}$ represents the global tendency, the mean correlation $\langle \boldsymbol\mu_{t,l}, \boldsymbol\mu_{k,l} \rangle$ consistently accumulates across tasks, causing unrelated tasks to interfere systematically along the global mean direction. Additionally, the cross-terms persist because the projection vector $\boldsymbol \phi$ and the centered gradient $\boldsymbol\zeta$ are coupled. This systematic overlap acts as a primary driver of catastrophic forgetting, forcing parameters to drift in a shared direction regardless of task semantics.

        \item \textbf{Ours:} As proven in \cref{theorem3}(c), mean subtraction eliminates both mean-mean correlations and cross-terms. The normalized interference simplifies to
        \begin{equation*}
            \mathcal{I}_{t,k}^{\text{norm}}= \gamma^2  \mathbb{E} \left[ \sum_{i,j}  \underbrace{(\tilde{\boldsymbol\zeta}_{t,l}^i) ^\top\tilde{\boldsymbol\zeta}_{k,l}^j}_{\text{Gradient-Gradient}} \boldsymbol\phi_{k,l}^j(\boldsymbol\phi_{t,l}^i)^\top\right]   + \underbrace{\mathcal{O}(\text{bias})}_{\to 0}  .
        \end{equation*}
    
         The only remaining interaction corresponds to the correlation of characteristic update directions $\langle \tilde{\boldsymbol{s}}_{t,l}, \tilde{\boldsymbol{s}}_{k,l} \rangle$. For semantically unrelated tasks, this term is negligible. Thus, our method actively removes the systematic interference caused by the global mean.
        
    \end{itemize}
\end{remark}

\end{document}